\documentclass[final,3p,times]{elsarticle}
\usepackage{amsmath, amssymb, amsthm}
\usepackage{geometry}
\usepackage{color}
\usepackage{bm}
\usepackage{bbold}
\usepackage{algorithm}
\usepackage{algorithmic}
\usepackage{caption}
\usepackage{subcaption}
\usepackage{float}
\usepackage{tikz}
\usetikzlibrary{arrows.meta,positioning,fit,calc}
\usepackage{accents}
\newcommand{\thinoverline}[1]{\accentset{\rule{1.2em}{0.3pt}}{#1}}

\font\ppppppcarac=ptmr8y at 4pt
\font\pppppcarac=ptmr8y at 5pt
\font\ppppcarac=ptmr8y at 6pt
\font\pppcarac=ptmr8y at 7pt
\font\ppcarac=ptmr8y at 8pt
\font\pcarac=ptmr8y at 9pt
\font\Pcarac=ptmr8y at 9pt

\font\bf=ptmb8y at 10pt

\newcommand{\bfA}{{\textbf{A}}}
\newcommand{\bfB}{{\textbf{B}}}

\newcommand{\bfF}{{\textbf{F}}}

\newcommand{\bfH}{{\textbf{H}}}

\newcommand{\bfS}{{\textbf{S}}}

\newcommand{\bfU}{{\textbf{U}}}
\newcommand{\bfV}{{\textbf{V}}}
\newcommand{\bfW}{{\textbf{W}}}
\newcommand{\bfX}{{\textbf{X}}}
\newcommand{\bfY}{{\textbf{Y}}}
\newcommand{\bfZ}{{\textbf{Z}}}

\newcommand{\bfa}{{\textbf{a}}}
\newcommand{\bfb}{{\textbf{b}}}
\newcommand{\bfc}{{\textbf{c}}}

\newcommand{\bfe}{{\textbf{e}}}
\newcommand{\bff}{{\textbf{f}}}

\newcommand{\bfh}{{\textbf{h}}}

\newcommand{\bfm}{{\textbf{m}}}
\newcommand{\bfn}{{\textbf{n}}}

\newcommand{\bfr}{{\textbf{r}}}

\newcommand{\bfu}{{\textbf{u}}}
\newcommand{\bfv}{{\textbf{v}}}
\newcommand{\bfw}{{\textbf{w}}}
\newcommand{\bfx}{{\textbf{x}}}
\newcommand{\bfy}{{\textbf{y}}}
\newcommand{\bfz}{{\textbf{z}}}

\newcommand{\bfun}{{ \hbox{\bf 1} }}

\newcommand{\bfzero}{{ \hbox{\bf 0} }}

\newcommand{\bfalpha}{{\bm{\alpha}}}
\newcommand{\bfbeta}{{\bm{\beta}}}

\newcommand{\bfeta}{{\bm{\eta}}}
\newcommand{\bftheta}{{\bm{\theta}}}

\newcommand{\bfnu}{{\bm{\nu}}}

\newcommand{\bfvarphi}{{\bm{\varphi}}}
\newcommand{\bfpsi}{{\bm{\psi}}}

\newcommand{\HH}{{\mathbb{H}}}

\newcommand{\MM}{{\mathbb{M}}}
\newcommand{\NN}{{\mathbb{N}}}

\newcommand{\RR}{{\mathbb{R}}}
\newcommand{\SSeclair}{{\mathbb{S}}}
\newcommand{\TT}{{\mathbb{T}}}
\newcommand{\UU}{{\mathbb{U}}}
\newcommand{\VV}{{\mathbb{V}}}

\newcommand{\XX}{{\mathbb{X}}}
\newcommand{\YY}{{\mathbb{Y}}}

\newcommand{\bb}{{\mathbb{b}}}

\newcommand{\hh}{{\mathbb{h}}}

\newcommand{\pp}{{\mathbb{p}}}
\newcommand{\qq}{{\mathbb{q}}}
\newcommand{\uu}{{\mathbb{u}}}
\newcommand{\vv}{{\mathbb{v}}}
\newcommand{\xx}{{\mathbb{x}}}
\newcommand{\yy}{{\mathbb{y}}}

\newcommand{\ioneone}{{{\mathbb{1}}}}

\newcommand{\curB}{{\mathcal{B}}}
\newcommand{\curD}{{\mathcal{D}}}

\newcommand{\curH}{{\mathcal{H}}}
\newcommand{\curI}{{\mathcal{I}}}
\newcommand{\curJ}{{\mathcal{J}}}

\newcommand{\curL}{{\mathcal{L}}}
\newcommand{\curM}{{\mathcal{M}}}
\newcommand{\curO}{{\mathcal{O}}}
\newcommand{\curP}{{\mathcal{P}}}

\newcommand{\curT}{{\mathcal{T}}}

\newcommand{\curV}{{\mathcal{V}}}

\usepackage{mathrsfs}
\newcommand{\curC}{{\mathscr{C}}}       
\newcommand{\curN}{{\mathscr{N}}}       
\newcommand{\curS}{{\mathscr{S}}}       

\newcommand{\bfcurH}{{\boldsymbol{\mathcal{H}}}}

\font\vcarac=ptmr8y at 10pt

\newcommand{\vc}{{\hbox{\vcarac{v}}}}    
\newcommand{\uc}{{\hbox{\vcarac{u}}}}    

\newcommand{\ad}{{{\hbox{{\pppcarac ad}}}}}
\newcommand{\adp}{{{\hbox{{\ppppcarac ad}}}}}

\newcommand{\CI}{\hbox{{\pcarac CI}}}

\newcommand{\criter}{\hbox{{\ppcarac Cr}}}

\newcommand{\CRPS}{{\hbox{{\ppcarac CRPS}}}}

\newcommand{\CRPSpp}{{\hbox{{\pppppcarac CRPS}}}}

\newcommand{\pelem}{{\hbox{{\pppcarac elem}}}}

\newcommand{\ESpp}{{\hbox{\pppppcarac ES}}}

\newcommand{\pfull}{{\hbox{{\ppppcarac full}}}}

\newcommand{\pgrid}{{\hbox{{\pppcarac grid}}}}
\newcommand{\ppgrid}{{\hbox{{\pppppcarac grid}}}}

\newcommand{\hi}{{\hbox{{\ppppcarac hi}}}}

\newcommand{\pin}{{\hbox{{\pppcarac in}}}}
\newcommand{\ppin}{{\hbox{{\pppppcarac in}}}}

\newcommand{\pinc}{{\hbox{{\pppcarac inc}}}}

\newcommand{\pint}{{\hbox{{\pppcarac int}}}}

\newcommand{\pinit}{{\hbox{{\pppcarac init}}}}

\newcommand{\IOC}{{\hbox{\pcarac IOC}}}
\newcommand{\IOCp}{{\hbox{\ppppcarac IOC}}}

\newcommand{\KLg}{{\hbox{{\Pcarac KL}}}}

\newcommand{\LBp}{{\hbox{{\pppppcarac LB}}}}

\newcommand{\MSP}{{\hbox{{\pppppcarac MSP}}}}

\newcommand{\NLL}{{\hbox{\ppcarac NLL}}}

\newcommand{\opt}{{\,\hbox{{\pppcarac opt}}}}
\newcommand{\optp}{{\,\hbox{{\ppppcarac opt}}}}

\newcommand{\pout}{{\hbox{{\pppcarac out}}}}
\newcommand{\ppout}{{\hbox{{\pppppcarac out}}}}

\newcommand{\Obsp}{{\hbox{{\ppppcarac Obs}}}}
\newcommand{\Obspp}{{\hbox{{\pppppcarac Obs}}}}

\newcommand{\obsp}{{\hbox{{\ppppcarac obs}}}}

\newcommand{\prelax}{{\hbox{{\ppppcarac relax}}}}

\newcommand{\pprc}{{{\hbox{{\ppppcarac prc}}}}}

\newcommand{\pred}{{\hbox{{\ppppcarac red}}}}

\newcommand{\pref}{{\hbox{{\ppppcarac ref}}}}

\newcommand{\rc}{{\hbox{{\ppppcarac rc}}}}

\newcommand{\SBVP}{{\hbox{\pcarac SBVP}}}
\newcommand{\SBVPp}{{\hbox{\ppppcarac SBVP}}}
\newcommand{\SBVPpp}{{\hbox{\pppppcarac SBVP}}}

\newcommand{\SNN}{{\hbox{\pcarac SNN}}}
\newcommand{\SNNp}{{\hbox{\ppppcarac SNN}}}
\newcommand{\SNNpp}{{\hbox{\pppppcarac SNN}}}

\newcommand{\SPINNpp}{{\hbox{\pppppcarac SPINN}}}

\newcommand{\pSB}{{\hbox{{\ppppppcarac SB}}}}

\newcommand{\scaledpp}{{\hbox{{\ppppcarac scaled}}}}

\newcommand{\psim}{{\hbox{{\ppppcarac sim}}}}

\newcommand{\psp}{{{\hbox{{\ppppcarac sp}}}}}

\newcommand{\targetp}{{\hbox{{\ppppcarac targ}}}}

\newcommand{\test}{{\hbox{{\pppcarac test}}}}
\newcommand{\testp}{{\hbox{{\ppppcarac test}}}}
\newcommand{\testpp}{{\hbox{{\pppppcarac test}}}}

\newcommand{\trainingp}{{\hbox{{\pppcarac train}}}}
\newcommand{\trainingpp}{{\hbox{{\ppppcarac train}}}}

\newcommand{\ptrial}{{\hbox{{\pppcarac trial}}}}
\newcommand{\pptrial}{{\hbox{{\ppppcarac trial}}}}

\newcommand{\TRIAL}{{\hbox{{\pppcarac TRIAL}}}}
\newcommand{\PGD}{{\hbox{{\pppcarac PGD}}}}

\newcommand{\Unit}{\mathbf{1}}

\newcommand{\rchi}{%
  \mathord{%
    \mathchoice
      {\raisebox{0.25ex}{$\displaystyle\chi$}}
      {\raisebox{0.25ex}{$\textstyle\chi$}}
      {\raisebox{0.15ex}{$\scriptstyle\chi$}}
      {\raisebox{0.10ex}{$\scriptscriptstyle\chi$}}
  }%
}

\newtheorem{proposition}{Proposition}

\journal{CMAME}
\numberwithin{equation}{section}

\title{Physics-informed random neural architectures on compact boundaryless multiply connected manifolds
for random-operator models in machine learning and uncertainty quantification}

\author[1]{Christian Soize \corref{cor1}}
\ead{christian.soize@univ-eiffel.fr}
\cortext[cor1]{Corresponding author: C. Soize, christian.soize@univ-eiffel.fr}
\address[1]{Universit\'e Gustave Eiffel, MSME UMR 8208, 5 bd Descartes, 77454 Marne-la-Vall\'ee, France}

\date{September 2025}

\begin{document}
\begin{frontmatter}

\begin{abstract}
We introduce stochastic physics-informed neural networks (SPINNs) for stochastic boundary-value problems (SBVPs) with random operators. The method enforces physics through a structural probabilistic coupling: a common Gaussian stochastic germ drives both the SBVP random operator and a manifold-based stochastic neural architecture. The stochastic neural network is non-feedforward and intrinsically random. A latent Gaussian field and an inhomogeneous Poisson point process on a compact, boundaryless, multiply connected manifold generate random neuron locations, geodesically local sparse connectivity, and a sparse random weight matrix, all governed by a low-dimensional hyperparameter. For deterministic inputs, the network output is random and can represent non-Gaussian random operators. Training is performed in observation space: SBVP simulations produce random observables, while the SNN defines a predictive conditional probability density function. The supervised objective combines data fidelity, through negative log-likelihood, with a cross-entropy alignment term relative to the SBVP-induced conditional distribution. The resulting target is a convex mixture of observation-induced and SBVP-induced distributions, with a dimensionless parameter controlling robustness to model--data mismatch. The shared stochastic germ yields common-random-numbers Monte Carlo estimators and reduced-variance gradients. The method uses only evaluations of the SBVP solution mapping, without differentiating or modifying it, and avoids PDE-residual or boundary-penalty losses. A stochastic elliptic problem on a periodic cylindrical surface illustrates the approach in SBVP-consistent and SBVP-misspecified regimes. A generalized-posterior formulation is also outlined for hyperparameter inference and posterior averaging without per-query reoptimization.
\end{abstract}

\begin{keyword}
Stochastic Physics-Informed Neural Network (SPINN) \sep Random neural architecture on manifolds \sep Boundary-value problem with random operator\sep Latent random fields on manifolds \sep Supervised learning \sep Compact boundaryless multiply connected manifolds
\end{keyword}

\end{frontmatter}

\section{Introduction}
\label{Section1}
{\color{black}
The generic manifold-based stochastic neural architecture used in this work was introduced in the companion manuscript~\cite{Soize2026}. That work develops the latent-field and point-process generation of random neuron locations, geodesically local connectivity, and random weights, together with the associated non-feedforward neural equation and data-only likelihood-based learning for general stochastic input--output mappings. These generic architectural components are not claimed as contributions of the present paper. Here, this architecture is used as a probabilistic hypothesis class for introducing stochastic physics-informed neural networks (SPINNs) for stochastic boundary-value problems (SBVPs) with random operators. The contribution of the present work is the structural coupling of the SBVP random operator and the stochastic neural architecture through a shared Gaussian germ, together with the resulting observation-space learning objective combining data fidelity and SBVP-induced cross-entropy alignment.
}
\paragraph{(i) Objective of the paper}
We formulate and analyze a probabilistic coupling between:
(a) a stochastic boundary-value problem (SBVP) posed on a manifold $\curS_c$, governed by a non-Gaussian random operator; and
(b) a stochastic neural network (SNN) whose architecture is defined on a compact, boundaryless, multiply connected  manifold $\curS$ and generated by a latent Gaussian field together with an inhomogeneous Poisson point process producing random neuron locations, geodesically local sparse connectivity, and random weights, all controlled by a low-dimensional hyperparameter $\bftheta$. The random architecture is induced jointly by $U(\cdot\,;\bftheta)$ and the Poisson point process, while the sparse random weight matrix is generated from the latent-field values on the sampled neurons.
The Gaussian field $U_c$ on $\curS_c$, which drives the random operator of the SBVP, and the Gaussian field $U$ on $\curS$, which generates the neural architecture of the SNN (via the inhomogeneous Poisson process on $\curS$), both depend on the same stochastic germ $\bfH$, an $\RR^m$-valued normalized Gaussian random variable.
The SBVP is defined by a PDE with a non-Gaussian coefficient field obtained as a nonlinear mapping of $U_c$, and by an input vector $\xx$ with associated observable $\bfY(\xx)=\curO(Z(\cdot\,;\xx))$, where $Z(\cdot\,;\xx)$ denotes the stochastic solution.
On the SNN side, the latent field $U_\bftheta$ on $\curS$, together with the independent inhomogeneous Poisson point process $\Pi_{\Lambda_\bftheta}$ of deterministic intensity $\Lambda_\bftheta$ depending only on the covariance function $C_U(\cdot\,;\bftheta)$, induces a random neural architecture parameterized by $\bftheta$. This architecture includes neuron locations on $\curS$, sparse connectivity determined by geodesic distances, a sparse random weight matrix, input localizations for $\xx$, and an output vector $\YY(\xx\,;\bftheta)$.
Conditional on $\bftheta$, the latent field $U(\cdot\,;\bftheta)$ and the inhomogeneous Poisson process $\Pi_{\Lambda_\bftheta}$ are independent; dependence arises solely through $\bftheta$.
The two Gaussian fields are expressed as
\begin{equation*}
U(\bfx\,;\bftheta) = [M(\bfx\,;\bftheta)]\,\bfH \, ,\,\, \bfx\in\curS \quad , \quad
U_c(\bfx) = [M_c(\bfx)]\,\bfH \, , \,\,\bfx\in\curS_c  \, ,
\end{equation*}
with deterministic matrix-valued fields $[M(\cdot\,;\bftheta)]$ on $\curS$ and $[M_c]$ on $\curS_c$. Thus, the randomness of the neural architecture arises from both $\bfH$ and the Poisson process $\Pi_{\Lambda_\bftheta}$.

{\color{black}
The objective of the paper is to formalize the SNN--SBVP coupling within a purely probabilistic framework and to define a supervised procedure for estimating the hyperparameter $\bftheta$ from observables, without introducing PDE-residual penalties. Training is performed in observation space through an objective that combines a data-fidelity term (negative log-likelihood) with a cross-entropy term aligning the SNN and SBVP conditional probability density functions.
In this work, \textit{physics-informed} therefore denotes the structural probabilistic coupling induced by the shared Gaussian germ $\bfH$, rather than a residual-based constraint. The common use of $\bfH$ couples the SBVP aleatory variability to the SNN latent structure and provides common-random-numbers estimators with reduced variance for the alignment contribution.
The intended application class consists of problems in which deterministic or controlled inputs produce random, potentially high-dimensional and strongly non-Gaussian observables because the governing operator is random. The formulation is particularly relevant when only one output realization is available for each observational input, when complete state fields are not observed, and when an SBVP simulator is available but is not to be differentiated or modified for residual-based training. It also accommodates the situation in which the nominal SBVP is only approximately consistent with the observed system.
SPINN is not proposed as a universal replacement for simpler probabilistic surrogates. Its purpose is to provide an additional modeling route when the probability law of the neural architecture and its structural relation to the SBVP random operator are part of the representation objective. The present work therefore assesses feasibility, probabilistic calibration, and generalization within this targeted setting; it does not claim universal superiority in wall-clock efficiency, scalability, or data economy over all alternative surrogate paradigms.
}
\paragraph{(ii) Related work and background}
We confine this short review to ANN-based surrogate models for PDEs/SPDEs (PINNs, physics-constrained deep learning, and operator learning); intrusive spectral/UQ solvers and non-ANN surrogates fall outside our scope.

Neural-network methods for differential equations predate the modern PINN terminology, including trial-solution constructions that enforce boundary conditions and variational formulations such as the Deep Ritz method \cite{Lagaris1998,EYu2018}.
Physics-Informed Neural Networks (PINNs) have emerged as a powerful tool for solving differential equations by embedding governing equations into the training of deep neural networks~\cite{Raissi2019,Sirignano2018,Karniadakis2021,Yuan2025}.
In the deterministic setting, PINNs approximate the unknown solution with a neural network whose loss function combines a \textit{data loss}, ensuring consistency with observations, and a physics loss, penalizing residuals of the governing differential operator. This approach has proven effective for parameterized PDEs, where incorporating the operator and boundary conditions into the learning objective can improve generalization.
When measurements are noisy, probabilistic PINN variants such as multi-output formulations provide estimates of conditional probability distributions compatible with proper scoring and related UQ criteria~\cite{Yang2022c,Gneiting2007,Psaros2023}.

Extending PINNs to nonlinear stochastic PDEs (SPDEs) remains challenging due to computational complexity, the closure problem for statistical moments, and the inadequacy of existing generative models in representing realistic high-dimensional stochastic physics. Three ANN-based strands dominate:
(1) Moment-based PINNs, which aim to learn statistical moments of the stochastic solution~\cite{Chen2021}, but require closure approximations (e.g., Gaussian closure, moment truncation) that can limit accuracy for nonlinear systems;
(2) Monte Carlo PINNs, training a separate network for each stochastic realization~\cite{Guo2022,Guo2022b}, which can suffer from prohibitive costs in high-dimensional random spaces and difficulties in treating stochastic weights; and
(3) generative PINNs (e.g., VAE/GAN-augmented) that learn solution distributions but face stability and scalability limits in complex physics~\cite{Yang2022b,Zhong2023b}. In parallel, operator-learning surrogates (e.g., DeepONet and Fourier Neural Operators) approximate solution operators directly and are increasingly used as ANN baselines for parametric PDEs~\cite{Lu2021,Li2021,Kovachki2023,Lu2022}.
Low-rank/SVD-guided augmentations have been proposed to improve DeepONet flexibility and interpretability~\cite{Venturi2023,Lu2021}, and resolution-robust implicit variants of Fourier-type neural operators have been investigated in heterogeneous mechanics applications~\cite{You2022,Li2021}.
More recently, uncertainty quantification for operator learning has been pursued via randomized priors and ensemble constructions for DeepONets~\cite{Yang2022d,Gneiting2007}, as well as Bayesian treatments that explicitly account for noisy inputs/outputs in PINNs and neural operators~\cite{Zou2025,Bulte2025}. Complementary lines of work address functional uncertainty quantification in operator learning through distribution-free calibration and score-based generative modeling in function space \cite{Ma2024,Lim2025}, while works develop resolution-independent operator-learning generative priors for Bayesian inverse problems~\cite{Jiang2024,Owhadi2015} and diffusion-model-based field reconstruction from sparse observations~\cite{Zhuang2025,Ma2024}. Complementary to these developments, the manifold-based stochastic neural-architecture construction of~\cite{Soize2026} induces probability distributions on both network topology and weights, yielding predictors with values in spaces of probability distributions that can be assessed with strictly proper scoring rules.

Bayesian formulations replace point estimates of weights with posteriors, yielding calibrated posterior variability and robustness to noise. B-PINNs combine a Bayesian neural network prior with physics-informed likelihoods and infer posteriors via HMC or VI~\cite{Yang2021b,Linka2022}. Related physics-constrained probabilistic surrogates train without labeled fields by minimizing divergences between model probability density functions and physics-induced probability density functions, offering an alternative to residual penalties~\cite{Zhu2019}. There is also a growing body of work using approximate Bayesian treatments of ANN surrogates (Laplace/SGLD/VI) and conformal or calibration layers for neural PDE solvers, but they remain largely decoupled from random-operator structure.

{\color{black}
At the architectural level, several related families must be distinguished. Bayesian neural networks and dropout-as-Bayesian-approximation introduce stochasticity primarily at the level of weights or activations within a prescribed architecture~\cite{Neal1996,Gal2016}. Stochastic-depth methods randomly bypass computational blocks during training~\cite{Huang2016}. Randomly wired networks instead sample graph connectivity from prescribed stochastic graph generators and train the resulting graph instances as task-specific neural architectures~\cite{XieRandomWired2019}. In these approaches, random wiring broadens the architecture design space, but the sampled graph is generally subsequently used as a conventional task predictor.

Sparse and evolving-connectivity methods have a different objective. Sparse evolutionary training starts from a random sparse topology and alternates weight optimization with pruning and random regrowth of connections~\cite{Mocanu2018}. Deep Rewiring maintains a strictly bounded number of connections while sampling task-adapted sparse configurations during supervised learning~\cite{Bellec2018}, and RigL updates a fixed-size sparse topology using weight magnitudes and intermittent gradient information~\cite{Evci2020}. These methods retain a prescribed set of neurons and usually a prescribed layered organization; connectivity evolution is part of the training algorithm and is primarily directed toward computational efficiency, memory reduction, and task accuracy.

Neural architecture search (NAS) treats operations and connectivity as architecture variables and searches for a high-performing architecture using reinforcement learning, stochastic relaxation, differentiable optimization, or related strategies~\cite{Zoph2016,Elsken2019,Xie2019,Chen2020}. Although stochastic variables can be used during the search, their principal role is to explore or relax a combinatorial design space. The usual output of NAS is a selected architecture, rather than a stochastic operator whose predictive output remains random for a deterministic input. Geometry-aware networks on manifolds, including geodesic and mixture-model CNNs, encode locality using intrinsic geometric information~\cite{Masci2015,Monti2017,Bronstein2017}, while local neural operators analyze receptive ranges for operator regression~\cite{Ye2024,Kovachki2023}. Probabilistic function-space surrogates such as Neural Processes model distributions over functions within architectures whose connectivity is prescribed~\cite{Garnelo2018,Kim2019}.

The topology generator employed in the present paper is inherited from~\cite{Soize2026} and is not a NAS algorithm. For a fixed $\bftheta$, the covariance structure of the latent Gaussian field $U(\cdot\,;\bftheta)$ determines the intensity $\Lambda_\bftheta$ of the inhomogeneous Poisson point process $\Pi_{\Lambda_\bftheta}$ used to sample neuron locations on $\curS$. Conditional on $\bftheta$, $U(\cdot\,;\bftheta)$ and $\Pi_{\Lambda_\bftheta}$ are statistically independent. The sampled locations define random intrinsic geodesic distances, from which the sparse connectivity mask is constructed. Realized latent-field values then determine the input and output neuron sets and modulate the random weights. Consequently, neuron locations, node roles, topology, and weights are governed by a common low-dimensional generative hyperparameter, without introducing an edge-by-edge combinatorial search.

The modeling advantages of this construction are specific to the targeted random-operator setting. The Poisson intensity provides a nonuniform spatial allocation of neurons according to the pointwise variance of the latent field. Intrinsic geodesic distances preserve locality on the compact, multiply connected manifold and avoid artificial shortcuts through the ambient Euclidean space. The anisotropic latent field induces nonhomogeneous, direction-dependent, and spatially correlated random weights. Moreover, the generated SNN is used as a distribution-valued probabilistic model rather than as a single architecture selected for deterministic prediction. In SPINN, the Gaussian germ entering this latent-field construction is additionally shared with the Gaussian germ generating the SBVP random operator. This shared-germ coupling, together with training on observables $\bfY(\xx)=\curO(Z)$, distinguishes the present formulation from randomly wired networks, dynamic sparse training, and conventional NAS. These are structural modeling properties; no universal superiority in accuracy or computational efficiency over the cited architecture-design methods is claimed.
}
\paragraph{(iii) Novelty and contribution}
{\color{black}
The generic manifold-based stochastic neural architecture employed below is inherited from~\cite{Soize2026} and is not claimed as a new contribution of the present paper. Reference~\cite{Soize2026} addresses the construction and likelihood-based training of this architecture for general stochastic input--output mappings, independently of any SBVP. The present contribution begins with the coupling of this architecture to an SBVP with a random operator. It introduces a stochastic physics-informed neural network (SPINN) in which physics is encoded through a structural probabilistic coupling and learning is performed directly at the level of observable conditional distributions. The contributions specific to the present paper are the following.
}

(1) \textit{Non-intrusive probabilistic training}.
We formulate a supervised learning strategy that combines a data-fidelity term (negative log-likelihood) with a cross-entropy alignment term between the SBVP observables $\bfY(\xx)$ and the SNN outputs $\YY(\xx\,;\bftheta)$. No PDE residuals or boundary penalties are imposed. The method requires neither access to the full state fields nor modification of the underlying numerical solver. Consequently, the approach requires only evaluations of the numerical solution operator and defines stochastic calibration directly at the level of the induced probability distribution of the observable.

{\color{black} (2) \textit{Use of an intrinsically stochastic, non-feedforward architecture as an SBVP-coupled probabilistic hypothesis class}.
The non-feedforward random neural architecture itself is inherited from~\cite{Soize2026}. Its specific role in the present formulation is to provide a distribution-valued SNN model that can be structurally coupled to the random operator of the SBVP. For a deterministic input $\xx$, the output $\YY(\xx\,;\bftheta)$ remains intrinsically random because the latent Gaussian field and the inhomogeneous Poisson process generate random neuron locations, geodesically local sparse connectivity, and a sparse random weight matrix. Within SPINN, this architecture therefore defines a low-dimensional parametrized family of conditional probability distributions capable of representing the observable variability induced by the SBVP random operator.
}

(3) \textit{Shared probabilistic germ for variance-controlled coupling}.
We introduce a probabilistic coupling in which a common Gaussian random vector $\bfH$ drives both the non-Gaussian random operator of the SBVP on $\curS_c$ and the latent stochastic structure of the SNN on $\curS$. The same germ enters both the stochastic SBVP solution $\bfY(\xx)=\curO(Z(\cdot\,;\xx))$ and the latent Gaussian field $U(\cdot\,;\bftheta)$ used by the SNN. The RNA, together with the conditional SNN output $\YY(\xx\,;\bftheta)$, is then produced jointly by this latent field and the inhomogeneous Poisson point process. This shared construction reduces the variance of the conditional cross-entropy estimator while maintaining consistency with data fidelity.

(4) \textit{Structural embedding of the random operator into the learning architecture}.
The proposed SPINN replaces residual-based physics enforcement by embedding the SBVP random-operator descriptor into the SNN generative architecture through the shared Gaussian germ. Physics is therefore not imposed as an external residual constraint, but encoded probabilistically at the level of the stochastic construction. This shifts the role of physics from a loss-level regularizer to an architectural prior on a class of predictive conditional probability distributions.
As a consequence, the learning problem is formulated directly at the level of probability distributions of observables, and the SNN family becomes a parametrized class of random operators structurally consistent with the SBVP model.

{\color{black}
(5) \textit{Model-discrepancy-aware population target}.
The finite-sample cross-entropy estimator can be evaluated as a weighted likelihood on SBVP-generated pseudo-observations. The contribution of the present formulation does not lie in this algebraic representation, but in the probabilistic target that it induces when the observations and the nominal SBVP do not have the same conditional probability distribution. Proposition~\ref{Proposition1} shows that SPINN performs an information projection of the convex mixture
$q_\lambda = (1-\lambda) p^{\Obsp}_{\bfY\vert\XX} + \lambda p^{\SBVPpp}_{\bfY\vert\XX}$
onto the SNN family. In the SBVP-consistent regime, this mixture reduces to a single target probability distribution and the alignment mainly regularizes finite-sample estimation. In the SBVP-misspecified regime, the mixture is nontrivial and defines an explicit compromise between observational evidence and the probabilistic structure supplied by the nominal SBVP.\\
}

\noindent\textit{Remark.}
Hyperparameter estimation in the proposed framework is performed via trial-based model selection followed by projected gradient descent (PGD) with Adam updates. Section~\ref{Section7.5} presents a complementary Bayesian-type formulation for the inference of $\bftheta$. This approach is not a central contribution of the present work and relies on standard methodology. Specifically, from a single training dataset, a Gibbs-type posterior measure on $\bftheta$ is constructed using a particle approximation. This formulation is included for completeness and discussion purposes. It enables posterior averaging of conditional probability distributions without repeated reoptimization of $\bftheta$ for new inputs $\xx$.
\paragraph{(iv) Illustration supporting the SPINN proposed methodology}
Concerning the illustration, we work on a nontrivial manifold setting tailored to the physics: the SBVP is posed on the open surface strip $\curS_c$ of a circular cylinder in $\RR^3$ with periodic identification of the boundary circles $\Sigma_0$ and $\Sigma_L$. The PDE is formulated with the intrinsic surface gradient/divergence on $\curS_c$, together with periodic trace and conormal flux conditions. The stochastic operator is defined by a non-Gaussian second-order positive-valued random field $K_c(\bfx)$ constructed from the Gaussian field $U_c$, defined on a probability space $(\Omega,\curT,\curP)$, through a monotone transport to an auxiliary Gamma field $V$ and an affine normalization, which ensures
(1) almost surely periodicity and continuity of $K_c$ on $\thinoverline{\curS}{\!_c}$,
(2) an explicit lower bound $K_c(\bfx)\geq k_{c,{\min}}=\varepsilon/(1+\varepsilon)>0$ a.s.  yielding coercivity.
On the functional side, the stochastic weak problem is formulated in the periodic Sobolev space $\curV \subset \curH^1(\curS_c)$ and its zero-mean subspace $\curV_0$, we invoke the Poincaré inequality to establish norm equivalence, and prove that, almost surely,  the random bilinear form $B$ of the weak formulation is continuous and coercive.
This yields existence and uniqueness of a second-order (mean-square) random solution $Z\in L^2(\Omega;\curH^1(\curS_c))$ together with an explicit energy estimate in terms of $\Vert f_{\mathrm{src}}\Vert_{L^2(\curS_c)}$ and geometric constants. The proposed numerical illustration is therefore a fully analyzable, geometry-aware stochastic elliptic problem and provides rigorous well-posedness.
\paragraph{(v) Training regimes and role of the SBVP}

The distinction between SBVP-consistent and SBVP-misspecified regimes is essential because observations generated outside the nominal SBVP model need not satisfy the SBVP-induced conditional probability distribution exactly. Two distinct regimes can therefore be identified.
In the first regime (Case~1), the training dataset is generated directly from the SBVP, so that the observations are sampled from the SBVP-induced conditional probability distribution. Consistency is therefore imposed at the level of the training target, not by an exact equality between the SBVP output $\bfY(\xx)$ and the SNN output $\YY(\xx\, ;\bftheta)$ for a finite-dimensional hyperparameterization. Here, the SBVP does not act as a corrective constraint; since it generates samples from the target conditional probability distribution, it mainly reduces the variance and stabilizes the estimators used to identify $\bftheta$.
{\color{black}
In the second regime (Case~2), the training dataset is derived from external measurements, high-fidelity calculations, or an alternative stochastic model. Such data do not, in general, satisfy the nominal SBVP in the probabilistic sense. The observation-generating and SBVP-induced conditional probability distributions are therefore distinct. In this regime, the SBVP-generated samples cannot be interpreted as ordinary augmented data drawn from the same target distribution as the observations. They represent a second, model-based probability measure against which the SNN conditional probability density function is aligned through the cross-entropy term.
Case~2 is the principal model-discrepancy setting addressed by the mixed objective. The resulting population target is a convex mixture of the observation-induced and SBVP-induced conditional probability distributions. It allows the observations to correct systematic errors in the nominal SBVP while retaining the SBVP as a probabilistic physical reference. Across both regimes, the SBVP therefore enters either as the probability distribution generating the observations in Case~1 or as a distinct reference probability distribution in Case~2.
}
\paragraph{(vi) Choice of regimes for the numerical application}
As noted in paragraph~(v) above, two regimes are considered. The first one, called Case~1, corresponds to training and test datasets generated using the SBVP. Therefore, the points in these datasets almost surely satisfy the SBVP, and the SNN defines a statistical surrogate whose conditional probability distribution is consistent with the SBVP-induced conditional probability distribution.
The second one, called Case~2, consists of using training and test datasets whose points do not almost surely satisfy the SBVP. In the numerical application, this regime is implemented by generating the datasets with an intentionally misspecified perturbation of the SBVP, denoted by $\SBVP^\MSP$, obtained by modifying the input supports and the correlation lengths. More generally, such datasets may originate from numerical simulations carried out with an alternative stochastic model or from experimental measurements. In this regime, the SBVP does not generate the data but provides the reference probability structure against which the SNN conditional probability density function is aligned through the cross-entropy term. This constrains the supervised estimation of $\bftheta$ by the SBVP-induced conditional probability distribution while preserving data fidelity.
\paragraph{(vii) Organization of the paper}
Section~\ref{Section2} provides the formal definition of the manifold-based stochastic neural architecture and of the SNN–SBVP probabilistic coupling that defines SPINN. It establishes the population target of the supervised objective and the score-form gradient identity underlying the alignment term, thereby grounding the framework in an explicit information-theoretic formulation.
The subsequent sections then examine diagnostic methodology, illustrative well-posedness, and numerical performance within this rigorously defined probabilistic framework.
Section~\ref{Section3} presents a general testing methodology together with intrinsic diagnostic criteria.
For ease of reading, Section~\ref{Section4} recalls and adapts, for the present SPINN setting, the supervised learning of random neural architectures structured by latent random fields on the manifold $\curS$, as introduced in~\cite{Soize2026}. In addition, it presents developments that relax the constraints defining the admissible sets of the optimization problems and enable the implementation of the algorithms used to solve the random nonlinear neural equation.
Section~\ref{Section5} provides a physical and abstract illustration of an SBVP with a random operator, controlled inputs, and variability, posed on a manifold $\curS_c$.
Section~\ref{Section6} presents the numerical application, implementation details for the SNN approach, and results. Section~\ref{Section7} presents a discussion in particular of the CRPS levels in the finite-parameter regime, the conditional probability density functions for selected outputs, and the construction of a probability distribution on the hyperparameters from a training dataset without per-query optimization.
Finally, Section~\ref{Section8} summarizes the main contributions and outlines perspectives.\\

\noindent {\textit{(viii) Conventions for the variables, vectors, and matrices}}.\\
\noindent $x,\eta$: lower-case Latin or Greek letters are deterministic real variables.\\
$\bfx,\bfeta$: boldface lower-case Latin or Greek letters are deterministic vectors.\\
$X$: upper-case Latin letters are real-valued random variables.\\
$\bfX$: boldface upper-case Latin letters are vector-valued random variables.\\
$[x]$: lower-case Latin letters between brackets are deterministic matrices.\\
$[\bfX]$: boldface upper-case letters between brackets are matrix-valued random variables.\\

\noindent $\xx$: blackboard-bold lower-case letter is the deterministic input vector.\\
$\XX$: blackboard-bold upper-case letter is the random input vector.\\
$\bfY(\xx)$: random output vector of the SBVP given input $\xx$.\\
$\YY(\xx)$: random output vector of the SNN given input $\xx$.\\

\noindent {\textit{(ix) Algebraic notations and acronyms}}.

\noindent $\NN$: set of natural numbers including $0$.\\
$\RR$: set of real numbers.\\
$\RR^m$: Euclidean vector space of dimension $m$.\\
\noindent $\MM_{m,n}$, $\MM_m$ : set of the $(m\times n)$, $(m\times m)$,  real matrices.\\
$\MM_m^+$, $\MM_m^{+0}$: set of  the positive-definite, positive,  $(m\times m)$ real matrices.\\
$[I_{m}]$: identity matrix in $\MM_m$.\\
$\langle \, \bfx ,\bfy\,\rangle$: usual Euclidean inner product of $\bfx$ and $\bfy$.\\
$\Vert \, \bfx\,\Vert$: Euclidean norm of $\bfx$ equal to $\langle \, \bfx ,\bfx\,\rangle^{1/2}$.\\
$\Vert\, [x]\, \Vert_F$: Frobenius norm of matrix  $[x]$.\\
i.i.d.: independent and identically distributed.\\

\noindent CE: cross-entropy.\\
DOF: degree of freedom.\\
KDE: kernel density estimation.\\
NLL: negative log-likelihood.\\
PDE: partial differential equation.\\
PDF: probability density function.\\
RNA: random neural architecture.\\
SBVP: stochastic boundary-value problem.\\
SNN: stochastic neural network.\\
SPINN: stochastic physics-informed neural network.\\

\noindent {\textit{(x) Convention used for random variables}}.
In this paper, for any finite integer $m\geq 1$, the Euclidean space $\RR^m$ is equipped with the $\sigma$-algebra $\curB_{\RR^m}$. If $\bfY$ is a $\RR^m$-valued random variable defined on the probability space $(\Omega,\curT,\curP)$, $\bfY$ is a  mapping $\omega\mapsto\bfY(\omega)$ from $\Omega$ into $\RR^m$, measurable from $(\Omega,\curT)$ into $(\RR^m,\curB_{\RR^m})$, and $\bfY(\omega)$ is a realization (sample) of $\bfY$ for $\omega\in\Omega$. The probability distribution of $\bfY$ is the probability measure $P_\bfY(d\bfy)$ on the measurable set $(\RR^m,\curB_{\RR^m})$ (we will simply say on $\RR^m$). The Lebesgue measure on $\RR^m$ is noted $d\bfy$ and when $P_\bfY(d\bfy)$ is written as $p_\bfY(\bfy)\, d\bfy$, $p_\bfY$ is the probability density function (PDF) on $\RR^m$ of $P_\bfY(d\bfy)$ with respect to $d\bfy$. We denote by $E$ the mathematical expectation operator that is such that $E\{\bfY\} = \int_{\RR^m} \bfy \, P_\bfY(d\bfy)$. Random variable $\bfY$ is a second-order random variable if $E\{\Vert\bfY\Vert^2\} < +\infty$. The set of all the second-order random variables defined on $(\Omega,\curT,\curP)$ with values in $\RR^m$ is a Hilbert space denoted as $L^2(\Omega,\RR^m)$, equipped with the inner product
$\langle \bfY ,\bfY'\rangle_\Omega  = E\{ \langle \bfY ,\bfY'\rangle \}$ and the associated norm  $\Vert\bfY\Vert_\Omega = \langle \bfY ,\bfY \rangle_\Omega^{1/2}$.
Finally, a normalized Gaussian random vector $\bfV$ is a vector whose components are all statistically independent, with each component being Gaussian, centered, and having unit variance.
\section{Definition of the physics-informed manifold-based stochastic neural architecture for boundary-value problems with random operators}
\label{Section2}
This section provides the formal probabilistic definition of SPINN as a structural coupling between a stochastic boundary-value problem (SBVP) with a random operator and a manifold-based stochastic neural architecture. We recall the SBVP structure and use the SNN construction introduced in~\cite{Soize2026}, with the adaptations and implementation details required for the present SBVP-coupled formulation given in Section~\ref{Section4}. We then derive the supervised objective, its population target, and the gradient identity underlying the SBVP$\to$SNN alignment term.
\subsection{Key elements for defining SPINN}
\label{Section2.1}
We consider

\noindent {\tiny$\bullet$} a domain $\curS_c$, which is either a bounded open subset of $\RR^3$, the bounded unit cell of a periodic medium, or a manifold embedded in $\RR^3$. We denote its closure by $\thinoverline{\curS}{\!_c}$,

\noindent {\tiny$\bullet$} a second-order, centered, Gaussian real-valued random field $\{U_c(\bfx),\bfx\in \thinoverline{\curS}{\!_c}\}$, indexed by $\thinoverline{\curS}{\!_c}$,  with almost surely continuous trajectories, defined on a probability space $(\Omega,\curT,\curP)$,

\noindent {\tiny$\bullet$} a compact, boundaryless, multiply connected manifold $\curS$ on which the SNN architecture is defined,

\noindent {\tiny$\bullet$} a hyperparameter $\bftheta \in \curC_\bftheta \subset \RR^{n_\bftheta}$ controlling the probability distribution of the latent Gaussian random field $U(\cdot\, ;\bftheta)$ indexed by $\curS$.
\paragraph{(i) Definition of the stochastic observation-based SBVP}
We consider a SBVP posed on $\curS_c$.
The partial differential equation (PDE) coefficients are stochastic not because of observational or forcing noise, but because they arise from the random field $\{U_c(\bfx),\bfx\in \thinoverline{\curS}{\!_c}\}$, which determines the coefficient $K_c(\bfx) = \xi(U_c(\bfx))$, where $\xi$ is a given continuous function from $\RR$ into $]0\, , +\infty[$, making $K_c$ a non-Gaussian random field.
For instance, $K_c$ may represent a heterogeneous material coefficient on $\thinoverline{\curS}{\!_c}$, yielding a stochastic elliptic boundary-value problem driven by a positive-valued non-Gaussian field.
The SBVP also depends on a vector-valued parameter $\xx\in\RR^{n_\ppin}$. The components of $\xx$ are parameters that control the non-Gaussian field $K_c$ and the forcing function. The unique second-order stochastic solution is denoted $\{Z(\bfx\, ;\xx),\bfx\in\curS_c\}$ and can be written as
\begin{equation} \label{eq2.1}
Z(\bfx\, ;\xx) = S^\SBVPp( \bfx\, ; \xx, U_c)\quad , \quad \bfx\in\curS_c \quad , \quad  \xx\in \RR^{n_\ppin} \, ,
\end{equation}
where $S^\SBVPp$ is the deterministic mapping associated with solving the SBVP.
We define a random output as the $\RR^{n_\ppout}$-valued random variable $\bfY(\xx)$ given by
\begin{equation}  \label{eq2.2}
\bfY(\xx) =  \curN^\SBVPp(\xx \,; U_c) \quad, \quad \xx\in \RR^{n_\ppin} \, ,
\end{equation}
where the mapping $\curN^\SBVPp$ is defined by
\begin{equation}  \label{eq2.3}
\curN^\SBVPp(\xx \,; U_c) = \curO(Z(\cdot \, ; \xx))  = \curO(S^\SBVPp( \cdot \, ; \xx, U_c)) \, ,
\end{equation}
and $\curO$ is a prescribed linear or a nonlinear observation operator.
\paragraph{(ii) Definition of the SNN associated with the stochastic observation-based SBVP}
The stochastic neural network (SNN) is defined by a random neural architecture (RNA) as explained in Section~\ref{Section1} and detailed in Section~\ref{Section4}. More precisely, the SNN is not a fixed feedforward neural network with random weights; rather it is stochastically generated by the latent random field $\{U(\bfx\, ;\bftheta), \bfx\in\curS\}$, indexed by $\curS$ \cite{Adler2007}.
The RNA of this SNN, consisting of $N$ neurons,  is generated by an inhomogeneous Poisson point process $\Pi_{\Lambda_\bftheta}$  defined on the manifold $\curS$ \cite{Daley2003,Last2018}, whose intensity function $\Lambda_\bftheta$ with respect to the intrinsic surface measure $d\sigma(\bfx)$ on $\curS$ for which the mean count is $\int_\curS \Lambda_\bftheta(\bfx)\, d\sigma(\bfx)$, depends only on the covariance operator $C_U(\cdot\,;\bftheta)$ of $U(\cdot\, ;\bftheta)$. Hence, for fixed $\bftheta$, $U(\cdot\,;\bftheta)$ and $\Pi_{\Lambda_\bftheta}$ are statistically independent; their only coupling across configurations arises through the common hyperparameter $\bftheta$.
The random  neuron locations on $\curS$ are drawn from $\Pi_{\Lambda_\bftheta}$. The sparse connectivity between neurons is random and depends on their random geodesic distances on $\curS$ \cite{Penrose2003}. The weights are random and generated from the latent field
$U(\cdot\, ;\bftheta)$. The input vector $\xx\in\RR^{n_\ppin}$ and the $\RR^{n_\ppout}$-valued random output variable
$\YY(\xx\, ;\bftheta)$ are randomly selected from the set of neurons based on both $U(\cdot\, ;\bftheta)$ and $\Pi_{\Lambda_\bftheta}$.
The random output  $\YY(\xx\, ;\bftheta)$ depends on the hyperparameter $\bftheta \in \curC_{\bftheta} \subset \RR^{n_{\bftheta}}$.
Since the SNN is not a feedforward neural network, the random activations of the neurons are obtained by solving a random nonlinear neuronal equation; the equation and the fixed-point solver are described in Sections~\ref{Section4.7} and~\ref{Section4.8}, respectively, while existence results and additional algorithmic details for the underlying SNN construction are given in~\cite{Soize2026}.
The probability law of the SNN is controlled by the hyperparameter $\bftheta$, whose optimal value can be obtained via
supervised training, formulated as the task of inferring $\bftheta$ in the generative process, rather than directly optimizing the random weights as in Bayesian feedforward neural networks. For each $\bftheta$, the loss function is defined as the negative log-likelihood (NLL), where the non-Gaussian conditional probability density function of $\YY(\xx\, ;\bftheta)$ given $\xx$ is estimated via the KDE based on $n_\psim$ realizations of the random neuron locations, random selections of neurons for the input and output, random connectivities, sparse random weight matrix, and bias vector.
The SNN model is written as
\begin{equation}  \label{eq2.4}
\YY(\xx\, ;\bftheta) = \curN^\SNNp(\xx \,; U(\cdot \, ; \bftheta),\Pi_{\Lambda_\bftheta}) \quad, \quad \xx\in \RR^{n_\ppin}
                   \quad ,  \quad \bftheta\in\curC_{\bftheta}\subset \RR^{n_{\bftheta}} \, ,
\end{equation}
where the SNN input is the vector $\xx \in \RR^{n_\ppin}$ used in the SBVP, and the SNN output is the $\RR^{n_\ppout}$-valued random variable $\YY(\xx\, ;\bftheta)$, corresponding  to the SBVP observation  $\bfY(\xx)$ of the SBVP.
\paragraph{(iii) Definition of the training dataset and scaling}

The training dataset consists of $n_d$ independent input–output pairs.
The input $\xx$ is regarded as a realization of an $\RR^{n_\ppin}$-valued random variable $\XX$ with probability distribution $P_\XX(d\xx)$ on $\RR^{n_\ppin}$.
Let $\{\xx^i, i=1,\ldots,n_d\}$ denote independent realizations of $\XX$.
For each input $\xx^i$, a single output $\yy^i \in \RR^{n_\ppout}$ is observed.
This output is one realization of an $\RR^{n_\ppout}$-valued random observation variable $\YY^\Obsp(\xx^i)$.
The latter represents the data-generating conditional probability distribution, which may coincide with the SBVP response $\bfY(\xx^i)$ (Case~1) or differ from it (Case~2) (see Section~\ref{Section1}-(v)).
Since the SBVP involves a random operator, its response $\bfY(\xx^i)$ is itself an $\RR^{n_\ppout}$-valued random variable.
Likewise, for any hyperparameter $\bftheta$, the SNN output $\YY(\xx^i;\bftheta)$ is random.
The supervised learning dataset is therefore
\begin{equation}  \label{eq2.5}
\curD_\trainingp = \{ (\xx^i,\yy^i) ,  i =1,\ldots, n_d\} \quad , \quad \xx^i\in \RR^{n_\ppin} \quad , \quad \yy^i \in  \RR^{n_\ppout} \, .
\end{equation}
Since the SNN output $\YY(\xx;\bftheta)$ takes values in $[-1,1]^{n_\ppout}$, we scale inputs and outputs componentwise to $[-1,1]^{n_\ppin}$ and $[-1,1]^{n_\ppout}$, respectively.
We introduce affine scaling maps $S_\XX$ for inputs and $S_{\YY^\Obspp}$ for observed outputs so that
$S_{\YY^\Obspp}(\YY^\Obsp(\xx)) \in [-1,1]^{n_\ppout}$.
The same output scaling is applied to $\YY(\xx;\bftheta)$ and to the SBVP response $\bfY(\xx)$ to ensure consistency in the alignment term.

All reported results are expressed in physical (unscaled) units; numerical computations are performed in the scaled domain and mapped back through the inverse transformations.
\paragraph{(iv) Formal definition of the classical supervised training for SNN}
For any given $\bftheta\in\curC_\bftheta$ and for all $\xx$ in $\RR^{n_\ppin}$, it is assumed that the conditional probability distribution $P^{_,\SNNpp}_{\YY\vert\XX}(d\yy \,\vert \,\xx\,; \bftheta)$ admits a probability density function with respect to $d\yy$ and therefore
\begin{equation}  \label{eq2.6}
P^{_,\SNNpp}_{\YY\vert\XX}(d\yy \,\vert \,\xx\, ; \bftheta) = p^{\SNNpp}_{\YY\vert\XX}(\yy \,\vert \,\xx\, ; \bftheta) \, d\yy \, ,
\end{equation}
where $\yy \mapsto p^\SNNpp_{\YY\vert\XX}(\yy \,\vert\, \xx \, ; \bftheta)$ denotes the conditional probability density function of the $\RR^{n_\ppout}$-valued random variable $\YY(\xx\, ;\bftheta)=\curN^\SNNp(\xx\, ;U(\cdot;\bftheta),\Pi_{\Lambda_\bftheta})$.
Training the vector-valued hyperparameter $\bftheta$ from network supervision consists in computing an optimal value  $\bftheta^\optp$  of the hyperparameter $\bftheta$ obtained by solving the optimization problem,
\begin{equation}  \label{eq2.7}
\bftheta^\optp = \arg \min_{\bftheta\in \curC_{\bftheta}} \curJ^\SNNp(\bftheta) \, ,
\end{equation}
where the cost function $\curJ^\SNNp(\bftheta)$ is defined
\begin{equation}  \label{eq2.8}
\curJ^\SNNp(\bftheta) = -\sum_{i=1}^{n_d} \log p^\SNNpp_{\YY\vert\XX}(\yy^i \,\vert \,\xx^i; \bftheta) \quad , \quad \bftheta\in\curC_{\bftheta} \, .
\end{equation}
For fixed $\bftheta$ and input $\xx^i$, one first draws a reference realization $(u^{(0)}(\cdot;\bftheta),\pp^{(0)}_{\Lambda_\bftheta})$ of $(U(\cdot;\bftheta),\Pi_{\Lambda_\bftheta})$, which defines one realization of the random neural architecture.
Conditioned on this architecture realization, $n_\psim$ independent realizations $u^\ell(\cdot;\bftheta)$ of the latent field are then generated to construct $n_\psim$ realizations of the corresponding sparse random weight matrix and to solve the associated random nonlinear neural equations, yielding $n_\psim$ realizations $\yy^\ell(\xx^i;\bftheta)$ of $\YY(\xx^i;\bftheta)$.
The probability density function $p^\SNNpp_{\YY\vert\XX}(\cdot \,\vert\, \xx^i; \bftheta)$ is finally estimated by Gaussian KDE \cite{Silverman1986} from $\{\yy^\ell(\xx^i;\bftheta)\}_{\ell=1}^{n_\psim}$ (see Algorithm~\ref{algorithm1} in Section~\ref{Section2.3}).
\paragraph{(v) Formal definition of the stochastic solution of the  SBVP}
For  all $\xx$ in $\RR^{n_\ppin}$, it is assumed that the conditional probability distribution $P^{_,\SBVPpp}_{\bfY\vert\XX}(d\bfy \,\vert \,\xx)$ admits a probability density function with respect to $d\bfy$ and therefore
\begin{equation}  \label{eq2.9}
P^{_,\SBVPpp}_{\bfY\vert\XX}(d\bfy \,\vert \,\xx) = p^{\SBVPpp}_{\bfY\vert\XX}(\bfy \,\vert \,\xx) \, d\bfy \, .
\end{equation}
where $\bfy \mapsto p^\SBVPpp_{\bfY\vert\XX}(\bfy \,\vert\, \xx)$ denotes the conditional probability density function of the $\RR^{n_\ppout}$-valued random variable $\bfY(\xx)=\curN^\SBVPp(\xx\, ;U_c)$.
For a given point $\xx^i \in\RR^{n_\ppin}$ in the training dataset, the SBVP is solved (using the Monte Carlo simulation method) to estimate the  conditional probability density function $\yy\mapsto p^\SBVPpp_{\bfY\vert\XX}(\yy \,\vert \,\xx^i)$ of $\bfY(\xx^i) =  \curN^\SBVPp(\xx^i \,; U_c)$ via the KDE from $n_\psim$ realizations $\bfy^\ell(\xx^i) =  \curN^\SBVPp(\xx^i \,; u_c^\ell)$ of $\bfY(\xx^i)$, where $\{u_c^\ell, \ell=1,\ldots, n_\psim\}$ are $n_\psim$ realizations of the Gaussian random field $U_c$.
\subsection{Definition of the supervised training for SPINN as the SNN-SBVP coupling}
\label{Section2.2}
There are \textit{a priori} many possible methods to achieve the SNN-SBVP coupling. For the reasons presented in the next paragraph, we propose a probabilistic coupling that guarantees the \textit{data fidelity} of the SNN while ensuring the \textit{alignment} of the SBVP with the SNN. This probabilistic approach consists of minimizing the data NLL (data fidelity) of the SNN plus a cross-entropy (CE) term (alignment) that matches the SNN conditional probability density function with the SBVP conditional probability density function,  under the same random vector germ used by the latent random field $U(\cdot\, ; \bftheta)$ and the random field $U_c$ used for generating the random operator of the SBVP. The fundamental objective of the proposed method for SPINN is to use, in both geometric and probabilistic terms,  the aleatory descriptor of the physical system to generate the random neural architecture of the SNN.
\paragraph{(i) Methodology proposed for supervised training for SPINN}

Classical PINNs enforce the SBVP weakly by penalizing PDE and boundary residuals at collocation points. This strategy avoids repeatedly solving the SBVP during training but requires explicit implementation or access to the differential operator and boundary conditions, and typically represents the solution in a discretized state space whose dimension is tied to the underlying mesh.
The proposed SPINN uses the SBVP only through evaluations of its solution mapping and explicitly evaluates the SBVP for each input $\xx^i$ to produce a random observable $\bfY(\xx^i) = \curN^{\SBVPp}(\xx^i\, ; U_c)$, as defined by Eqs.~\eqref{eq2.2}–\eqref{eq2.3}. It then trains a stochastic surrogate $\YY(\xx^i\,;\bftheta)=\curN^{\SNNp}(\xx^i\, ;U(\cdot\,;\bftheta),\Pi_{\Lambda_\bftheta})$,
consistent with Eq.~\eqref{eq2.4}, whose output dimension is exactly $n_{\ppout}$ and is typically much smaller than the full state-space dimension.
The formulation in observation space is useful: (i) it avoids estimating a high-dimensional map into the discretized state space; (ii) it yields better-conditioned KDEs and lower-variance estimates because the probability density function is defined on $\RR^{n_{\ppout}}$; and (iii) it allocates the statistical approximation to the observed quantities rather than to the full field.
{\color{black}
For each training input $\xx^i$, the proposed SPINN requires an ensemble of SBVP evaluations rather than evaluations of pointwise PDE residuals. However, the corresponding SBVP observables are independent of the SNN hyperparameter $\bftheta$. The $n_\psim$ realizations
$\{\bfy^\ell(\xx^i),\ell=1,\ldots,n_\psim\}$ can therefore be generated once in an offline preprocessing stage and reused for all trial-grid points and all local-optimization iterations. Consequently, updating $\bftheta$ requires recomputing only the SNN construction and its predictive conditional probability density function, not resolving the SBVP.
This formulation does not require differentiating or modifying the numerical solution procedure and enables direct probabilistic alignment in observation space. Specifically, for each $\xx^i$ in the training dataset $\curD_{\trainingp}$ of Eq.~\eqref{eq2.5}, the SNN conditional probability density function $p^\SNNpp_{\YY\vert\XX}(\cdot \, \vert \,\xx^i\,;\bftheta)$ is trained by minimizing the data NLL in Eq.~\eqref{eq2.8}, plus an SBVP $\to$ SNN cross-entropy term. This matches the densities of observables through common KDE smoothing, rather than through pointwise PDE residuals.
}
Furthermore, the realization $\bfeta^\ell$ of the normalized Gaussian $\RR^m$-valued random variable $\bfH$ is shared between the realization  $u^\ell(\cdot\, ;\bftheta)$ for the SNN and the realization $u_c^\ell(\cdot)$ for the SBVP, such that
\begin{equation*}
 u^\ell(\bfx\, ;\bftheta) = [M(\bfx\,;\bftheta)]\,\bfeta^\ell \, , \,\,  \bfx \in \curS
\quad , \quad
u_c^\ell(\bfx) = [M_c(\bfx)]\,\bfeta^\ell \, , \,\,\bfx\in\curS_c
\end{equation*}
This common-germ coupling reduces the gradient variance of the alignment term and stabilizes the optimization over $\bftheta$.
Because the SNN is random by construction, its conditional probability density function reflects both the SBVP-induced aleatory variability and the internal architectural randomness. The alignment therefore operates directly at the level of conditional probability distributions on observables.
Let $\operatorname{KL}$ denote the Kullback-Leibler divergence, $\operatorname{S}$ the Shannon entropy, and $\operatorname{CE}$ the cross-entropy; the alignment uses the CE/KL identity
\begin{equation} \label{eq2.10}
\rm{KL}\,\Big (\, p^\SBVPpp_{\bfY \vert \XX}(\cdot\, \vert \,\xx) \,\,  \Vert \,\, p^\SNNpp_{\YY \vert \XX}(\cdot\, \vert \,\xx \, ;\bftheta)\,\Big ) =
         - \rm{S}\,\Big (\, p^\SBVPpp_{\bfY \vert \XX}(\cdot\, \vert \, \xx )\,\Big ) + \rm{CE}\,\Big (\, p^\SBVPpp_{\bfY \vert \XX}(\cdot\, \vert \,\xx)\, , p^\SNNpp_{\YY \vert \XX}(\cdot\, \vert \,\xx \, ;\bftheta) \,\Big) \,
\end{equation}
which enforces that this stochastic surrogate matches the SBVP-induced conditional probability density function on the observation space, without ever requiring access to (or approximation of) the full state field or the governing operators. Here, the conditional probability density function $p^\SBVPpp_{\bfY \vert \XX}$ is independent of $\bftheta$.
In practice, we estimate $p^{\SNN}_{\bfY \vert\bfX}(\cdot \mid \xx;\bftheta)$ and $p^{\SBVP}_{\bfY \vert\bfX}(\cdot \mid \xx)$ by Gaussian KDE with a shared bandwidth per output component, and we reuse the same germ $\bfH$ across SBVP and SNN draws (common random numbers) to reduce the variance of the cross-entropy estimator and its gradient~\cite{Rubinstein2008}.
\paragraph{(ii) Definition of the supervised training for SPINN}
Taking into account the explanations presented in paragraph~(i) above, supervised training for SPINN minimizes a  data term plus an SBVP $\to$ SNN alignment term, both evaluated on the observation space and averaged over inputs,
\begin{equation} \label{eq2.11}
\bftheta^\optp ( \alpha_{\rm{top}})
= \arg\min_{ \bftheta \,\in \,\curC_{\bftheta} }\, \curJ^\SPINNpp(\bftheta\, ; \alpha_{\rm{top}})\, .
\end{equation}
with
\begin{equation} \label{eq2.12}
 \curJ^\SPINNpp(\bftheta\, ; \alpha_{\rm{top}}) =  \sum_{i=1}^{n_d} \bigg ( -\log p^\SNNpp_{\YY\vert\XX}(\yy^i \,\vert \,\xx^i\, ; \bftheta) \, + \, \alpha_{\rm{top}} \, \rm{CE}_i(\bftheta) \bigg ) \, ,
\end{equation}
where $\alpha_{\rm{top}}\geq 0$ is a  dimensionless trade-off parameter and
\begin{equation} \label{eq2.13}
\rm{CE}_i(\bftheta)
= - \int_{\RR^{n_\ppout}}  p^\SBVPpp_{\bfY \vert \XX}(\bfy\, \vert \,\xx^i)\, \log p^\SNNpp_{\YY \vert \XX}(\bfy\, \vert \,\xx^i \, ;\bftheta)   \, d\bfy  \, .
\end{equation}
{\color{black}
CE-form SPINN training admits the following finite-sample interpretation. For each input $\xx^i$, the objective combines the log-likelihood of the observed datum $\yy^i$ with weighted log-likelihood contributions evaluated at simulator-generated pseudo-observations. This is an algebraic representation of the Monte Carlo cross-entropy estimator and is not, by itself, claimed as a novel statistical operation.
Its probabilistic interpretation nevertheless differs fundamentally between the two regimes considered below. In Case~1,
$p^{\Obsp}_{\bfY\vert\XX} = p^{\SBVPpp}_{\bfY\vert\XX}$, and the observation and SBVP terms have the same population target. The additional contribution then acts mainly as finite-sample regularization and variance reduction. In Case~2,
$p^{\Obsp}_{\bfY\vert\XX} \neq  p^{\SBVPpp}_{\bfY\vert\XX}$, and the simulator-generated pseudo-observations are not additional samples from the observation-generating probability distribution. They represent a distinct SBVP-induced reference probability measure. The resulting population target is the nontrivial convex mixture defined in Eq.~\eqref{eq2.17}, which implements a controlled compromise between data fidelity and the nominal SBVP under model--data discrepancy.
}
To estimate $\rm{CE}_i(\bftheta)$, Eq.~\eqref{eq2.13} is rewritten as
\begin{equation} \label{eq2.14}
\rm{CE}_i(\bftheta)
= - E_{\bfY\vert\XX=\xx^i} \bigg\{\log p^\SNNpp_{\YY \vert \XX}(\bfY \, \vert \, \xx^i \, ;\bftheta) \bigg\}  \, ,
\end{equation}
which can be estimated by using the realizations $\bfy^\ell(\xx^i) =  \curN^\SBVPp(\xx^i \,; u_c^\ell)$ of $\bfY(\xx^i)$ defined in Section~\ref{Section2.1}-(v) as
\begin{equation} \label{eq2.15}
\widehat{\rm{CE}}_i(\bftheta)
= - \frac{1}{n_\psim} \sum_{\ell=1}^{n_\psim}  \log p^\SNNpp_{\YY \vert \XX}(\bfy^\ell(\xx^i)\, \vert \,\xx^i \, ;\bftheta)  \, ,
\end{equation}
From Eqs.~\eqref{eq2.12} to \eqref{eq2.15}, the scalar $\alpha_{\rm{top}}$ controls the balance between fidelity
to the data and alignment with the SBVP-induced conditional probability density function. It is convenient to reparameterize this weight as
\begin{equation}\label{eq2.16}
\lambda = \frac{ \alpha_{\rm top} }{ 1 + \alpha_{\rm top} } \in [0,1[ \, ,
\end{equation}
so that, in the population limit discussed in paragraph~(iii) below, the SPINN objective corresponds to an information
projection of a convex mixture
\begin{equation}\label{eq2.17}
q_{\lambda}(\cdot \,\vert\, \xx) = (1-\lambda)\,p^{\Obsp}_{\bfY\vert\XX}(\cdot \,\vert\, \xx)
+ \lambda \, p^{\SBVPpp}_{\bfY\vert\XX}(\cdot \,\vert\, \xx)\, .
\end{equation}
onto the SNN family (Proposition~\ref{Proposition1}). For $\alpha_{\rm top}=0$ (that is, $\lambda=0$), training reduces to classical
maximum likelihood on the experimental observations alone, corresponding to the optimization problem defined by Eq.~\eqref{eq2.7}. As $\alpha_{\rm top}$ increases (so that $\lambda \uparrow 1$), the SBVP conditional probability density function
$p^{\SBVPpp}_{\bfY\vert\XX}(\cdot \,\vert\, \xx)$ progressively dominates the effective training target.
Intermediate values of $\alpha_{\rm top}$ therefore implement a tunable compromise: they regularize the SNN toward physics-consistent
conditional probability distributions while preserving robustness to potential model–data mismatch in the SBVP.
{\color{black}
In Case~2, the limiting choice $\alpha_{\rm top} = 0$ and any prescribed choice $\alpha_{\rm top} > 0$ do not define the same population learning problem. By Proposition~\ref{Proposition1}, $\alpha_{\rm top} = 0$ targets $p^{\Obsp}_{\bfY\vert\XX}$, whereas
$\alpha_{\rm top} > 0$ targets the information projection of the mixture $q_{\lambda}$ in Eq.~\eqref{eq2.17}. Consequently, a comparison between separately retrained data-only and aligned models would address a distinct model-selection question concerning the chosen data--SBVP compromise; it is not required to establish the SPINN formulation or its population interpretation.
}
In practice, we do not treat $\alpha_{\rm top}$ as an additional tunable hyperparameter selected on a separate model-selection
set or via an outer optimization loop. Instead, we exploit the Monte Carlo samples already generated to evaluate
Eqs.~\eqref{eq2.12}-\eqref{eq2.15} to define a single scale-matching value $\alpha_0$. Let $\bftheta_\pref$ denote a reference hyperparameter, obtained for instance from a short trial-based optimization with $\alpha_{\rm top} = 0$ (see
Section~\ref{Section6.4}). We first compute the empirical means of the data negative log-likelihood and of the alignment term
at this reference point,
\begin{equation}\label{eq2.18}
\overline L_{\rm data}(\bftheta_\pref) = \frac{1}{n_d} \sum_{i=1}^{n_d} -\log p^\SNNpp_{\YY\vert\XX}(\yy^i \,\vert \,\xx^i\, ; \bftheta_{\pref})
\quad , \quad
\overline L_{\rm align}(\bftheta_\pref) = \frac{1}{n_d} \sum_{i=1}^{n_d} \rm{CE}_i(\bftheta_\pref) \, ,
\end{equation}
where $\rm{CE}_i$ is defined in Eq.~\eqref{eq2.13}. We then set
\begin{equation}\label{eq2.19}
\alpha_0 = \frac{\overline L_{\rm data}(\bftheta_\pref)}{\overline L_{\rm align}(\bftheta_\pref)} \, ,
\end{equation}
so that, at $\bftheta_\pref$, the two contributions in $\curJ^\SPINNpp$ have comparable magnitude.
{\color{black}
The scale-matching relation in Eq.~\eqref{eq2.19} is defined in the normalized observation coordinates fixed in Section~\ref{Section2.1}-(iii). Its role is operational: it balances the numerical magnitudes of the data-fidelity and alignment contributions at the reference point $\bftheta_\pref$. No coordinate-invariant statistical-distance interpretation of $\alpha_0$ is required by the training procedure.
}
We simply fix
\begin{equation*}
\alpha_{\rm top}:= \alpha_0 \, ,
\end{equation*}
thereby keeping the balance between data fidelity and SBVP$\to$SNN alignment controlled by the scale-matching
construction above, without any additional hyperparameter search or extra SBVP solves. The two-step training
algorithm with this fixed trade-off parameter is detailed in Section~\ref{Section6.5}.

In this work, there is no hold-out model-selection set and no distinct diagnostic phase. The diagnostic evaluation is intrinsic to training through the SBVP$\to$SNN alignment term in Eq.~\eqref{eq2.12}: by minimizing the cross-entropy against the SBVP-induced conditional probability density function, the learned SNN is continuously regularized toward physics-consistent predictions.
Model selection is therefore carried out on the training set, while generalization is assessed exclusively on the test dataset (Section~\ref{Section3}).
\paragraph{(iii) Population target and gradient identity for the alignment term}

Let $p^{\Obsp}_{\bfY\vert\XX}(\cdot \,\vert\, \xx)$ denote the (unknown) conditional probability density function of the experimental measurements generating $\curD_\trainingp$, and recall the SBVP and SNN conditionals $p^{\SBVPpp}_{\bfY\vert\XX}(\cdot \,\vert\, \xx)$ and $p^{\SNNpp}_{\YY\vert\XX}(\cdot \,\vert\, \xx\, ;\bftheta)$.
Consider the population version of Eq.~\eqref{eq2.12}, defined up to an additive constant independent of $\bftheta$, as
\begin{equation*}
\curJ^{\SPINNpp}(\bftheta\, ;  \alpha_{\rm{top}})
= E_{\XX}\bigg \{ \,\rm{CE}\,\Big ( p^{\Obsp}_{\bfY\vert\XX}(\cdot \,\vert\, \XX)\, ,\,p^{\SNNpp}_{\YY\vert\XX}(\cdot \,\vert\, \XX\, ;\bftheta)\Big )
+ \alpha_{\rm top}\,\rm{CE}\,\Big ( p^{\SBVPpp}_{\bfY\vert\XX}(\cdot \,\vert\, \XX)\, ,\, p^{\SNNpp}_{\YY\vert\XX}(\cdot \,\vert\, \XX\, ;\bftheta)\Big )\,\bigg \} \, ,
\end{equation*}
where $\rm{CE}(q,p)=-\int_{\RR^{n_\ppout}} q(\bfy)\, \log p(\bfy) \, d\bfy$ as in Eq.~\eqref{eq2.13}.

\begin{proposition}[Population target as information projection onto a convex mixture] \label{Proposition1}
Fix $\alpha_{\rm top}\ge 0$ and set $\lambda=\alpha_{\rm top}/(1+\alpha_{\rm top})\in[0,1[$.
For each $\xx$, define the convex mixture
\begin{equation*}
q_{\lambda}(\cdot \,\vert\, \xx) = (1-\lambda)\,p^{\Obsp}_{\bfY\vert\XX}(\cdot \,\vert\, \xx)
+ \lambda \, p^{\SBVPpp}_{\bfY\vert\XX}(\cdot \,\vert\, \xx)\, .
\end{equation*}
Then
\begin{equation}\label{eq2.20}
\curJ^{\SPINNpp}(\bftheta\, ;  \alpha_{\rm{top}}) = (1+\alpha_{\rm top})\,
E_{\XX}\bigg \{ \,\rm{CE}\, \Big ( q_{\lambda}(\cdot \,\vert\, \XX)\,,\,p^{\SNNpp}_{\YY\vert\XX}(\cdot \,\vert\, \XX\, ;\bftheta)\Big) \bigg \} \, .
\end{equation}
Consequently,
\begin{equation*}
\bftheta^\opt( \alpha_{\rm{top}}) \in \arg\min_{\bftheta}\, E_{\XX}\bigg \{ \rm{KL}\, \Big ( q_{\lambda}(\cdot \,\vert\, \XX)\,\, \Vert \,\, p^{\SNNpp}_{\YY\vert\XX}(\cdot \,\vert\, \XX\, ;\bftheta)\Big )\bigg \} \, ,
\end{equation*}
and if the SNN family is rich enough that $p^{\SNNpp}_{\YY\vert\XX}(\cdot \,\vert\, \xx\,;\bftheta)$ can realize $q_{\lambda}(\cdot \,\vert\, \xx)$ for a.e. $\xx$, then every global minimizer satisfies,
\begin{equation*}
p^{\SNNpp}_{\YY\vert\XX}(\cdot \,\vert\, \xx\, ;\bftheta^\opt_t) = q_{\lambda}(\cdot \,\vert\, \xx)\quad \text{for a.e.}\,  \xx\, .
\end{equation*}
In the limits $\alpha_{\rm top} = 0$ and $\alpha_{\rm top}\to\infty$, the target reduces to $p^{\Obsp}_{\bfY\vert\XX}$ and $p^{\SBVPpp}_{\bfY\vert\XX}$, respectively.
\end{proposition}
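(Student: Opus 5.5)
The plan is to exploit that $\operatorname{CE}(q,p)$ is affine in its first argument. For fixed $\xx$ and $\bftheta$, write $p_\bftheta = p^{\SNNpp}_{\YY\vert\XX}(\cdot\,\vert\,\xx\,;\bftheta)$. The integrand of the population objective is
\begin{equation*}
-\int_{\RR^{n_\ppout}} \Big( p^{\Obsp}_{\bfY\vert\XX}(\bfy\,\vert\,\xx) + \alpha_{\rm top}\, p^{\SBVPpp}_{\bfY\vert\XX}(\bfy\,\vert\,\xx)\Big)\log p_\bftheta(\bfy)\, d\bfy .
\end{equation*}
Since $1-\lambda = 1/(1+\alpha_{\rm top})$ and $\lambda = \alpha_{\rm top}/(1+\alpha_{\rm top})$, the bracket equals $(1+\alpha_{\rm top})\,q_\lambda(\bfy\,\vert\,\xx)$. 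This gives the pointwise identity $\operatorname{CE}(p^{\Obsp},p_\bftheta)+\alpha_{\rm top}\operatorname{CE}(p^{\SBVPpp},p_\bftheta) = (1+\alpha_{\rm top})\operatorname{CE}(q_\lambda,p_\bftheta)$. Taking $E_\XX$ then yields Eq.~\eqref{eq2.20}.

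The one technical point is justifying the splitting of the integral. I will assume both cross-entropies are finite, or at least not of the form $+\infty-\infty$; this is implicit in the population objective being well defined. Under that assumption, linearity of the Lebesgue integral applies to the positive and negative parts of $\log p_\bftheta$ separately. Linearity of $E_\XX$ likewise requires measurability in $\xx$, which I take as part of the setting.

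For the KL statement, I apply the identity \eqref{eq2.10} with $q_\lambda$ in place of $p^{\SBVPpp}$:
\begin{equation*}
\operatorname{CE}(q_\lambda,p_\bftheta) = \operatorname{S}(q_\lambda) + \operatorname{KL}(q_\lambda\,\Vert\,p_\bftheta).
\end{equation*}
This requires $\operatorname{S}(q_\lambda(\cdot\,\vert\,\xx))$ to be finite with finite expectation; this is the main assumption to flag. The term $E_\XX\{\operatorname{S}(q_\lambda)\}$ does not depend on $\bftheta$, and $1+\alpha_{\rm top}>0$. Therefore minimizing $\curJ^{\SPINNpp}$ over $\curC_\bftheta$ is equivalent to minimizing $E_\XX\{\operatorname{KL}(q_\lambda\Vert p_\bftheta)\}$, with the same argmin set.

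For realizability, Gibbs' inequality gives $\operatorname{KL}\ge 0$, with equality iff $p_\bftheta=q_\lambda$ a.e. in $\bfy$. If some $\bftheta$ achieves $p_\bftheta=q_\lambda$ for a.e. $\xx$, the minimum value of the expected KL is $0$. Any global minimizer then has expected KL equal to zero, so the nonnegative integrand vanishes $P_\XX$-a.e. This gives $p_{\bftheta^\opt}=q_\lambda$ for a.e. $\xx$. Here "a.e. $\xx$" is understood with respect to $P_\XX$, and "realize" means a single $\bftheta$ works for a.e. $\xx$ simultaneously.

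Finally, the limits are direct. At $\alpha_{\rm top}=0$ we have $\lambda=0$ and $q_0=p^{\Obsp}$. As $\alpha_{\rm top}\to\infty$ we have $\lambda\to1$, and $q_\lambda\to p^{\SBVPpp}$ in $L^1$, since $\Vert q_\lambda-p^{\SBVPpp}\Vert_{L^1}\le 2(1-\lambda)$. The statement concerns only the target, so I will not claim convergence of the minimizers.
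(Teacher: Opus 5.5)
Your proposal is correct and follows the paper's proof: both use linearity of $\operatorname{CE}$ in its first argument, with $1-\lambda = 1/(1+\alpha_{\rm top})$, then the decomposition $\operatorname{CE}(q_\lambda,p) = \operatorname{S}(q_\lambda) + \operatorname{KL}(q_\lambda\Vert p)$ with $\operatorname{S}(q_\lambda)$ independent of $\bftheta$, and then Gibbs' inequality. Your finiteness and measurability caveats, the single-$\bftheta$ reading of ``rich enough'', and the explicit bound $\Vert q_\lambda - p^{\SBVPpp}_{\bfY\vert\XX}\Vert_{L^1}\le 2(1-\lambda)$ for the limit make explicit what the paper leaves implicit, but they do not change the argument.
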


\begin{proof}
By linearity of $\rm{CE}$ in its first argument and the definition of $\lambda$, we get
\begin{equation*}
\rm{CE}\, \Big ( p^{\Obsp}_{\bfY\vert\XX}\, , p^{\SNNpp}_{\YY\vert\XX}\Big ) + \alpha_{\rm top}\rm{CE}\, \Big ( p^{\SBVPpp}_{\bfY\vert\XX}\, , p^{\SNNpp}_{\YY\vert\XX}\Big )
=(1+\alpha_{\rm top})\,\rm{CE}\, \Big( (1-\lambda)\, p^{\Obsp}_{\bfY\vert\XX} + \lambda \, p^{\SBVPpp}_{\bfY\vert\XX}\, , \, p^{\SNNpp}_{\YY\vert\XX}\Big ),
\end{equation*}
pointwise in $\xx$. Taking $E_\XX$ yields \eqref{eq2.20}.
Since $\rm{CE}(q,p)=\rm{S}(q)+\rm{KL}(q\, \Vert\,  p)$ with $\rm{S}(q)$ independent of $\bftheta$, minimizing $\curJ^{\SPINNpp}$ is equivalent to minimizing the conditional KL. The last statement follows from Gibbs’ inequality.
\end{proof}

\begin{proposition}[Score-form gradient of the alignment term]\label{Proposition2}
Assume for each $\xx$ that $\bfy\mapsto p^{\SNNpp}_{\YY\vert\XX}(\bfy \,\vert\, \xx\, ;\bftheta) > 0$ for all $\bfy$ in the support of
$p^{\SBVPpp}_{\bfY\vert\XX}(\cdot \vert\,  \xx)$, and that there exists
$g_{\xx}\in L^1\big(p^{\SBVPpp}_{\bfY\vert\XX}(\cdot \, \vert\, \xx)\big )$ with
$\Vert \nabla_{\bftheta}\log p^{\SNNpp}_{\YY\vert\XX}(\bfy \, \vert \, \xx\, ;\bftheta)\Vert \le g_{\xx}(\bfy)$
in a neighborhood of $\bftheta$, so that differentiation under the integral sign is justified.
Then for every $\xx^i$,
\begin{equation} \label{eq2.21}
\nabla_{\bftheta}\, \rm{CE}_i(\bftheta)
= -\,E_{\bfY\vert \XX=\xx^i} \bigg\{\,\nabla_{\bftheta} \log p^{\SNNpp}_{\YY\vert\XX}(\bfY \,\vert\, \xx^i\, ;\bftheta)\,
     \bigg \}.
\end{equation}
In particular, if for a given $\bftheta$, we have $p^{\SNNpp}_{\YY\vert\XX}(\cdot \,\vert\, \xx^i\, ;\bftheta) =
 p^{\SBVPpp}_{\bfY\vert\XX}(\cdot \,\vert\, \xx^i)$, then $\nabla_{\bftheta}\rm{CE}_i(\bftheta)=\bfzero$.
\end{proposition}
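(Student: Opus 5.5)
The plan is a dominated-convergence argument for the first identity, followed by a direct computation for the special case. Write $\rm{CE}_i(\bftheta) = -\int_{\RR^{n_\ppout}} p^\SBVPpp_{\bfY\vert\XX}(\bfy\,\vert\,\xx^i)\,\log p^\SNNpp_{\YY\vert\XX}(\bfy\,\vert\,\xx^i;\bftheta)\,d\bfy$ as in Eq.~\eqref{eq2.13}. Two facts are used throughout:

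\emph{(a)} the weight $p^\SBVPpp_{\bfY\vert\XX}(\cdot\,\vert\,\xx^i)$ does not depend on $\bftheta$ (Section~\ref{Section2.1}-(v));

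\emph{(b)} by the positivity hypothesis, the integrand is well defined on the support of that weight, which is the only region contributing to the integral.

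To differentiate, fix a component $\theta_k$ and a small increment $h$ such that $\bftheta + h\,\bfe_k$ stays in the neighborhood given in the hypothesis. Form the difference quotient of the integrand. By the mean value theorem applied to $t\mapsto \log p^\SNNpp_{\YY\vert\XX}(\bfy\,\vert\,\xx^i;\bftheta+t\bfe_k)$, this quotient is bounded by $g_{\xx^i}(\bfy)$. The function $g_{\xx^i}$ is integrable against $p^\SBVPpp_{\bfY\vert\XX}(\cdot\,\vert\,\xx^i)$, so dominated convergence allows the limit $h\to 0$ to pass inside the integral.

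This yields
\[
\partial_{\theta_k}\rm{CE}_i(\bftheta) = -\int p^\SBVPpp_{\bfY\vert\XX}(\bfy\,\vert\,\xx^i)\,\partial_{\theta_k}\log p^\SNNpp_{\YY\vert\XX}(\bfy\,\vert\,\xx^i;\bftheta)\,d\bfy .
\]
Rewriting the integral as a conditional expectation over $\bfY\vert\XX=\xx^i$, exactly as in the passage from Eq.~\eqref{eq2.13} to Eq.~\eqref{eq2.14}, gives Eq.~\eqref{eq2.21}.

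The main obstacle is this interchange of limit and integral. The hypothesis is stated for the gradient norm in a neighborhood of $\bftheta$, and one has to check that it controls the difference quotients along the whole segment. Convexity of a small ball around $\bftheta$ handles this, and differentiability of $\bftheta\mapsto\log p^\SNNpp_{\YY\vert\XX}$ on that segment is implicit in the hypothesis.

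For the special case, substitute $p^\SBVPpp_{\bfY\vert\XX}(\cdot\,\vert\,\xx^i)=p^\SNNpp_{\YY\vert\XX}(\cdot\,\vert\,\xx^i;\bftheta)$ into Eq.~\eqref{eq2.21}. Since $p\,\nabla_\bftheta\log p=\nabla_\bftheta p$, the gradient becomes $-\int \nabla_\bftheta p^\SNNpp_{\YY\vert\XX}(\bfy\,\vert\,\xx^i;\bftheta)\,d\bfy$.

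Next, apply the same domination argument to the normalization identity $\int p^\SNNpp_{\YY\vert\XX}(\bfy\,\vert\,\xx^i;\bftheta')\,d\bfy = 1$ for $\bftheta'$ near $\bftheta$. Differentiating under the integral requires a dominating function for $\nabla_{\bftheta}p^{\SNNpp}_{\YY\vert\XX}$. At the matching point this is exactly $g_{\xx^i}\,p^\SNNpp_{\YY\vert\XX}$, which is integrable. For nearby $\bftheta'$, I will use the bound $p(\cdot\,;\bftheta')\le p(\cdot\,;\bftheta)\,e^{\Vert\bftheta'-\bftheta\Vert g_{\xx^i}}$, which follows from integrating the log-gradient bound along the segment. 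If that exponential moment is not available, I will note that this extra regularity must be assumed.

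Once the interchange is justified, the integral of the gradient equals the gradient of the constant $1$, which is $\bfzero$. Hence the score has zero mean under the model, and $\nabla_\bftheta\rm{CE}_i(\bftheta)=\bfzero$.

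An alternative route for the special case is to use the decomposition Eq.~\eqref{eq2.10}. Since $\rm{S}\big(p^\SBVPpp_{\bfY\vert\XX}\big)$ is independent of $\bftheta$, the function $\rm{CE}_i$ differs from the KL divergence by a constant. The KL divergence is nonnegative (Gibbs' inequality, as in the proof of Proposition~\ref{Proposition1}) and vanishes at the matching $\bftheta$. So $\bftheta$ is a global minimizer of $\rm{CE}_i$, and since $\rm{CE}_i$ is differentiable there by the first part, its gradient vanishes. This requires $\bftheta$ to be an interior point of the domain. I would present this as the primary argument, because it avoids the extra integrability issue.
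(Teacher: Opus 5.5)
Your derivation of Eq.~\eqref{eq2.21} is the paper's own proof: interchange differentiation and integration, then identify the integral as the conditional expectation under the SBVP density. You have made the mean-value and dominated-convergence details explicit.

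Where you depart is the \emph{in particular} clause. The paper's proof stops after the interchange and gives no argument for it. It implicitly relies on the zero-mean-score identity $\int \nabla_{\bftheta}p^{\SNNpp}_{\YY\vert\XX}(\bfy\,\vert\,\xx^i;\bftheta)\,d\bfy=\nabla_{\bftheta}\int p^{\SNNpp}_{\YY\vert\XX}(\bfy\,\vert\,\xx^i;\bftheta)\,d\bfy=\bfzero$. As you correctly note, that identity needs a domination of $\nabla_{\bftheta}p^{\SNNpp}_{\YY\vert\XX}$ near $\bftheta$ (for instance an exponential moment of $g_{\xx^i}$), and this is not among the stated hypotheses.

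Your primary route through Gibbs' inequality avoids the problem. At the matching point, $\mathrm{CE}_i(\bftheta')-\mathrm{CE}_i(\bftheta)=\mathrm{KL}\big(p^{\SBVPpp}_{\bfY\vert\XX}(\cdot\,\vert\,\xx^i)\,\Vert\,p^{\SNNpp}_{\YY\vert\XX}(\cdot\,\vert\,\xx^i;\bftheta')\big)\ge 0$. So $\bftheta$ is a local minimizer of a function whose partial derivatives exist by the first part, and those partials must vanish.

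This argument uses the stated assumptions plus three implicit ones you should make explicit:
\begin{itemize}
\item $\mathrm{CE}_i(\bftheta)$, equivalently the entropy of the SBVP density, is finite. Your difference quotients also need this.
\item $p^{\SNNpp}_{\YY\vert\XX}(\cdot\,\vert\,\xx^i;\bftheta')$ is a genuine probability density for every $\bftheta'$ in the neighborhood.
\item That neighborhood is a full one in $\RR^{n_{\bftheta}}$. If $\bftheta$ sat on the boundary of the box $\curC_{\bftheta}$ with the model defined only there, the minimizer argument would yield only one-sided conditions.
\end{itemize}

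Finally, combining the two parts gives the zero-mean-score identity for free. At the matching point, the gradient from Eq.~\eqref{eq2.21} equals $-\int\nabla_{\bftheta}p^{\SNNpp}_{\YY\vert\XX}\,d\bfy$, and you have just shown it vanishes. The exponential-moment caveat in your first route is therefore unnecessary, and that route is better presented as a corollary than as an alternative proof.
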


\begin{proof}
By definition, $\rm{CE}_i(\bftheta)=-\int p^{\SBVPpp}_{\bfY\vert\XX}(\bfy \,\vert\, \xx^i)\,
\log p^{\SNNpp}_{\YY\vert\XX}(\bfy \,\vert\, \xx^i\, ;\bftheta)\, d\bfy$. Interchange differentiation and integration under the stated regularity and identify the expectation under $p^{\SBVPpp}_{\bfY\vert\XX}(\cdot \,\vert\, \xx^i)$.
\end{proof}

\noindent\textit{Remarks}

\noindent {\textit{ (a) I-projection interpretation of Proposition~\ref{Proposition1}}}. This proposition  formalizes the interpretation already hinted by Eq.~\eqref{eq2.12}, when the data term is also written in cross-entropy form: SPINN performs the conditional I-projection of the mixture  $q_\lambda(\cdot \,\vert\,\xx)$ onto the parameterized SNN family,
that is, it selects for each $\xx$ the element of the SNN conditionals minimizing
$\KLg\,\Big (q_\lambda(\cdot \,\vert\, \xx)\,\, \Vert \,\, p^{\SNNpp}_{\YY \vert \XX}(\cdot\,\vert\,\xx\, ;\bftheta)\Big )$,
and then averages over the input distribution.
This makes explicit the role of $\alpha_{\rm top}$ as a parameter controlling the mixture between the observation-induced conditional probability distribution
($\lambda\downarrow 0$), which allows discrepancy between the SBVP model and the data-generating probability distribution,
and the SBVP-induced conditional probability distribution ($\lambda\uparrow 1$).\\

\noindent {\textit{ (b) Gradient-estimation interpretation of  Proposition~\ref{Proposition2}}}. This proposition justifies estimating $\nabla_{\bftheta}\, \rm{CE}_i$ by Monte Carlo from SBVP-generated observables alone, without SBVP sensitivities. It therefore does not require differentiating the SBVP solution mapping. In practice we use the same Gaussian random vector $\bfH$ to generate both $U_c$ and $U(\cdot;\bftheta)$ when forming the Monte Carlo sum in \eqref{eq2.15}; this “common-random-numbers” coupling empirically lowers the variance of the stochastic gradient of the alignment term.\\

\noindent {\textit{ (c) Interpretation of values of $\alpha_0$ close to one}}.  In the idealized regime where the training dataset itself is generated by numerically solving the SBVP (Case~1), the empirical distribution of the observations converges to the same conditional probability distribution as the SBVP-based reference. In this case, the SPINN objective with the alignment term and the pure data-likelihood objective corresponding to Eqs.~\eqref{eq2.7}-\eqref{eq2.8} have the same population minimizers for the SNN conditional distributions. The additional alignment contribution then acts mainly as a variance-reduction and regularization device at finite sample sizes: it stabilizes the Monte Carlo estimators entering the loss but does not inject extra physical information beyond what is already present in the SBVP-generated training data. Consistently, the scale-matching factor $\alpha_0$ introduced in paragraph~(ii) is expected to be close to one in this regime.\\

{\color{black}
\noindent {\textit{ (d) Operational role of $\alpha_0$ under model--data mismatch}}. In the more realistic configuration of Section~\ref{Section2.1}-(iii), the training dataset is built from experimental measurements or high-fidelity numerical simulations that only approximately satisfy the SBVP model (Case~2). In this setting, the alignment term remains informative because it introduces the SBVP-induced conditional probability distribution as a distinct source of probabilistic information while allowing the observations to contribute through the data-fidelity term. The role of $\alpha_0$ is solely to provide the scale-matching prescription of Eq.~\eqref{eq2.19} in the fixed normalized observation representation used throughout the calculations. The probabilistic meaning of a nonzero alignment weight under model--data mismatch is given by the mixture $q_{\lambda}$ and Proposition~\ref{Proposition1}, not by interpreting $\alpha_0$ as an invariant discrepancy between probability distributions.
}
\subsection{Method used for estimating the conditional probability density function of the random output vector $\YY(\xx\, ;\bftheta)$ of the SNN}
\label{Section2.3}
The theory presented in Section~\ref{Section4} addresses supervised learning of random neural architectures structured by latent random fields on compact, boundaryless, multiply connected manifolds.
For a fixed hyperparameter $\bftheta$, the SNN simulator involves two distinct layers of internal randomness.
The centered latent Gaussian field is defined as $U(\cdot\, ;\bftheta) = [M(\cdot\, ;\bftheta)]\,\bfH$,
where $\bfH$ is an $\RR^m$-valued centered Gaussian random vector with identity covariance matrix.
The covariance structure of $U(\cdot\, ;\bftheta)$ therefore depends only on the deterministic field
$[M(\cdot\, ;\bftheta)]$. The Monte Carlo methodology, used for estimating the conditional probability density function
under a fixed architecture realization for a given $\bftheta$, is given in Algorithm~\ref{algorithm1}.
\begin{algorithm}[!t]
\caption{Monte Carlo methodology for estimation of the conditional probability density function under a fixed architecture realization for a given $\bftheta$}
\label{algorithm1}
\begin{algorithmic}[1]
\STATE \textbf{Input:} hyperparameter $\bftheta$, input $\xx$, number of simulations $n_\psim$.
\medskip
\STATE {\textit{Step 1 (Architecture realization for fixed $\bftheta$).}}
\STATE Draw one realization $\bfeta^0$ of $\bfH$.
\STATE Construct the latent field $u^{(0)}(\cdot\, ;\bftheta) = [M(\cdot\, ;\bftheta)]\,\bfeta^0$.
\STATE Construct the random architecture (see Section~\ref{Section4.2}):
\STATE $\quad\triangleright$ draw one realization of all Poisson-process random quantities,
\STATE $\quad\triangleright$ Poisson-sampled neuron locations $\{\bfx_\bftheta^1,\ldots,\bfx_\bftheta^N \}$,
\STATE $\quad\triangleright$ index sets $(\curJ^\pin_\bftheta ,\curJ^\pout_\bftheta)$ defining input and output components,
\STATE $\quad\triangleright$ geodesic-distance-based sparsity mask (Eqs.~\eqref{eq4.14}-\eqref{eq4.19}).
\medskip
\STATE {\textit{Step 2 (Random weight generation).}}
\FOR{$\ell = 1,\dots,n_\psim$}
    \STATE Draw an independent realization $\bfeta^\ell$ of $\bfH$.
    \STATE Construct $u^\ell(\cdot\, ;\bftheta) = [M(\cdot\, ;\bftheta)]\,\bfeta^\ell$.
    \STATE Generate the corresponding sparse random weight matrix under the fixed architecture (Eqs.~\eqref{eq4.26}-\eqref{eq4.27}).
\ENDFOR

\medskip
\STATE {\textit{Step 3 (Compute $\bfbeta(\bftheta)$ for the low-rank representation of the bias vector).}}
\STATE For the prescribed input $\xx$, compute $\bfbeta(\bftheta)$ as the least-squares solution of the
random nonlinear neural equation (Section~\ref{Section4.6}).

\medskip
\STATE {\textit{Step 4 (Free activated states and outputs).}}
\FOR{$\ell = 1,\dots,n_\psim$}
    \STATE Solve the random nonlinear neural equation using the fixed-point iteration of Section~\ref{Section4.8} with sparse weight matrix associated with $u^\ell$ to obtain the free activated state.
    \STATE Compute the corresponding realization $\yy^\ell(\xx\,;\bftheta)$ of $\YY(\xx\,;\bftheta)$.
\ENDFOR

\medskip
\STATE \textbf{Output:}
\STATE The set of independent realizations $\{\yy^\ell(\xx\,;\bftheta)\}_{\ell=1}^{n_\psim}$,
used to estimate the conditional probability density function $p^\SNNpp_{\YY\vert\XX}(\cdot \, \vert \,  \xx \, ;\bftheta)$
(see Eq.~\eqref{eq4.45}).
\end{algorithmic}
\end{algorithm}
\section{Methodology for testing and intrinsic diagnostic criteria}
\label{Section3}
In order to evaluate the proper-score performance of the coupled construction, we first present the general methodology for testing and intrinsic diagnostic evaluation. Since the stochastic neural network with random neuronal architecture must be assessed together with the SBVP alignment, we use criteria adapted from~\cite{Soize2026}. For completeness and to keep the exposition self-contained, these definitions are reproduced below with notations adapted to the present framework.
All KDE bandwidths are set using the Silverman rule on the training dataset and then held fixed for evaluation on the test dataset.

Throughout this section, intrinsic diagnostic criteria refer to post-training checks that stem from the alignment-based training objective (Eq.~\eqref{eq2.12}) and are computed using the test dataset. We do not introduce a separate diagnostic dataset. Metrics such as $\criter_o$, confidence-interval discrepancies, and CRPS are evaluation criteria, not a separate model-selection phase.
\subsection{Estimation of $\NLL_\SNNp$ on the test dataset}
\label{Section3.1}
We denote by $\NLL_\SNNp$ the negative log-likelihood associated with the SNN.
Let $\{ ( \xx^i_\testp , \yy^i_\testp) , i=1,\ldots,n_\testp\}$  be the test dataset.
For the optimal value $\bftheta^\optp$ computed with Eq.~\eqref{eq2.11},  the $\NLL_\SNNp$ evaluated on the test dataset is
\begin{equation} \label{eq3.1}
\NLL_\SNNp^\test = - \sum_{i=1}^{n_\testpp} \log \, p^\SNNpp_{\YY\vert\XX}(\yy^i_\testp \, \vert
                                              \, \xx^i_\testp \, ; \bftheta^\optp) \, ,
\end{equation}
where $p^\SNNpp_{\YY\vert\XX}$ is the conditional probability density function of $\YY(\xx\, ;\bftheta)$ defined in Eq.~\eqref{eq2.4}.
\subsection{Criterion $\criter_o$ for assessing over-specialization}
\label{Section3.2}
In classical settings, overfitting arises when a model fits noise and fails to generalize. Here, however, randomness is a structural property induced by the stochastic generation of network configurations. Thus, overfitting should be read as the selection of hyperparameters that favor over-specialized, low-probability architectures rather than noise-fitting.
The optimal training negative log-likelihood
\begin{equation}\label{eq3.2}
\NLL_\SNNp^\opt = - \sum_{i=1}^{n_d} \log \, p^\SNNpp_{\YY\vert\XX}(\yy^i \, \vert \, \xx^i \, ; \bftheta^\optp) \, ,
\end{equation}
computed on the $n_d$ training samples using the negative log-likelihood definition in Eq.~\eqref{eq2.8}, evaluated at
$\bftheta^\optp$, and the test negative log-likelihood $\NLL_\SNNp^\test$, computed on $n_\testp$ samples by Eq.~\eqref{eq3.1}, together diagnose generalization.
Because the stochastic architecture governed by $\bftheta$ can approximate highly complex functions, appreciable train–test gaps may occur. We therefore quantify this form of over-specialization by
\begin{equation} \label{eq3.3}
\criter_o               = \frac{\vert \overline L_\SNNp^\opt - \overline L_\SNNp^\test\vert}{ \overline L_\SNNp^\test } \quad , \quad
\overline L_\SNNp^\opt  = \frac{1}{n_d} \NLL_\SNNp^\opt \quad , \quad
\overline L_\SNNp^\test = \frac{1}{n_\testp} \NLL_\SNNp^\test \, .
\end{equation}
This criterion measures the relative discrepancy between the (size-adjusted) training and test NLLs. Under i.i.d. sampling and a sufficiently large test set (e.g., $n_\testp \approx n_d/5$), $\criter_o \ll 1$ indicates a small train--test discrepancy in the size-normalized NLL, whereas large values signal hyperparameter-induced over-specialization to the training data.
For this class of stochastic, non-feedforward architectures, the typical scale of
train-test discrepancies differs from classical feedforward settings. As a rough
guide in our regime:

 {\tiny$\bullet$}\hspace{0.2em} $\criter_o\lesssim 0.10$: no notable over-specialization.

{\tiny$\bullet$}\hspace{0.2em} $\criter_o\in[0.10,\,0.20]$: intermediate train--test discrepancy requiring inspection of the associated predictive conditional probability distributions.

 {\tiny$\bullet$}\hspace{0.2em} $\criter_o > 0.20 $: likely over-specialization.
\subsection{Criteria  for testing accuracy via confidence intervals}
\label{Section3.3}
Accuracy is assessed by comparing confidence intervals obtained from the SNN with those estimated directly from the training dataset, for a prescribed probability level $p_c$. Let $\YY(\xx\, ; \bftheta^\optp)$ be the $\RR^{n_\ppout}$-valued random output of the SNN for input $\xx\in\RR^{n_\ppin}$. For each $k\in\{1,\ldots,n_\ppout\}$, the $p_c$-confidence intervals
\begin{equation}\label{eq3.4}
\CI^\SNNpp_k(\xx) = [\alpha^\SNNpp_k(\xx) \, , \beta^\SNNpp_k(\xx)]
\quad , \quad
\CI^\trainingpp_k(\xx) = [\alpha^\trainingpp_k(\xx) \, , \beta^{\,\trainingpp}_k(\xx)] \, ,
\end{equation}
are constructed respectively from $n_\psim$ independent realizations of $\YY(\xx\,;\bftheta^\optp)$ and from the empirical conditional distribution based on the training dataset.
Define the vectors of lower/upper bounds across outputs by
\begin{equation*}
\bfalpha^\SNNpp(\xx) = (\alpha_1^\SNNpp(\xx),\ldots,\alpha_{n_\ppout}^\SNNpp(\xx)) \quad , \quad
\bfbeta^\SNNpp(\xx) = (\beta_1^\SNNpp(\xx),\ldots,\beta_{n_\ppout}^\SNNpp(\xx)) \, ,
\end{equation*}
and similarly $\bfalpha^\trainingpp(\xx)$ and $\bfbeta^\trainingpp(\xx)$.
The relative discrepancies on the lower and upper bounds are then quantified by
\begin{equation}\label{eq3.5}
\criter_{\bfalpha} = \frac{1}{n_\testp} \sum_{i=1}^{n_\testpp}
2\,\frac{\Vert \bfalpha^\SNNpp(\xx^i_\testp) - \bfalpha^\trainingpp(\xx^i_\testp)\Vert}
{\Vert \bfalpha^\SNNpp(\xx^i_\testp) + \bfalpha^\trainingpp(\xx^i_\testp)\Vert}
\quad , \quad
\criter_{\bfbeta} = \frac{1}{n_\testp} \sum_{i=1}^{n_\testpp}
2\,\frac{\Vert \bfbeta^{\,\SNNpp}(\xx^i_\testp) - \bfbeta^{\,\trainingpp}(\xx^i_\testp)\Vert}
{\Vert \bfbeta^{\,\SNNpp}(\xx^i_\testp) + \bfbeta^{\,\trainingpp}(\xx^i_\testp)\Vert} \, ,
\end{equation}
where $\bfalpha^\SNNpp(\xx),\bfbeta^{\,\SNNpp}(\xx)$ (resp. $\bfalpha^\trainingpp(\xx),\bfbeta^{\,\trainingpp}(\xx)$) denote the lower and upper bounds of the SNN (resp. data-based) confidence intervals.\\

\noindent\textit{Remarks}.
The criteria in Eq.~\eqref{eq3.5} compare confidence intervals constructed
with equal tails (symmetric in probability), even though the regimes considered here are non-Gaussian.
Allowing a mild tail asymmetry can reduce discrepancies between $\criter_{\bfalpha}$ and $\criter_{\bfbeta}$.
One may use quantiles $(\tau_-,\,1-\tau_+)$ with $\tau_-+\tau_+=1-p_c$ and select $(\tau_-,\tau_+)$
so as to minimize the weighted sum $w_\alpha\,\criter_{\bfalpha}+w_\beta\,\criter_{\bfbeta}$.
This introduces the additional weights $w_\alpha$ and $w_\beta$ and should be viewed as an optional
calibration refinement, particularly when $\criter_{\bfbeta}\gg \criter_{\bfalpha}$.\\

\noindent\textit{Additional criterion (interval overlap)}.
Beyond the relative discrepancies in Eq.~\eqref{eq3.5}, a geometric agreement score based on the interval overlap coefficient (IOC) can be defined.
For each test input $i\in\{1,\ldots,n_\testp\}$ and output component $k\in\{1,\ldots,n_\pout\}$, we define the intersection and union lengths
\begin{align*}
L^{\cap}_{ik} & = \Big(\min\{\,\beta^{\,\SNNpp}_k(\xx^i_\testp),\,\beta^{\,\trainingpp}_k(\xx^i_\testp)\}
-\max\{\alpha^\SNNpp_k(\xx^i_\testp),\,\alpha^\trainingpp_k(\xx^i_\testp)\}\Big)_+  \, , \\
L^{\cup}_{ik} & =\max\{\,\beta^{\,\SNNpp}_k(\xx^i_\testp),\,\beta^{\,\trainingpp}_k(\xx^i_\testp)\}
-\min\{\alpha^\SNNpp_k(\xx^i_\testp),\,\alpha^\trainingpp_k(\xx^i_\testp)\} \, ,
\end{align*}
where $(u)_+=\max(u,0)$.
The (per-component) $\IOC$ is then defined as
\begin{equation*}
\IOC_{ik}=\frac{L^{\cap}_{ik}}{L^{\cup}_{ik}+\varepsilon_F},
\end{equation*}
with $\varepsilon_F = 2.2204\times 10^{-16}$ included only to avoid division by zero in the presence of degenerate intervals.
Aggregating over test points and output components yields the scalar criterion
\begin{equation}\label{eq3.6}
\criter_{\IOCp} = 1-\frac{1}{n_\testp\,n_\pout}\sum_{i=1}^{n_\testpp}\sum_{k=1}^{n_\ppout}\IOC_{ik}\in [0,1] \, .
\end{equation}
By construction, $\criter_{\IOCp} = 0$ if and only if all SNN and training data-based intervals coincide,
and $\criter_{\IOCp}\to 1$ as the intervals become disjoint.
This criterion is scale-invariant and is simultaneously sensitive to center shifts and to under/over-dispersion of the widths, thereby complementing the symmetric, tail-wise discrepancies in Eq.~\eqref{eq3.5}.\\

\noindent\textit{Indicative overlap scale}.
For brevity, let $\overline{\IOC} = 1-\criter_{\IOCp}\in[0,1]$ denote the average overlap.
The following interpretive categories are used as non-asymptotic diagnostic ranges:

{\tiny$\bullet$}\hspace{0.2em} Excellent alignment:  $\criter_{\IOCp}\le 0.15\quad(\overline{\IOC}\ge 0.85)$.

{\tiny$\bullet$}\hspace{0.2em} Good alignment:  $0.15<\criter_{\IOCp}\le 0.35\quad(0.65\le \overline{\IOC}<0.85)$.

{\tiny$\bullet$}\hspace{0.2em} Moderate alignment:  $0.35<\criter_{\IOCp}\le 0.55\quad(0.45\le \overline{\IOC}<0.65)$.

{\tiny$\bullet$}\hspace{0.2em} Weak alignment:  $\criter_{\IOCp}>0.55\quad(\overline{\IOC}<0.45)$.

\noindent When combined with the tail-wise discrepancies in Eq.~\eqref{eq3.5}, these categories help diagnose whether disagreement is driven primarily by center shifts (suggested by comparable $\criter_{\bfalpha}$ and $\criter_{\bfbeta}$) or by asymmetric tail calibration (e.g., $\criter_{\bfalpha} \ll \criter_{\bfbeta}$).
As the regimes considered are non-Gaussian while the intervals use equal tails, larger values of $\criter_{\bfbeta}$ relative to $\criter_{\bfalpha}$ often indicate upper-tail under- or over-dispersion, in which case mild tail asymmetry (cf. the remark following Eq.~\eqref{eq3.5}) typically improves $\overline{\IOC}$.
\subsection{Continuous Ranked Probability Score (CRPS)}
\label{Section3.4}
The CRPS \cite{Matheson1976,Hersbach2000,Gneiting2007} provides a complementary criterion by comparing the full predictive conditional probability distribution with observed outcomes over the test dataset. For input $\xx$ belonging to the test dataset and component $k$, the CRPS predicted by the SNN is denoted as $\CRPS^\SNNpp_k(\xx)$ and the CRPS estimated  with the training dataset is denoted as $\CRPS^\trainingpp_k(\xx)$,
\begin{equation}\label{eq3.7}
\CRPS^\SNNpp_k(\xx) = \int_{-\infty}^{+\infty} \! \left( F_k^\SNNpp(z \, \vert \, \xx) - \Unit_{]-\infty, \yy_k^\Obspp(\xx)]}(z) \right)^2 dz
\,\,\, , \,\,\,
\CRPS^\trainingpp_k(\xx) = \int_{-\infty}^{+\infty} \!\left( F_k^\trainingpp(z \, \vert \, \xx) - \Unit_{]-\infty, \yy_k^\Obspp(\xx)]}(z) \right)^2 dz
\, ,
\end{equation}
where $F_k^\SNNpp(\cdot \, \vert \, \xx)$ denotes  the conditional cumulative distribution function of $\YY_k(\xx\, ;\bftheta^\optp)$,
$F_k^\trainingpp(\cdot \, \vert \, \xx)$ denotes the CDF estimated with the training dataset, and $\yy_k^\Obsp(\xx)$ is the observed value corresponding to input $\xx$ such that
$\yy^i = \yy^\Obsp(\xx^i)$ with $(\xx^i,\yy^i)\in\curD_\trainingp$ and
$\yy^i_\testp = \yy^\Obsp(\xx^i_\testp)$ with $(\xx^i_\testp,\yy^i_\testp)\in\curD_\testp$.
For a given $\xx$, $\CRPS^\SNNpp_k(\xx)$ is predicted using a Monte Carlo approximation based on $n_\psim$ realizations (see, e.g., \cite{Gneiting2007}), which  yields
\begin{equation}\label{eq3.8}
\CRPS^\SNNpp_k(\xx) \approx \frac{1}{n_\psim} \sum_{\ell=1}^{n_\psim} \left| \yy_k^\ell(\xx) - \yy_k^\Obsp(\xx) \right|
- \frac{1}{ n_\psim (n_\psim -1)} \sum_{\ell < \ell'}  \vert \yy_k^\ell(\xx) - \yy_k^{\ell'}(\xx) \vert \, ,
\end{equation}
and $\CRPS^\trainingpp_k(\xx)$ is estimated with the conditional samples of the training dataset using the k-nearest-neighbor estimation of the conditional probability density function \cite{Mack1979}.
The global test scores are then
\begin{equation}\label{eq3.9}
\CRPS^\SNNpp_\testp = \frac{1}{n_\testp n_\ppout} \sum_{i=1}^{n_\testpp} \sum_{k=1}^{n_\ppout} \CRPS^\SNNpp_k(\xx^i_\testp)
\quad , \quad
\CRPS^\trainingpp_\testp = \frac{1}{n_\testp n_\ppout} \sum_{i=1}^{n_\testpp} \sum_{k=1}^{n_\ppout} \CRPS^\trainingpp_k(\xx^i_\testp) \, .
\end{equation}
In the present finite-parameter regime, one typically observes $\CRPS^\SNNpp_\testp \gtrsim \CRPS^\trainingpp_\testp$, although this ordering is not guaranteed in general. To evaluate model quality, we introduce the scaled CRPS, defined as
\begin{equation}\label{eq3.10}
\CRPS^\scaledpp_\testp = \CRPS^\SNNpp_\testp / \CRPS^\trainingpp_\testp \, .
\end{equation}
The quality of the probabilistic model evaluated with the CRPS criterion is analyzed in detail in Sections~\ref{Section3.5}
and~\ref{Section7}.
\subsection{Interpretation of the evaluation criteria when the training and test datasets do not almost surely satisfy the SBVP}
\label{Section3.5}
Among all diagnostic criteria introduced above, the CRPS-based score in Eq.~\eqref{eq3.7} is the most informative marginal criterion and should be regarded as the primary summary of probabilistic performance of the conditional probability prediction at the component level, since it is a proper
scoring rule that jointly reflects calibration and sharpness of each univariate predictive conditional probability distribution.
It should be interpreted together with $\NLL_\SNNp^\test$, which evaluates the joint conditional probability density function.\\

\noindent Although the focus of this subsection is the misspecified regime (Case~2),
we first recall the SBVP-consistent regime (Case~1) as a reference point for interpreting the scaled scores.
The quantity $\CRPS^\scaledpp_\testp$ in Eq.~\eqref{eq3.10} is a relative proper-score diagnostic:
it compares the SNN predictive conditional probability distributions (numerator) against the nonparametric, data-based conditional benchmark
constructed from the training dataset (denominator). Hence $\CRPS^\scaledpp_\testp = 1 + \varepsilon_\CRPSpp$ means that,
in aggregate over test inputs and output components, the SNN incurs about a $(100\,\varepsilon_\CRPSpp)\%$ larger CRPS than that
finite-sample benchmark. This ratio should be interpreted as a gap to a data-driven reference (in the scaled domain),
not as a pointwise "percent error" on the physical fields.
\paragraph{(i) Case 1 (SBVP-consistent datasets)}
When the training and test datasets almost surely satisfy the SBVP (Case~1 in Section~\ref{Section6.2}), the data-based conditional
benchmark $\CRPS^\trainingpp_\testp$ targets (up to finite-sample estimation error) the same conditional probability distribution as the SBVP.
In this idealized regime, the scale-matching factor $\alpha_0$ defined in Eq.~\eqref{eq2.19} is expected to be close to one, and
$\CRPS^\scaledpp_\testp$ can be read directly as the (dimensionless) relative deterioration of the parametric SNN surrogate
with respect to the flexible conditional estimator built from the training sample.
\paragraph{(ii) Case 2 (model-data mismatch SBVP)}
When the training and test datasets do not almost surely satisfy the SBVP (Case~2 in Section~\ref{Section6.2}), the SBVP acts as an imperfect SBVP-induced conditional probability distribution and the SPINN objective explicitly targets a compromise between data fidelity and SBVP alignment, rather than the SBVP-induced conditional probability distribution itself.
In this regime, $\CRPS^\scaledpp_\testp$ still has the same meaning as above: it quantifies how close the trained SNN
predictive conditional probability distributions are to the data-based conditional benchmark derived from the training dataset. Importantly,
$\CRPS^\scaledpp_\testp$ does not test whether the observations satisfy the SBVP-induced conditional probability distribution.
{\color{black}
In Case~2, the model--data mismatch is represented explicitly by the distinction between $p^{\Obsp}_{\bfY\vert\XX}$ and $p^{\SBVPpp}_{\bfY\vert\XX}$ in the population formulation. The factor $\alpha_0$ in Eq.~\eqref{eq2.19} is used only to scale the two empirical objective contributions in the adopted normalized coordinates, while predictive assessment is provided by the test NLL, CRPS, generalization, and calibration criteria defined above.
}
\paragraph{(iii) Practical role of $\alpha_0$ under SBVP mismatch}
{\color{black}
\noindent The factor $\alpha_0$ is the scale-matching quantity defined in Eq.~\eqref{eq2.19} at the reference point $\bftheta_\pref$ and in the normalized observation coordinates adopted in the methodology. It is used to avoid an additional outer tuning loop for $\alpha_{\rm top}$ and does not need to be interpreted as an intrinsic statistical discrepancy. In Case~1, the observation and SBVP contributions have the same population target, whereas in Case~2 a nonzero $\alpha_{\rm top}$ defines the explicit mixture target $q_{\lambda}$ in Eq.~\eqref{eq2.17}. Thus, the role of $\alpha_0$ is to set the numerical scale of this prescribed compromise, while the quality of the resulting predictive conditional probability distributions is assessed by the test criteria of Sections~\ref{Section3.1}--\ref{Section3.4}.
}
\subsection{Practical use of the intrinsic criteria in Case 2 (SBVP mismatch)}
\label{Section3.6}
In Case~2, the SBVP is an imperfect SBVP-induced conditional probability distribution and SPINN targets a compromise between data fidelity and SBVP alignment.
This compromise is controlled by $\alpha_{\rm{top}}$ through $\lambda=\alpha_{\rm{top}}/(1+\alpha_{\rm{top}})$ (see Eq.~\eqref{eq2.16}) in the mixture
$q_{\lambda}(\cdot \,\vert\, \xx) = (1-\lambda)\,p^{\Obsp}_{\bfY\vert\XX}(\cdot \,\vert\, \xx)
+ \lambda \, p^{\SBVPpp}_{\bfY\vert\XX}(\cdot \,\vert\, \xx)$ (see Eq.~\eqref{eq2.17}).
{\color{black}
The scale-matching value $\alpha_0$ (Eq.~\eqref{eq2.19}) is used as an operational default for setting $\alpha_{\rm{top}}$ without an outer model-selection loop. Its role is to balance the two objective contributions in the fixed normalized representation; the probabilistic content of the compromise is specified by $q_{\lambda}$.
}

In this misspecified regime, no single scalar criterion can certify surrogate validity. For validating such a compromise surrogate,
$\CRPS^\scaledpp_\testp$ should be reported together with the other intrinsic criteria of Sections~\ref{Section3.1} to \ref{Section3.4}:
(i) $\NLL_\SNNp^\test$ (strictly proper score for the joint conditional probability density function),
(ii) the generalization/over-specialization index $\criter_o$, and (iii) the confidence-interval agreement metrics
$\criter_{\bfalpha}$, $\criter_{\bfbeta}$, and $\criter_{\IOCp}$.
A practically useful reading is that any increase of $\CRPS^\scaledpp_\testp$ above $1$ quantifies the price paid in CRPS-based probability prediction
(relative to the data-based benchmark) for maintaining SBVP alignment, whereas the accompanying scores diagnose whether this price remains
acceptable in terms of out-of-sample likelihood, generalization, and calibration.
Accordingly, the intrinsic diagnostics of Sections~\ref{Section3.1} to \ref{Section3.4} can be used as a structured decision workflow:

\begin{enumerate}
\item \textit{Generalization criterion}: $\criter_o$ (Eq.~\eqref{eq3.3}).
First check that the model does not over-specialize to the training data. Since $\criter_o$ measures the relative
train-test discrepancy in size-adjusted NLL, it directly flags hyperparameter-induced specialization of the stochastic
architecture. As we have already indicated in Section~\ref{Section3.2}, as a practical guideline in this class of stochastic architectures,
$\criter_o\lesssim 0.10$ indicates no notable over-specialization, $\criter_o\in[0.10,0.20]$ requires additional inspection of the predictive conditional probability distributions, and $\criter_o  > 0.20$ is likely over-specialization. Configurations beyond this range should be discarded or treated with caution, regardless of their other scores.

\item \textit{Calibration of probability intervals in observation space}: $(\criter_{\bfalpha},\criter_{\bfbeta},\criter_{\IOCp})$
(Eqs.~\eqref{eq3.5}-\eqref{eq3.6}).
Next assess whether probability dispersion of the SNN conditional probability distribution is compatible with the data-driven conditional benchmark at the chosen
coverage level $p_c$ (Eq.~\eqref{eq3.4}). The tail-wise discrepancies $\criter_{\bfalpha}$ and $\criter_{\bfbeta}$ quantify relative errors
in the lower and upper bounds of equal-tail confidence intervals; comparable values suggest that disagreement is
primarily a center shift, while strong asymmetry (e.g., $\criter_{\bfbeta}\gg \criter_{\bfalpha}$) indicates tail-miscalibration.
The overlap-based score $\criter_{\IOCp}\in[0,1]$ complements these diagnostics by jointly capturing center shifts and
under/over-dispersion of interval widths. The indicative overlap scale in Section~\ref{Section3.3} can be used as a non-asymptotic diagnostic criterion for interval calibration. When tail asymmetry
dominates, the optional refinement discussed after Eq.~\eqref{eq3.5} (allowing mildly asymmetric quantiles) can be used
to reduce $\criter_{\bfbeta}$-$\criter_{\bfalpha}$ discrepancies.

\item \textit{Primary proper-score performance on test data}: $\NLL_\SNNp^\test$ (Eq.~\eqref{eq3.1}).
Among the configurations that satisfy the generalization and calibration requirements, we rank the models by
$\NLL_\SNNp^\test$, which is a strictly proper scoring rule for the joint conditional probability density function and thus provides a
statistically principled measure of out-of-sample agreement with the test distribution. In Case~2 (SBVP mismatch),
this criterion is central because it compares the surrogate predictive conditional probability distribution to the empirical test distribution
without relying on the SBVP as a data-generating model.
For clarity, the quantity $\NLL_\SNNp^\test$ is always evaluated for the final trained parameter $\bftheta^{\optp}$ delivered by the chosen optimization scheme (PGD-Adam with TRIAL step initialization), and it is the only score used to rank proper-score performance.
{\color{black}
No separate data-only retraining is part of this evaluation workflow: in Case~2, $\alpha_{\rm top} = 0$ and $\alpha_{\rm top} > 0$ correspond to different population targets, as established by Proposition~\ref{Proposition1}. The reported criteria assess the SPINN model defined by its prescribed aligned objective.
}

\item \textit{Marginal accuracy of estimated probability distributions relative to the training-based benchmark}: $\CRPS^\scaledpp_\testp$ (Eq.~\eqref{eq3.10}).
Finally use $\CRPS^\scaledpp_\testp$ as a complementary, dimensionless summary of marginal proper-score skill
(relative to the nonparametric conditional benchmark built from the training dataset). Since the CRPS criteria act
componentwise on univariate marginals and are aggregated over outputs and test inputs (Section~\ref{Section3.4} and the remark
on CRPS versus joint likelihood), $\CRPS^\scaledpp_\testp$ should be interpreted as a \emph{gap to the flexible
data-based benchmark} in a finite-sample, high-dimensional setting, rather than as an absolute percent error.
In Case~2, departures of $\CRPS^\scaledpp_\testp$ above $1$ quantify the increase in marginal CRPS associated with imposing a nonzero SBVP-alignment weight at the chosen $\alpha_{\rm{top}}$ (typically $\alpha_{\rm{top}}=\alpha_0$),
and should be interpreted jointly with the NLL and interval-calibration diagnostics above.
\end{enumerate}

\noindent Taken together, this workflow separates (1) generalization control ($\criter_o$), (2) calibration of probability intervals
$(\criter_{\bfalpha},\criter_{\bfbeta},\criter_{\IOCp})$, and (3) proper-score performance ($\NLL_\SNNp^\test$ and
$\CRPS^\scaledpp_\testp$). This separation is essential in Case~2 because SBVP alignment is intentionally used
as a prior/regularizer under mismatch rather than as a ground-truth generator of the data.
\section{Implementation details for random neural architectures on compact, boundaryless, multiply connected manifolds}
\label{Section4}
{\color{black}
For self-containment, this section restates the components of the manifold-based stochastic neural architecture that are required for the present SPINN formulation. The generic architecture generator, based on an anisotropic latent Gaussian field and a Poisson point process on a compact, boundaryless, multiply connected manifold $\curS$, together with the associated nonlinear neural equation, was introduced in~\cite{Soize2026}. Here, this construction is adapted to the SBVP-coupled setting by specifying the parameter values used in Sections~\ref{Section5} and \ref{Section6}, relaxing constraints that define some of the admissible sets of the optimization problems, and introducing the normalization and numerical safeguards required by the present implementation. All definitions and numerical procedures needed to reproduce the SPINN results are provided below. Reference~\cite{Soize2026} is retained for attribution and for its broader analysis of the generic stochastic architecture, but it is not a prerequisite for implementing the methodology developed in this paper.
}
\subsection{Latent field, weak form, finite-element discretization, and reduced representation}
\label{Section4.1}
\paragraph{(i) Definition of the latent random field}
Let $d\sigma(\bfx)$ be the intrinsic measure  on $\curS$.
Let $\{\curB(\bfx),\bfx\in\curS\}$ be a real-valued, centered Gaussian white noise (a generalized random field) on the probability space $(\Omega,\curT,\curP)$. Its covariance $C_\curB$ is a generalized function \cite{Guelfand1964,Kree1986}
satisfying $\langle C_\curB , f\otimes f\rangle = \Vert \, f\,\Vert^2_{L^2(\curS)}$ for all $f\in L^2(\curS)$, where
$\Vert \, f\,\Vert^2_{L^2(\curS)} = \int_\curS f(\bfx)^2\, d\sigma(\bfx)$.
Let $\{U_{n_h}(\bfx),\bfx\in\curS\}$ denote the latent random field, a second-order, centered, Gaussian real-valued random field on $(\Omega,\curT,\curP)$ indexed by $\curS$, defined as the unique solution to the stochastic partial differential equation,
\begin{equation} \label{eq4.1}
(\tau_0 - \langle \,\nabla_\bfx  , [K_{n_h}(\bfx)]\, \nabla_\bfx\, \rangle ) \, U_{n_h}(\bfx)
                               = \curB(\bfx)  \quad , \quad \forall \bfx\in\curS  \, ,
\end{equation}
where $\tau_0 > 0$ is fixed, and $\bfx \mapsto [K_{n_h}(\bfx)]$ is a function from $\curS$ into $\MM_3^{+0}$.
Let the operator $\curL_{n_h} = \tau_0 -\langle \nabla_\bfx , [K_{n_h}(\bfx)] \, \nabla_\bfx \rangle$ be defined,
where $\nabla_\bfx$ denotes the surface gradient acting tangentially to $\curS$. At each point $\bfx \in \curS$,  the matrix $[K_{n_h}(\bfx)]$, when projected onto the tangent space $T_\bfx \curS$, defines a symmetric, positive-definite endomorphism in $\mathrm{End}(T_\bfx \curS)$.
The field $[K_{n_h}]$ is constructed so that there exist two constants $k_\star$ and $k^\star$ with $0 < k_\star \leq k^\star < +\infty$ such that
\begin{equation} \label{eq4.2}
k_\star\, \Vert\bfu\Vert^2 \, \leq \langle \,[K_{n_h}(\bfx)]\, \bfu , \bfu \, \rangle  \leq \, k^\star\, \Vert\bfu\Vert^2
                                \quad , \quad \forall \bfu\in T_\bfx\curS \quad , \quad \forall\bfx\in\curS  \, .
\end{equation}
These bounds guarantee the uniform ellipticity of the operator $\curL_{n_h}$.
Let $\bfe^1(\bfx), \bfe^2(\bfx), \bfe^3(\bfx)$ be a positively oriented local orthonormal frame at $\bfx\in\curS$, with
$\bfe^3(\bfx) = \bfn(\bfx)$ the outward unit normal, and $\bfe^1(\bfx)$ and $\bfe^2(\bfx)$ belong to the tangent space
$T_\bfx \curS$. Let $h^{(1)}_{n_h}(\bfx)$ and $h^{(2)}_{n_h}(\bfx)$ be positive real-valued fields on $\curS$.
We define $[K_{n_h}]$ as the diagonal tensor field,
\begin{equation}\label{eq4.3}
[K_{n_h}(\bfx)] = h^{(1)}_{n_h}(\bfx)\,\, \bfe^1(\bfx) \otimes \bfe^1(\bfx) + h^{(2)}_{n_h}(\bfx)\,\, \bfe^2(\bfx) \otimes \bfe^2(\bfx)
                        \quad , \quad \forall\, \bfx \in \curS \, ,
\end{equation}
where, for $k\in \{1,2\}$,  the field $h^{(k)}_{n_h}$  is an $n_h$-finite representation of a real-valued field on $\curS$.
For $k\in\{1,2\}$ and for given $0\leq n_h$, a low-rank representation of $h^{(k)}_{n_h}(\bfx)$ is written as
\begin{equation}\label{eq4.4}
0 < c_\star^{(k)} \,\leq \, h^{(k)}_{n_h}(\bfx) = h_k + \sum_{j=1}^{n_h} \beta_j^{\,(k)}\,\hh^{(k,j)}(\bfx) \, \leq \,  c^{(k)\star} \quad , \quad \forall \, \bfx \in \curS \, ,
\end{equation}
where $h_k > 0$ and $0 < c_\star^{(k)} \,\leq  c^{(k)\star} $, which implies that Eq.~\eqref{eq4.2} holds, and where
\begin{equation}\label{eq4.5}
\int_\curS \hh^{(k,j)}(\bfx)\, \hh^{(k,j')}(\bfx) \, d\sigma(\bfx) = \delta_{jj'}
\quad , \quad
\int_\curS \hh^{(k,j)}(\bfx)\,  d\sigma(\bfx) = 0 \, .
\end{equation}
For each given $h_k \in [h_{k,\min} , h_{k,\max}]$, with $0 < h_{k,\min} < h_{k,\max} < +\infty$, the hyperparameters $\{\beta_j^{\,(k)}\}_j$ belong to an admissible set
$\curC_{\adp ,k}$ ensuring that $h^{(k)}_{n_h}(\bfx)$ satisfies Eq.~\eqref{eq4.4}
for all $\bfx\in\curS$. We then have
\begin{equation}\label{eq4.6}
\bfbeta^{\,(k)}_{n_h} = (\beta_1^{\,(k)},\ldots , \beta_{n_h}^{\,(k)})\, \in \, \curC_{\adp ,k} \, \subset \, \RR^{n_h}
\quad , \quad k \in\{1,2\} \, .
\end{equation}
The covariance operator of random field $U_{n_h}$ is $C_{U_{n_h}} =(\curL_{n_h}^{-1})^2$.\\

\noindent\textit{Additional assumption in coherence with Section~\ref{Section4.5}}.
For $n_h > 0$ and for every $k\in\{1,2\}$, it is assumed that the real-valued functions $\hh^{(k,j)}$ are continuous on $\curS$ for all $j=1,\dots,n_h$. Since $\curS$ is compact, the map $\bfx \mapsto \sum_{j=1}^{n_h} \vert \hh^{(k,j)}(\bfx)\vert$
is continuous and therefore attains its minimum on $\curS$. In the definition of $B_k$ in~Eq.\eqref{eq4.32}, the ratio involves
$(\sum_{j=1}^{n_h}\vert\hh^{(k,j)}(\bfx)\vert )^{-1}$. Hence, any point $\bfx$ such that
$\sum_{j=1}^{n_h} \vert \hh^{(k,j)}(\bfx)\vert = 0$ yields the value $+\infty$ and cannot realize the minimum.
Therefore, $B_k$ is well-defined and finite.
Consequently, the set $\curC_{\ad,k}$ defined in Eq.~\eqref{eq4.6} is a closed and bounded subset of $\RR^{n_h}$, and thus compact. Hence,
for $n_h > 0$,
\begin{equation}\label{eq4.7}
\curC_{\ad,k} \,\, \rm{is \, compact \, in} \,\,  \RR^{n_h} \,\, \rm{for} \,\, k \in\{1,2\}\, .
\end{equation}
\paragraph{(ii) Finite element discretization}
The weak form of Eq.~\eqref{eq4.1} and the existence of a unique second-order solution are studied in \cite{Soize2026}.
On the finite-element surface $\curS_h$ approximating $\curS$, with the basis $\{\varphi_q\}_{q=1}^{n_o}$, we write
$U_{n_h}(\bfx) \simeq \sum_{q=1}^{n_o} U_q \,\varphi_q(\bfx)$ and the finite-element discretization of the weak form of Eq.~\eqref{eq4.1} is written as
\begin{equation}\label{eq4.8}
( \tau_0\,[g] + [\kappa_{n_h}] ) \, \bfU = \bfB \,\, , \,\,
[g]_{qq'} = \! \int_{\curS_h}\! \varphi_{q'}(\bfx)\, \varphi_q(\bfx) \, d\sigma(\bfx) \,\,\, , \,\,\,
[\kappa_{n_h}]_{qq'} = \! \int_{\curS_h} \!\langle \, [K_{n_h}(\bfx)]\, \nabla_\bfx \varphi_{q'}(\bfx), \nabla_\bfx\varphi_q(\bfx)\, \rangle \, d\sigma(\bfx) \, ,
\end{equation}
where $\bfU=(U_1,\ldots,U_{n_o})$ and $\bfB=(B_1,\ldots,B_{n_o})$ with $B_q = \! \int_{\curS_h} \!\curB(\bfx)\, \varphi_q(\bfx)\, d\sigma(\bfx)$.
Since $\curB$ is white, $\bfB$ is centered and its covariance matrix is $E\{\bfB\otimes\bfB\} = [g] \in\MM_{n_o}^+$.
\paragraph{(iii) Spectral analysis of the covariance matrix of $\bfU$ and a reduced-order representation of the random field $U$}
The covariance matrix of the random vector $\bfU=(U_1,\ldots,U_{n_o})$ is
\begin{equation}\label{eq4.9}
[C_{\bfU}] = (\tau_0 [g] + [\kappa_{n_h}])^{-1} \, [g] \, (\tau_0 [g] + [\kappa_{n_h}])^{-T} \in\MM_{n_o}^+ \, .
\end{equation}
Let $[\lambda^m]\in\MM_m^+$ be the diagonal matrix of eigenvalues associated with $[\Phi^m]\in\MM_{n_o,m}$, whose columns span the dominant eigenspace of $[C_{\bfU}]$ (for a fixed $m\ll n_o$).
The reduced-order representation $U^m$ of $U$ is the random field
\begin{equation}\label{eq4.10}
U^m(\bfx)=\big\langle \bfpsi^m(\bfx),\bfH\big\rangle \quad , \quad
\bfpsi^m(\bfx)=[\lambda^m]^{1/2}[\Phi^m]^T\,\bfvarphi(\bfx)
\quad , \quad \bfvarphi(\bfx) = (\varphi_1(\bfx), \ldots , \varphi_{n_o}(\bfx))\in\RR^{n_o}\, ,
\end{equation}
where $\bfH$ is a centered Gaussian $\RR^m$-valued random variable whose covariance is the identity matrix.
We now write $U^m_\bftheta$ and $\bfpsi^m_\bftheta$ to denote the dependence of $U^m$ and $\bfpsi^m$ on the hyperparameters $h_1$, $h_2$, $\bfbeta^{\,(1)}_{n_h}$ and $\bfbeta^{\,(2)}_{n_h}$, which are some components of the hyperparameter vector $\bftheta$.\\

\noindent{\textit{Choice of the truncation index $m$}}.
In Eq.~\eqref{eq4.10}, the reduced latent field $U^m$ is obtained by projecting the Gaussian random vector $\bfU$ (which represents a
finite-dimensional approximation of the Gaussian random field $U_{n_h}$)  onto the dominant eigenspace of its covariance matrix $[C_\bfU]$, spanned by the eigenvectors $\{\Phi^k\}_{k=1}^m$ associated with the eigenvalues $\{\lambda_k\}_{k=1}^m$ satisfying $\lambda_1 \geq \ldots \geq \lambda_m$.
This is the usual principal component analysis (PCA) of $\bfU$, which is used to  construct a truncated Karhunen–Lo\`eve expansion of the random field $U_{n_h}$: the contribution of each eigenfunction to the pointwise variance is proportional to $\lambda_k$, so that most of the energy of $U_{n_h}$ is carried by the first eigenfunctions.
For the class of elliptic SPDEs considered here, the dominant eigenfunctions are associated with large-scale, smooth patterns on $\curS$, whereas higher-index eigenfunctions are progressively more oscillatory and correspond to shorter correlation lengths. Increasing $m$ therefore enriches the spatial spectrum of the latent field $U^m$ by gradually adding finer-scale fluctuations with decreasing variance.
Through Eq.~\eqref{eq4.25}, where the $\bftheta$-dependence {\color{black} is made explicit}, these fluctuations are sampled at the neuron locations and injected into the field-conditioned weights in Eq.~\eqref{eq4.26}, thereby enlarging the family of random architectures that can be generated for a fixed hyperparameter vector~$\bftheta$.
In practice, the choice of $m$ results from a trade-off. If $m$ is too small, the latent field is overly smooth and cannot encode sufficiently rich spatial variability in the weights. If $m$ is excessively large, many high-index modes contribute very little to
the total variance but may induce unnecessarily rough fluctuations at the discrete neuron locations, without a clear benefit for the loss function. A convenient guideline is to choose $m$ so that the cumulative spectral energy exceeds a prescribed threshold (e.g., $90\%$), ensuring that the latent field retains most of the covariance structure while keeping the dimensionality of the random
architecture under control.
\paragraph{Remark 1 (Directional correlation length)}
With the operator $\curL_{n_h}$ and the anisotropy
$[K_{n_h}(\bfx)]=h^{(1)}_{n_h}(\bfx)\,\bfe^1(\bfx)\otimes \bfe^1(\bfx)+h^{(2)}_{n_h}(\bfx)\,\bfe^2(\bfx)\otimes \bfe^2(\bfx)$ expressed in the tangent frame $\{\bfe^1,\bfe^2\}$ introduced above, the correlation lengths $\ell_1$ and $\ell_2$ along the two
principal directions scale like the square root of the corresponding directional
diffusivities divided by $\tau_0$. When $h^{(k)}_{n_h}(\bfx)$ varies mildly around positive
constants $h_k^2$, the geodesic correlation lengths satisfy the 1D–SPDE scaling
$\ell_1 \,\asymp\, h_1/\sqrt{\tau_0}$ and $\ell_2 \,\asymp\, h_2/\sqrt{\tau_0}$.
Thus, increasing $\tau_0$ shortens correlations uniformly, while increasing $h_k^2$
elongates correlations specifically along $\bfe_k$.
For example, for a torus with major radius $R$ and minor radius $r$, take constant
$h^{(1)}_{n_h}(\bfx)\equiv h_1^2$ and $h^{(2)}_{n_h}(\bfx)\equiv h_2^2$. If $\bfe^1$ is chosen azimuthal
(along the major circle) and $\bfe^2$ meridional (along the minor circle), then
$\ell_{\text{az}} \,\asymp\, h_1/\sqrt{\tau_0}$ and $\ell_{\text{mer}} \,\asymp\, h_2/\sqrt{\tau_0}$.
As a practical guideline to avoid fields that are nearly constant around a loop, one may
keep $\ell_{\text{az}}\lesssim \pi R$ and $\ell_{\text{mer}}\lesssim \pi r$;
these are modeling choices rather theoretical constraints.
\paragraph{Remark 2 (Construction of the finite orthonormal family $\{\hh^{(k,j)},j=1,\ldots,n_h\}$ in $L^2(\curS)\cap C(\curS)$)}
There are many possible ways to construct such a finite family. For instance, one may take a realization of an arbitrary random orthonormal basis. However, in the present framework, this family can be obtained as a finite truncation of a Hilbert basis in $L^2(\curS)$. A suitable choice is to use the normalized eigenfunctions associated with the $n_h$ smallest eigenvalues of the eigenvalue problem for the Laplace-Beltrami operator on $\curS$.
Since $\curS$ is a compact, boundaryless, multiply connected manifold, the Laplace-Beltrami operator $-\Delta_{\curS}$ has a discrete spectrum $0=\lambda_{\LBp,1} < \lambda_{\LBp,2} \leq \lambda_{\LBp,3} \leq \cdots \to + \infty$,
with the (one–dimensional) kernel spanned by the constant function. Let $-\Delta_{\curS}\,\rchi^{j} = \lambda_{\LBp,j}\,\rchi^{j}$ on $\curS$ be the eigenvalue problem where the eigenfunction $\rchi^{j}$, associated with the eigenvalue $\lambda_{\LBp,j}$, is normalized by
\begin{equation*}
\int_{\curS}\rchi^{j}(\bfx)\,\rchi^{j'}(\bfx)\,d\sigma(\bfx)=\delta_{jj'}\quad,\quad
\int_{\curS}\rchi^{j}(\bfx)\,d\sigma(\bfx)=0 \quad \text{for} \,\, j\ge 2 \quad , \quad \text{and} \,\, \rchi^{1} = |\curS|^{-1/2} \, .
\end{equation*}
By elliptic regularity on compact manifolds, the eigenfunctions $\rchi^{j}$ are smooth on $\curS$; in particular, they are continuous. We then define, for $k\in\{1,2\}$ and $j=1,\ldots,n_h$, $\hh^{(k,j)}(\bfx) = \rchi^{\,j+1}(\bfx)$,
which yields an $L^2(\curS)$-orthonormal, low-eigenvalue family obtained by discarding the constant eigenfunction.
On the finite-element surface $\curS_h$ with basis $\{\varphi_q\}_{q=1}^{n_o}$, we write
$\rchi^j(\bfx) \simeq \sum_{q=1}^{n_o} v^j_q\, \varphi_q(\bfx)$
and the finite-element discretization of the eigenvalue problem for the Laplace-Beltrami operator is written as
\begin{equation*}
[\kappa_\LBp]\,[v]=[g]\,[v]\,[\lambda_\LBp] \quad , \quad
[\kappa_\LBp]_{qq'} = \int_{\curS_h}\! \langle\nabla_\bfx\varphi_{q'}(\bfx),\nabla_\bfx\varphi_q(\bfx)\rangle\,d\sigma(\bfx)\, ,
\end{equation*}
where $[v] = [\bfv^1 \ldots \bfv^{n_h+1}]\in\MM_{n_o,n_h+1}$ with $\bfv^j = (v_1^j,\ldots, v_{n_o}^j)\in\RR^{n_o}$, and where
$[v]^T [g]\, [v]= [I_{n_h + 1}]$ and $[\lambda_\LBp]=\mathrm{diag}(\lambda_{\LBp,1},\ldots,\lambda_{\LBp,n_h+1})$. Therefore, we have
\begin{equation*}
\hh^{(k,j)}(\bfx) \simeq \sum_{q=1}^{n_o} v^{(j+1)}_{q}\,\varphi_q(\bfx) \quad , \quad   j\in\{1,\ldots,n_h\}
\quad , \quad k\in\{1,2\}\, ,
\end{equation*}
and the discrete functions inherit $L^2(\curS_h)$–orthonormality and the low-eigenvalue ordering.
\paragraph{Notation for the random field $U^m$ with prescribed $m$, $h_1$, $h_2$, $\bfbeta^{\,(1)}_{n_h}$, and $\bfbeta^{\,(2)}_{n_h}$}
In the following, $m$ and $n_h$ are assumed fixed. The hyperparameter vector is defined by
\begin{equation*}
\bftheta = (h_1, h_2, \zeta_s, \bfbeta^{\,(1)}_{n_h}, \bfbeta^{\,(2)}_{n_h})\, .
\end{equation*}
Since the random field $U^m$ defined on $\curS$ depends on $h_1$, $h_2$, $\bfbeta^{\,(1)}_{n_h}$, and $\bfbeta^{\,(2)}_{n_h}$,
we henceforth introduce the notation
\begin{equation*}
U^m(\cdot) \equiv U(\cdot\,;\bftheta)
\end{equation*}
to make this parametric dependence explicit. In addition, we use the same notation $U(\bfx\,;\bftheta)$ for $\bfx\in\curS$ and for $\bfx\in\curS_h$. Similarly, $\bfpsi^m(\bfx)$ in Eq.~\eqref{eq4.10} is denoted by
$\bfpsi(\bfx\,;\bftheta)$.
\subsection{Neuron sampling via an inhomogeneous Poisson process and definition of input and output sets}
\label{Section4.2}
\paragraph{(i) Neuron sampling}
To generate the positions $\{\bfx^i_\bftheta, i=1,\ldots , N\}$ of $N$ neurons on the manifold $\curS_h$, we employ a spatial point process that adapts to the local variability of the latent Gaussian field $\{U(\bfx\, ;\bftheta),\bfx\in\curS_h\}$.
We thus define an inhomogeneous Poisson point process on $\curS_h$ with intensity $\Lambda(\bfx\, ;\bftheta) \, d\sigma(\bfx)$, where $d\sigma(\bfx)$ denotes the surface measure on the manifold $\curS_h$ induced by its embedding in $\RR^3$.
This means that, for any measurable $A\subset\curS_h$, the random variable $\curN_A(\bftheta)$, representing the number of points in $A$, is Poisson distributed with mean
\begin{equation*}
E\{\curN_A(\bftheta)\}  = \int_A \Lambda(\bfx\, ;\bftheta)\,d\sigma(\bfx)\, .
\end{equation*}
Using Eq.~\eqref{eq4.10}, for all $\bfx\in\curS_h$, the variance of $U(\bfx\,; \bftheta)$ is given by
\begin{equation*}
\Lambda(\bfx;\bftheta)=\Vert \,\bfpsi(\bfx\, ;\bftheta)\, \Vert^2 \, .
\end{equation*}
The intensity measure of the inhomogeneous Poisson process is $\Lambda(\bfx\, ;\bftheta)\,d\sigma(\bfx)$.
Conditional on the event $\curN_{\curS_h}(\bftheta)=N$, the locations of the $N$ points (neurons)
are independently and identically distributed over $\curS_h$ with probability density function $p(\bfx\, ;\bftheta)$ with respect to $d\sigma(\bfx)$, given by
\begin{equation}\label{eq4.11}
p(\bfx\, ;\bftheta)=\frac{\Lambda(\bfx;\bftheta)}{\int_{\curS_h}\Lambda(\bfy;\bftheta)\, d\sigma(\bfy)}\, .
\end{equation}
The set of accepted points $\{\bfx_\bftheta^i,i=1,\ldots , N\}\subset \curS_h$ constitutes the $N$ neurons of the graph:
\begin{equation*}
\curV_\bftheta= \{\bfx_\bftheta^1,\ldots,\bfx_\bftheta^N \} \, ,
\end{equation*}
The sampling procedure for the inhomogeneous Poisson process via a rejection method is detailed in Section~3 of \cite{Soize2026}.
\paragraph{(ii) Definition of input and output sets}
For a realization $u(\cdot\,; \bftheta)$ of $U(\cdot\,; \bftheta)$, the input index set $\curJ_\bftheta^\pin$ and the output index set $\curJ_\bftheta^\pout$ are defined as the indices  of the $n_\pin$ smallest and $n_\pout$ largest values of $u(\bfx_\bftheta^j\, ; \bftheta)$, respectively. The remaining indices form the hidden set $\curJ_\bftheta^\pint$, with $N=n_\pin+n_\pint+n_\pout$, and $\curJ_\bftheta^\pin\cap \curJ_\bftheta^\pint\cap\curJ_\bftheta^\pout = \emptyset$.
The number  $n_\pint \geq  1$ of hidden neurons is given by
\begin{equation}\label{eq4.12}
n_\pint = N - n_\pin - n_\pout \, .
\end{equation}
\subsection{Intrinsic geodesics and sparsity mask}
\label{Section4.3}
Let $[D_g]\in\MM_{n_o}$ be the all–pairs shortest–path geodesic distance matrix on the finite-element graph. For $\bfx$ and $\bfy$ in $\curS_h$, the finite-element interpolation
\begin{equation} \label{eq4.13}
\hat d_g(\bfx,\bfy)=\sum_{q=1}^{n_o}\sum_{q'=1}^{n_o}\varphi_{q}(\bfx)\,[D_g]_{qq'}\,\varphi_{q'}(\bfy) \, ,
\end{equation}
extends geodesics to arbitrary points. With neuron–neuron distances
$[D_{g,\bftheta}^{(\mathrm{neu})}]_{ij}=\hat d_g(\bfx_\bftheta^i,\bfx_\bftheta^j)$ and self–tuning local scales
$\hat\sigma_{\bftheta,i}$ (distance to the $(k+1)$th nearest neighbor in geodesic metric with
$k = \left\lfloor \min(\sqrt{N},\,N-1) \right\rfloor$), the diffusion affinity is
\begin{equation} \label{eq4.14}
[w^g_\bftheta]_{ij}=\exp\Bigg(-\frac{[D_{g,\bftheta}^{(\mathrm{neu})}]_{ij}^{2}}{\hat\sigma_{\bftheta,i}\,\hat\sigma_{\bftheta,j}}\Bigg)
\quad , \quad  [w^g_\bftheta]_{ii}=0 \quad , \quad i,j\in\{1,\ldots, N\}\, .
\end{equation}
To control the sparsity of the graph, we introduce a global percentile parameter $\tau_\pprc \in [0,100]$ and a
softer input-aware percentile $\tau_{\pprc,\pin}\in[0,100]$ with $\tau_{\pprc,\pin}<\tau_\pprc$,
applied only to edges incident to inputs (indices $\curJ_\bftheta^\pin$ from Section~\ref{Section4.2}).
Let $a_\pprc^\bftheta$ denote the $\tau_\pprc$th percentile of the upper–triangular off–diagonal entries of $[w^g_\bftheta]$,
and let $a_{\pprc,\pin}^\bftheta$ denote the $\tau_{\pprc,\pin}$th percentile restricted to those entries with at least one input endpoint,
\begin{align}
a_\pprc^\bftheta = & \,\texttt{prctile}\left( \{ [w^g_\bftheta]_{ij} : 1 \le i < j \le N \}\, , \, \tau_\pprc \right) \, ,\label{eq4.15} \\
a_{\pprc,\pin}^\bftheta = & \, \texttt{prctile} \left( \{ [w^g_\bftheta]_{ij} : 1 \le i < j \le N \, ,\, (i\in\curJ_\bftheta^\pin \ \text{or}\ j\in\curJ_\bftheta^\pin) \} \, ,\, \tau_{\pprc,\pin} \right) \, . \label{eq4.16}
\end{align}
The symmetric binary mask $[M^\psp_\bftheta]\in\{0,1\}^{N\times N}$ (with zero diagonal) is then defined by
\begin{equation}\label{eq4.17}
[M^\psp_\bftheta]_{ij} \,=\,
\begin{cases}
1 & \text{if } i<j \,\, , \,\, i\notin\curJ_\bftheta^\pin \,\, , \,\, j\notin\curJ_\bftheta^\pin \,\, ,\,\, [w^g_\bftheta]_{ij} \ge a_\pprc^\bftheta \, , \\[2pt]
1 & \text{if } i<j \,\, , \,\, \big(i\in\curJ_\bftheta^\pin \ \text{or}\ j\in\curJ_\bftheta^\pin\big)\,\, , \,\, [w^g_\bftheta]_{ij} \ge a_{\pprc,\pin}^\bftheta\quad , \\[2pt]
0 & \text{otherwise},
\end{cases}
\quad [M^\psp_\bftheta]_{ii}=0\,\, , \,\, [M^\psp_\bftheta]=[M^\psp_\bftheta]^T.
\end{equation}
To prevent a collapse of the input–to–free coupling, we proceed as follows.
We fix an integer $K_{\min}\in\NN$ with $K_{\min}\le n_\pint$.
For each input index $j\in\curJ_\bftheta^\pin$, define the set
\begin{equation}\label{eq4.18}
\curI_{\min}(j)\ \subset\ \curJ_\bftheta^\pint \quad , \quad
\vert \curI_{\min}(j)\vert = K_{\min},
\end{equation}
as the $K_{\min}$ indices $i\in\curJ_\bftheta^\pint$ that maximize the geometric affinity
$[w^g_\bftheta]_{ij}$. We then enforce
\begin{equation}\label{eq4.19}
[M^\psp_\bftheta]_{ij} = [M^\psp_\bftheta]_{ji} = 1\quad\text{for all } i\in\curI_{\min}(j) \, , \, j\in\curJ_\bftheta^\pin \, ,
\end{equation}
after applying the percentile thresholds, and finally symmetrize with zero diagonal.
Equivalently,
\begin{equation}\label{eq4.20}
\sum_{i\,\in\,\curJ_\bftheta^\pint} [M^\psp_\bftheta]_{ij}\ \ge\ K_{\min}\quad , \quad \forall\, j\in\curJ_\bftheta^\pin.
\end{equation}
Finally, the graph defined by $[M^\psp_\bftheta]$ is required to be connected (otherwise $\tau_\pprc$ and/or $K_{\min}$ are adapted).
The geometry–aware sparse matrix is the Hadamard product
\begin{equation}\label{eq4.21}
[w^\psp_\bftheta]_{ij} = [M^\psp_\bftheta]_{ij}\, [w^g_\bftheta]_{ij}\, .
\end{equation}
Given the global sparsification percentile $\tau_\pprc\in[0,100]$ and a intensity multiplier $r_\psp > 1$ for input–incident edges, we define the input-aware percentile,
\begin{equation}\label{eq4.22}
\tau_{\pprc,\pin} = 100\, \Bigl(1-\min\{\,1,\, r_\psp\bigl(1-\tfrac{\tau_\pprc}{100}\bigr)\,\}\Bigr)\in[0,100]\, .
\end{equation}
We use percentile thresholds to remove connections: the fraction of all remaining connections is $s_{\rm all} = 1-\tau_\pprc/100$,
and the fraction of remaining connections to input neurons is
$s_\pin = 1-\tau_{\pprc,\pin}/100=\min\{1,\,r_\psp\,s_{\rm all}\}$.
We use the following default choices (with admissible ranges for sensitivity): $\tau_\pprc \in [50,70]$, $r_\psp \in[1.5,2]$ with a default value $r_\psp =1.8$, and
\begin{equation}\label{eq4.23}
K_{\min} = \min\{ \lfloor s_\pin n_\pint \rfloor\, , \,  \max\{5\, , \, \lceil\rho_\psp n_\pint\rceil \,\}\} \quad , \quad \rho_\psp\in [0.09\,, 0.20] \, ,
\end{equation}
with a default value $\rho_\psp=0.12$.
We always apply $[M^\psp_\bftheta]_{ii}=0$ as well as symmetry.
We check that the graph induced by $[M^\psp_\bftheta]$ is connected; otherwise we modify $\tau_\pprc$ and/or $K_{\min}$ while keeping $\tau_{\pprc,\pin}<\tau_\pprc$.
Since the $N\times N$ matrix  $[M^\psp_\bftheta]$ is symmetric and has entries in $\{0,1\}$, the  number $n_\pinc$ of inter-node connections is defined by
\begin{equation}\label{eq4.24}
n_\pinc = \sum_{1\, \le\,  i\,  < \, j \le \, N} [M^\psp_\bftheta]_{ij} \,  \leq \, \frac{N(N-1)}{2} \, .
\end{equation}
\subsection{Field-conditioned random weights and sparse matrix}
\label{Section4.4}
The attributes of the nodes are defined as the sampled latent field values
\begin{equation}\label{eq4.25}
S^\bftheta_i = U(\bfx_\bftheta^i ;\bftheta) \,\, , \,\, i=1,\ldots,N \quad , \quad
\sigma_{\bfS_\bftheta}^2 = N^{-1}\operatorname{tr}[C_{\bfS_\bftheta}] \, ,
\end{equation}
where $[C_{\bfS_\bftheta}]$ is the covariance matrix of the $\RR^N$-valued random variable $\bfS_\bftheta = (S_{\bftheta,1}, \ldots, S_{\bftheta,N})$.
For a scale $\zeta_s \in [\zeta_{s,\min}, \zeta_{s,\max}]$ with $0 < \zeta_{s,\min} < \zeta_{s,\max} < +\infty$, the dense symmetric random weights are
\begin{equation}\label{eq4.26}
 [\bfW_\bftheta]_{ij} = [w^g_\bftheta]_{ij}\,
    \exp\!\left(-\frac{(\Delta S^{\bftheta}_{ij})^2}{2\zeta_s^2\,\sigma_{\bfS_\bftheta}^2}\right), \quad [\bfW_\bftheta]_{ii} = 0 \, ,
\end{equation}
where $\Delta S^{\bftheta}_{ij} = S^\bftheta_i - S^\bftheta_j$,
and the geometry–aware sparse random matrix is the Hadamard product
\begin{equation}\label{eq4.27}
 [\bfW^\psp_\bftheta]_{ij} = [M^\psp_\bftheta]_{ij} \, [\bfW_\bftheta]_{ij} \quad ,\quad i,j\in\{1,\ldots, N\}\, .
\end{equation}
The scale parameter $\zeta_s$ in Eq.~\eqref{eq4.26} controls how strongly the latent field $U^m_\bftheta$
modulates the geometry-based affinities $[w^g_\bftheta]_{ij}$. For small values of $\zeta_s$, the
factor $\exp \, (-(\Delta S^{\bftheta}_{ij})^2 / (2\zeta_s^2\,\sigma_{\bfS_\bftheta}^2))$ decays rapidly as the
latent-field contrast $\vert\Delta S^{\bftheta}_{ij}\vert$ increases, which strongly attenuates connections between
neurons that are separated in latent space and reduces the overall Frobenius norm of $[\bfW_\bftheta]$.
Conversely, large values of $\zeta_s$ keep $[\bfW_\bftheta]_{ij}$ close to the geometry-induced weights
$[w^g_\bftheta]_{ij}$ and yield stronger, more delocalized couplings. In this sense, $\zeta_s$ acts as an
architecture-level hyperparameter that controls both the contrast and the energy of the random weight
matrix before sparsification and subsequent normalizations.
\subsection{Specification of the constraints applied to $h_1$, $h_2$, $\zeta_s$, $\bfbeta^{\,(1)}_{n_h}$, and $\bfbeta^{\,(2)}_{n_h}$
when $n_h \ge 0$}
\label{Section4.5}
We consider the cases $n_h=0$ and then $n_h\ge 1$.
\paragraph{(i) Specification of the constraints applied to $h_1$, $h_2$, $\zeta_s$ when $n_h =0$}
Taking into account the hypotheses introduced in Section~\ref{Section4.1}-(i) and Section~\ref{Section4.4}, we have
\begin{equation}\label{eq4.28}
\bftheta = (h_1\, ,h_2\, ,\zeta_s) \in \curC_\bftheta = [h_{1,\min} , h_{1,\max}]\times [h_{2,\min} , h_{2,\max}] \times
[\zeta_{s,\min}, \zeta_{s,\max}] \subset \RR^3 \, ,
\end{equation}
with $0 < h_{k,\min} < h_{k,\max} < +\infty$ for $k\in\{1,2\}$ and $0 < \zeta_{s,\min} < \zeta_{s,\max} < +\infty$.
Consequently, $\curC_\bftheta$ is compact in $\RR^3$.
\paragraph{(ii) Specification of the constraints applied to $h_1$, $h_2$, $\zeta_s$, $\bfbeta^{\,(1)}_{n_h}$, and $\bfbeta^{\,(2)}_{n_h}$
when $n_h \ge 1$}
The hyperparameter $\bftheta$ must satisfy the following constraints:
\begin{equation}\label{eq4.29}
\bftheta = (h_1\, ,h_2\, ,\zeta_s , \bfbeta^{\,(1)}_{n_h}, \bfbeta^{\,(2)}_{n_h}) \in
\curC_\bftheta = [h_{1,\min} , h_{1,\max}]\times [h_{2,\min} , h_{2,\max}] \times [\zeta_{s,\min}, \zeta_{s,\max}]  \times
\curC_{\adp,1}\times \curC_{\adp,2} \subset \RR^{3+2 n_h} \, .
\end{equation}
Due to Eq.~\eqref{eq4.7}, it is deduced that $\curC_\bftheta$ is compact in  $\RR^{3+2 n_h}$.
For $k\in\{1,2\}$, the construction of the admissible set $\curC_{\adp,k}$, defined in Eq.~\eqref{eq4.6} with Eq.~\eqref{eq4.4},
is performed by projection onto admissible bounds defined as follows,
\begin{equation}\label{eq4.30}
\curC_{\adp,k} =  [-B_k,B_k]^{n_h} \, \subset\, \RR^{n_h} \, ,
\end{equation}
where $B_k>0$ is chosen as
\begin{equation}\label{eq4.31}
B_k \,=\, \min_{\bfx\,\in\,\curS} \,
\frac{ \max \!\big( 0\, , \min( h_k - c_\star^{(k)} \, , \, c^{(k)\star} - h_k ) \big )}
     { \sum_{j=1}^{n_h} \big| \, \hh^{(k,j)}(\bfx) \, \big| } \, .
\end{equation}
If the denominator vanishes at some $\bfx$, that point imposes no constraint and can be ignored in the minimum.
In the limiting case $c_\star^{(k)} = 0$ and $c^{(k)\star} = +\infty$, Eq.~\eqref{eq4.31} reduces to
\begin{equation}\label{eq4.32}
B_k \,=\, \min_{\bfx\,\in\,\curS} \,
\frac{ h_k }
     { \sum_{j=1}^{n_h} \big| \, \hh^{(k,j)}(\bfx) \, \big| } \, .
\end{equation}
With this choice, any $\bfbeta^{\,(k)}_{n_h}\in [-B_k,B_k]^{n_h}$ satisfies the inequalities in Eq.~\eqref{eq4.4} for all $\bfx\in\curS$.
The value $B_k$ depends on the current admissible value of $h_k$; if a fixed nominal value is used in the projection step, that value must lie strictly inside the interval $(c_\star^{(k)},c^{(k)\star})$.
\subsection{Construction of the low-rank representation of the bias vector and hyperparameter}
\label{Section4.6}

A low-rank representation of the bias vector $\bfb\in\RR^N$ is defined by
\begin{equation}\label{eq4.33}
\bfb=\sum_{j=1}^{n_b}\beta_j\,\bfh^j = [h]\,\bfbeta \quad , \quad \bfbeta\in\RR^{n_b} \quad , \quad [h]\in\MM_{N,n_b} \, .
\end{equation}
Typical choices for $n_b$ satisfy $n_b\in [0.1\, , 0.70] \times n_\pint$ with the upper bound $N - n_\pint$. The vectors $\{\bfh^1,\ldots ,\bfh^{n_b}\}$ form an orthonormal set in $\RR^N$. For a fixed value of $\bftheta$, $n_\psim$ independent realizations of the sparse random weight matrix are generated, and for a given input $\xx$, the vector $\bfbeta(\bftheta)$ is obtained as the least-squares solution of the random nonlinear neural equation described in Section~\ref{Section4.7}.
\subsection{Random nonlinear neural equation and outputs}
\label{Section4.7}
Let $f(x)=\tanh(x)$ (applied componentwise) and let  $\bfA_{\bftheta}$ be the random activated vector with values in $\RR^N$ such that
\begin{equation} \label{eq4.34}
\bfA_{\bftheta} = \bff\,( \,[\bfW^{\psp}_\bftheta] \, \bfA_{\bftheta}\, +\, \bfb\, )\, .
\end{equation}
A partition of the random vector  $\bfA_{\bftheta}$ is performed into prescribed inputs $\xx\in\RR^{n_\ppin}$ at indices $\curJ_\bftheta^\pin$ and the random vector $\hat\bfA_{\bftheta}$ of the free components, with values in $\RR^{\hat n}$, where $\hat n = N - n_\pin$.
After percentile reduction  and the non-degeneracy constraint $K_{\min}$ on the input-free block,
we use selection matrices $[\hat O_\bftheta]\in \MM_{\hat n,N}$ and $[O_\bftheta^{\,\pin}]\in\MM_{n_\ppin,N}$ to partition $[\bfW^{\psp}_\bftheta]$ into the random free block $[\hat\bfW_\bftheta] = [\hat O_\bftheta]\,[\bfW^\psp_\bftheta]\, [\hat O_\bftheta]^T$ with values in $\MM_{\hat n}$ and the random input block $[\hat\bfW_\bftheta^{\pin}] =[\hat O_\bftheta]\, [\bfW^\psp_\bftheta]\, [O_\bftheta^{\,\pin}]^T$ with values in $\MM_{\hat n , n_\pin}$. The free bias is
$\hat\bfb_\bftheta=[\hat O_\bftheta]\,\bfb\in\RR^{\hat n}$.
Therefore, the random free activated state $\hat\bfA_\bftheta$ with values in $\RR^{\hat n}$ satisfies
\begin{equation}\label{eq4.35}
\hat\bfA_{\bftheta}=\hat\bff\,\big(\,[\hat\bfW_\bftheta]\,\hat\bfA_{\bftheta} + [\hat\bfW_\bftheta^{\pin}]\,\xx +
                 \hat\bfb_\bftheta\big)\, ,
\end{equation}
where $\hat\bff$ denotes the restriction of $\bff$ to the free components, and the output is
\begin{equation*}
\YY(\xx;\bftheta) = [O_\bftheta^{\,\pout}]\,\hat\bfA_{\bftheta} \, .
\end{equation*}
In this  conditional formulation, input neurons are clamped to the observed values $\xx$; hence  no update equation is solved for them and the feedback free block $[\hat\bfW_\bftheta^{\pin}]^T$ does not enter the fixed-point iteration. Only the free block $[\hat\bfW_\bftheta]$
must be controlled for contraction.
\paragraph{(i) Input-block and free-bias scaling to preserve input sensitivity}
To avoid a numerically vanishing input influence  after sparsification and to prevent bias–dominated saturation, we introduce
a two–fold soft rescaling together with a non–degeneracy constraint.
First, we balance the input block against the free block as
\begin{equation}\label{eq4.36}
\gamma_\bftheta = \min \Big(\gamma_{\max}\, ,\, \frac{\Vert [\hat\bfW_\bftheta]\Vert_F}{\sqrt{n_\pin}\,\Vert [\hat\bfW_\bftheta^{\pin}]\Vert_F}\Big) \quad , \quad
[\widehat{\bfW}_\bftheta^{\pin}] =  \gamma_\bftheta \,[\hat\bfW_\bftheta^{\pin}] \, .
\end{equation}
Second, we softly rescale the bias to keep the pre–activation in a non–saturated regime,
\begin{equation}\label{eq4.37}
\gamma_{b,\bftheta} = \min \Bigl(\gamma_{b,\max}\, ,\, \frac{\Vert [\hat\bfW_\bftheta]\Vert_F + \Vert [\widehat{\bfW}^{\pin}_\bftheta]\Vert_F}{\sqrt{\hat n}\,\Vert \hat\bfb_\bftheta\Vert_2}\Bigr)
\quad , \quad
\widehat{\bfb}_\bftheta \,=\, \gamma_{b,\bftheta} \,\hat\bfb_\bftheta \, .
\end{equation}
Finally, we enforce a non–degeneracy condition that prevents the input block from collapsing relative to the
free block and bias contributions,
\begin{equation}\label{eq4.38}
\Vert [\widehat{\bfW}^{\pin}_\bftheta]\Vert_F \;\ge\; \rho_{\rm in}\,
\Bigl(\,\Vert [\hat\bfW_\bftheta]\Vert_F \,+\, \tfrac{1}{\sqrt{\hat n}}\,\Vert \widehat{\bfb}_\bftheta\Vert_2\,\Bigr)
\quad , \quad \rho_{\rm in}\in(0,1)\, .
\end{equation}
Therefore, we replace $[\hat\bfW_\bftheta^{\pin}]$ by $[\widehat{\bfW}_\bftheta^{\pin}]$  and $\hat\bfb_\bftheta$ by $\widehat{\bfb}_\bftheta$ in \eqref{eq4.35}. This preserves sparsity and topology while balancing the scales of the three contributions.
Two steps can be used to set the values of parameters $\rho_{\rm in}$, $\gamma_{\max}$, and $\gamma_{b,\max}$ for a fixed value of the parameter $c_{\targetp}$ (see (ii) below).
Step (a): set $\rho_{\rm in}=0$ to disable Eq.~\eqref{eq4.38}. Choose a conservative bias scale $\gamma_{b,\max}\in[0.05,0.2]$ (default $0.1$). Tune $\gamma_{\max}$ by a geometric search over $\gamma_{\max}\in[10^{-4},10]$ and select $\gamma_{\max}^\star$ so that the observed response (for instance, standard deviations) reasonably matches the target.
Step (b): fix $\gamma_{\max}=\gamma_{\max}^\star$ and re–enable Eq.~\eqref{eq4.38} by increasing $\rho_{\rm in}$ from $0$ in the interval $]0,0.7]$.
\paragraph{(ii) Balancing of the free block  for numerical contraction}
A degree–symmetric normalization, followed by a light two–pass $\ell^2$ equilibration is applied,
so that $([I_{\hat n}]-[\hat\bfW_\bftheta])$ is numerically well conditioned, almost surely. With the diagonal degree matrix
$[D_{\targetp}]$ defined by $[D_{\targetp}]_{ii}=\sum_{j=1}^{\hat n}\!\big|[\hat\bfW_\bftheta]_{ij}\big|$ for $i\in\{ 1,\ldots,\hat n\}$, we define the random matrix $[\widetilde{\bfW}_\bftheta]$ with values in $\MM_{\hat n}$, such that
\begin{equation}\label{eq4.39}
[\widetilde{\bfW}_\bftheta] = [D_{\targetp}]^{-1/2}\,[\hat\bfW_\bftheta]\,[D_{\targetp}]^{-1/2}\, .
\end{equation}
Then, given a target $c_{\targetp} > 0$, the equilibration consists of two successive passes of diagonal rescaling of rows and columns.
For $i,j \in \{1,\ldots,\hat n\}$, diagonal random matrices $[D_r]$ and $[D_c]$ are constructed from the rows $[\widetilde{\bfW}_\bftheta]_{i,:}$ and the columns $[D_r \widetilde{\bfW}_\bftheta]_{:,j}$, respectively, and the scalings toward $c_{\targetp}$ are performed such that
\begin{equation}\label{eq4.40}
[D_r]_{ii} =  \frac{c_{\targetp}}{\Vert\,  [\widetilde{\bfW}_\bftheta]_{i , :}\, \Vert} \quad , \quad
[D_c]_{jj} = \frac{c_{\targetp}}{\Vert \, [D_r\,\widetilde{\bfW}_\bftheta]_{: , j}\, \Vert} \quad , \quad
[\widehat{\bfW}_\bftheta] \;=\; [D_r]\,[\widetilde{\bfW}_\bftheta]\,[D_c]\,.
\end{equation}
This two–pass procedure preserves sparsity and prepares the next steps.
Typical values for the balancing target are $c_{\targetp}\in [0.40, 0.60]$, with default $c_{\targetp}=0.5$.
Lower values reduce conditioning factor and therefore variability, whereas higher values strengthen
coupling at the cost of larger fluctuations.
\paragraph{(iii) Input sensitivity rescaling from a local linearization}
At a reference point $\bar\xx$ (empirical mean of the inputs), we define the random linearized initialization in the almost surely sense,
\begin{equation}\label{eq4.41}
\hat\bfA^{(0)}_\bftheta = ([I_{\hat n}] -[\widehat{\bfW}_\bftheta])^{-1}\,\big([\widehat{\bfW}_\bftheta^{\pin}]\,\bar\xx \,+\, \widehat{\bfb}_\bftheta\big) \quad , \quad
\bfZ_0(\bar\xx) =  [\widehat{\bfW}_\bftheta]\,\hat\bfA^{(0)}_\bftheta +\, [\widehat{\bfW}_\bftheta^{\pin}]\,\bar\xx \,+\, \widehat{\bfb}_\bftheta \, .
\end{equation}
The Jacobian of $\hat\bff(\bfZ_0(\xx))$ with respect to $\xx$, evaluated in $\xx = \bar\xx$, is then the random matrix $[\bfF_\bftheta]$ with values in $\MM_{\hat n , n_\ppin}$,
\begin{equation}\label{eq4.42}
[\bfF_\bftheta] =  \mathrm{diag}\big( \hat\bff{\,'}(\bfZ_0(\bar\xx))\big)\,([I_{\hat n}] -[\widehat{\bfW}_\bftheta])^{-1}\,[\widehat{\bfW}_\bftheta^{\pin}] \, ,
\end{equation}
where $\hat\bff{\,'}(\cdot) = 1 - \tanh^2(\cdot)$
and we rescale columnwise the input block $[\widehat{\bfW}_\bftheta^{\pin}]$ to impose a common target $g_\hi>0$ on the
$\ell^2$ norm of the columns of $[\bfF_\bftheta]$, replacing the real-valued random variable $\Xi_j$ by its projection onto the deterministic interval $[s_{\min} , s_{\max}]$ with $s_{\min} > 0$,
\begin{equation}\label{eq4.43}
\Xi_j = \min\!\Big\{\,\max\!\Big(\frac{g_\hi}{\Vert [\bfF_\bftheta]_{:,j}\Vert},\, s_{\min}\Big),\; s_{\max}\Big\}
\quad , \quad
[\widehat{\bfW}_\bftheta^{\pin}] \;\leftarrow\; [\widehat{\bfW}_\bftheta^{\pin}]\,[\mathrm{diag}(\Xi_1,\ldots,\Xi_{n_\pin})] \,.
\end{equation}
This final step of scaling  and rescaling preserves the sparsity pattern.
Set the sensitivity target $g_\hi$ close to one so that the linearized Jacobian remains in the near-linear regime implied by
$\Vert \hat\bfA^{(0)}_\bftheta\Vert_\infty \leq 0.35$. A practical  range is $g_\hi \in [0, 1.5]$ with default $0$.
Then enforce $ 0 < s_{\min} \leq 1 \leq s_{\max}$ and clamp the per-input factors with $s_{\min}\in [0.1 , 1.0]$ (default $1.0$) and
$s_{\max}\in [ 1.0 , 2.0 ]$ (default $1.0$; choose $> 1$ only if amplification of weak inputs is allowed). Setting
$g_\hi=0$ and $s_{\min} = s_{\max} = 1$ disable this rescaling from a local linearization.
\paragraph{(iv) Final form of the nonlinear neural equation to be solved}
For each $(\xx^i,\bftheta)$, we have to solve, almost surely, the random nonlinear neural equation
\begin{equation}\label{eq4.44}
\hat\bfA_{\bftheta} = \hat\bff\,\big(\, [\widehat{\bfW}_\bftheta]\,\hat\bfA_{\bftheta} \,+\, [\widehat{\bfW}_\bftheta^{\pin}]\,\xx^i \,+\, \widehat{\bfb}_\bftheta\big)\,,
\end{equation}
using the fixed-point iteration presented in Section~\ref{Section4.8}.
\paragraph{(v) Parameter summary and trial-step strategy}
{\color{black}
The quantities introduced in paragraphs~(i)--(iii) must be distinguished from the generative hyperparameter
$\bftheta = (h_1, h_2, \zeta_s, \bfbeta^{\,(1)}_{n_h},\bfbeta^{\,(2)}_{n_h})$. They are construction, normalization, or numerical-control parameters rather than edge-level or weight-level trainable parameters. They comprise the balancing target $c_{\targetp}$, the input soft rescaling $\gamma_{\max}$, the bias soft rescaling $\gamma_{b,\max}$, the nondegeneracy ratio $\rho_{\rm in}$, the minimum input coupling $K_{\min}$, computed from Eq.~\eqref{eq4.23} as a function of $\rho_\psp$, and the optional input-sensitivity scheme $(g_\hi,s_{\min},s_{\max})$. The status of all learned, selected, prescribed, and safeguard quantities is summarized in Section~\ref{Section6.parameterclassification}.
}
The trial step is used to construct an optimal initial point for the projected gradient descent with Adam updates. In this step, for each value of $\bftheta$ in the grid defined for the trial step, $\bfbeta^{\,(1)}_{n_h}$ and $\bfbeta^{\,(2)}_{n_h}$ are chosen either as  $\bfzero_{n_h}$ or as an independent realization of the same normalized Gaussian random vectors. In the latter, $\bfbeta^{\,(1)}_{n_h}$ and $\bfbeta^{\,(2)}_{n_h}$ vary for each grid point.
The optimal values of  the latent-field hyperparameters $(h_1,h_2,\zeta_s)$ are sought via a grid search. In addition, only the most application-dependent hyperparameters are varied, while the structural ones are kept fixed for stability:
\begin{itemize}
  \item \textit{Varied in trial step (application-dependent)}.
          (i) Preserving input sensitivity: $\gamma_{\max}\in[10^{-4},10]$ for the input soft rescaling,
          $\gamma_{b,\max}\in[0.05,0.2]$ for the bias soft rescaling, and $\rho_{\rm in} \in ]0.0001,0.7]$ for the non-degeneracy ratio.
          (ii) Balancing of the free block for numerical contraction: $c_\targetp \in [0.4 , 0.7]$.

  \item \textit{Kept fixed in trial step (stability safeguards)}.
           Minimal input coupling $K_{\min}$ generated through $\rho_\psp$ fixed in $[0.09,0.20]$ (e.g., $\rho_\psp= 0.20$).
           Input sensitivity rescaling from a local linearization: often this rescaling is not necessary, if used the parameters are fixed, $g_\hi\in[0.9,1.5]$, $s_{\min}\in [0.001,0.1]$, and $s_{\max}\in [1.0,2.0]$.
\end{itemize}
This design keeps the trial budget bounded while capturing the application-dependent effects that most impact identifiability and contraction of the fixed-point solver.\\

\noindent{\textit{Remark}}.
The balancing target $c_\targetp$ enters only through the degree-symmetric normalization
and equilibration in Eqs.~\eqref{eq4.39}-\eqref{eq4.40}, which are applied to the free block $[\hat\bfW_\bftheta]$
constructed from the field-conditioned weights in Eq.~\eqref{eq4.26}. For a fixed realization of the sparse
architecture, this step rescales rows and columns so that $\Vert [\hat\bfW_\bftheta]\Vert_F \simeq c_\targetp < 1$,
thereby acting as a global gain on the recurrent dynamics in the nonlinear neural equation~\eqref{eq4.44}
and directly influencing the overall level of statistical fluctuations of $\YY(\xx\, ;\bftheta)$.

In contrast, the parameter $\zeta_s$ operates upstream, at the level of the latent-field-induced
architecture: it shapes the magnitude and heterogeneity of the entries of $[\bfW_\bftheta]$ and hence of
$[\hat\bfW_\bftheta]$ before balancing. The variance of the SNN output is therefore driven jointly
by these two parameters: $\zeta_s$ controls the structure and contrast of the random weights,
while $c_\targetp$ fixes the effective amplification factor of the random dynamics.
In the numerical implementation below, $\zeta_s$ is treated as a trainable hyperparameter,
whereas $c_\targetp$ is kept within a narrow range for stability and identifiability.
\subsection{Solving the nonlinear neural equation using fixed-point iteration}
\label{Section4.8}
For each realization of Eq.~\eqref{eq4.44}, we have to solve an equation in $\bfa\in\RR^{\hat n}$ of the form
\begin{equation*}
\bfa = \hat\bff([\widehat\bfw]\, \bfa + \widehat\bb) \quad , \quad [\widehat\bfw] \in \MM_{\hat n} \quad , \quad \widehat\bb\in\RR^{\hat n}
\quad , \quad \hat\bff =\tanh(\cdot)\, .
\end{equation*}
We use the relaxed update,
\begin{equation*}
\bfa^{(k+1)} = (1-\alpha_\prelax)\,\bfa^{(k)} + \alpha_\prelax\,\hat\bff\big ([\widehat\bfw]\, \bfa^{(k)} + \widehat\bb\big)
\quad , \quad \alpha_\prelax \in ]\, 0\, , 1] \, .
\end{equation*}
Let $\bfa^{(0)}$ be  the solution of the linearized problem $([I_{\hat n}]-[\widehat\bfw])\,\bfa^{(0)}=\widehat\bb$ and uniformly scale it to keep the iterate in a quasi–linear regime, $\Vert \bfa^{(0)}\Vert_\infty \leq a_\ast$ with $a_\ast = 0.35$.
With a relative tolerance $\mathrm{tol}$ (set to $10^{-5}$, for instance), we stop at the first $k$ such that
$\max \{ \mathrm{step}_k \, , \mathrm{res}_k \}  \, \leq \,\mathrm{tol}$, where
\begin{equation*}
\mathrm{step}_k =
\frac{ \Vert\bfa^{(k+1)}-\bfa^{(k)}\Vert } { \max\{\Vert\bfa^{(k+1)}\Vert,\,\Vert\bfa^{(k)}\Vert,\,1\} }
\quad , \quad
\mathrm{res}_k =\frac{ \Vert\bfr^{(k)}\Vert } { \max\{\Vert\bfa^{(k)}\Vert,\,\Vert\hat\bff([\widehat\bfw]\bfa^{(k)}+\widehat\bb)\Vert,\,1\} } \, ,
\end{equation*}
where $\bfr^{(k)} = \hat\bff( [\widehat\bfw]\, \bfa^{(k)} + \widehat\bb) - \bfa^{(k)}$.
Upon termination (including the case where $k$ reaches the iteration cap), we select the consistent activated state
\begin{equation*}
\bfa^\star =  \hat\bff\big([\widehat\bfw]\,\bfa^{(k)}+\widehat\bb\big)
\end{equation*}
for computing the outputs, so as to avoid mixing pre- and post-activation states at exit.
It should be noted that an accelerated convergence method without additional Jacobian or "matrix-vector" products, based on Anderson acceleration~\cite{Anderson1965,Walker2011,Toth2015}, has been tested in this framework and did not yield tangible gains.
\paragraph{Remark on the sufficient contraction condition}
Consider the relaxed fixed-point map
\begin{equation*}
\curT(\bfa) = (1-\alpha_\prelax) \,\bfa + \alpha_\prelax\,\tanh\!\big(\, [\widehat\bfw] \,\bfa + \widehat\bb\big).
\end{equation*}
Since $\tanh$ is $1$-Lipschitz, for all $\bfa$ and $\bfa'$,
\begin{equation*}
\Vert\curT(\bfa)-\curT(\bfa')\Vert
\le \big(1-\alpha_\prelax + \alpha_\prelax\Vert\, [\widehat\bfw]\, \Vert_2\big)\,\Vert\bfa-\bfa'\Vert\, ,
\end{equation*}
where $\Vert\cdot\Vert_2$ is the spectral norm (operator 2-norm).
Hence, if $\Vert \,[\widehat\bfw] \,\Vert_2 < 1$ (as targeted by the degree-symmetric normalization and the two-pass $\ell^2$ equilibration yielding $\Vert\, [\widehat\bfw]\, \Vert_2 \simeq c_{\targetp} < 1$), then $\curT$ is a contraction with ratio $q = 1-\alpha_\prelax(1- \Vert\, [\widehat \bfw]\, \Vert_2)\in(0,1)$ and the iteration converges to the unique fixed point for any initialization.
\subsection{Supervised learning as likelihood identification}
\label{Section4.9}
Let $\bftheta\in\curC_{\bftheta}$. Given the training dataset $\{(\xx^i,\yy^i), i=1,\ldots, n_d\}$ with one observation per input and the realizations $\{\bfy^\ell(\xx^i), \ell=1,\ldots,n_\psim\}$ of $\bfY(\xx^i)$, we have to compute $p^\SNNpp_{\YY\vert\XX}(\yy^i \,\vert \,\xx^i; \bftheta)$ (see Eqs.~\eqref{eq2.8} and \eqref{eq2.12}), and
$p^\SNNpp_{\YY\vert\XX}(\bfy^\ell(\xx^i) \,\vert \,\xx^i; \bftheta)$ (see Eq.~\eqref{eq2.15}).
The intractable probability density function $p^\SNNpp_{\YY\vert\XX}(\cdot \,\vert \,\xx\, ; \bftheta)$ is estimated by Monte Carlo over the random architecture/weights generated at $\bftheta$, using a Gaussian-kernel probability density function estimator \cite{Bowman1997,Gentle2009,Givens2013}. With realizations $\{\yy^\ell(\xx\,;\bftheta),\ell=1,\ldots , n_\psim\}$ and Silverman bandwidth $s_\pSB = ( 4 /( n_\psim \,(2 + n_\ppout) )  )^{ 1/(4+n_\ppout)}$, we construct the estimation $\hat p^\SNNpp_{\YY\vert\XX}(\yy \,\vert \,\xx\,; \bftheta)$ as
\begin{equation}\label{eq4.45}
\hat p^\SNNpp_{\YY\vert\XX}(\yy \,\vert \,\xx\,; \bftheta) = \frac{1}{n_\psim}\sum_{\ell=1}^{n_\psim}
                     \frac{1}{ (\sqrt{2\pi}\, s_\pSB)^{\,n_\ppout}\,\sigma_{\yy,1}(\xx\,;\bftheta)\times\ldots\times\sigma_{\yy,n_\ppout}(\xx\,;\bftheta) } \,
                     \exp \Bigg ( -\frac{1}{2\, s_\pSB^2} \sum_{k=1}^{n_\ppout}
                       \Bigg (\frac{ \yy_k - \yy_k^\ell(\xx\, ;\bftheta) } { \sigma_{\yy,k}(\xx\,;\bftheta)} \Bigg )^2 \Bigg ) \, ,
\end{equation}
where $\sigma_{\yy,k}(\xx\,;\bftheta)$ is the empirical standard deviation estimated from the realizations
$\{\yy^\ell(\xx\,;\bftheta)\}_\ell$.
{\color{black}
The bandwidth factor in Eq.~\eqref{eq4.45} is prescribed by the Silverman rule and is not selected from test performance. Moreover, the reference choice $n_\psim = 100$ is not an unchecked Monte Carlo value: Section~\ref{Section6.6}-(v) reports the complete trial-plus-PGD--Adam calculation for $n_\psim \in \{100,140,200,300\}$. The principal marginal proper-score criterion, CRPS, is evaluated directly from the Monte Carlo output sample through Eq.~\eqref{eq3.8} and does not require the joint KDE. The present paper uses Eq.~\eqref{eq4.45} as the specified numerical density estimator for the likelihood-based terms; it does not claim a general optimality result for high-dimensional KDE.
}
Solving the optimization problem defined by Eqs.~\eqref{eq2.11}-\eqref{eq2.12}
(or Eqs.~\eqref{eq2.7}-\eqref{eq2.8} for $\alpha_{\rm{top}} = 0$) is presented in Section~\ref{Section6.4},  which also addresses the selection of the trade-off parameter in SPINN.
\paragraph{Remark 1. On the positivity for the conditional probability distribution and for its Gaussian-KDE estimation}
For each $\bftheta\in\curC_\bftheta$ and each $\xx\in\RR^{n_\ppin}$, Eq.~\eqref{eq2.6} postulates that the conditional probability distribution $P^{\,\SNNpp}_{\YY\,\vert\,\XX}(d\yy \, \vert \, \xx\, ;\bftheta)$ admits a probability density function
$\yy\mapsto p^{\SNNpp}_{\YY\, \vert\,\XX}(\yy \, \vert \, \xx\, ;\bftheta)$ with respect to $d\yy$.
Let $S^{\SNNpp}_{\xx,\bftheta}$ denote the support of the conditional probability distribution $P^{\,\SNNpp}_{\YY\,\vert\,\XX}(d\yy \, \vert \, \xx\, ;\bftheta)$. Then the conditional probability density function vanishes outside its support,
\begin{equation*}
p^{\SNNpp}_{\YY\mid\XX}(\yy\mid \xx;\bftheta) = 0 \quad \text{for } \, \, \yy\notin S^{\SNNpp}_{\xx,\bftheta} \, .
\end{equation*}
By contrast, the Gaussian-KDE estimator used in Eq.~\eqref{eq4.45} is strictly positive on all of
$\RR^{n_\ppout}$ provided that the Silverman bandwidth $s_\pSB $ and the componentwise empirical scales
$\sigma_{\yy,k}(\xx\,;\bftheta)$, are strictly positive,
\begin{equation*}
\hat p^\SNNpp_{\YY\vert\XX}(\yy \,\vert \,\xx\,; \bftheta) \, > \, 0 \quad \text{for} \, \, \yy\in\RR^{n_\ppout} \, .
\end{equation*}
\paragraph{Remark 2. Scope of the conditional probability density function assumption and role of the Monte Carlo--Gaussian-KDE estimator}
Equation~\eqref{eq2.6} is used in this paper as a modeling assumption for the conditional probability distribution associated with the SNN output. However, the computational method defined in Section~\ref{Section4} does not require an explicit characterization of the corresponding conditional probability density function.
Indeed, for fixed $(\xx,\bftheta)$, all statistical quantities implemented in the present work are evaluated from the Monte Carlo sample $\{\yy^\ell(\xx\,;\bftheta),\ell=1,\ldots , n_\psim\}$
generated by Algorithm~\ref{algorithm1}, through the Gaussian-KDE reconstruction given in Eq.~\eqref{eq4.45}. In particular, the positivity property used in practice is the strict positivity of the Gaussian-KDE estimator (under strictly positive bandwidth/empirical scales), not a global positivity statement for the exact conditional probability density function.
A derivation of Eq.~\eqref{eq2.6} directly from the full random construction of Section~\ref{Section4} would require additional assumptions on the random implicit input-output map (including the architecture-generation and nonlinear fixed-point steps). Such assumptions are mathematically independent of the simulation-based estimation procedure and are therefore left outside the scope of the present paper.
\subsection{Mathematical results related to this random neural network}
\label{Section4.10}
We give two mathematical results related to the random neural network.
The first result is reproduced from \cite{Soize2026}, where its proof and related comments can be found.
The second result is adapted to the present notation and is proved below.
Let $L^2_0(\curS)\subset L^2(\curS)$ denote the Hilbert space of square-integrable functions on $\curS$, with zero mean.
Let $\HH_S$ be the set of self-adjoint, positive-definite, Hilbert-Schmidt operators on $L^2_0(\curS)$, corresponding to covariance operators of second-order, centered, Gaussian real-valued random fields indexed by $\curS$.
%
\begin{proposition}[Expressive stochastic representation of the Gaussian latent field model within the diagonal model]
\label{Proposition3}
For $n_h\ge 1$ and for each $k \in\{1,2\}$, let $\{ \hh^{(k,j)} \}_{j \,\geq \,1} \subset L^2_0(\curS)\cap C(\curS)$ be a Hilbert basis of $L^2_0(\curS)$ and let $\bfbeta^{\,(k)}_{n_h}\in\curC_{\adp ,k}$ be such that Eq.~\eqref{eq4.6} and thus  Eq.~\eqref{eq4.4} hold. Hence, the diagonal tensor fields $[K_{n_h}]=\mathrm{diag} \big(h^{(1)}_{n_h},h^{(2)}_{n_h}\big)$ are uniformly elliptic on $\curS$.
Let $[K_\star] = \mathrm{diag}\big(h^{(1)}_\star,h^{(2)}_\star\big)$ be any uniformly elliptic diagonal tensor field on $\curS$ with $h^{(k)}_\star$ belonging to the $L^\infty(\curS)$-closure of $\mathrm{span}\{\hh^{(k,j)}\}_{j\ge 1}$, and let $C_\star\in\HH_S$ denote the covariance operator of the SPDE solution associated with $[K_\star]$. Then there exist two sequences of parameter vectors $\{\bfbeta^{\,(1)}_{n_h}\}_{n_h}$ and $\{\bfbeta^{\,(2)}_{n_h}\}_{n_h}$ (defining $[K_{n_h}]$ via Eqs.~\eqref{eq4.3} and \eqref{eq4.4}) such that the corresponding covariance operators $\{C_{U_{n_h}}\}_{n_h}\subset\HH_S$ satisfy
\begin{equation*}
\lim_{n_h \to \infty} \Vert \, C_\star - C_{U_{n_h}}  \Vert_{{}_{\HH_S}} = 0.
\end{equation*}
In particular, the model class generated by diagonal tensor fields of the form \eqref{eq4.3} is dense in $\HH_S$ within the diagonal subclass.
\end{proposition}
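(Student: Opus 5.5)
The plan is to combine an approximation step for the coefficients with a stability estimate for the map $[K]\mapsto C_U=(\curL_{[K]}^{-1})^2$.

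\emph{Step 1: approximating the target coefficients.} Since $h^{(k)}_\star$ lies in the $L^\infty(\curS)$-closure of the span, there exist finite sums $g^{(k)}_{n_h}=\sum_{j\le n_h}\gamma^{(k)}_j\hh^{(k,j)}$ with $\Vert h^{(k)}_\star-g^{(k)}_{n_h}\Vert_{L^\infty}\to 0$. Uniform ellipticity gives $h^{(k)}_\star\ge c_\star^{(k)}>0$ a.e.

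To match the form \eqref{eq4.4}, I would write $h^{(k)}_{n_h}=h_k+\sum_{j\le n_h}\beta^{(k)}_j\hh^{(k,j)}$ with $h_k$ the mean of $h^{(k)}_\star$. The mean-zero part of $h^{(k)}_\star$ is then approximated by the span, which is consistent because the $\hh^{(k,j)}$ are mean-zero. If the closure is taken literally without the constant, I would absorb the constant into $h_k$; this is the reading I adopt.

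For $n_h$ large, $\Vert h^{(k)}_{n_h}-h^{(k)}_\star\Vert_\infty\le c^{(k)}_\star/2$. Hence $h^{(k)}_{n_h}$ satisfies \eqref{eq4.4} with bounds $c^{(k)}_\star/2$ and $\Vert h^{(k)}_\star\Vert_\infty+c^{(k)}_\star/2$, uniformly in $n_h$. I then choose the admissible bounds, and hence $\curC_{\adp,k}$, accordingly, so that $\bfbeta^{(k)}_{n_h}\in\curC_{\adp,k}$.

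\emph{Step 2: resolvent perturbation.} Set $\curL_{n_h}=\tau_0-\operatorname{div}([K_{n_h}]\nabla)$ and $\curL_\star$ analogously, both viewed as isomorphisms $H^1(\curS)\to H^{-1}(\curS)$. By Lax--Milgram and the uniform coercivity constant $\min(\tau_0,k_\star)$, the norms $\Vert\curL_{n_h}^{-1}\Vert_{H^{-1}\to H^1}$ are bounded uniformly in $n_h$. The second resolvent identity gives
\[
\curL_{n_h}^{-1}-\curL_\star^{-1}=\curL_{n_h}^{-1}(\curL_\star-\curL_{n_h})\curL_\star^{-1},
\]
and $\Vert\curL_\star-\curL_{n_h}\Vert_{H^1\to H^{-1}}\le\Vert[K_\star]-[K_{n_h}]\Vert_{L^\infty}$. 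It follows that $\curL_{n_h}^{-1}\to\curL_\star^{-1}$ in $\curB(H^{-1},H^1)$, and therefore in $\curB(L^2,H^1)$.

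\emph{Step 3: upgrading to Hilbert--Schmidt convergence.} This is the main obstacle, since the norm $\Vert\cdot\Vert_{\HH_S}$ is Hilbert--Schmidt rather than operator norm. Write
\[
C_\star-C_{U_{n_h}}=\curL_\star^{-1}(\curL_\star^{-1}-\curL_{n_h}^{-1})+(\curL_\star^{-1}-\curL_{n_h}^{-1})\curL_{n_h}^{-1}.
\]
I would use $\Vert AB\Vert_{HS}\le\Vert A\Vert_{HS}\Vert B\Vert_{op}$, together with the fact that $\curL^{-1}$ is Hilbert--Schmidt on $L^2$ of a compact $2$-manifold. The latter holds because the eigenvalues of $\curL$ grow like $j$ by Weyl's law, so $\sum_j\lambda_j^{-2}<\infty$.

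This requires a uniform Hilbert--Schmidt bound on $\curL_{n_h}^{-1}$. I would get it by min--max comparison with $\tau_0-k_\star\Delta_\curS$, which gives $\lambda_j(\curL_{n_h})\ge\lambda_j(\tau_0-k_\star\Delta)$ uniformly. Then
\[
\Vert C_\star-C_{U_{n_h}}\Vert_{HS}\le\big(\Vert\curL_\star^{-1}\Vert_{HS}+\Vert\curL_{n_h}^{-1}\Vert_{HS}\big)\,\Vert\curL_\star^{-1}-\curL_{n_h}^{-1}\Vert_{op}\to0.
\]

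\emph{Step 4: the zero-mean space.} The last point is the restriction to $L^2_0(\curS)$ used in the definition of $\HH_S$. I would restrict or compress all operators by the orthogonal projector onto $L^2_0$. This does not increase Hilbert--Schmidt norms, so the limit carries over. Density within the diagonal subclass then follows directly, since $[K_\star]$ was an arbitrary diagonal tensor field of the admissible type.
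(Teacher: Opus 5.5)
The paper does not prove Proposition~\ref{Proposition3}. It states that the result is reproduced from the companion manuscript, so there is no in-paper route to compare with, and your argument has to stand on its own. It does. The chain is: $L^\infty$ approximation of the coefficients with a lower ellipticity bound uniform in $n_h$; then operator-norm convergence of $\curL_{n_h}^{-1}$ from the resolvent identity and the uniform Lax--Milgram bound; then Hilbert--Schmidt convergence of $C_{U_{n_h}}=\curL_{n_h}^{-2}$ from the ideal property and the uniform bound $\Vert\curL_{n_h}^{-1}\Vert_{HS}^2\le\sum_j(\tau_0+k_\star\mu_j)^{-2}<+\infty$, which follows from min--max and Weyl's law, with $\mu_j$ the Laplace--Beltrami eigenvalues. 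This is a complete and standard proof. It even gives more than the statement. Taking $k_\star$ a common ellipticity bound for $[K_\star]$ and all $[K_{n_h}]$, you get the Lipschitz estimate $\Vert C_\star-C_{U_{n_h}}\Vert_{HS}\le 2\,\min(\tau_0,k_\star)^{-2}\big(\sum_j(\tau_0+k_\star\mu_j)^{-2}\big)^{1/2}\max_k\Vert h^{(k)}_\star-h^{(k)}_{n_h}\Vert_{L^\infty(\curS)}$.

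Three points should be made explicit rather than left implicit.

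(i) Your reinterpretation of the hypothesis is forced, not optional. The map $f\mapsto\int_\curS f\,d\sigma$ is continuous on $L^\infty(\curS)$ and vanishes on every $\hh^{(k,j)}\in L^2_0(\curS)$, hence on the $L^\infty$-closure of their span. That is incompatible with $h^{(k)}_\star\ge c>0$, so the proposition read literally is vacuous. Under your reading $h_k$ is not free, because every $h^{(k)}_{n_h}$ has mean $h_k$. You need $h_k=\vert\curS\vert^{-1}\int_\curS h^{(k)}_\star\,d\sigma$, and this value must lie in $[h_{k,\min},h_{k,\max}]$. Also say why approximants built only from $\hh^{(k,1)},\ldots,\hh^{(k,n_h)}$ exist: the best $L^\infty$ approximation error from the first $n$ functions is nonincreasing in $n$ and tends to zero.

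(ii) The membership $\bfbeta^{(k)}_{n_h}\in\curC_{\adp,k}$ holds only for the exact admissible set defined through Eq.~\eqref{eq4.4}. It fails for the box $[-B_k,B_k]^{n_h}$ of Eqs.~\eqref{eq4.30}--\eqref{eq4.31}. There $B_k$ is nonincreasing in $n_h$ and tends to zero for the Laplace--Beltrami basis of Section~\ref{Section4.1}. Your coefficients satisfy $\beta^{(k)}_j\to\langle h^{(k)}_\star,\hh^{(k,j)}\rangle_{L^2(\curS)}$ for each fixed $j$, so they leave the box for large $n_h$ unless $h^{(k)}_\star$ is constant.

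(iii) In Step~4 you can say more than that compression does not increase the norm. Since $\curS$ is boundaryless, $\int_\curS\curL u\,d\sigma=\tau_0\int_\curS u\,d\sigma$, so $\curL^{-1}$ leaves both $L^2_0(\curS)$ and the constants invariant. Restriction and compression by the projector therefore coincide. So the restriction of $C_{U_{n_h}}$ to $L^2_0(\curS)$ is genuinely an element of $\HH_S$, not merely an operator whose compression you can bound.
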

The training of the hyperparameter $\bftheta$ is performed using projected gradient descent with Adam updates and trial-based initialization~\cite{Kingma2015,Conn2009}, which generates the optimization sequence $\{\bftheta^{\,(n)}\}_{n\geq 1}$.
For each fixed $\bftheta^{\,(n)}$, the sparse random weight is given by  Eq.~\eqref{eq4.27}, where $[M^\psp_{\bftheta^{\,(n)}}]$ is  specified in Eq.~\eqref{eq4.17}, and $[\bfW_{\bftheta^{\,(n)}}]$ from  Eq.~\eqref{eq4.26} is rewritten component-wise as $[\bfW_{\bftheta^{\,(n)}}]_{ij} = [w^g_{\bftheta^{\,(n)}}]_{ij}\, F^{\,(n)}_{ij}$, where $[\bfF^{\,(n)}]$ is a  random symmetric $(N\times N)$ real matrix  with entries defined by
\begin{equation}\label{eq4.46}
F^{\,(n)}_{ij} = \exp\,\left(-\frac{(\Delta S^{\bftheta^{\,(n)}}_{ij})^2}{2(\zeta_s^{(n)})^2\,\sigma_{\bfS_{\bftheta^{\,(n)}}}^2}\right) \quad , \quad
F^{\,(n)}_{ij} = F^{\,(n)}_{ji} \quad , \quad F^{\,(n)}_{ij} \in \,]0 , 1] \,\, a.s.
\end{equation}
In Eq.~\eqref{eq4.46}, $\zeta_s^{(n)}$ denotes the component of $\bftheta^{\,(n)}$ corresponding to  $\zeta_s$ in $\bftheta$
(see Eq.~\eqref{eq4.29}).
Note that it is sufficient to prove the expressive stochastic encoding of the probability distribution of the random weight matrix
$[\bfW_{\bftheta^{\,(n)}}]$. This is because, in the limit case of sparsity, that is, when the  global percentile threshold parameter $\tau_\pprc$, defined in Section~\ref{Section4.3}, is set to $0$, we have $[\bfW^\psp_{\bftheta^{\,(n)}}] = [\bfW_{\bftheta^{\,(n)}}]$.
Let $[\bfF]$ be any symmetric second-order random matrix of size $(N\times N)$ with entries in $]\, 0 ,1]$ a.s.,
such that for each $(i,j)$, $F_{ij}\in L^2(\Omega,\sigma(\Xi_{ij}))$ where $\sigma(\Xi_{ij})$ is the $\sigma$-algebra generated by  $\Xi_{ij}$. Then, we have to demonstrate that the stochastic model defined by Eq.~\eqref{eq4.46} constitutes  an expressive stochastic  encoding of the probability distribution of $[\bfF]$.  In the proposition below, $\Delta S^{\bftheta^{\,(n)}}_{ij}$, defined in
Section~\ref{Section4.4}, is denoted simply by $Y_{ij}^{(n)}$.
%
\begin{proposition}[Expressive stochastic encoding of the probability distribution of the random weight matrix]
\label{Proposition4}
Let $Y_{ij}^{(n)} = U( \bfx^i_{\bftheta^{\,(n)}} ; \bftheta^{\,(n)} )$ $-  U( \bfx^j_{\bftheta^{\,(n)}} ; \bftheta^{\,(n)} )$ be the second-order, centered, Gaussian real-valued random variable (see Section~\ref{Section4.1}) and let $\sigma^{(n)}_{ij}$ be its standard deviation. Therefore, the real-valued random variable $\Xi_{ij} = Y_{ij}^{(n)} / \sigma^{(n)}_{ij}$ is second-order, centered, Gaussian with unit variance, $E\{\Xi_{ij}^2\}=1$,  and the collection $\{\Xi_{ij}\}_{i,j=1}^N$ is jointly dependent.
For each $(i,j)$, the normalized Hermite polynomials $\{\curH_\alpha(\Xi_{ij})\}_{\alpha\geq 0}$ form a complete orthonormal system of $L^2(\Omega,\sigma(\Xi_{ij}))$.
Let $b^{(n)}_{ij}$ and $a^{(n)}_{ij}$ be positive scalars such that
\begin{equation*}
b^{(n)}_{ij} = a^{(n)}_{ij} \, / \, (\zeta_s^{(n)} )^2   > 0 \quad , \quad
a^{(n)}_{ij} =  (\sigma^{(n)}_{ij})^2 \, / \, (2 \,\sigma^2_{\bfS_{\bftheta^{\,(n)}}} )  > 0 \, .
\end{equation*}
Let $\curH_\alpha(\xi)$ denote the normalized Hermite polynomials on $\RR$.
Then each random variable $F^{(n)}_{ij}$ defined by Eq.~\eqref{eq4.46}, is written as $F^{(n)}_{ij} = {\exp}(-b^{(n)}_{ij}\, \Xi_{ij}^2)$, and admits the polynomial chaos expansion,
\begin{equation}\label{eq4.47}
F^{\,(n)}_{ij} = \sum_{\alpha=0}^{\infty} f_{ij,\, \alpha}^{(n)} \,\,\curH_\alpha(\Xi_{ij}) \, ,
\end{equation}
where the odd coefficients vanish and the even coefficients are explicitly given by
\begin{equation}\label{eq4.48}
 f_{ij,\, 2\alpha}^{(n)} = \frac{ \sqrt{ (2\alpha)! } }{\alpha!} \, \frac{ (-b^{\,(n)}_{ij})^\alpha }
                                                         { ( 1+2 b^{\,(n)}_{ij} )^{\,\alpha +1/2} } \quad , \quad
 f_{ij,\, 2\alpha+1}^{(n)} = 0 \, .
\end{equation}
Let $\ell^2_{\rm even}(\NN)$ be the closed subspace of $\ell^2(\NN)$ defined by
\begin{equation}\label{eq4.49}
\ell^2_{\rm even}(\NN) = \{ \bfc = \{c_\alpha \}_{\alpha\in\NN } \in \ell^2(\NN)\,:\, c_{2\alpha+1}=0\, ,\, \forall \alpha\in\NN \} \, .
\end{equation}
If the sequence $b^{\,(n)}_{ij} > 0$ is such that the set of coefficient families
$\{\{f^{(n)}_{ij,\alpha}\}_{\alpha \in \NN}\}_{1\leq i,j\leq N}$ indexed by $n \in \NN$ is tail-dense in the finite product of $\ell^2_{\rm even}(\NN)$, i.e.,
\begin{equation}\label{eq4.50}
\forall \{\bfc_{ij}=\{c_{ij,\alpha}\}_{\alpha\in\NN}\}_{i,j=1}^N \subset \ell^2_{\rm even}(\NN)\, ,\, \bfc_{ij}=\bfc_{ji}\, ,\, \forall \varepsilon > 0\, ,\, \forall n_0\in\NN\, ,\, \exists \, n \geq n_0 \, : \sum_{i,j=1}^N\sum_{\alpha=0}^{\infty} ( f^{(n)}_{ij,\alpha} - c_{ij,\alpha} )^2 < \varepsilon^2 \, ,
\end{equation}
then the stochastic model defined by Eq.~\eqref{eq4.46}
constitutes an expressive stochastic encoding of the probability distribution within the entrywise even subclass $\{[F]:\,[F]=[F]^T\, ,\,F_{ij}\in L^2(\Omega,\sigma(\Xi_{ij})) \, , \, F_{ij}\in\,]0,1] \,\, a.s.\, , \, F_{ij}=g_{ij}(\Xi_{ij}^2) \,\, a.s. \hbox{ for some measurable function } g_{ij} \, , \, \forall (i,j)\}$.
In particular, the sequence  $\{[\bfF^{(n)}]\}_{n\geq 1}$, generated by the stochastic model, is such that
\begin{equation}\label{eq4.51}
\exists\,\{n_q\}_{q\geq 1}\ \text{strictly increasing such that}\ \
\lim_{q\to+\infty}E\{ \, \Vert \, [\bfF^{(n_q)}] - [\bfF] \,\Vert_F^2 \}= 0 \, .
\end{equation}
\end{proposition}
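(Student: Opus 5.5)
The plan has three parts: identify each $F^{(n)}_{ij}$ as an explicit even function of a standard Gaussian variable, compute its Hermite coefficients in closed form with the Hermite generating function, and deduce \eqref{eq4.51} from Parseval's identity together with the tail-density hypothesis \eqref{eq4.50}.

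\emph{Step 1 (reduction to a scalar Gaussian).} For $i\neq j$ the neuron locations are a.s. distinct. Under the nondegeneracy implicit in the statement we then have $\sigma^{(n)}_{ij}>0$, so $\Xi_{ij}=Y^{(n)}_{ij}/\sigma^{(n)}_{ij}$ is a centered Gaussian variable with unit variance. Substituting $\Delta S^{\bftheta^{(n)}}_{ij}=\sigma^{(n)}_{ij}\Xi_{ij}$ into \eqref{eq4.46} gives
\[
\frac{(\Delta S_{ij})^2}{2(\zeta_s^{(n)})^2\sigma^2_{\bfS}}=\frac{a^{(n)}_{ij}}{(\zeta_s^{(n)})^2}\,\Xi_{ij}^2=b^{(n)}_{ij}\,\Xi_{ij}^2 ,
\]
so that $F^{(n)}_{ij}=\exp(-b^{(n)}_{ij}\Xi_{ij}^2)$. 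This variable is bounded by $1$, hence lies in $L^2(\Omega,\sigma(\Xi_{ij}))$. Completeness of the normalized Hermite system $\curH_\alpha=He_\alpha/\sqrt{\alpha!}$ then yields the $L^2$-convergent expansion \eqref{eq4.47}, with coefficients $f_{ij,\alpha}=E\{F_{ij}\curH_\alpha(\Xi_{ij})\}$. Because $\xi\mapsto e^{-b\xi^2}$ is even and $He_{2\alpha+1}$ is odd, all odd coefficients vanish.

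\emph{Step 2 (explicit even coefficients).} I would use the generating function $e^{t\xi-t^2/2}=\sum_\alpha He_\alpha(\xi)t^\alpha/\alpha!$. Completing the square in the Gaussian integral gives
\[
E\{e^{-b\Xi^2+t\Xi-t^2/2}\}=(1+2b)^{-1/2}\exp\!\Big(\tfrac{t^2}{2(1+2b)}-\tfrac{t^2}{2}\Big)=(1+2b)^{-1/2}\exp\!\Big(-\tfrac{b\,t^2}{1+2b}\Big).
\]
Expanding the right-hand side as $\sum_k(-b/(1+2b))^k t^{2k}/k!$ and matching the coefficient of $t^{2\alpha}/(2\alpha)!$ gives
\[
E\{e^{-b\Xi^2}He_{2\alpha}(\Xi)\}=\frac{(2\alpha)!\,(-b)^\alpha}{\alpha!\,(1+2b)^{\alpha+1/2}} .
\]
Dividing by $\sqrt{(2\alpha)!}$ gives exactly \eqref{eq4.48}. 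The interchange of expectation and series is justified by dominated convergence, since $|e^{-b\xi^2+t\xi}|$ is integrable against the Gaussian density.

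\emph{Step 3 (approximation).} Let $[\bfF]$ belong to the even subclass. Then $F_{ij}=g_{ij}(\Xi_{ij}^2)$ is an even square-integrable function of $\Xi_{ij}$, so its Hermite coefficient family $\bfc_{ij}$ lies in $\ell^2_{\rm even}(\NN)$, and symmetry of $[\bfF]$ gives $\bfc_{ij}=\bfc_{ji}$. Since $F^{(n)}_{ij}$ and $F_{ij}$ are expanded in the same orthonormal system, Parseval's identity gives
\[
E\{\Vert[\bfF^{(n)}]-[\bfF]\Vert_F^2\}=\sum_{i,j}\sum_\alpha\big(f^{(n)}_{ij,\alpha}-c_{ij,\alpha}\big)^2 .
\]
I then build the subsequence inductively: apply \eqref{eq4.50} with $\varepsilon=1/q$ and $n_0=n_{q-1}+1$ to obtain $n_q>n_{q-1}$ with $E\{\Vert[\bfF^{(n_q)}]-[\bfF]\Vert_F^2\}<1/q^2$. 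This proves \eqref{eq4.51}.

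\emph{Main obstacle.} The delicate point is Step 3. There, $\Xi_{ij}$ actually depends on $n$ through $\bftheta^{(n)}$ and the sampled neuron locations, whereas the Parseval identity needs $F^{(n)}_{ij}$ and $F_{ij}$ to be functions of one common Gaussian variable. I would handle this by noting that all $\Xi^{(n)}_{ij}$ have the same law $\mathcal N(0,1)$. I would therefore identify them with a fixed reference variable $\Xi_{ij}$ for each entry, which is the reading implicit in the statement's use of $\sigma(\Xi_{ij})$. The convergence is then entrywise in the chaos coordinates, and any joint-dependence questions across different $(i,j)$ do not enter, because the Frobenius norm only involves marginal second moments. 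The remaining content sits entirely in the hypothesis \eqref{eq4.50}, which the proposition assumes rather than proves.
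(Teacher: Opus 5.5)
Your proof is correct and follows the paper's own three steps. You rewrite $F^{(n)}_{ij}=\exp(-b^{(n)}_{ij}\Xi_{ij}^2)$, expand in normalized Hermite polynomials with the odd coefficients removed by parity, and combine entrywise Parseval with a subsequence built recursively from \eqref{eq4.50} at $\varepsilon=1/q$. You go slightly further than the paper in two places: you derive \eqref{eq4.48} from the Hermite generating function, which the paper only asserts, and you state openly that the $n$-dependent $\Xi^{(n)}_{ij}$ are identified with one fixed variable, which the paper assumes tacitly.

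One caveat applies equally to the paper: $\sum_\alpha (f^{(n)}_{ij,\alpha})^2=E\{(F^{(n)}_{ij})^2\}\le 1$ for every $n$. Hence hypothesis \eqref{eq4.50} already fails at $\varepsilon=1$ for any target family with $\sum_\alpha c_{ij,\alpha}^2\ge 4$. What you (and the paper) prove is therefore an implication whose premise can never be met, which sharpens your closing remark that all the content sits in \eqref{eq4.50}.
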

%
\begin{proof}
The proof is developed in three steps.\\

\noindent\textit{Step 1}. Let $b^{\,(n)}_{ij}$ and $\Xi_{ij}$ be defined in Proposition~\ref{Proposition4}.
Then, Eq.~\eqref{eq4.46} can be rewritten as
\begin{equation}\label{eq4.52}
F^{\,(n)}_{ij} = \exp \left ( - b^{(n)}_{ij} \, \Xi_{ij}^2 \right )  \, .
\end{equation}

\noindent\textit{Step 2}. For $\alpha\in\NN$, let $\curH_\alpha(\xi)$ be the normalized Hermite polynomials on $\RR$, defined by the orthonormality relation,
\begin{equation*}
E\{\curH_\alpha (\Xi_{ij})\, \curH_\beta (\Xi_{ij})\} = \int_\RR \curH_\alpha (\xi)\, \curH_\beta (\xi)
 \, \frac{1}{\sqrt{2\pi}} \exp(-\xi^2/2) \, d\xi  = \delta_{\alpha\beta} \quad , \quad \curH_0(\xi) = 1\, .
\end{equation*}
Since $\Xi_{ij}$ is standard Gaussian, and $F^{(n)}_{ij} \in L^2(\Omega,\sigma(\Xi_{ij}))$, the Hermite chaos expansion of $F^{(n)}_{ij}$ is given (see for instance \cite{Cameron1947,Xiu2002,Soize2004a}), by Eq.~\eqref{eq4.47}
where the coefficients, calculated by $f_{ij,\, \alpha}^{(n)} = E\{ F^{\,(n)}_{ij}\,\curH_\alpha(\Xi_{ij})\}$, are given by
Eq.~\eqref{eq4.48}. The vanishing of the odd coefficients follows from the parity of $F^{(n)}_{ij}$ as a function of $\Xi_{ij}$.\\
Since $0< F^{\,(n)}_{ij}\le 1$ a.s. by Eqs.~\eqref{eq4.46} and \eqref{eq4.52}, we have
\begin{equation}\label{eq4.53}
E\{ \, \Vert \, [\bfF^{\,(n)}] \,  \Vert^2_F \, \} = \sum_{i,j=1}^N  E \{ ( F^{\,(n)}_{ij})^2\} \,  < +\infty \, ,
\end{equation}
which shows that $[\bfF^{\,(n)}]$ is a second-order random matrix.
The random matrix $[\bfF^{\,(n)}]$ admits the matrix-valued chaos expansion,
\begin{equation}\label{eq4.54}
[\bfF^{\,(n)}] = \sum_{\alpha=0}^{\infty} [f^{(n)}_\alpha] \circ [\bfcurH_\alpha] \, ,
\end{equation}
where $\circ$ denotes the Hadamard (entrywise) product,  $[f^{(n)}_\alpha]\in\MM_N$ has entries
$[f^{(n)}_\alpha]_{ij} =  f_{ij\, , \, \alpha}^{(n)}$, and $[\bfcurH_\alpha]$ is the random matrix with entries
$[\bfcurH_\alpha]_{ij} = \curH_\alpha(\Xi_{ij})$. Due to Eq.~\eqref{eq4.53}, the  series defined by Eq.~\eqref{eq4.54} converges in $L^2(\Omega, \MM_N)$.\\

\noindent\textit{Step 3}. Let $[\bfF]$ be any target random symmetric matrix in the entrywise even subclass introduced in Proposition~\ref{Proposition4}. For each $(i,j)$, the Hermite expansion of $F_{ij}$ is written as
\begin{equation*}
F_{ij} = \sum_{\alpha=0}^{\infty} c_{ij,\alpha}\,\curH_\alpha(\Xi_{ij}) \quad , \quad
c_{ij,\alpha} = E\{F_{ij}\,\curH_\alpha(\Xi_{ij})\} \, .
\end{equation*}
Since $F_{ij}=g_{ij}(\Xi_{ij}^2)$ a.s., the coefficients satisfy $c_{ij,2\alpha+1}=0$ for all $\alpha\in\NN$, and thus
$\bfc_{ij}=\{c_{ij,\alpha}\}_{\alpha\in\NN}$ belongs to $\ell^2_{\rm even}(\NN)$ defined by Eq.~\eqref{eq4.49}.
Moreover, the symmetry of $[\bfF]$ yields $\bfc_{ij}=\bfc_{ji}$. Applying the tail-density condition defined by Eq.~\eqref{eq4.50}, with $\varepsilon=1/q$ and with the lower-index bound chosen recursively, yields a strictly increasing sequence $\{n_q\}_{q\geq 1}$ such that
\begin{equation*}
\sum_{i,j=1}^N\sum_{\alpha=0}^{\infty} ( f^{(n_q)}_{ij,\alpha} - c_{ij,\alpha} )^2 < \frac{1}{q^2} \, .
\end{equation*}
By Parseval's identity applied entrywise and by summation over the finite set of pairs $(i,j)$, we obtain
\begin{equation*}
E\{ \, \Vert \, [\bfF^{(n_q)}] - [\bfF] \,\Vert_F^2 \}
=
\sum_{i,j=1}^N\sum_{\alpha=0}^{\infty} ( f^{(n_q)}_{ij,\alpha} - c_{ij,\alpha} )^2
< \frac{1}{q^2} \, .
\end{equation*}
Passing to the limit $q\to+\infty$ proves Eq.~\eqref{eq4.51}.
\end{proof}
\section{Illustration  of a physical and abstract setting}
\label{Section5}
In this section, we present the strong formulation of the SBVP in Cartesian coordinates, construct its weak formulation, and study
the existence and uniqueness of a second-order random solution.

\subsection{Geometry definition and Gaussian random field $U_c$ for the SBVP}
\label{Section5.1}
\paragraph{(i) Geometry definition}
We work in $\RR^3$ with a Cartesian coordinate system with origin $O$ and orthonormal basis $\{\bfe_1,\bfe_2,\bfe_3\}$.
An arbitrary point is denoted $\bfx = x_1\bfe_1 + x_2\bfe_2 + x_3\bfe_3$, with coordinates $(x_1,x_2,x_3)$.
We consider the SBVP posed on the surface of an infinitely long circular cylinder of radius $r$ in $\RR^3$, with spatial periodicity defined by a fundamental cell along the axial direction $\bfe_3$ of period $L= 2\pi R$ with $R$ finite.
Let $\mathscr{C}=\{\bfx\in\RR^3 \mid x_1^2 + x_2^2 = r^2\}$ denote the cylinder surface.
The origin $O$ of the Cartesian system is located on the cylinder axis at the left cross-section of the cell, which is the circle
$\Sigma_0 = \{\bfx\in\mathscr{C} \mid x_3 = 0\}$. The right cross-section of the cell is the circle $\Sigma_L = \{\bfx\in\mathscr{C} \mid x_3 = L\}$.
Let $\curS_c$ be the open surface strip (open relative to $\mathscr{C}$) defined by
$\curS_c = \{\bfx\in\mathscr{C} \mid 0 < x_3 < L\}$,
whose (relative) boundary is written as $\partial\curS_c = \Sigma_0\cup \Sigma_L$.
The domain of the cell of the periodic circular cylinder is then $\curS_{c0} = \curS_c\cup \Sigma_0$, whose (relative) boundary is $\Sigma_L$. The (relative) closure of $\curS_c$ is $\thinoverline{\curS}{\!_c} = \curS_c\cup \partial\curS_c$.
Let $\bfn$ be the unit normal to $\mathscr{C}$ (pointing radially outward).
We denote by $ds(\bfx)$ the intrinsic surface measure on $\mathscr{C}$ (and hence on $\curS_c$).
Throughout Section~\ref{Section5}, $\nabla_{\bfx}$ denotes the surface (tangential) gradient on $\mathscr{C}$.
Let $\bfnu$ be the conormal along $\Sigma_0\cup\Sigma_L$ (tangent to $\mathscr{C}$, normal to the boundary curves), with
$\bfnu = -\bfe_3$ on $\Sigma_0$ and $\bfnu = + \bfe_3$ on $\Sigma_L$.
We define the closed cell by
$\curS_{c0} := \curS_c \cup \Sigma_0 \cup \Sigma_L = \thinoverline{\curS}{\!_c}$.
Set $\curS = \curS_{c0}/\!\sim$ with $(x_1,x_2,0)\sim(x_1,x_2,L)$.
\paragraph{(ii) Definition of a Gaussian random field $U_c$}
We define a  Gaussian random field $U_c$ that is used to construct the random coefficient field of the random elliptic operator in the SBVP.
Let $\{ U_c(\bfx),\bfx\in \thinoverline{\curS}{\!_c}\}$ be the second-order, centered,  Gaussian real-valued random field, indexed by $\thinoverline{\curS}{\!_c}$, with almost surely continuous trajectories, normalized so that $E\{U_c(\bfx)^2\} = 1$. This field satisfies $U_c\vert_{\Sigma_L} = U_c\vert_{\Sigma_0}$ almost surely.
Define the cylindrical coordinates $\bfx = (x_1,x_2,x_3) = (r \cos\psi\, , r \sin\psi , x_3)$ with $x_3=R\phi$ to parameterize the cell $\curS_{c0}$ on the cylindrical surface, with $\phi \in [0, 2\pi[$ and $\psi$ in $[0, 2\pi[$ (note that for $\phi\in [0, 2\pi]$, one recovers $\thinoverline{\curS}{\!_c}$). The covariance function $C_{U_c}(\bfx,\bfx') = E\{U_c(\bfx)\,U_c(\bfx')\}$, parameterized in $(\phi,\psi)$, and ensuring the periodicity condition
$C_{U_c}\vert_{\Sigma_L} = C_{U_c}\vert_{\Sigma_0}$, is written as
\begin{equation} \label{eq5.1}
C_{U_c}\big((\phi,\psi),(\phi',\psi')\big) = \exp\bigg(-\frac{\alpha(\phi,\phi')}{\lambda_\phi}\bigg) \times \exp\bigg(-\frac{\beta(\psi,\psi')}{\lambda_\psi}\bigg)\, ,
\end{equation}
where $\lambda_\phi > 0$ and $\lambda_\psi > 0$ are the axial and circumferential correlation lengths, respectively. The functions $\alpha$ and $\beta$ are defined by
\begin{equation} \label{eq5.2}
\alpha(\phi,\phi') = R \times {\min}\big (\, \vert \phi - \phi' \vert \, , 2\pi - \vert \phi - \phi' \vert\,\big) \quad , \quad
\beta(\psi,\psi') = r \times {\min}\big ( \, \vert \psi - \psi' \vert \, , 2\pi - \vert \psi - \psi' \vert\,\big) \, .
\end{equation}
{\color{black}
The covariance function defined by Eqs.~\eqref{eq5.1}-\eqref{eq5.2} is adopted for the numerical illustration. It is part of the specification of the physical stochastic model and is not a hyperparameter of the SNN. More generally, any continuous positive-definite covariance function compatible with the periodic geometry could be used in the same construction. Changing its form or the correlation lengths $\lambda_\phi$ and $\lambda_\psi$ changes the spatial dependence of $U_c$, and therefore the probability laws of $K_c$, of the stochastic solution, and of the observable $\bfY(\xx)$ that SPINN is trained to represent. In an SBVP-consistent setting, such a change defines a different target problem and requires retraining, but does not by itself constitute model misspecification. If the covariance model used in the SBVP alignment term differs from the data-generating covariance model, it contributes to the SBVP--data mismatch and can affect predictive accuracy through the cross-entropy term.
}
\paragraph{(iii) Discrete covariance of $U_c$ on the periodic surface and nodal Gaussian random vector}
Consistently with Section~\ref{Section1}-(i), we use a finite-dimensional representation of the Gaussian field through the shared Gaussian germ, written as $\{U_c(\bfx) = [M_c(\bfx)]\, \bfH\, , \bfx\in \thinoverline{\curS}{\!_c}\}$.
Since the objective here is to construct the nodal random vector $\bfU_c$, we bypass an explicit continuous Karhunen-Lo\`eve construction and directly construct $\bfU_c$ via a principal component analysis of the discrete covariance matrix of $U_c$.
Let $\{\bfx_c^k= (x_{c,1}^k,x_{c,2}^k,x_{c,3}^k) \}_{k=1}^{n_{oc}}$ be the nodes of a structured finite-element mesh of $\thinoverline{\curS}{\!_c}$ (see Section~\ref{Section6.1}), with  periodic pairing of boundary nodes across $\Sigma_0$ and $\Sigma_L$.
It  is assumed that
(1) the number of nodes on $\Sigma_0$ is $n_{\Sigma_0}$, yielding $n_{oc} = n_o +n_{\Sigma_0}$;
(2) the nodes on $\Sigma_0$ are $\{\bfx_c^k\}_{k=1}^{n_{\Sigma_0}}$;
(3) the nodes on $\Sigma_L$ are $\{\bfx_c^k\}_{k=n_o +1}^{n_{oc}}$;
 and
(4) for each $\bfx_c^k\in\Sigma_0$ (with $k=1,\ldots,n_{\Sigma_0}$) there exists a unique $\bfx_c^{k'}\in\Sigma_L$ (with $k'= n_o +1,\ldots, n_o+n_{\Sigma_0}=n_{oc}$) such that $(x_{c,1}^k,x_{c,2}^k,L)$ corresponds to $(x_{c,1}^{k'}, x_{c,2}^{k'},0)$ under the periodic identification.
To guarantee exact periodicity under truncation of the principal component analysis, we first merge each such pair of boundary nodes and assemble the covariance on the quotient mesh, with $(x_1,x_2,0)\sim( x_1,x_2,L)$.
The total number of distinct nodes after merging is thus $n_o$.
Define the discrete covariance matrix
$[C_{U_c}]_{jk}  =  C_{U_c} (\bfx_c^j,\bfx_c^k)$ for $1 \leq j \, , \, k \leq n_o$,
where $C_{U_c}$ is given by Eqs.~\eqref{eq5.1}–\eqref{eq5.2}.
By construction, the matrix $[C_{U_c}]$ is symmetric positive-definite and its rows/columns automatically enforce periodicity.
We now solve the symmetric eigenvalue problem
$[C_{U_c}]\,\bfu^{\,j}  = \mu_j \, \bfu^{\,j}$ with $j=1,\ldots, n_o$,
with eigenvalues ordered $\mu_1\geq \mu_2\geq \ldots \geq \mu_{n_o} > 0$ and eigenvectors $\{\bfu^{\,j}\}$ orthonormal in $\RR^{n_o}$.
Fix a truncation index $m <  n_o$ and define the rectangular matrix
$[\curM_c]  =  [\, \mu_1^{1/2} \bfu^{\,1}\;\; \mu_2^{1/2}\bfu^{\,2}\;\; \ldots \;\; \mu_m^{1/2}\bfu^{\,m}\,\big]\;\in \MM_{n_o,m}$.
Let $\bfH=(H_1,\ldots,H_m)$ be a normalized Gaussian $\RR^m$-valued random variable.
We then define the nodal Gaussian random variable $\bfU_c$ with values in $\RR^{n_o}$ by
\begin{equation}\label{eq5.3}
\bfU_c  = \sum_{j=1}^m \mu_j^{1/2}\,\bfu^{\,j}\,H_j  =  [\curM_c]\,\bfH \quad , \quad E\{\bfH\} = \bfzero_m \quad, \quad E\{\bfH\otimes\bfH\} = [I_m]\, ,
\end{equation}
whose components $U_{c,1},\ldots U_{c,n_o}$ represent the discretized samples of the Gaussian field $\{U_c(\bfx),\bfx\in\curS_{c0}\}$ at the mesh nodes of the quotient mesh. The covariance of $\bfU_c$ is the optimal rank-$m$ approximation (in Frobenius norm) of the full discrete covariance,
$E\{\bfU_c\otimes\bfU_c\} = \sum_{j=1}^m \mu_j \,\bfu^{\,j} \otimes \bfu^{\,j}=: [C^{(m)}_{U_c}]$.
Because the periodic identification was imposed before the eigen-decomposition, all eigenvectors $\bfu^{\,j}$ live in the periodic subspace; hence, the truncated representation~\eqref{eq5.3} preserves exact periodicity of $\bfU_c$ at the boundary nodes for any $m$.
\subsection{Strong formulation of the SBVP in Cartesian coordinates}
\label{Section5.2}
We consider the following elliptic SBVP, expressed in Cartesian components of the surface gradient/divergence operator on $\curS_c$ (this formulation is chosen to prepare for discretization with the finite-element method). Since the operator has constants in its null space under periodic boundary conditions, the strong formulation is written for the mean-zero representative with the deterministic scalar
\[
\lambda_f = \frac{1}{|\curS_c|}\int_{\curS_c} f_{\mathrm{src}}(\bfy)\,ds(\bfy) .
\]
The problem is
\begin{align}
- & \nabla_\bfx\!\cdot\! \big( K_c(\bfx)\, \nabla_\bfx Z(\bfx) \big)  = f_{\mathrm{src}}(\bfx)-\lambda_f
                   \quad \text{in} \quad  \curS_c \, ,                                                    \label{eq5.4}\\
 & Z\vert_{\Sigma_L}  = Z\vert_{\Sigma_0}\, ,                                                              \label{eq5.5}\\
 &  \big (K_c\, \nabla_{\bfx} Z \big ) \! \cdot \bfnu \, \vert_{\Sigma_L} =
     -\,\big (K_c\, \nabla_{\bfx} Z\big)\! \cdot \bfnu \,\vert_{\Sigma_0}\, .                             \label{eq5.6}
\end{align}
Here $\nabla_\bfx$ and $(\nabla_\bfx \cdot)$ denote the surface gradient and divergence on $\curS_c$, respectively.
The positive-valued random field $\{ K_c(\bfx), \bfx\in\thinoverline{\curS}{\!_c}\}$ is scalar and satisfies the periodicity
constraint $K_c\vert_{\Sigma_L} = K_c\vert_{\Sigma_0}$ almost surely. The randomness enters through $K_c(\bfx)= \xi(U_c(\bfx))$.
The random field $\{Z(\bfx),\bfx\in\thinoverline{\curS}{\!_c}\}$ is the $\RR$-valued random state variable indexed by $\thinoverline{\curS}{\!_c}$ to be solved.
For each fixed $\bfx$ in $\thinoverline{\curS}{\!_c}$, the random coefficient is written as
$K_c(\bfx) = \xi(U_c(\bfx))$ where $\xi:\RR \to \RR^+$ is a deterministic mapping ensuring positivity and also that the random field $\{Z(\bfx), \bfx\in\thinoverline{\curS}{\!_c}\}$ is second order, which means that for all $\bfx\in\thinoverline{\curS}{\!_c}$, we have $E\{Z(\bfx)^2\} < +\infty$. Finally, $f_{\mathrm{src}}$ is a deterministic forcing term.
The randomness is entirely in $K_c$ via the random field $U_c$ defined in Section~\ref{Section5.1}.
\paragraph{(i) Model for the deterministic forcing function}
The model of the deterministic forcing function $f_{\mathrm{src}}$, defined on the surface $\curS_c$ of the cylindrical cell, is localized along the axial coordinate $x_3$, and written as,
\begin{equation} \label{eq5.7}
f_{\mathrm{src}}(\bfx) = \frac{a_f}{\sqrt{2\pi}\,\epsilon_f}
\exp\!\left( -\,\frac{(x_3 - L/2)^2}{2\,\epsilon_f^2} \right) \quad , \quad  \forall\, \bfx=(x_1,x_2,x_3)
                                \in\thinoverline{\curS}{\!_c} \, ,
\end{equation}
where $a_f > 0$ is the amplitude and $\epsilon_f > 0$ the axial localization width. This profile is constant along the circumferential direction and varies in $x_3$, reaching its maximum at $x_3 = L/2$ (the center of the cell). Note that, if $\epsilon_f \to 0^+$, then the forcing becomes increasingly localized around the circle of abscissa $x_3 = L/2$ on the cylindrical surface.
It can be seen that $f_{\mathrm{src}}$ is a continuous function on $\thinoverline{\curS}{\!_c}$, and we have
$f_{\mathrm{src}} \vert_{\Sigma_L} = f_{\mathrm{src}} \vert_{\Sigma_0}$.
\paragraph{(ii) Construction of the random field $K_c$}
For this construction, based on the maximum entropy principle under the constraints defined by the available information,
we follow the theoretical probabilistic construction presented in \cite{Soize2006,Soize2017b,Soize2017e,Soize2021d} to obtain a coercive and continuous stochastic elliptic operator.
Let $\VV$ be the Gamma positive-valued random variable whose PDF is written as
\begin{equation} \label{eq5.8}
p_{\VV}(\vv\, ; \delta^{-2} ) =  \ioneone_{\RR^+}(\vv)\,\frac{1}{\Gamma(\delta^{-2}) } \, \vv^{ \frac{1}{\delta^2} -1} \, \exp (-\vv) \, ,
\end{equation}
where $\Gamma(z) = \int_0^{+\infty} t^{\, z-1} \, e^{-t}\, dt$, $\Gamma(z+1) = z\,\Gamma(z)$, and
which is such that its mean is $E\{\VV\} = \delta^{-2}$ and its standard deviation is $\sigma_{\VV} = \delta^{-1}$.
It is assumed that $\delta > 0$.
Let $\vv\mapsto \qq = F_\VV(\vv\,;\delta^{-2})$ be the cumulative distribution function (CDF) of $\VV$, and let $\qq \mapsto \vv= F_\VV^{-1}(\qq\,;\delta^{-2})$ be the inverse function of $F_\VV$. Let $\UU$ be the normalized Gaussian real-valued random variable ($E\{\UU\}=0$ and $E\{\UU^2\}=1$ with PDF $p_\UU(\uu) = (2\pi)^{-1/2} \, \exp(-\uu^2/2)$, and let $\uu\mapsto F(\uu)$ be the CDF of $\UU$. Let $\uu\mapsto h(\uu\, ; \delta^{-2})$ be the function from $\RR$ into $\RR^+$ such that
\begin{equation} \label{eq5.9}
 h(\uu\, ; \delta^{-2}) = F_\VV^{-1}(F(\uu)\,;\delta^{-2}) \quad , \quad \forall\, \uu \in \RR \, .
\end{equation}
Therefore, the random variable $\VV$ can be written as $\VV =  h(\UU\, ; \delta^{-2})$. Using this equation, we construct the random field $\{V_c(\bfx),\bfx\in \thinoverline{\curS}{\!_c}\}$ such that, for each fixed $\bfx$ in $\thinoverline{\curS}{\!_c}$,
\begin{equation} \label{eq5.10}
V_c(\bfx) = h( U_c(\bfx)\, ; \delta^{-2} )\, ,
\end{equation}
where, for all $\bfx$ fixed in $\thinoverline{\curS}{\!_c}$,  we have $E\{U_c(\bfx)\}=0$, $E\{U_c(\bfx)^2\}=1$,
$E\{V_c(\bfx)\}=\delta^{-2}$ and the standard deviation of $V_c(\bfx)$ is $\sigma_{V_c}(\bfx) = \delta^{-1}$.
These quantities are independent of $\bfx$. The positive-valued random field $\{K_c (\bfx) ,\bfx\in\thinoverline{\curS}{\!_c}\}$ is then defined by
\begin{equation} \label{eq5.11}
K_c(\bfx) = (\delta^2\, V_c(\bfx) + \varepsilon) / (1+\varepsilon)\quad , \quad \varepsilon > 0 \quad , \quad \bfx\in\thinoverline{\curS}{\!_c} \, .
\end{equation}
{\color{black}
\paragraph{Correlation transfer through the nonlinear mapping}
For two points $\bfx,\bfx'\in\thinoverline{\curS}{\!_c}$, let $\rho = C_{U_c}(\bfx,\bfx')$, and let $(\UU_1,\UU_2)$ be a standard bivariate Gaussian random vector with correlation $\rho$. Since the transformation from $V_c$ to $K_c$ in Eq.~\eqref{eq5.11} is affine with a positive slope, $\operatorname{Corr}\left(K_c(\bfx),K_c(\bfx')\right) = \operatorname{Corr}\left(V_c(\bfx),V_c(\bfx')\right) =
T_\delta(\rho)$, where $T_\delta(\rho) = \delta^2 E\left\{ h(\UU_1\,;\delta^{-2})\,h(\UU_2\,;\delta^{-2}) \right\} -\delta^{-2}$.
The mapping $h(\cdot\,;\delta^{-2})$ defined by Eq.~\eqref{eq5.9} is strictly increasing. It therefore preserves the Gaussian copula, and hence rank-based dependence, but it does not exactly preserve the Pearson correlation. A converged Gauss--Hermite quadrature of the preceding expectation shows that
\begin{equation*}
\max_{0\, \leq\, \rho\, \leq \, 1}\vert \, T_\delta(\rho) - \rho \, \vert \simeq
8.0\times10^{-4} \, , \,\, 1.25\times10^{-3}\, , \,\,
1.79\times10^{-3}\, , \, \text{and} \, 2.43\times10^{-3}
\end{equation*}
for $\delta=0.12$, $0.15$, $0.18$, and $0.21$, respectively. Consequently, the correlation distortion induced by Eq.~\eqref{eq5.9} is negligible over the ranges $\delta\in[0.12,0.18]$ and $\delta\in[0.15,0.21]$ used in Cases~1 and 2.
}
\paragraph{Second-order moments}
For all $\bfx$ fixed in $\thinoverline{\curS}{\!_c}$, the second-order moments of the random variable $K_c(\bfx)$ are
\begin{equation} \label{eq5.12}
E\{K_c(\bfx)\} = 1 \quad , \quad E\{K_c(\bfx)^2\} = 1 + \delta^2/(1+\varepsilon)^2  < +\infty\quad , \quad
\bfx\in\thinoverline{\curS}{\!_c}\, ,
\end{equation}
which shows that $K_c$ is a second-order random field indexed by $\thinoverline{\curS}{\!_c}$.
The standard deviation of the random variable $K_c(\bfx)$, equal to its coefficient of variation, is
$\sigma_{K_c}(\bfx) = \delta/(1+\varepsilon)$. \\

\noindent\textit{Almost-sure periodicity, continuity, and inequality}. Using Eqs.~\eqref{eq5.9} to \eqref{eq5.11}, and the almost sure continuity of the random field $\{ U_c(\bfx),\bfx\in \thinoverline{\curS}{\!_c}\}$, the positive-valued random field $\{K_c(\bfx),\bfx\in\thinoverline{\curS}{\!_c}\}$:

\noindent (1) is such that $K_c \vert_{\Sigma_L} = K_c\vert_{\Sigma_0}$ almost surely,

\noindent (2) has almost surely continuous trajectories on $\thinoverline{\curS}{\!_c}$,

\noindent (3) satisfies almost surely the following inequality,
\begin{equation} \label{eq5.13}
0 < \, \varepsilon/(1+\varepsilon) \, \leq \, K_c(\bfx) \quad , \quad \forall\, \bfx\in \, \thinoverline{\curS}{\!_c}\, .
\end{equation}
In particular, we set $k_{c,{\min}} := \varepsilon/(1+\varepsilon)$, so that $K_c(\bfx) \ge k_{c,{\min}}$ for all $\bfx \in \thinoverline{\curS}{\!_c}$.
\paragraph{Generator of realizations of random field $K_c$}
Any realization $\{k_c^\ell(\bfx),\bfx\in\thinoverline{\curS}{\!_c}\}$ of the second-order positive-valued random field
$\{K_c(\bfx),\bfx\in\thinoverline{\curS}{\!_c}\}$ is such that, for all $\bfx \in\thinoverline{\curS}{\!_c}$,
$k_c^\ell(\bfx) = (\delta^2\, \vc_c^\ell(\bfx) + \varepsilon) / (1+\varepsilon)$
where $\{ \vc_c^\ell(\bfx),\bfx\in\thinoverline{\curS}{\!_c}\}$ is the realization of the random field
$\{V_c(\bfx),\bfx\in\thinoverline{\curS}{\!_c}\}$, generated from the
realization $\{u_c^\ell(\bfx)\,,\bfx\in\thinoverline{\curS}{\!_c}\}$ of the Gaussian random field
$\{U_c(\bfx),\bfx\in\thinoverline{\curS}{\!_c}\}$, such that, for all $\bfx\in\thinoverline{\curS}{\!_c}$,
$\vc_c^\ell(\bfx) = h( u_c^\ell(\bfx)\, ; \delta^{-2} )$.
\paragraph{(iii) Definition of the SBVP and SNN input vector $\xx$}
The input vector $\xx \in \RR^{n_\ppin}$ of the SBVP encodes deterministic parameters that control both the external forcing and the stochastic properties of the PDE coefficient. In the present model, we set $n_\pin = 2$ and $\xx = (\xx_1, \xx_2)$ with
$\xx_1 = a_f > 0$ is the amplitude of the deterministic forcing function $f_{\mathrm{src}}(\bfx)$ defined in Eq.~\eqref{eq5.7}.
Varying $\xx_1$ changes the magnitude of the external excitation applied to the SBVP.
The parameter $\xx_2 = \delta > 0$ is the constant coefficient of variation of the Gamma-distributed intermediate field $V_c$. The PDF of the random variable $V_c(\bfx)$, for $\bfx$ fixed in $\curS_c$, is defined by Eq.~\eqref{eq5.8} and is used to construct the stochastic coefficient $K_c(\bfx)$, defined by Eq.~\eqref{eq5.11}. This parameter controls the amplitude of the statistical fluctuations of $K_c(\bfx)$ around its unit mean.
The input vector $\xx$ is thus a compact descriptor of the excitation strength and of the stochastic contrast of the medium, both of which influence the SBVP output $\bfY(\xx)$ defined by Eqs.~\eqref{eq2.2}-\eqref{eq2.3} (see paragraph (iv) below) and  hence the SNN output $\YY(\xx\, ;\bftheta)$ defined by Eq.~\eqref{eq2.4}.
\paragraph{(iv) Definition of the observation operator $\curO$ for the SBVP}
From Eqs.~\eqref{eq2.2}-\eqref{eq2.3}, the observation operator $\curO$ is such that
$\bfY(\xx) = \curO(Z(\cdot\, ;\xx))$. This operator $\curO$  extracts task-specific information from the random field $\{Z(\bfx\, ;\xx),\bfx\in\thinoverline{\curS}{\!_c}\}$, such as pointwise values, averages, or spectral coefficients. For instance, for the point evaluation and for prescribed locations $\{\bfx^{(j)}_\obsp\}_{j=1}^{n_\ppout}$ on the domain $\curS_c$, the random observation vector
$\bfY(\xx)$ with values in $\RR^{n_\ppout}$, can be written as,
\begin{equation} \label{eq5.14}
\bfY(\xx) = ( Y_1(\xx), \ldots , Y_{n_\ppout}(\xx)) \quad , \quad Y_{\! j}(\xx) = Z\big(\bfx^{(j)}_\obsp\, ;\xx\big) \,\, , \,\, j=1,\dots,n_\pout \, .
\end{equation}
This corresponds, for instance, to direct measurements at fixed sensors.
\subsection{Weak formulation of the SBVP}
\label{Section5.3}
Let $T:\Sigma_0\to\Sigma_L$ be the periodic identification map defined by $T(x_1,x_2,0)=(x_1,x_2,L)$.
Let $\curV$ be the subspace of  the Sobolev space $\curH^1(\curS_c)$, defined by
\begin{equation} \label{eq5.15}
\curV = \{ \vc\in \curH^1(\curS_c) : \vc\vert_{\Sigma_L}\circ T = \vc\vert_{\Sigma_0} \text{ in } \curH^{1/2}(\Sigma_0) \}\, .
\end{equation}
The weak formulation of the SBVP, defined by Eqs.~\eqref{eq5.4}-\eqref{eq5.6}, is written as:  find the random field $Z$, defined on probability space $(\Omega,\curT,\curP)$, with values in $\curV$ such that, almost surely,
\begin{equation}\label{eq5.16}
B(Z,\vc) = f(\vc), \quad \forall \vc \in \curV,
\end{equation}
where for all $z$ and $\vc$ in $\curV$,
\begin{equation}\label{eq5.17}
B(z,\vc) = \int_{\curS_c} K_c(\bfx)\, \nabla_\bfx z(\bfx)
                                       \cdot \nabla_\bfx \vc(\bfx) \, ds(\bfx) \quad , \quad
                                      f(\vc) = \int_{\curS_c} f_{\mathrm{src}}(\bfx) \, \vc(\bfx) \, \, ds(\bfx) \, .
\end{equation}
For deterministic problems, the variational formulation of the periodic elliptic problem and the corresponding existence and uniqueness result are classical. The functional analytic framework relies on Sobolev spaces and the Lax-Milgram theorem; see \cite{Lions2012,Brezis2011}. The periodic setting and the associated cell problems are standard in the theory of periodic structures and homogenization; see \cite{Bensoussan1978}. Further background on elliptic operators and variational methods can be found in \cite{Dautray2013}. The stochastic elliptic case is addressed, for instance, in \cite{Babuska2005,Frauenfelder2005,Wan2009a,Nouy2014,Soize2021d}.
\subsection{Existence and uniqueness of a second-order random solution}
\label{Section5.4}
Because  constants belong to the null space of the divergence-form operator in \eqref{eq5.4}, we classically introduce the subspace
 $\curV_0$ of $\curV$, defined by
\begin{equation}\label{eq5.18}
\curV_0 =  \{ \vc\in \curV \quad \text{such that} \quad  \int_{\curS_c} \vc(\bfx)\,ds(\bfx) = 0 \}\, .
\end{equation}
On $\curV_0$, we use the seminorm $\Vert \vc\Vert _{1,\curS_c} = \Vert \nabla_\bfx \vc \Vert_{L^2(\curS_c)}$, which is a norm equivalent to $\Vert \vc \Vert_{\curH^1(\curS_c)}$ by the Poincar\'e inequality,
\begin{equation}\label{eq5.19}
\Vert\vc\Vert_{L^2(\curS_c)} \, \leq \,  c_P\, \Vert \nabla_\bfx \vc\Vert_{L^2(\curS_c)} \quad , \quad \forall\, \vc\in \curV_0 \, ,
\end{equation}
for some deterministic constant $c_P > 0$ depending only on $(r,L)$. Consequently, for all $\vc\in\curV_0$,
\begin{equation}\label{eq5.20}
\Vert \nabla_\bfx \vc \Vert_{L^2(\curS_c)} \, \leq \,  \Vert \vc \Vert_{\curH^1(\curS_c)}
\, \leq \, (1+c_P^2)^{1/2} \, \Vert \nabla_\bfx \vc \Vert_{L^2(\curS_c)} \, \leq \, (1+c_P) \, \Vert \nabla_\bfx \vc \Vert_{L^2(\curS_c)}\, ,
\end{equation}
so $\Vert \nabla_\bfx \cdot \Vert_{L^2(\curS_c)}$ defines a norm on $\curV_0$ equivalent to $\Vert \cdot \Vert_{\curH^1(\curS_c)}$.\\

\begin{proposition}[Existence and uniqueness of a second-order random solution] \label{Proposition5}
There exists a unique random field $Z$ with values in $\curV_0$ such that, almost surely,
\begin{equation}\label{eq5.21}
B\big(Z,\vc\big) \, = \, f(\vc) \quad , \quad \forall\, \vc\in\curV_0 \, .
\end{equation}
Moreover, we have almost surely,
\begin{equation}\label{eq5.22}
\Vert \nabla_\bfx Z\Vert_{L^2(\curS_c)}\, \leq \,
                  \frac{c_P}{k_{c,{\min}}}\, \Vert f_{\mathrm{src}}\Vert_{L^2(\curS_c)} \quad , \quad
                  \Vert Z\Vert_{\curH^1(\curS_c)} \, \leq \,
                   \frac{(1+c_P)c_P}{k_{c,{\min}}}\, \Vert f_{\mathrm{src}}\Vert_{L^2(\curS_c)} \, ,
\end{equation}
where $k_{c,{\min}} = \varepsilon / (1+\varepsilon) \, > \, 0$. Consequently,
\begin{equation}\label{eq5.23}
E \left \{\, \Vert Z\Vert_{\curH^1(\curS_c)}^2 \,\right\} \, \leq \,
 \frac{(1+c_P)^2 c_P^2}{k_{c,{\min}}^2}\, \Vert f_{\mathrm{src}}\Vert_{L^2(\curS_c)}^2 \, < \, +\infty \, .
\end{equation}
Since $\curV_0$ is a separable Hilbert space and the random bilinear form $B(\omega\, ;\cdot,\cdot)$ is jointly measurable,
the solution map $\omega\mapsto Z(\cdot\, ;\omega)\in \curV_0$ is $\curT$-measurable. Together with Eq.~\eqref{eq5.23},
this implies that $Z\in L^2(\Omega\, ;\curV_0)$, making it a second-order solution.
\end{proposition}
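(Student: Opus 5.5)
The plan is a pathwise (fixed-$\omega$) application of the Lax--Milgram theorem on $\curV_0$, followed by an energy estimate, a measurability argument, and integration in $\omega$.

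\textbf{Pathwise Lax--Milgram.} Fix $\omega$ outside the null set on which $K_c$ fails to be continuous or to satisfy Eq.~\eqref{eq5.13}. Then $K_c(\cdot\,;\omega)$ is continuous on the compact set $\thinoverline{\curS}{\!_c}$, so $k_{c,\max}(\omega)=\max_{\thinoverline{\curS}{\!_c}}K_c<+\infty$, and $K_c\ge k_{c,\min}$. Continuity of $B(\omega;\cdot,\cdot)$ follows from Cauchy--Schwarz:
\[
|B(z,\vc)|\le k_{c,\max}(\omega)\,\Vert\nabla_\bfx z\Vert_{L^2}\,\Vert\nabla_\bfx \vc\Vert_{L^2}.
\]
Coercivity on $\curV_0$ with the norm $\Vert\nabla_\bfx\cdot\Vert_{L^2}$ is immediate:
\[
B(\vc,\vc)\ge k_{c,\min}\Vert\nabla_\bfx \vc\Vert_{L^2}^2 .
\]
This norm is equivalent to the $\curH^1$ norm by Eq.~\eqref{eq5.20}. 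The form $f$ is continuous on $\curV_0$, since
\[
|f(\vc)|\le\Vert f_{\mathrm{src}}\Vert_{L^2}\Vert \vc\Vert_{L^2}\le c_P\Vert f_{\mathrm{src}}\Vert_{L^2}\Vert\nabla_\bfx \vc\Vert_{L^2}
\]
by Eq.~\eqref{eq5.19}. Lax--Milgram then gives a unique $Z(\omega)\in\curV_0$ satisfying Eq.~\eqref{eq5.21}.

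Two remarks should accompany this step. Only $f$ restricted to $\curV_0$ matters, and subtracting $\lambda_f$ is consistent, because $f(\vc)$ and $\int(f_{\mathrm{src}}-\lambda_f)\vc$ coincide on $\curV_0$. Also, $\curV_0$ is a closed subspace of $\curH^1$: the trace is continuous and the mean is a continuous functional.

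\textbf{Energy bounds.} Test Eq.~\eqref{eq5.21} with $\vc=Z$ to get
\[
k_{c,\min}\Vert\nabla_\bfx Z\Vert^2\le c_P\Vert f_{\mathrm{src}}\Vert\,\Vert\nabla_\bfx Z\Vert .
\]
This is the first bound in Eq.~\eqref{eq5.22}. The second bound follows from Eq.~\eqref{eq5.20}. Both bounds are deterministic, so squaring and taking the expectation gives Eq.~\eqref{eq5.23} trivially.

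\textbf{Measurability (main obstacle).} This is the only nontrivial step. I would treat it by Galerkin approximation. Take a countable dense family in the separable space $\curV_0$ and its finite-dimensional spans $\curV_n$. The Galerkin solution $Z_n(\omega)$ solves a linear system whose matrix entries $\int K_c\,\nabla_\bfx\vc_i\cdot\nabla_\bfx\vc_j$ are measurable in $\omega$; this holds because $K_c$ is jointly measurable, being a continuous function of the a.s.\ continuous field $U_c$, which can be taken jointly measurable, so Fubini applies. The matrix is invertible by coercivity, so $Z_n$ is measurable, since inversion is continuous on invertible matrices. C\'ea's lemma gives $Z_n(\omega)\to Z(\omega)$ in $\curV_0$ for each $\omega$. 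A pointwise limit of measurable maps into a separable metric space is measurable.

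An alternative is to observe that $\omega\mapsto B(\omega)$ is a measurable map into the space of bounded operators on $\curV_0$, and that inversion is continuous on the open set of invertible operators. Combined with the uniform bound, either route yields $Z\in L^2(\Omega;\curV_0)$, which completes the proof.
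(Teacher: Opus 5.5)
Your proposal is correct and follows the same route as the paper: a pathwise Lax--Milgram argument on $(\curV_0,\Vert\nabla_\bfx\cdot\Vert_{L^2(\curS_c)})$, using $k_{c,\min}\le K_c\le K_{c,\max}(\omega)<+\infty$ almost surely and the Poincar\'e bound on $f$, followed by the deterministic energy bound that gives the second-moment estimate. Where you go beyond the paper's proof is useful: that proof never actually derives Eq.~\eqref{eq5.22}, which your test with $\vc=Z$ supplies, and it only asserts the measurability of $\omega\mapsto Z(\cdot\,;\omega)$, which your Galerkin/C\'ea argument (or the operator-inversion alternative routed through the separable space $C(\thinoverline{\curS}{\!_c})$) establishes.
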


\begin{proof}
Since $\{K_c(\bfx),\bfx\in\thinoverline{\curS}{\!_c}\}$ has almost surely continuous trajectories on the compact set $\thinoverline{\curS}{\!_c}$, we have
$K_{c,{\max}} = \sup_{\bfx \,\, \in \,\,\thinoverline{\curS}{\!_c}} K_c(\bfx) \, < \, +\infty \,\, \text{a.s.}$,
and taking into account Eq.~\eqref{eq5.13}, we have
$0 \, < \,k_{c,{\min}} \, \leq \, \inf_{\bfx\,\,\in \,\,\thinoverline{\curS}{\!_c}} K_c(\bfx) \,\, \text{a.s.}$
Hence, for all $z$ and $\vc$ in $\curV_0$, we have almost surely,
\begin{align*}
B(\vc ,\vc) & = \int_{\curS_c} K_c(\bfx)\, \Vert\nabla_\bfx \vc\Vert^2 \, ds(\bfx) \, \geq \,
                             k_{c,{\min}}\, \Vert\nabla_\bfx \vc \Vert_{L^2(\curS_c)}^2  \, ,\\
\vert B(z,\vc)\vert &\le \, K_{c,{\max}}\, \Vert \nabla_\bfx z\Vert_{L^2(\curS_c)}\,
                                 \Vert\nabla_\bfx \vc \Vert_{L^2(\curS_c)}\,  .
\end{align*}
Moreover, for $f$ defined in \eqref{eq5.17} and any $\vc\in\curV_0$,
\begin{equation*}
\vert f(\vc)\vert \, \leq \, \Vert f_{\mathrm{src}} \Vert_{L^2(\curS_c)}\,\Vert \vc \Vert_{L^2(\curS_c)}
\, \leq\, c_P\, \Vert f_{\mathrm{src}} \Vert_{L^2(\curS_c)}\, \Vert \nabla_\bfx \vc \Vert_{L^2(\curS_c)} \,.
\end{equation*}
Therefore $f\in \curV_0'$ and $B$ is continuous and coercive almost surely on $(\curV_0,\Vert\cdot\Vert_{1,\curS_c})$.
The Lax-Milgram lemma on $\curV_0$ shows that there exists, almost surely,  a unique random solution in $\curV_0$. The second equation in Eq.~\eqref{eq5.22} yields Eq.~\eqref{eq5.23}, which proves that the unique random solution is second order.\\
\end{proof}
\paragraph{Remarks}
\begin{itemize}
\item[(a)] If Eq.~\eqref{eq5.4} were written with the unprojected right-hand side $f_{\mathrm{src}}$ and tested on the full periodic space $\curV$, a necessary and sufficient compatibility condition would be $\int_{\curS_c} f_{\mathrm{src}}\, ds=0$; in that case solutions would be defined only up to an additive constant. In the present model defined in Eq.~\eqref{eq5.7}, this compatibility condition is not satisfied. The formulation in Eq.~\eqref{eq5.21} on $\curV_0$ is therefore the weak form of the projected problem in Eq.~\eqref{eq5.4}, with the additive constant fixed by the zero-mean constraint.
\item[(b)] Regarding Eq.~\eqref{eq5.14}, point evaluation is not continuous in $\curH^1(\curS_c)$ on a two-dimensional manifold. Without assumptions that ensure a continuous version of $Z$, observations can be modeled as localized averages, which are continuous linear functionals on $\curH^1(\curS_c)$ and remain second-order by Eq.~\eqref{eq5.23}.
\end{itemize}
\subsection{Finite element discretization and stochastic solver}
\label{Section5.5}
The finite-element discretization of the weak formulation stated in Proposition~\ref{Proposition5}, together with the Monte Carlo solution procedure, is detailed in Section~\ref{Section6}. We briefly clarify here how the random vector $\bfU_c$, introduced in Section~\ref{Section5.1}-(iii), enters the discretization.
The surface $\thinoverline{\curS}{\!_c}$ is discretized using three-node flat triangular elements with linear ($P_1$) interpolation functions in an isoparametric finite-element framework. For each element, the contribution to the bilinear form $B(z,\vc)$ (see Eq.~\eqref{eq5.17}) is evaluated by standard $P_1$ interpolation, using the nodal values of the random coefficient field $K_c(\bfx)$ at the three vertices of the element.
Since $K_c(\bfx)$ is obtained from $U_c(\bfx)$ through Eqs.~\eqref{eq5.10}–\eqref{eq5.11}, the element matrices depend only on the values of $U_c$ at these nodes. At the global level, these nodal values constitute the components of the random vector $\bfU_c$ defined in Section~\ref{Section5.1}-(iii).
\section{Numerical application and implementation details for the SNN approach}
\label{Section6}
This section describes in detail the complete setup of the numerical experiment, including the geometry, the finite-element discretizations of the SBVP and SNN domains, the specification of input and output random variables, the construction of training and test datasets under intentional misspecification, and the parameterization of the supervised learning procedure. The mathematical model of the SBVP is  defined in Section~\ref{Section5}, and that of the SNN in Section~\ref{Section4}.

In order to  analyze the role played by the alignment term with respect to the data-fidelity term in Eq.~\eqref{eq2.12}, we define two numerical experiments, denoted \textbf{Case 1} and \textbf{Case 2}. The first one addresses the situation in  which the training and test datasets almost surely satisfy the SBVP, and, therefore, is equivalent to the optimization problem defined by Eq.~\eqref{eq2.7}. For the second one, the training and test datasets do not almost surely satisfy the SBVP. Consequently, the alignment term is taken into account,
and the corresponding optimization problem is defined by Eq.~\eqref{eq2.11}.
\subsection{Parameterization of the SBVP, finite-element mesh of the SBVP domain $\protect\thinoverline{\curS}{\!_c}$}
\label{Section6.1}
\paragraph{(i) Geometry and finite-element mesh}
Using the notation introduced in Section~\ref{Section5.1}, the domain $\thinoverline{\curS}{\!_c} = \curS_c\cup\Sigma_0\cup\Sigma_L$ is defined by
\begin{equation}\label{eq6.1}
\thinoverline{\curS}{\!_c} = \{\bfx\in\RR^3 \, \vert \, x_1^2 + x_2^2 = r^2 \,\, , \,\, 0\leq x_3 \leq L\} \, ,
\end{equation}
with $L= 2\pi R$, $R=2$, and $r=0.7$.
The surface $\thinoverline{\curS}{\!_c}$ is discretized by flat 3-node triangular finite elements with linear shape functions in the isoparametric formulation \cite{Zienkiewicz2005}, yielding the geometrical approximation $\thinoverline{\curS}{\!_{c}}^h$. The nodes $\{\bfx_c^k\}_{k=1}^{n_{oc}}$ are ordered according to the convention of  Section~\ref{Section5.1}-(iii).
The generatrix $0\leq x_3\leq L$ is divided into $n_g= 80$ subdivisions and the circular section of radius $r$ into $n_c= 24$ subdivisions, producing a structured mesh with $n_{oc} = (80+1)\times 24 = 1944$ nodes and $n_\pelem = 3840$ triangular elements (Fig.~\ref{figure1a}).
Let $\{ \varphi^c_k, k=1,\ldots , n_{oc}\}$ denote the associated finite-element basis functions. The finite-element approximation of the Gaussian random field $\{U_c(\bfx),\bfx\in\thinoverline{\curS}{\!_{c}}\}$ is then
\begin{equation}\label{eq6.2}
U_c^h(\bfx) = \sum_{k=1}^{n_{oc}} U_{c,k}\, \varphi^c_k(\bfx) \quad , \quad \forall\, \bfx\in\thinoverline{\curS}{\!_{c}}^h \, ,
\end{equation}
where $U_{c,k} = U_c(\bfx_c^k)$.
Due to periodic identification across $\Sigma_0$ and $\Sigma_L$, paired boundary nodes share the same value almost surely. Accordingly, the reduced nodal vector $\bfU_c\in\RR^{n_o}$ on the quotient mesh (obtained by merging the paired nodes) is given by $\bfU_c=[\curM_c]\bfH$, as defined in Section~\ref{Section5.1}-(iii), where $\bfH$ is of dimension $m=90$. The values at the duplicated end rings in $\thinoverline{\curS}{\!_{c}}^h$ are assigned consistently from the corresponding entries of this reduced nodal vector $\bfU_c = (U_{c,1},\ldots , U_{c,n_o})$.
\begin{figure}[H]
    \centering
    \begin{subfigure}[b]{0.30\textwidth}
        \centering
        \includegraphics[width=\textwidth]{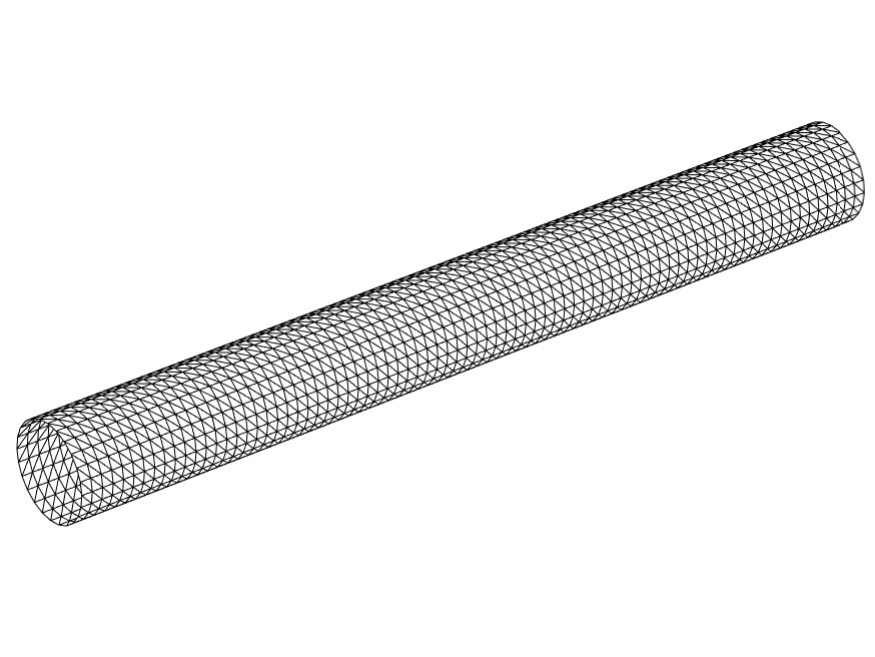}
        \caption{Finite element mesh of $\protect\thinoverline{\curS}{\!_c}$.}
        \label{figure1a}
    \end{subfigure}
    \begin{subfigure}[b]{0.30\textwidth}
        \centering
        \includegraphics[width=\textwidth]{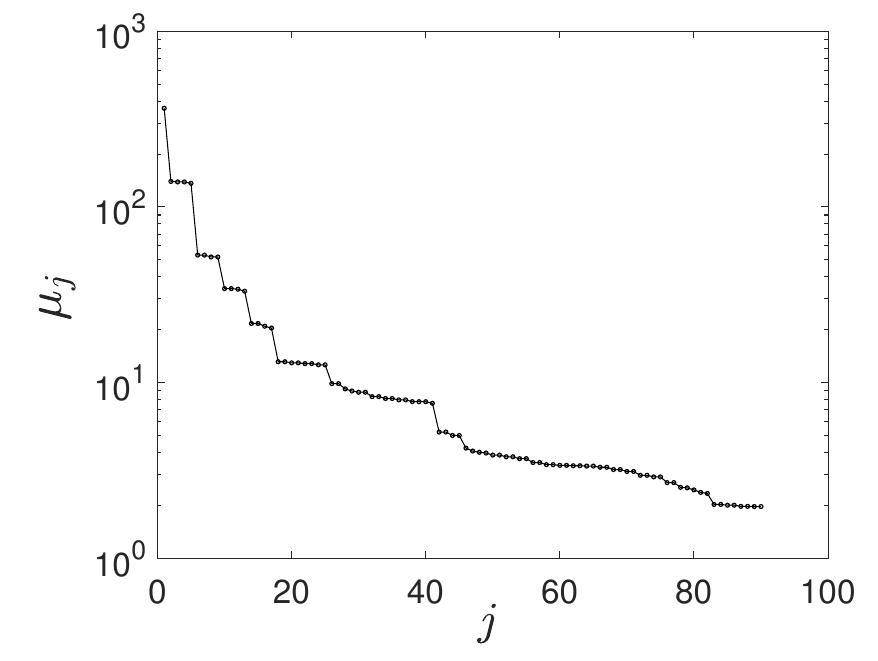}
        \caption{Spectrum of the covariance matrix $[C_{U_c}]$.}
        \label{figure1b}
    \end{subfigure}
    \begin{subfigure}[b]{0.30\textwidth}
        \centering
        \includegraphics[width=\textwidth]{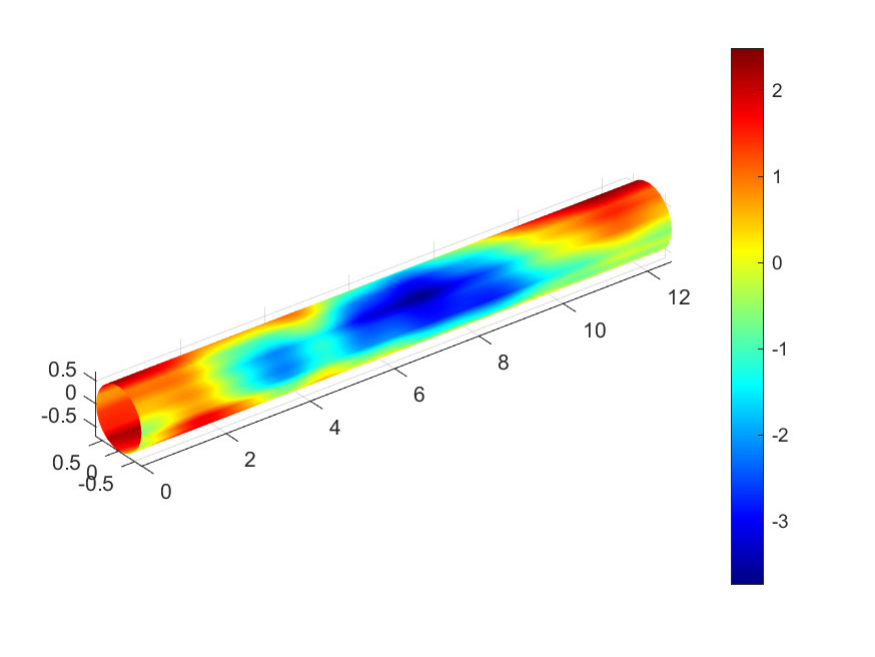}
        \caption{Realization of random field $U_c^h$.}
        \label{figure1c}
    \end{subfigure}
\caption{SBVP: (a) Structured mesh $\protect\thinoverline{\curS}^{\, h}_{c}$ approximating $\protect\thinoverline{\curS}{\!_c}$.
(b) Graph of the eigenvalues $\mu_1,\ldots,\mu_m$  of the covariance matrix $[C_{U_c}]$, plotted on a $\log_{10}$ scale. (c) A realization of the random field $\{U_c^h(\bfx),\bfx\in\protect\thinoverline{\curS}^{\, h}_{c}\} $.}
    \label{figure1}
\end{figure}

\paragraph{(ii) Definition of the covariance structure of $U_c$}
The covariance function $C_{U_c}$ of the random field $U_c$ is defined by Eqs.~\eqref{eq5.1}-\eqref{eq5.2}, with correlation lengths
$\lambda_\phi =\pi R/2 = 3.1416$ and $\lambda_\psi = \pi r/2 = 1.0996$, to be compared to the axial period  $L=12.57$ and the circumferential perimeter $2\pi r = 4.40$ of the cell. The parameter $\varepsilon$ controlling the lower bound of $K_c$ (see Eq.~\eqref{eq5.13}) is fixed to $10^{-6}$. Fig.~\ref{figure1b} displays  the eigenvalues $\mu_1,\ldots,\mu_m$ of the covariance matrix $[C_{U_c}]$ (see Section~\ref{Section5.1}-(iii)). Fig.~\ref{figure1c} shows one realization of the random field $\{U_c^h(\bfx),\bfx\in\protect\thinoverline{\curS}^h_c\}$ defined by Eq.~\eqref{eq6.2}.
\paragraph{(iii) Definition of the input}
The input vector $\xx \in \RR^{n_\ppin}$, with $n_\pin=2$, is defined as follows:
$\xx_1 = a_f$ is the amplitude of the deterministic forcing function $f_{\mathrm{src}}$ given in Eq.~\eqref{eq5.7}, with
$\epsilon_f = \pi R/8 = 0.7854$, and $\xx_2=\delta$, introduced in Section~\ref{Section5.2}(ii)--(iii), is the dimensionless parameter that controls the amplitude of the statistical fluctuations of $K_c(\bfx)$ around its unit mean.
The probability  distribution $P_\XX(d\xx)$ of the $\RR^{n_\ppin}$-valued random variable $\XX = (\XX_1,\XX_2)$
assumes independent components: $\XX_1$ is uniformly distributed on $\epsilon_f \sqrt{2\pi}\times  [0.94\, , 1.092]$, that is on
$[1.85 , 2.15]$, and $\XX_2$ is uniformly distributed on $[0.12\, , 0.18]$.
\paragraph{(iv) Definition of the output}
The output is  the observation vector defined by Eq.~\eqref{eq5.14}, with $n_\ppout = 80$ and nodes $\{\bfx^{(j)}_\obsp\}_{j=1}^{n_\ppout}$ of coordinates $\bfx^{(j)}_\obsp = (r, 0, (j-1) L/80)$, corresponding to equally spaced nodes along the generatrix line $\psi=0$ of the cylinder (note that $x^{(n_\ppout)}_{3,\obsp} < L$).
\paragraph{(v) Discrete enforcement of periodicity and the mean-zero constraint}
Consider the triangular mesh of $\thinoverline{\curS}{\!_c}$ introduced in paragraph (i), with $n_{oc}=(n_g+1)\, n_c$ nodes (including both end rings $\Sigma_0$ and $\Sigma_L$) and $n_\pelem=2\,n_g\,n_c$ elements. Let $n_o=n_g\,n_c$ be the number of reduced nodes after periodic identification.
Let $[\SSeclair]\in\MM_{n_{oc},n_o}$ be the sparse binary identification matrix that maps the reduced vectors to the full vectors by duplicating the $\Sigma_0$ trace on $\Sigma_L$. For any reduced vectors $\bfz_h$ and $\bfv_h$ in $\RR^{n_o}$, set
$\bfz_h^\pfull = [\SSeclair] \, \bfz_h$ and $\bfv_h^\pfull = [\SSeclair]\, \bfv_h$.
Let $[B_h]\in\MM_{n_{oc}}$ be the matrix obtained by assembling the finite-element discretization of the bilinear form in Eq.~\eqref{eq5.17} on the full mesh for a fixed realization of $K_c$, and let $\bff_h\in\RR^{n_{oc}}$ be the corresponding load vector. The reduced (periodic) matrix and vector are
\begin{equation}\label{eq6.3}
[ B_h^\pred ] = [\SSeclair]^T \, [B_h]\, [\SSeclair] \in\MM_{n_o}\quad, \quad \bff_h^{\,\pred} = [\SSeclair]^T \, \bff_h \in\RR^{n_o}\, .
\end{equation}
\paragraph{(vi) Consistency with the weak form and periodic boundary conditions}
By construction, $\bfz_h^\pfull\vert_{\Sigma_L} = \bfz_h^\pfull\vert_{\Sigma_0}$ and likewise for $\bfv_h^\pfull$, which is the nodal counterpart of the trace condition~\eqref{eq5.5}. Moreover $K_c\vert_{\Sigma_L} = K_c\vert_{\Sigma_0}$ (Section~\ref{Section5.2}-(ii)) and the conormals satisfy $\bfnu\vert_{\Sigma_L}=-\bfnu\vert_{\Sigma_0}$ (Section~\ref{Section5.1}-(i)).
\paragraph{(vii)  Element metrics, orientation invariance, and positivity}
For each triangle $\TT = \{ \bfx^1 , \bfx^2 , \bfx^3 \}\subset\thinoverline{\curS}{\!_c}^h$, the P1 shape-function gradients are computed tangentially as
\begin{equation*}
\nabla_{\bfx}{\bfvarphi^c} = [F]\,([F]^T\, [F])^{-1}\,\nabla_{\hat\bfx}{\bfvarphi^c}
\quad , \quad
[F] = [\,\bfx^2-\bfx^1 \, , \,  \bfx^3-\bfx^1\,]\in \MM_{3,2}
\quad , \quad
|\TT|=\tfrac12 \Vert (\bfx^2-\bfx^1)\times(\bfx^3-\bfx^1)\Vert \, .
\end{equation*}
Hence the local contributions are $K_c\vert_{\TT}\, \langle \nabla_\bfx\varphi_i^c \, ,  \nabla_\bfx\varphi_j^c\rangle \,\vert \TT \vert$ (with, e.g., $K_c\vert_\TT$ evaluated at the barycenter), which depend only on the induced metric $[F]^T \, [F]$ and the area $\vert \TT \vert$ and are therefore orientation-invariant. With $K_c(\bfx)\ge k_{c,\min}=\varepsilon/(1+\varepsilon)>0$ a.s. (Eq.~\eqref{eq5.13}), $[B_h^\pred]$ is symmetric positive semidefinite with kernel $\operatorname{span}\{\bfun\}$ on the periodic surface.
\paragraph{(viii)  Mean-zero constraint and uniqueness in $\curV_0$}
To approximate  the continuous space $\curV_0$ defined by Eq.~\eqref{eq5.18},
let $\bfm\in\RR^{n_{oc}}$ be the vector of nodal area weights on the full  cylinder mesh: each $m_i$ equals the sum, over triangles  incident to node $i$, of one third of each triangle’s area. Its  reduced version is
$\bfm^\pred = [\SSeclair]^T\, \bfm\in\RR^{n_o}$. The discrete constraint
$(\bfm^\pred)^T \, \bfz_h = 0$ represents $\int_{\curS_c} z(\bfx) \, ds(\bfx) = 0$ and removes the constant null function. Using a single Lagrange multiplier $\lambda\in\RR$, which discretely approximates the constant $\lambda_f$ introduced in Eq.~\eqref{eq5.4}, the linear system solved at each realization reads
\begin{equation}\label{eq6.4}
\begin{bmatrix}
[B_h^{\,\pred}] & \bfm^\pred\\[1pt]
(\bfm^\pred)^T & 0
\end{bmatrix}
\begin{bmatrix}
\bfz_h\\ \lambda
\end{bmatrix}
=
\begin{bmatrix}
\bff_h^{\,\pred}\\ 0
\end{bmatrix}\, .
\end{equation}
Because $[B_h^\pred]$ is positive definite on the subspace $\{\bfz_h\in\RR^{n_o} : (\bfm^\pred)^T \bfz_h = 0\}$ (see Eqs.~\eqref{eq5.19}, \eqref{eq5.20}), then Eq.~\eqref{eq6.4} has a unique solution $(\bfz_h,\lambda)$.
\paragraph{(ix)  Right-hand side and quadrature}
Either a consistent forcing vector $\bff_h$ assembled by element quadrature for the second term of~\eqref{eq5.17}, or a lumped surface quadrature $\bff_h = [\mathrm{diag}(\bfm)]\, \bff_{\mathrm{src}}^{\,h}$ (with $\bff_{\mathrm{src}}^{\,h}$ the nodal evaluation of $f_{\mathrm{src}}$), is admissible. Both choices preserve periodicity under~\eqref{eq6.4}; the lumped version is first-order accurate and numerically robust for representing Eq.~\eqref{eq5.7}.
\subsection{Definition of the training and test datasets for \textbf{Case 1} and \textbf{Case 2}}
\label{Section6.2}
Two cases are defined to assess SPINN.\\

\noindent\textbf{Case 1}. In this case, the training and test datasets consist of points that almost surely satisfy the SBVP. Consequently, the optimization problem defined by Eq.~\eqref{eq2.11} is equivalent to the one defined by Eq.~\eqref{eq2.7}, implying that $\alpha_{\mathrm{top}} \simeq 1$.\\

\noindent\textbf{Case 2}. This case corresponds to the second regime introduced in Section~\ref{Section1}-(vi), in which the training and test datasets consist of points that do not almost surely satisfy the SBVP. We then construct a statistical surrogate that achieves a compromise between data fidelity and alignment with the SBVP. Since experimental values are not available,  the training and test datasets are generated as realizations of a perturbation of the SBVP, denoted as $\SBVP^\MSP$.\\

\noindent{\textit{SBVP} for generating the physics dataset}. As described in Section~\ref{Section5}, the SBVP is fully specified by the geometric parameters $r$ and $L$; by the probability distribution $P_\XX(d\xx)$ of the input random variables $\XX_1 = a_f$ (controlling the magnitude of the forcing function) and $\XX_2 =\delta$ (controlling the fluctuation level of $K_c$); by the $\RR^{n_\ppout}$-valued output; by the parameter $\epsilon_f$ (controlling the localization of the forcing function); by the parameter $\varepsilon$ (controlling the lower bound of $K_c$); and by the spatial correlation lengths $\lambda_\phi$ and $\lambda_\psi$. All numerical values of these parameters are defined in Section~\ref{Section6.1}-(i)--(iv).\\

\noindent{\textit{SBVP}$\,^\MSP$ for generating the training and test datasets}. These datasets are generated with an intentional misspecification (MSP) of certain SBVP parameters; the resulting stochastic model is  denoted by $SBVP\,^\MSP$. The modified parameters are as follows:

\noindent (a) The support of $P_{\XX_1}$ is shifted from $[1.85 , 2.15]$ to
$[1.50 , 1.80]$, yielding the uniform distribution $P^\MSP_{\XX_1}$ of $a_f^\MSP$.

\noindent (b) The support of $P_{\XX_2}$ is shifted from
$[0.12 , 0.18]$ to $[0.15 , 0.21]$, yielding the uniform distribution $P^\MSP_{\XX_2}$ of $\delta^\MSP$.

\noindent (c) The correlation lengths are perturbed as  $\lambda_\phi^\MSP = 3.5$ and $\lambda_\psi^\MSP = 1.2$.\\

\noindent\textit{Training and test datasets}.
For both cases, the training and test datasets are drawn from the same reference dataset $\curD_D = \{ ( \xx^i , \yy^i) , i=1,\ldots,n_D\}$, consisting of $n_D = n_d +n_\testp$ samples, with $n_d=200$ and $n_\testp=40$. For each input $\xx^i$, a single realization $\yy^i$ of the observation vector is computed.
\subsection{Parameterization of the SNN and finite-element mesh of the SNN domain $\protect{\curS}$}
\label{Section6.3}
\paragraph{(i) Geometry and finite-element mesh of $\curS$}
The SNN manifold $\curS$ is a two-dimensional torus in $\RR^3$ parameterized by $\bfx(\uc,\vc) = (x_1(\uc,\vc),x_2(\uc,\vc),x_3(\uc,\vc))$, with
\begin{align*}
x_1(\uc,\vc) &= (R + r \cos \vc)\cos \uc \, ,\\
x_2(\uc,\vc) &= (R + r \cos \vc )\sin \uc \, ,\\
x_3(\uc,\vc) &= r \sin \vc  \, ,
\end{align*}
for $\uc$ and $\vc$ in $[0, 2\pi]$, where the major radius is $R = 2$ and the minor radius is $r = 0.7$. The surface $\curS$ is meshed using flat 3-node triangles and linear shape functions. There are $80$ subdivisions along the major circle and $24$ subdivisions along the minor circle. This results in a regular mesh with $n_o= 1920$ nodes and $n_\pelem = 3840$three-node triangular elements, forming a structured mesh $\curS_h$ that approximates the compact, boundaryless, multiply connected manifold $\curS$, as shown in Fig.~\ref{figure2a}.
\begin{figure}[H]
    \centering
    \begin{subfigure}[b]{0.30\textwidth}
        \centering
        \includegraphics[width=\textwidth]{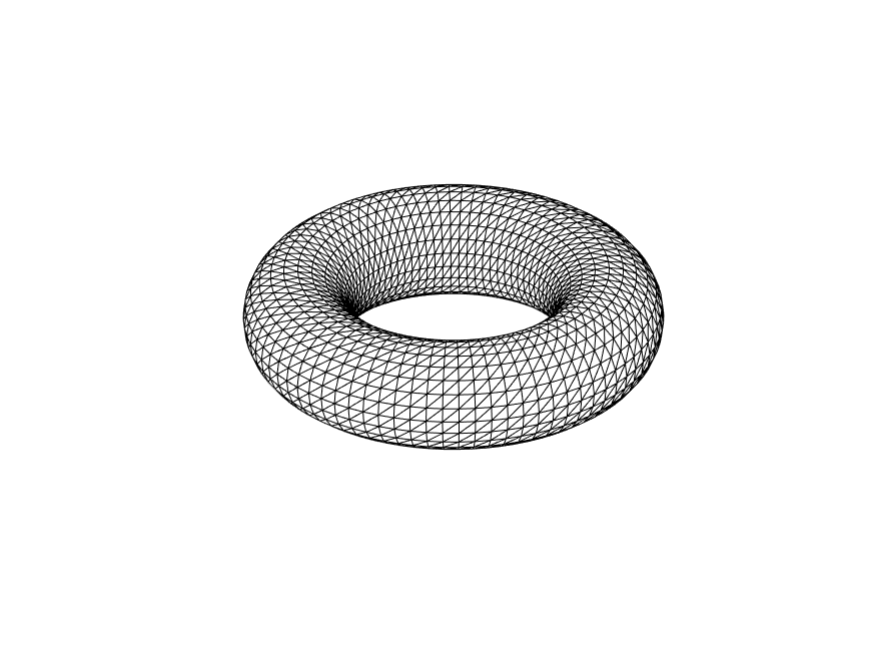}
        \caption{Finite element mesh of $\curS$.}
        \label{figure2a}
    \end{subfigure}
    \begin{subfigure}[b]{0.30\textwidth}
        \centering
        \includegraphics[width=\textwidth]{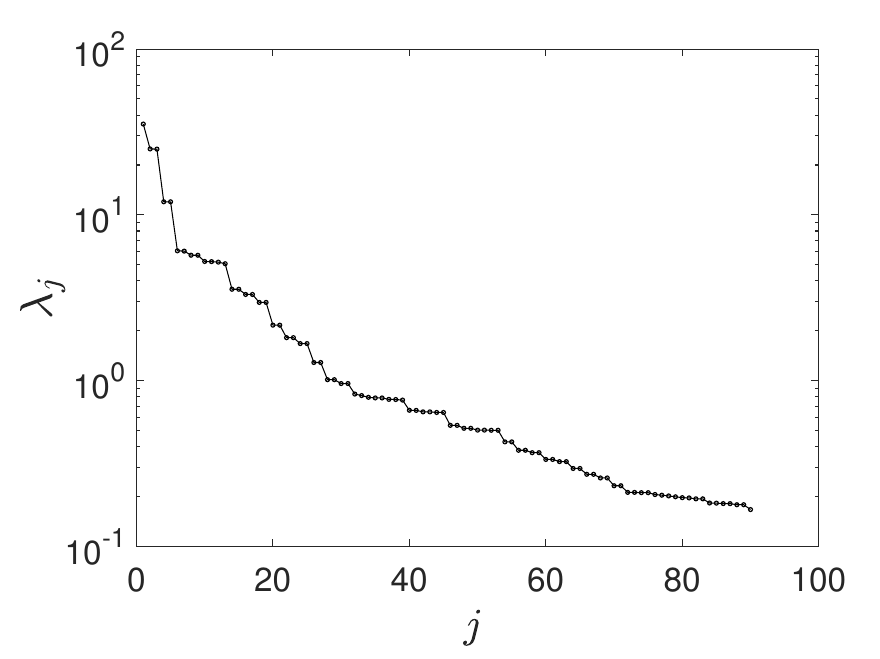}
        \caption{Spectrum of the covariance matrix $[C_U]$.}
        \label{figure2b}
    \end{subfigure}
    \begin{subfigure}[b]{0.30\textwidth}
        \centering
        \includegraphics[width=\textwidth]{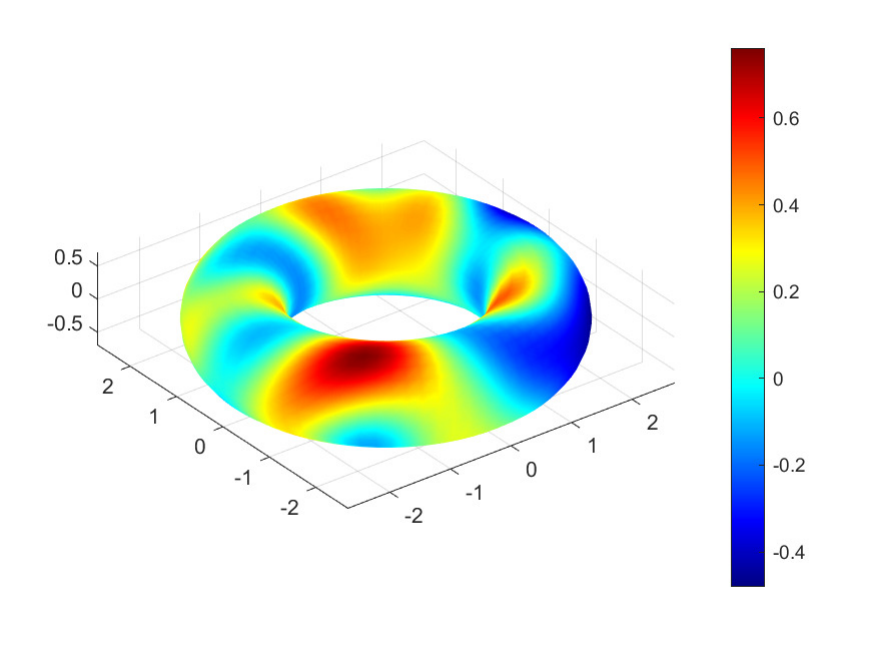}
        \caption{Realization of random field $U^h$.}
        \label{figure2c}
    \end{subfigure}
    \caption{SNN. (a) Structured mesh $\curS_h$ approximating $\curS$.
     (b) $\log_{10}$-scale plot of the dominant eigenvalues $\lambda_1,\ldots,\lambda_m$ of $[C_{\bfU}]$ for prescribed $(\tau_0,h_1,h_2,n_h)$ and one realization of $(\bfbeta^{(1)},\bfbeta^{(2)})$.
    (c) Corresponding realization of $\{U^h_\bftheta(\bfx),\,\bfx\in\curS_h\}$.}
    \label{figure2}
\end{figure}
\noindent\textit{Remark}. For the present numerical example, we used the same geometric parameters and the same number of subdivisions to generate the finite-element meshes of the cylinder and the torus; this was purely for convenience and is not required. In general, the proposed method introduces a manifold $\curS_c$ for the SBVP and a manifold $\curS$ for the SNN, with no \textit{a priori} correspondence
between them.
\paragraph{(ii) Latent Gaussian random field $U(\cdot\,;\bftheta)$ on $\curS$}
Let $\{ \varphi_k, k=1,\ldots , n_{o}\}$ denote the finite-element basis functions. The finite element approximation of the latent Gaussian random field $\{ U(\bfx\,;\bftheta),\bfx\in\curS\}$ is given by
\begin{equation}\label{eq6.5}
U^h(\bfx\,;\bftheta) = \sum_{k=1}^{n_{o}} U_k(\bftheta)\, \varphi_k(\bfx) \quad , \quad \forall\, \bfx\in \curS_h\, ,
\end{equation}
where $U_k(\bftheta) = U(\bfx^k\, ;\bftheta)$, $\{\bfx^k\}_{k=1}^{n_{o}}$ are the mesh nodes, and the nodal vector $\bfU(\bftheta)= (U_1(\bftheta),\ldots ,U_{n_{o}}(\bftheta))= [\curM(\bftheta)]\, \bfH$, where $\bfH$ is of dimension $m=90$.
As an illustration, for $\tau_0=1$, $h_1=0.66$, $h_2= 0.72$, and $n_h=32$, and for a given realization of $\bfbeta^{\,(1)}$ and $\bfbeta^{\,(2)}$ satisfying Eq.~\eqref{eq4.4} (see Section~\ref{Section4.1}),
Fig.~\ref{figure2b} displays  the eigenvalues $\lambda_1,\ldots,\lambda_m$ of the covariance matrix $[C_\bfU]$
(see Eq.~\eqref{eq4.9}), while Fig.~\ref{figure2c} shows a corresponding  realization of the latent random field
$\{ U^h(\bfx\, ;\bftheta), \bfx\in \curS_h \}$ (see Eq.~\eqref{eq6.5}).
\subsection{Random realizations of the SBVP and the SNN}
\label{Section6.4}
As required by Eq.~\eqref{eq2.15}, we also require realizations of the SBVP.
We generate $n_\psim$ realizations  $\{\bfeta^\ell, \ell=1,\ldots , n_\psim\}$ of the $\RR^m$-valued
random variable $\bfH$. Using these realizations, for each $\xx^i$ in the dataset $\curD_D$ we compute $n_\psim$ realizations $\bfy^\ell(\xx^i)\in\RR^{n_\ppout}$ of the SBVP random output $\bfY(\xx^i)$. During the trial step, following Algorithm~\ref{algorithm1}, for each hyperparameter value $\bftheta$ in the grid, we construct one realization of the SNN random architecture associated with a realization of $\bfH$. For this architecture, we then construct $n_\psim$ realizations of the sparse random weight matrix using the samples $\{\bfeta^\ell, \ell=1,\ldots , n_\psim\}$ of $\bfH$. Consequently, we obtain $n_\psim$ realizations $\yy^\ell(\xx^i\,;\bftheta)\in\RR^{n_\ppout}$ of the random output $\YY(\xx^i\,;\bftheta)$.
\subsection{Training algorithm for a fixed trade-off parameter}
\label{Section6.5}

We adapt the two-step training procedure of \cite{Soize2026} to the SPINN objective $\curJ^\SPINNpp(\bftheta\,; \alpha_{\rm top})$
in Eqs.~\eqref{eq2.12}-\eqref{eq2.13}, for a fixed value of the trade-off parameter $\alpha_{\rm top}$. As explained in
Section~\ref{Section2.2}-(ii) and in the remarks of Section~\ref{Section2.2}-(iii), we set
\begin{equation} \label{eq6.6}
\alpha_{\rm top} = \alpha_0 \, ,
\end{equation}
where $\alpha_0$ is the scale-matching value defined in Eq.~\eqref{eq2.19}, and we keep this value fixed throughout the
training and testing phases.
Before the main SPINN training run, we determine $\alpha_0$ as follows. We construct a reference hyperparameter
pair $\bftheta_\pref$ by a short trial-based optimization with $\alpha_{\rm top} = 0$
(data-only training corresponding to the optimization problem defined by Eqs.~\eqref{eq2.7}-\eqref{eq2.8}). Using the same
Monte Carlo samples and KDE estimates as in Eqs.~\eqref{eq2.12}-\eqref{eq2.15}, we compute $\overline L_{\rm data}(\bftheta_\pref)$
and $\overline L_{\rm align}(\bftheta_\pref)$ such that
\begin{equation*}
\overline L_{\rm data}(\bftheta_\pref) = \frac{1}{n_d} \sum_{i=1}^{n_d} -\log p^\SNNpp_{\YY\vert\XX}(\yy^i \,\vert \,\xx^i\, ; \bftheta_{\pref})
\quad , \quad
\overline L_{\rm align}(\bftheta_\pref) = \frac{1}{n_d} \sum_{i=1}^{n_d} \rm{CE}_i(\bftheta_\pref) \, ,
\end{equation*}
and define $\alpha_0$ (see Eq.~\eqref{eq2.19}) by
\begin{equation}\label{eq6.7}
\alpha_0 = \frac{\overline L_{\rm data}(\bftheta_\pref)}{\overline L_{\rm align}(\bftheta_\pref)} \, ,
\end{equation}
This construction requires no additional SBVP solves beyond those already needed to evaluate the SPINN loss.
\paragraph{Step 1: Trial-based model selection at fixed $\alpha_{\rm top} = \alpha_0$}
The trial-based model selection is performed solely on the training objective
$\curJ^\SPINNpp(\bftheta\,; \alpha_0)$ (see Eq.~\eqref{eq2.12}), with no hold-out model-selection split; generalization is
assessed on the test dataset using the diagnostics of Section~\ref{Section3}.
For the fixed value $\alpha_{\rm top} = \alpha_0$:
\begin{enumerate}
\item We construct a grid with $n_\pgrid$ points over
      $(h_1,h_2,\zeta_s,c_\targetp)\in [h_{1,\min},h_{1,\max}] \times [h_{2,\min},h_{2,\max}] \times [\zeta_{s,\min},\zeta_{s,\max}]
      \times [c_{\targetp,\min},c_{\targetp,\max}]$.
      For each grid point $(h_1^j,h_2^j,\zeta_s^j,c_\targetp^j)$, the associated vectors
      $\bfbeta^{\,(1),j}_{n_h}$ and $\bfbeta^{\,(2),j}_{n_h}$ are obtained by projecting
      (using the method presented in Section~\ref{Section4.5}) two independent realizations of a normalized Gaussian
      $\RR^{n_h}$-valued random variable, so that the constraints in Eq.~\eqref{eq4.4} are satisfied.
      This yields the grid-associated points
      $\{\bftheta_\ptrial^j  , j=1,\ldots , n_\pgrid\}$, where
      $\bftheta^j_\ptrial = ( h_1^j,h_2^j, \zeta_s^j, \bfbeta^{\,(1),j}_{n_h}, \bfbeta^{\,(2),j}_{n_h} )$.
       It should be noted that the SNN is not strongly sensitive to the values of $h_1$ and $h_2$ themselves; rather,
      the sensitivity is driven by $h^{(1)}_{n_h}(\bfx)$ and $h^{(2)}_{n_h}(\bfx)$ and is therefore governed by
      $\bfbeta_{n_h}^{\,(1)}$ and $\bfbeta_{n_h}^{\,(2)}$ (see Eq.~\eqref{eq4.3}). By contrast, the SNN exhibits a strong
      sensitivity to $\zeta_s$ and $c_\targetp$. Accordingly, we can identify values $h_{1,\ptrial}^\optp$ and $h_{2,\ptrial}^\optp$ from a few preliminary pretrial calculations and then keep $h_1$ and $h_2$ fixed at these values. The trial step is then
      performed over $(\zeta_s,c_\targetp)\in[c_3,d_3]\times[c_4,d_4]$, which reduces the number of grid points. Under this
      strategy, $\bftheta_\ptrial^j=(h_{1,\ptrial}^\optp,h_{2,\ptrial}^\optp,\zeta_s^j,\bfbeta^{\,(1),j}_{n_h},\bfbeta^{\,(2),j}_{n_h})$.
\item For each $\bftheta_\ptrial^j$, evaluate the SPINN objective
      ${\curJ}^\SPINNpp(\bftheta_\ptrial^{\,j}\, ;\alpha_0)$ on the training dataset, using the Monte Carlo
      approximation in Eqs.~\eqref{eq2.12}-\eqref{eq2.15}.
\item Retain the trial minimizer
\begin{equation}\label{eq6.8}
\bftheta_\ptrial^\optp    = \arg\min_{j\, \in \, \{1,\ldots,n_\ppgrid\}}\, {\curJ}^{\SPINNpp}(\bftheta_\ptrial^{\,j}\, ;\alpha_0)
\end{equation}
and define
\begin{equation}\label{eq6.9}
{\color{black}
J_{\min,\ptrial} = {\curJ}^{\SPINNpp}(\bftheta_\ptrial^\optp\,;\alpha_0) \, .
}
\end{equation}
\end{enumerate}
\paragraph{Step 2: Local refinement for the fixed trade-off parameter $\alpha_0$}
With $\alpha_{\rm{top}}$ fixed to $\alpha_0$:
\begin{enumerate}
\item Initialize $\bftheta_\pinit = \bftheta_\ptrial^\optp$.
\item Freeze the input–output neuron selections to those at the trial optimum (to ensure differentiability),
      and freeze the bias coefficients $\bfbeta$ at $\bfbeta_{\ptrial}^\optp$.
\item Consider the optimization in $\bfbeta^{\,(1)}_{n_h}$ and $\bfbeta^{\,(2)}_{n_h}$, while keeping the other parameters
      $h_1$, $h_2$, $\zeta_s$, and $c_\targetp$ at their trial-optimal values
      $h_{1,\ptrial}^\optp$, $h_{2,\ptrial}^\optp$, $\zeta_{s,\ptrial}^\optp$, and $c_\targetp^\optp$.
\item Minimize the cost function
      $(\bfbeta^{\,(1)}_{n_h},\bfbeta^{\,(2)}_{n_h})\mapsto \curJ^\SPINNpp(\bfbeta^{\,(1)}_{n_h},\bfbeta^{\,(2)}_{n_h}\, ;\alpha_0)$,
            by projected gradient descent with Adam updates, under the admissibility constraints defined in Eq.~\eqref{eq4.4} following the method presented in Section~\ref{Section4.5}. This yields the refined optimum
      $\bftheta^\optp = ( h_{1,\ptrial}^\optp, h_{2,\ptrial}^\optp, \zeta_{s,\ptrial}^\optp ,\bfbeta^{\,(1), \optp}_{n_h}, \bfbeta^{\,(2),\optp}_{n_h})$.
\end{enumerate}

\noindent{\textit{Remarks on the PGD-Adam solver used in Step 2}}.
In Step~2, the projected gradient descent with Adam updates is stabilized by a short, nearly monotone backtracking line search. After each Adam update proposes an increment, the resulting candidate is first projected onto the admissible box constraints for the two PGD coefficient blocks, and the training objective is evaluated at this projected iterate. If the objective does not increase, the projected full step is accepted.
When the projected full step increases the objective, the algorithm keeps the same Adam search direction and reduces the step length geometrically (up to a small, fixed number of trials, using repeated halving), while reapplying the box projection at every trial and reevaluating the objective. The first trial that restores a strict decrease is accepted, and the best candidate encountered during the backtracking loop is tracked throughout. If no non-increasing candidate is found, an additional safeguard prevents stagnation: the best candidate is accepted only if its objective increase remains below a very small relative tolerance. Otherwise, the Adam update is rejected and the iterate is left unchanged.
This procedure leaves the Adam moment recursion unchanged, while providing a lightweight globalization mechanism that enforces feasibility by projection and promotes a largely monotone decrease of the Step~2 training objective.
{\color{black}
\subsection{Status of the parameters and numerical controls}
\label{Section6.parameterclassification}
The quantities entering the SPINN implementation have distinct roles and should not all be interpreted as learned hyperparameters.\\

\noindent\textit{Quantities identified or selected from the training objective.}
The generative SNN hyperparameter is $\bftheta = (h_1, h_2, \zeta_s,$ $\bfbeta^{\,(1)}_{n_h}, \bfbeta^{\,(2)}_{n_h})$.
The values $h_1$ and $h_2$ are identified in the pretrial analysis, $\zeta_s$ and $c_{\targetp}$ are selected in the trial step, and the two coefficient vectors are subsequently refined by PGD--Adam. The bias coefficients $\bfbeta$ are obtained from the least-squares construction, whereas $\alpha_0$ is computed from the scale-matching relation in Eq.~\eqref{eq6.7}; neither is manually tuned on the test dataset.\\

\noindent\textit{Prescribed model-order and resolution parameters.}
The quantities $N$, $n_h$, $m$, $n_b$, $n_\psim$, the finite-element meshes, the number of Poisson candidate points, and the sparsification settings define the stochastic hypothesis class, its numerical resolution, and its computational budget.
They are not continuous trainable hyperparameters. The principal resolution quantities $n_h$, $n_\psim$, and $N$ are examined in the convergence study, while $m$ is selected from spectral-energy and Poisson-density stabilization.\\

\noindent\textit{Rule-based density-estimation quantities.}
The KDE bandwidth factor is prescribed by the Silverman rule in Eq.~\eqref{eq4.45}, and the componentwise scales are empirical standard deviations computed from the generated ensemble. No density-estimation parameter is selected using test performance.\\

\noindent\textit{Numerical safeguards.}
The input and bias rescalings, the minimum input-connectivity condition, the contraction control, the fixed-point relaxation and tolerance, the admissible-set projection, and the Adam backtracking procedure are introduced to ensure nondegeneracy, feasibility, and numerical convergence. They are not trainable architectural parameters.\\

\noindent The Gaussian and Poisson germs are Monte Carlo random variables used to generate architecture and weight realizations. They are not manually selected parameters. The values adopted for the reference configuration are specified in the following subsection.
}
\subsection{Definition and  detailed analysis of a reference configuration based on \textbf{Case 1}}
\label{Section6.6}
\paragraph{(i) Definition of  a reference configuration and SNN parameters}.
We consider the Case~1 configuration, for which $n_h=32$, $N=200$, and $n_\psim = 100$ are fixed.
We then specify the values of all remaining parameters for this reference configuration, for which a complete analysis is presented. In addition, this reference configuration is used  to carry out  a convergence analysis with respect to  $n_h$, $n_\psim$, and $N$.
Since $n_\pin$ and $n_\pout$ are fixed by the physical problem under consideration, varying the total number of neurons $N$ is equivalent to varying the number of internal neurons $n_\pint = N - n_\pin - n_\pout$.
These parameters are defined as follows:
\begin{enumerate}\itemsep0.2em
\item[(1)] The parameter $\tau_0$ in Eq.~\eqref{eq4.1} is set to $1$, and the parameters $h_1$, $h_2$, $\bfbeta^{\,(1)}_{n_h}$, and $\bfbeta^{\,(2)}_{n_h}$ (components of $\bftheta$), controlling the latent Gaussian random field $U_\bftheta$, satisfy the constraints defined in  Eq.~\eqref{eq4.29} with $n_h =32$.
\item[(2)] As noted above, the dimension $m$ of the normalized Gaussian random vector $\bfH$ is set to  $90$. For the optimal value of the hyperparameter $\bftheta_{\ptrial}^\optp$ obtained in the trial step, this choice
    corresponds to a relative spectral energy exceeding $91\%$ and yields stabilization of the Poisson density for $m \geq 90$.
\item[(3)] The total number of neurons, $N=200$, is decomposed into $n_\pin = 2$ input neurons and $n_\pout = 80$ output neurons; therefore, $n_\pint =  N - n_\pin - n_\pout = 118$ hidden neurons. To compute realizations of the random neuron locations,
    $2n_\pelem = 7680$ candidate points are first generated on the surface mesh $\curS_h$. These points form the support on which we construct the Poisson point process with intensity density $\Lambda_\bftheta(\bfx)$, as defined in Section~\ref{Section4.2}.
\item[(4)] The graph sparsification is not specified heuristically. It results from percentile-based diffusion masking, modulated by geodesic distances and field-affinity scores, as presented in Section~\ref{Section4.3}. The binary sparsity mask for the random neural graph topology is constructed using a global percentile threshold $\tau_\pprc = 60\%$ (see Eq.~\eqref{eq4.15}). With $r_\psp=1.8$, the softer input-aware percentile is set to $\tau_{\pprc,in} = 28\%$ (see Eq.~\eqref{eq4.22}). This yields a  total of $8067$ undirected edges. Setting $\rho_\psp=0.2$,  the minimum coefficient required to prevent collapse of the input-to-free coupling is
    $K_{\min} \approx 24$ (see Eq.~\eqref{eq4.23}).
\item[(5)] The parameters  involved in solving the random nonlinear neural equation, introduced in Section~\ref{Section4.7}, are fixed in the PGD-Adam step at their optimal values obtained in the trial step.
    The input-block and free-bias scaling (see Section~\ref{Section4.7}-(i)) is applied.
    Before performing the trial step, a preanalysis is carried out to estimate the values of $\rho_\pin$, $\gamma_{\max}$, and $\gamma_{b,\max}$; these parameters are then fixed in the trial step with $\rho_\pin = 0.0006$, $\gamma_{\max} = 1.8313$, and $\gamma_{b,\max} = 0.1615$.  The balancing of the free block for numerical contraction (see Section~\ref{Section4.7}-(ii))  is also applied. The optimal value of the parameter $c_\targetp$ is estimated in the trial step simultaneously with the trial-optimal value of the hyperparameter $\bftheta = ( h_1,h_2,\zeta_s,\bfbeta_{n_h}^{\,(1)}, \bfbeta_{n_h}^{\,(2)})$.
    The input sensitivity rescaling from a local linearization (see Section~\ref{Section4.7}-(iii)) was not necessary and therefore is not applied. The nonlinear neural equation is solved as explained in Section~\ref{Section4.8} with relaxation factor  $\alpha_\prelax = 0.8$ and a relative tolerance $\mathrm{tol} = 10^{-5}$.
\end{enumerate}

\paragraph{(ii) Illustration of a realization of the latent random field, the neuron locations, and the connectivity}.
Fig.~\ref{figure3a} displays one realization of the random neuron locations on $\curS_h$, superimposed on the associated realization of the latent random field $\{U(\bfx\,;\bftheta),\bfx\in \curS_h\}$, as defined in  Eq.~\eqref{eq6.5}.
Fig.~\ref{figure3b} shows the realization of the random sparse connectivity on $\curS_h$, associated with the realization of the latent random field $U(\cdot\,;\bftheta)$ and the corresponding realization of random neuron locations shown in  Fig.~\ref{figure3a}. The sparse connectivity corresponds to a realization of the random edge-mask matrix defined by Eq.~\eqref{eq4.14}, with edges drawn along the geodesics defined by Eq.~\eqref{eq4.13} used to construct the matrix. The geodesic paths between neurons shown in Fig.~\ref{figure3b} are not perfectly represented due to the triangulation of the manifold and the fact that the neurons do not coincide with the mesh nodes of the surface $\curS_h$. This is a visualization artifact; {\color{black} however, the connectivity generation is correct}.

\begin{figure}[H]
    \centering
    \begin{subfigure}[b]{0.49\textwidth}
        \centering
        \includegraphics[width=\textwidth]{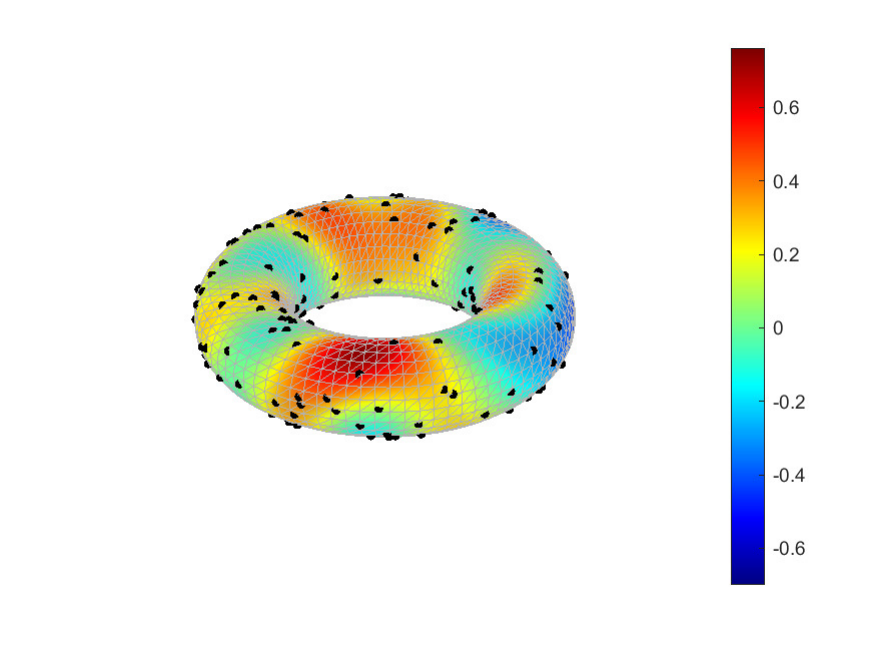}
        \caption{Realization of the random neuron locations on $\curS_h$.}
        \label{figure3a}
    \end{subfigure}
    \begin{subfigure}[b]{0.49\textwidth}
        \centering
        \includegraphics[width=\textwidth]{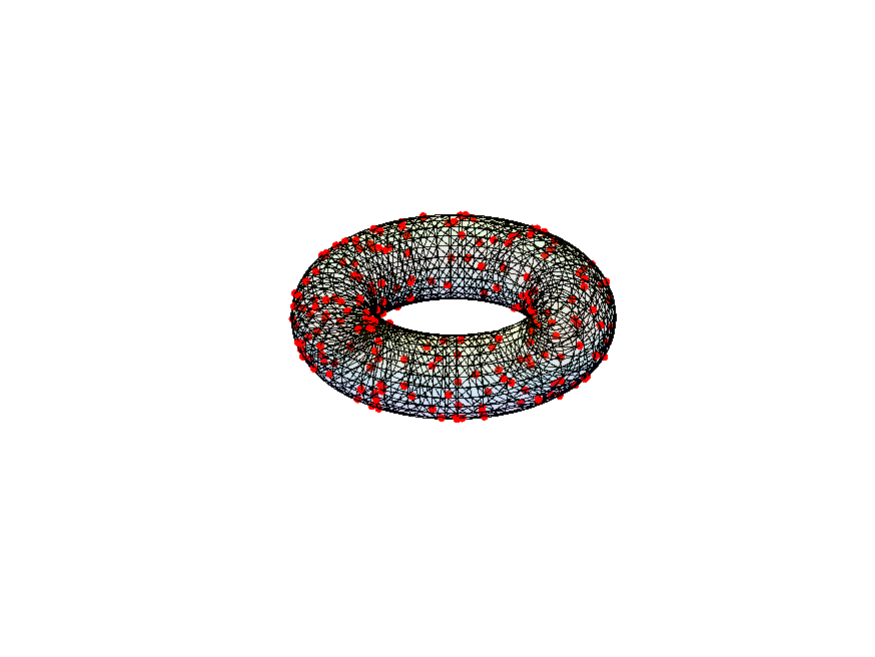}
        \caption{Realization of the intensity map corresponding to figure (a).}
        \label{figure3b}
    \end{subfigure}
    \caption{SNN Case~1: (a) realization of the random neuron locations on $\curS_h$, superimposed on the associated realization of the latent random field. (b) Realization of the sparse connectivity corresponding to the edge-mask matrix, but drawn along the geodesics.}
    \label{figure3}
\end{figure}
\paragraph{(iii) Trial-based global optimization problem}.
For the grid, we fix $h_{1,\ptrial}^\optp = 0.66$ and  $h_{2,\ptrial}^\optp= 0.72$, and we consider $\zeta_s \in [0.03, 0.04]$
and $c_\targetp \in [0.5,0.7]$. The grid  contains $18\times 18 = 324$ points $\bftheta^j$. These bounds were obtained by successive refinements: we partitioned a larger domain into subdomains, explored them sequentially, and solved the optimization problem on each subdomain using the trial-based global optimizer. Consequently, the resulting trial-based solution is already a good approximation   of a high-quality solution.
The value of $\alpha_0$ (see Eqs.~\eqref{eq6.6}--\eqref{eq6.7}) is $\alpha_0 = 1.0458$, which is close to $1$ and therefore consistent with Case~1, as explained in Case~1 of Section~\ref{Section6.2}. This deviation from $1$ is due to the fact that $n_\psim=100$ is relatively small.
Fig.~\ref{figure4} shows the graphs of $j\mapsto \curJ^\SNNpp(\bftheta_\ptrial^j)$ (see Eq.~\eqref{eq2.8}),
$j\mapsto \widehat{\rm{CE}}(\bftheta_\ptrial^j) = (1/n_d)\sum_{i= 1}^{n_d} \widehat{\rm{CE}}_i((\bftheta_\ptrial^j))$ (see Eq.~\eqref{eq2.15}),
and $j\mapsto \curJ^\SPINNpp(\bftheta_\ptrial^j\, ; \alpha_0)$ (see Eq.~\eqref{eq2.12}) over the grid.
\begin{figure}[H]
    \centering
    \begin{subfigure}[b]{0.30\textwidth}
        \centering
        \includegraphics[width=\textwidth]{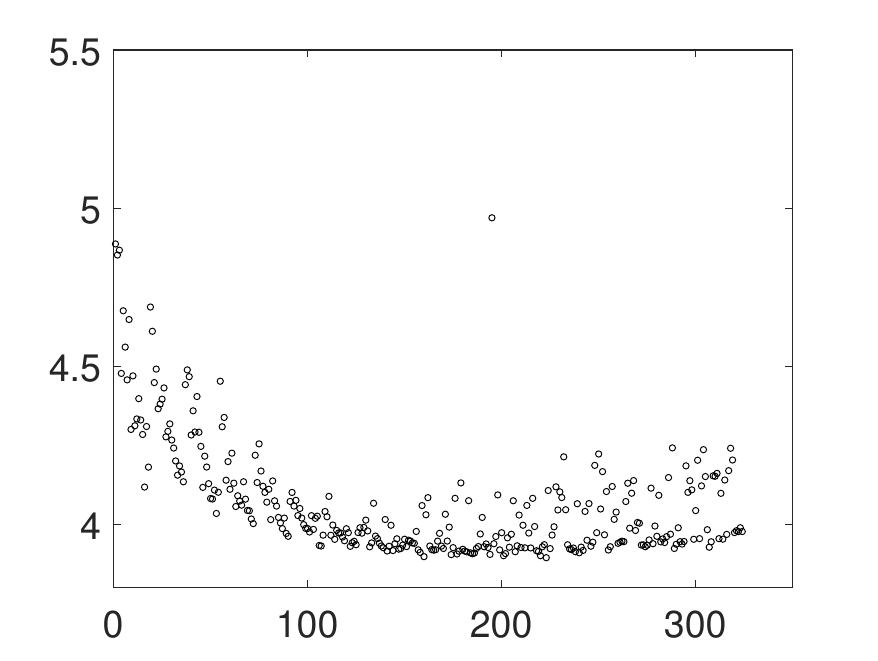}
        \caption{Graph of $j\mapsto \curJ^\SNNpp(\bftheta_\pptrial^j)$.}
        \label{figure4a}
    \end{subfigure}
    \begin{subfigure}[b]{0.30\textwidth}
        \centering
        \includegraphics[width=\textwidth]{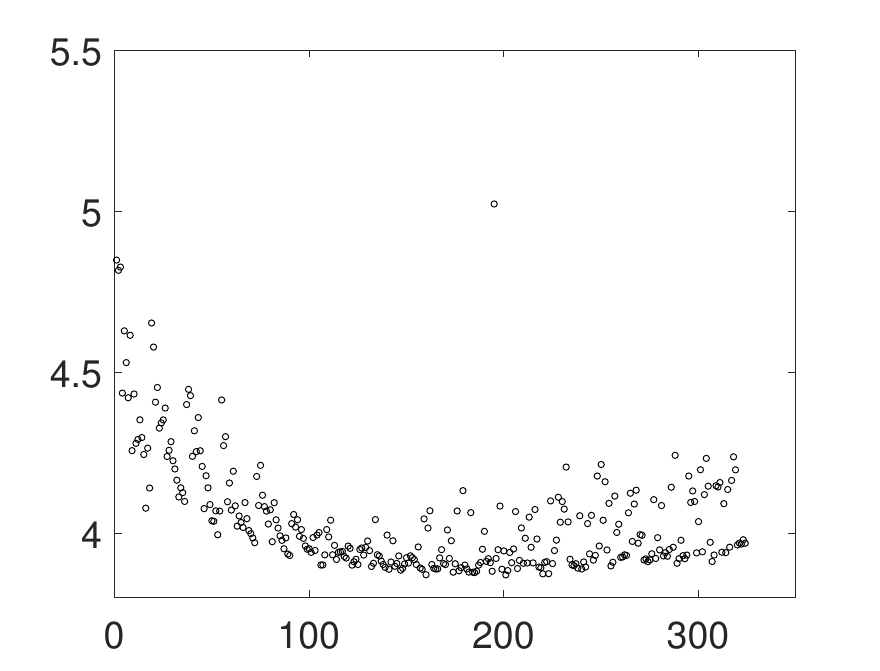}
        \caption{Graph of $j\mapsto \widehat{\rm{CE}}(\bftheta_\pptrial^j)$.}
        \label{figure4b}
    \end{subfigure}
    \begin{subfigure}[b]{0.30\textwidth}
        \centering
        \includegraphics[width=\textwidth]{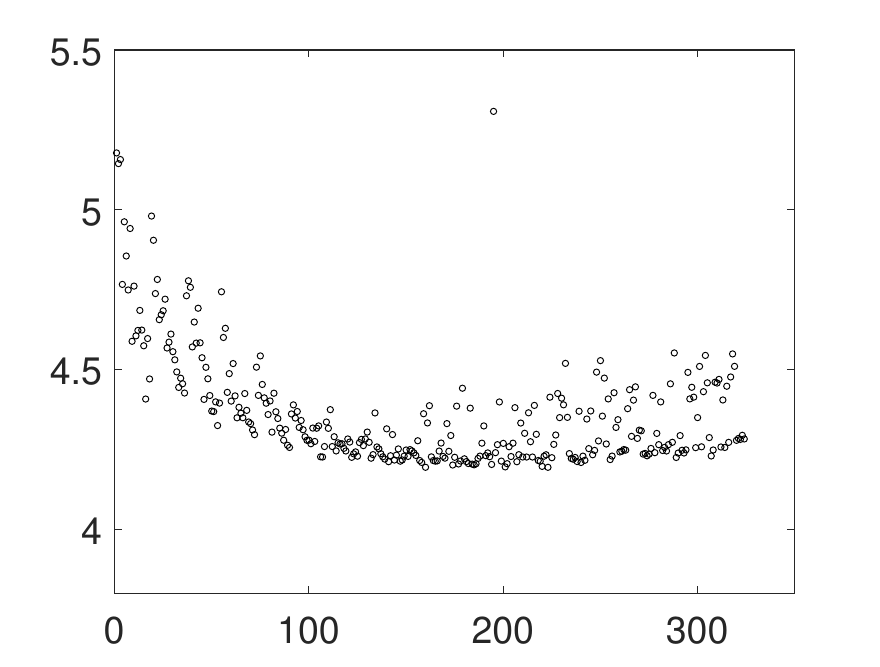}
        \caption{Graph $j\mapsto \curJ^\SPINNpp(\bftheta_\pptrial^j\, ; \alpha_0)$.}
        \label{figure4c}
    \end{subfigure}
    \caption{Case~1: $\log_{10}$-scale plots of the graphs of functions $\curJ^\SNNpp$, $\widehat{\rm{CE}}$, and
    $\curJ^\SPINNpp(\cdot, ; \alpha_0)$ on the grid points $j\in\{1,\ldots, 324\}$.}
    \label{figure4}
\end{figure}
\noindent Solving Eq.~\eqref{eq6.8} yields $\zeta_s^\optp = 0.03353$ and $c_\targetp^\optp = 0.64118$. The graphs of functions $k\mapsto \beta_{k,n_h,\ptrial}^{\, (1),\optp}$ and  $k\mapsto \beta_{k,n_h,\ptrial}^{\, (2),\optp}$ from $\{1,\ldots , n_h\}$ into $\RR$, and of $i\mapsto \beta_{i,\ptrial}^\optp$ from $\{1,\ldots , n_b\}$  into $\RR$ (with $n_b=80$), are shown in Fig.~\ref{figure4}. We have $J_{\min ,\ptrial}  = {\curJ}^{\SPINNpp}(\bftheta_\ptrial^\optp\, ;\alpha_0) = 1.56787\times 10^{4}$.
\begin{figure}[H]
    \centering
    \begin{subfigure}[b]{0.30\textwidth}
        \centering
        \includegraphics[width=\textwidth]{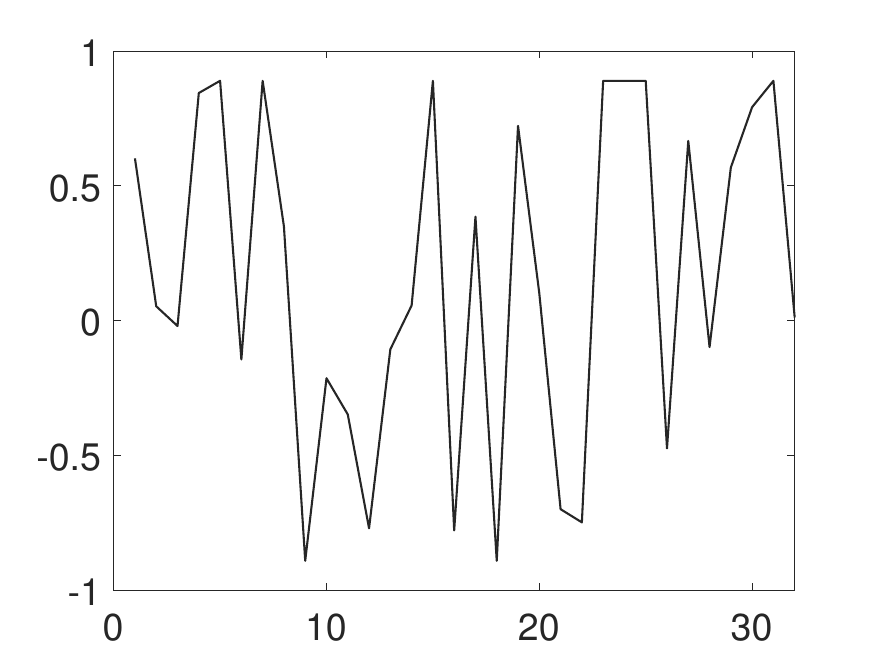}
        \caption{Graph of $k\mapsto \beta_{k,n_h,\pptrial}^{\, (1),\optp}$.}
        \label{figure5a}
    \end{subfigure}
    \begin{subfigure}[b]{0.30\textwidth}
        \centering
        \includegraphics[width=\textwidth]{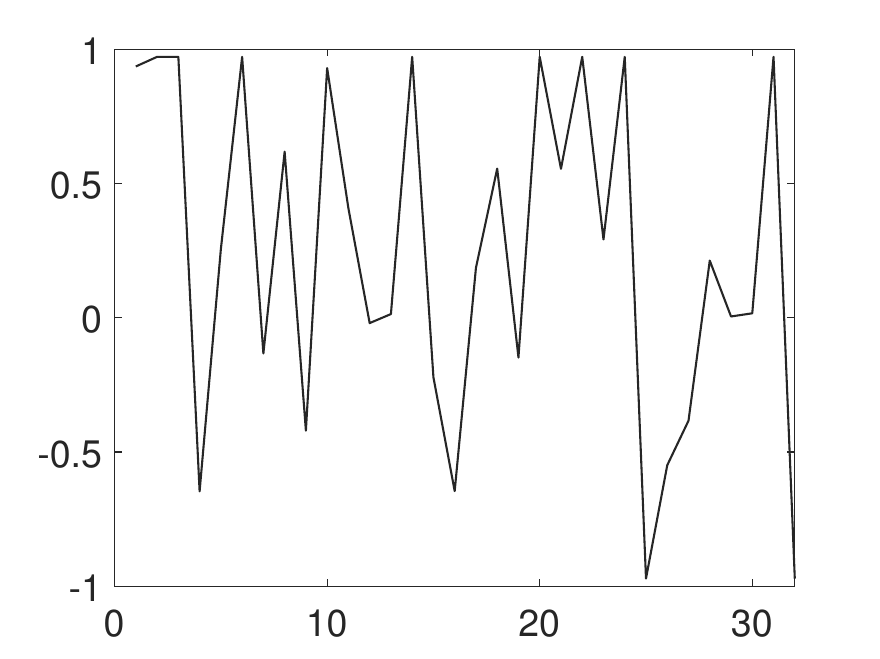}
        \caption{Graph of $k\mapsto \beta_{k,n_h,\pptrial}^{\, (2),\optp}$.}
        \label{figure5b}
    \end{subfigure}
    \begin{subfigure}[b]{0.30\textwidth}
        \centering
        \includegraphics[width=\textwidth]{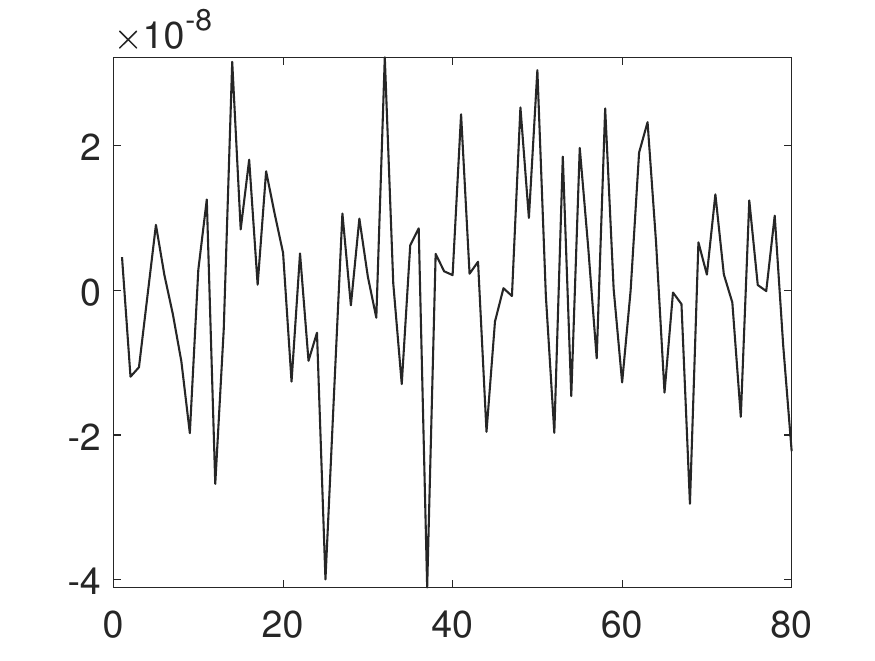}
        \caption{Graph $i\mapsto \beta_{i,\pptrial}^\optp$ from $\{1,\ldots , n_b\}$.}
        \label{figure5c}
    \end{subfigure}
    \caption{Case~1: Plots of the functions $k\mapsto \beta_{k,n_h,\pptrial}^{\, (1),\optp}$ and
    $k\mapsto \beta_{k,n_h,\pptrial}^{\, (2),\optp}$ on $\{ 1,\ldots, 32 \}$, and of
    $i\mapsto \beta_{i,\pptrial}^\optp$ on $\{1,\ldots , 80\}$.}
    \label{figure5}
\end{figure}
\paragraph{(iv) Local optimization using the projected gradient descent with Adam updates (PGD-Adam)}.
The PGD-Adam local optimizer is initialized with the trial-based global optimal solution
$\bftheta^\optp_\ptrial$ computed in paragraph (iii) above.
The step size for the Adam algorithm is set to $5\times 10^{-5}$. The gradient of the function
$( \bfbeta_{n_h}^{\,(1)}, \bfbeta_{n_h}^{\,(2)} ) \mapsto \curJ^\SPINNpp(\bfbeta_{n_h}^{\,(1)}, \bfbeta_{n_h}^{\,(2)})$
on $\RR^{n_h}\times  \RR^{n_h}$ is computed using a central finite-difference approximation with a relative  directional step size of $0.015$.
Fig.~\ref{figure6a} shows $\log_{10}(\curJ^\SPINNpp(n))$ as a function of the iteration number $n$ of the PGD-Adam optimizer, where $\curJ^\SPINNpp(n)$ denotes the value of $\curJ^\SPINNpp$ at iteration $n$.
Figs.~\ref{figure6b} and \ref{figure6c} show the plots of the graph $k\mapsto \beta_{k,n_h}^{\, (1),\opt}$ and $k\mapsto \beta_{k,n_h}^{\, (2),\opt}$ for $k\in\{1,\ldots, 32\}$.
{\color{black}
Since the trial-based solution used for initialization is already close to a high-quality solution, the PGD--Adam refinement produces only a small change in the identified parameters. Consequently, for $\ell\in\{1,2\}$, the components of $\bfbeta_{n_h}^{\,(\ell),\opt}$ remain very close, or equal, to those of $\bfbeta_{n_h,\pptrial}^{\,(\ell),\opt}$. In fact,
$J_{\min} = {\curJ}^{\SPINNpp}(\bftheta^\optp\, ; \alpha_0) = 1.59523 \times 10^{4}$, $J_{\min,\ptrial} = 1.56787 \times 10^{4}$.
The relative difference between these two objective values is about $1.7\%$. In this configuration, the local refinement therefore does not improve the training objective, although it produces only a limited modification of the trial-based solution.
}
\begin{figure}[H]
    \centering
    \begin{subfigure}[b]{0.30\textwidth}
        \centering
        \includegraphics[width=\textwidth]{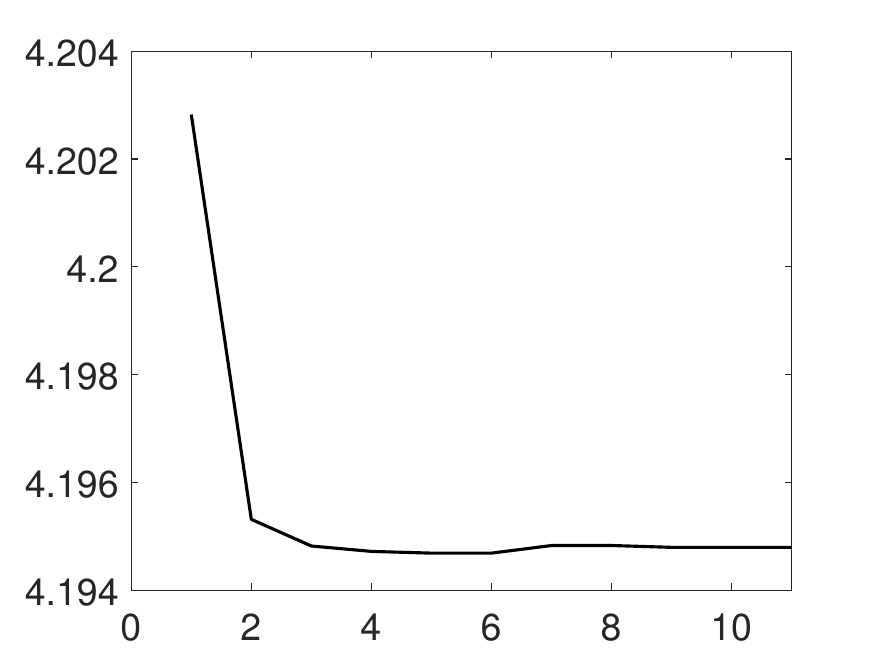}
        \caption{Graph of $n\mapsto \log_{10}(\curJ^\SPINNpp(n))$.}
        \label{figure6a}
    \end{subfigure}
    \begin{subfigure}[b]{0.30\textwidth}
        \centering
        \includegraphics[width=\textwidth]{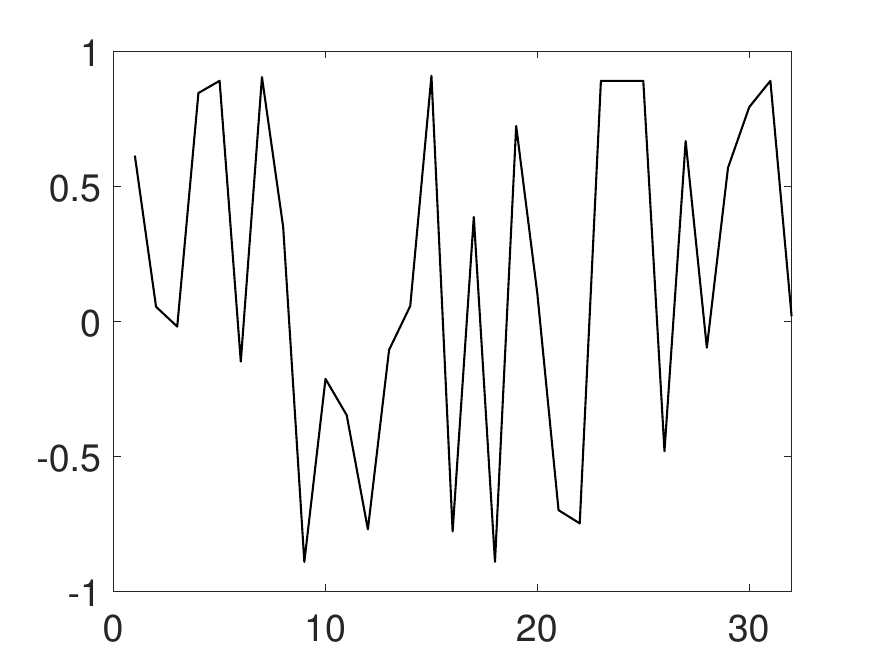}
        \caption{Graph of $k\mapsto \bfbeta_{k,n_h}^{\, (1),\optp}$.}
        \label{figure6b}
    \end{subfigure}
    \begin{subfigure}[b]{0.30\textwidth}
        \centering
        \includegraphics[width=\textwidth]{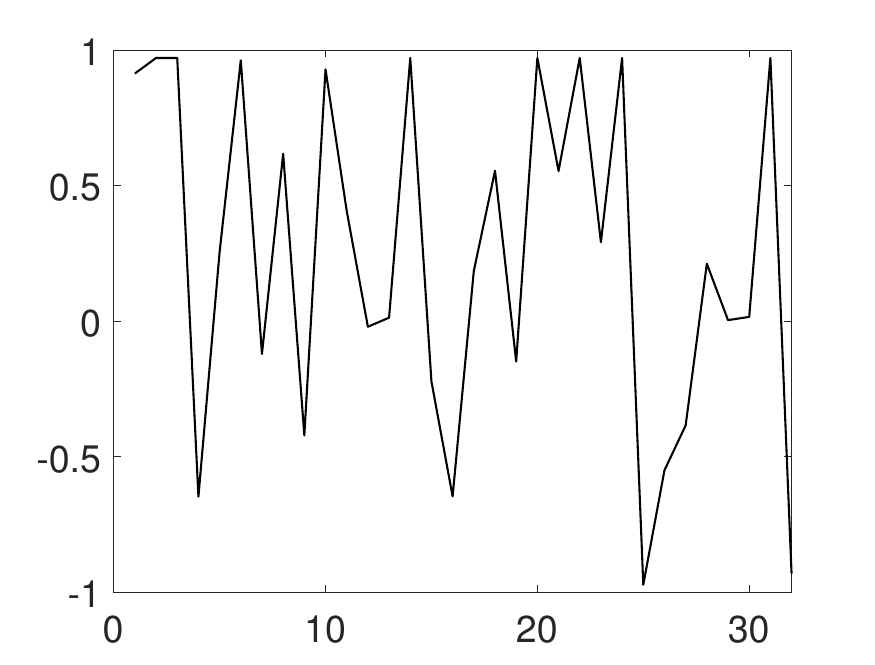}
        \caption{Graph $k\mapsto \bfbeta_{k,n_h}^{\, (2),\optp}$.}
        \label{figure6c}
    \end{subfigure}
    \caption{Case~1: Plots of the functions $n\mapsto \log_{10}(\curJ^\SPINNpp(n))$,
    $k\mapsto \bfbeta_{k,n_h}^{\, (1),\optp}$, and
    $k\mapsto \bfbeta_{k,n_h}^{\, (2),\optp}$ on $\{ 1,\ldots, 32 \}$.}
    \label{figure6}
\end{figure}
\noindent Table~\ref{table1} reports, for the intrinsic criteria defined in Section~\ref{Section3},
the results obtained after local refinement and those obtained at the trial step for the same value $\alpha_0$.
The locally refined solution (PGD) improves upon the trial-step solution (TRIAL),
while remaining close to it, consistent with the discussion above.
For this reference configuration, $\criter_o=0.03338$ indicates no notable over-specialization
(Section~\ref{Section3.2}). Moreover, $\criter_{\IOCp}=0.2273$ indicates good alignment
(see the rule-of-thumb scale in Section~\ref{Section3.3}).
Finally, $\CRPS^\scaledpp_\testp=1.0630$ (i.e., a $6.3\%$ increase relative to the training-based benchmark)
indicates a limited degradation in marginal skill under the present finite-parameter configuration.
The resulting configuration is of sufficient quality for the subsequent analyses; its accuracy is discussed further in
Section~\ref{Section7.3}. In particular, the best performance in our runs is not obtained for $N=200$
(the present reference configuration) but for $N=300$, for which $\CRPS^\scaledpp_\testp=1.018$
(i.e., a $1.8\%$ increase).
\begin{table}[h]
  \caption{Case~1: Evaluation-criteria values of the reference configuration ($N=200$, $n_h=32$, and $n_\psim=100$).}
  \label{table1}
  \begin{center}
  \resizebox{\textwidth}{!}{%
  \begin{tabular}{|c|c|c|c|c|c|c|c|c|c|}
    \hline
      Eqs. & \eqref{eq3.2}&\eqref{eq3.1}&\eqref{eq3.3}&\eqref{eq3.5}&\eqref{eq3.5}&\eqref{eq3.6}&\eqref{eq3.9}&\eqref{eq3.9}&\eqref{eq3.10}\\
    \hline
           & $\scriptstyle\NLL_\SNNp^\opt$ & $\scriptstyle\NLL_\SNNp^\test$  &
           $\scriptstyle\criter_o$ & $\scriptstyle\criter_{\bfalpha}$ &  $\scriptstyle\criter_{\bfbeta}$ & $\scriptstyle\criter_{\IOCp}$ & $\scriptstyle\CRPS^\SNNpp_\testp$ & $\scriptstyle\CRPS^\trainingpp_\testp$ & $\scriptstyle\CRPS^\scaledpp_\testp$\\
    \hline
      $\TRIAL$  & 7839.4 & 1521.5 & 0.03046 & 0.05574  & 0.05459 & 0.2350          & 0.043313 & 0.040216 & 1.0770          \\
      $\PGD$    & 7852.6 & 1519.8 & 0.03338 & 0.05479  & 0.05274 & \textbf{0.2273} & 0.042751 & 0.040216 & \textbf{1.0630} \\
    \hline
  \end{tabular}
  } 
\end{center}
\end{table}

\noindent To examine the results in more detail, we consider $3$ selected output components with indices $k \in\{14,35,41\}$ among the $80$ outputs (see Section~\ref{Section6.1}-(iv)). The corresponding observation points are
$\bfx^{(k)}_\obsp = (r, 0, (k-1) L/80)$.
For these components of the output vector, we compare the confidence interval (see Eq.~\eqref{eq3.4})
$\CI^\SNNpp_k(\xx^i_\testp) = [\alpha^\SNNpp_k(\xx^i_\testp) \, , \beta^\SNNpp_k(\xx^i_\testp)]$ estimated by the optimal SNN
with
$\CI^\trainingpp_k(\xx^i_\testp) = [\alpha^\trainingpp_k(\xx^i_\testp) \, , \beta^\trainingpp_k(\xx^i_\testp)]$ estimated from the training dataset,
where $i\in\{6,24\}$ corresponds to the test-dataset input realizations $\xx^6_\testp=(2.1317 , 0.1668)$ and
$\xx^{24}_\testp = (1.9054,0.1291)$. The results are given in Table~\ref{table2}.
\begin{table}[h]
  \caption{Case~1: For $k = 14$, $35$, and $41$, conditional confidence intervals $\CI^\trainingpp_k(\xx^i_\testp)$ and $\CI^\SNNpp_k(\xx^i_\testp)$ for  inputs $i\in\{6,24\}$ from the test dataset, with $N=200$, $n_h=32$, and $n_\psim=100$.}
  \label{table2}
  \begin{center}
  \begin{tabular}{|c||c|c||c|c|}
    \hline
        $\scriptstyle k$ & $\scriptstyle \CI^\trainingpp_k(\xx^6_\testp)$  & $\scriptstyle \CI^\SNNpp_k(\xx^6_\testp)$
         & $\scriptstyle \CI^\trainingpp_k(\xx^{24}_\testp)$  & $\scriptstyle \CI^\SNNpp_k(\xx^{24}_\testp)$ \\
    \hline
     14  & $[-0.8472,-0.5951]$   & $[-0.8097,-0.5537]$ & $[-0.7526,-0.5485]$   & $[-0.8241,-0.5810]$\\
     35  & $[ 1.0775, 1.4817]$   & $[ 1.0125, 1.4362]$ & $[ 1.0061, 1.3164]$   & $[ 1.0106, 1.4174]$\\
     41  & $[ 1.3910, 1.8954]$   & $[ 1.2778, 1.8351]$ & $[ 1.2755, 1.6860]$   & $[ 1.2772, 1.8219]$\\
    \hline
  \end{tabular}
\end{center}
\end{table}
For $k \in\{14,35,41\}$ and $i\in\{6,24\}$, Figs.~\ref{figure7} and \ref{figure8}  show the probability density functions
$y\mapsto p_{YY_k\vert\XX}^\SNNpp(y \vert\xx_\testp^i;\bftheta^\optp)$ (estimated with the optimal SNN),
which are compared with the densities
$y\mapsto p_k^\trainingpp(y \vert\xx_\testp^i)$ (estimated from the training dataset).
\begin{figure}[H]
    \centering
    \begin{subfigure}[b]{0.30\textwidth}
        \centering
        \includegraphics[width=\textwidth]{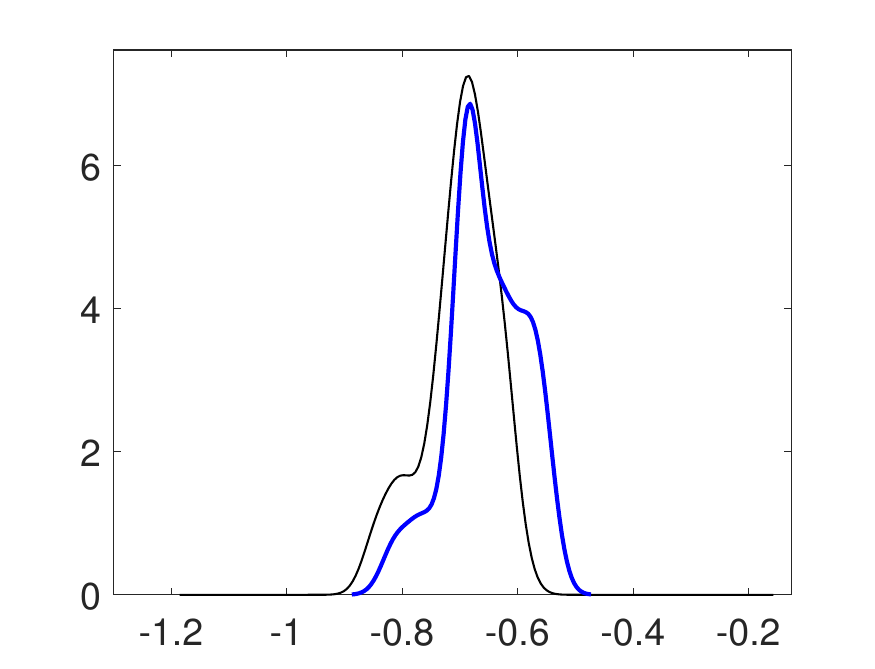}
        \caption{Graphs of the probability density functions for $k=14$.}
        \label{figure7a}
    \end{subfigure}
    \begin{subfigure}[b]{0.30\textwidth}
        \centering
        \includegraphics[width=\textwidth]{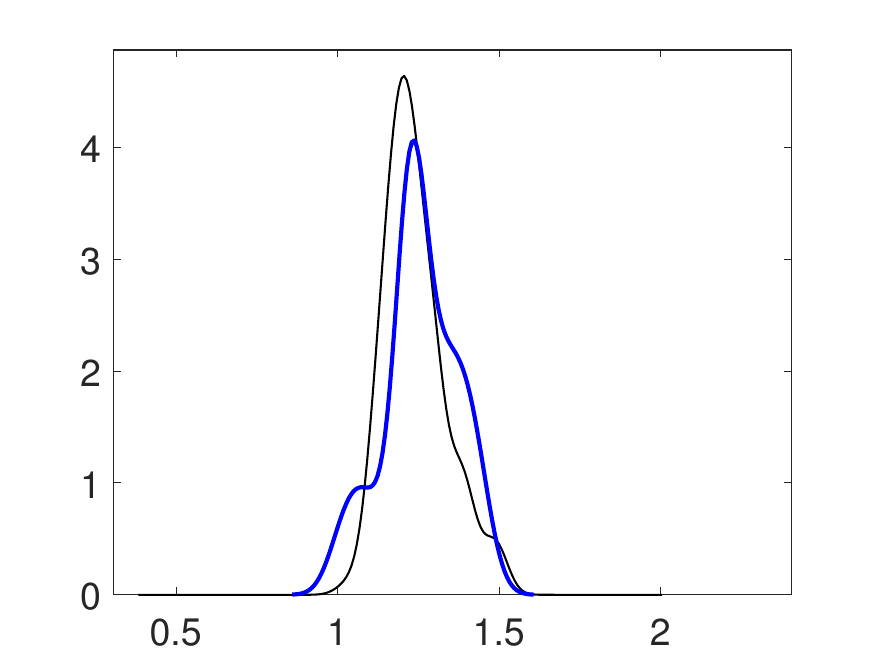}
        \caption{Graphs of the probability density functions for $k=35$.}
        \label{figure7b}
    \end{subfigure}
    \begin{subfigure}[b]{0.30\textwidth}
        \centering
        \includegraphics[width=\textwidth]{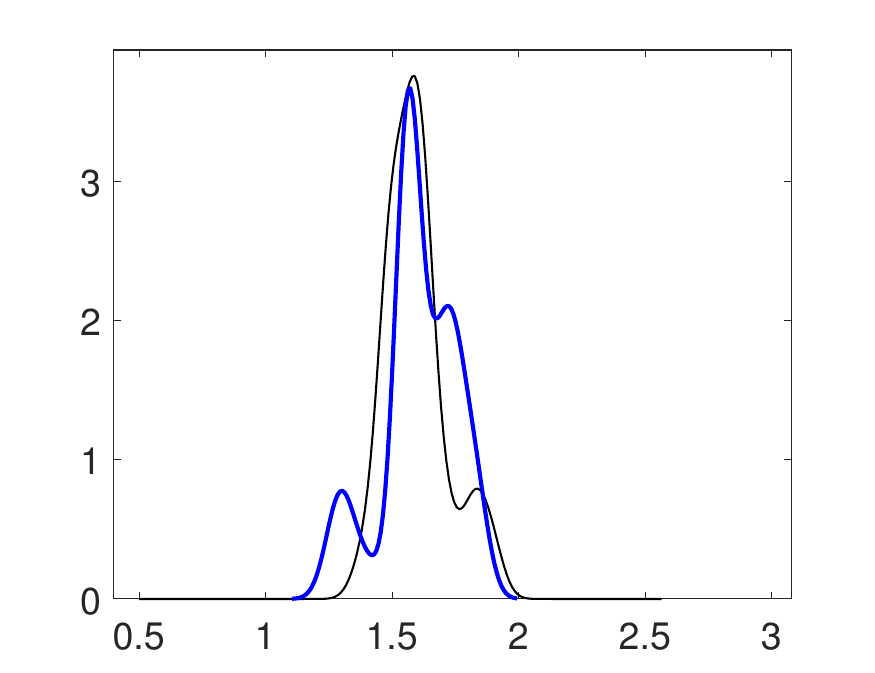}
        \caption{Graphs of the probability density functions for $k=41$.}
        \label{figure7c}
    \end{subfigure}
    \caption{Case~1: Plots of the probability density functions $y\mapsto p_{YY_k\vert\XX}^\SNNpp(y \vert\xx_\testp^6;\bftheta^\optp)$
                   (thin black line) and  $y\mapsto p_k^\trainingpp(y \vert\xx_\testp^6)$ (thick blue line)}
    \label{figure7}
\end{figure}
\begin{figure}[H]
    \centering
    \begin{subfigure}[b]{0.30\textwidth}
        \centering
        \includegraphics[width=\textwidth]{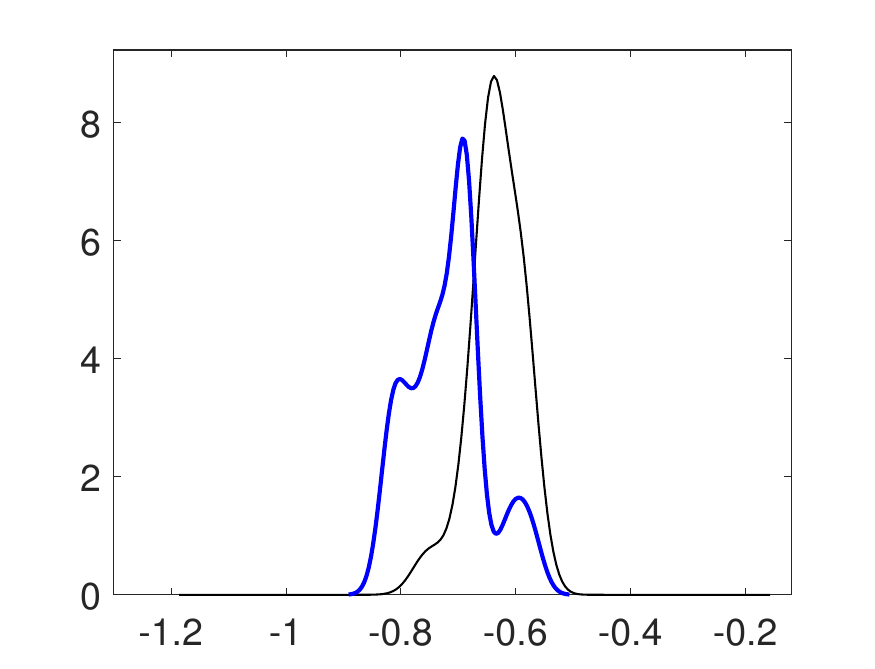}
        \caption{Graphs of the probability density functions for $k=14$.}
        \label{figure8a}
    \end{subfigure}
    \begin{subfigure}[b]{0.30\textwidth}
        \centering
        \includegraphics[width=\textwidth]{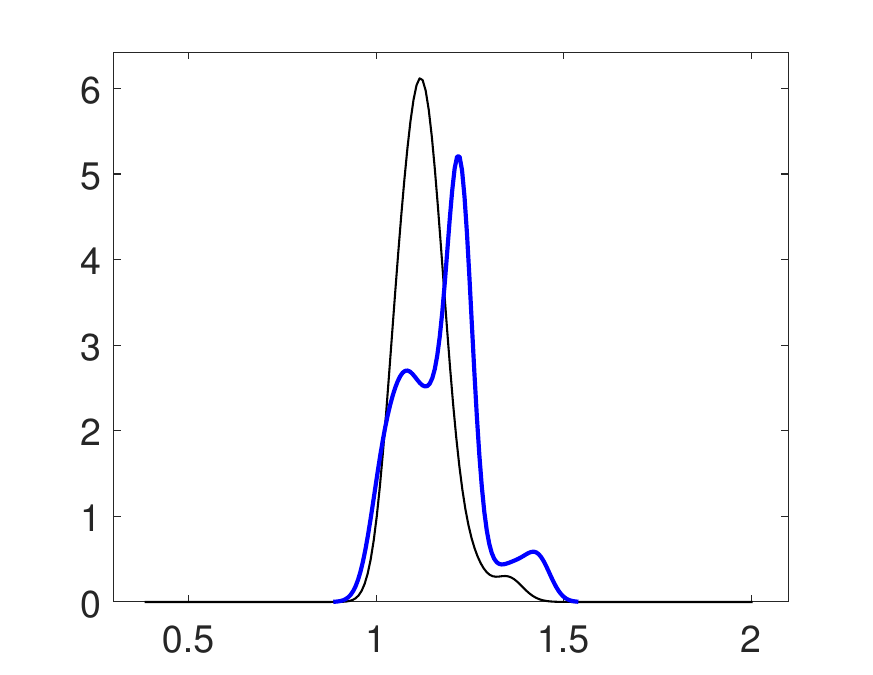}
        \caption{Graphs of the  probability density functions for $k=35$.}
        \label{figure8b}
    \end{subfigure}
    \begin{subfigure}[b]{0.30\textwidth}
        \centering
        \includegraphics[width=\textwidth]{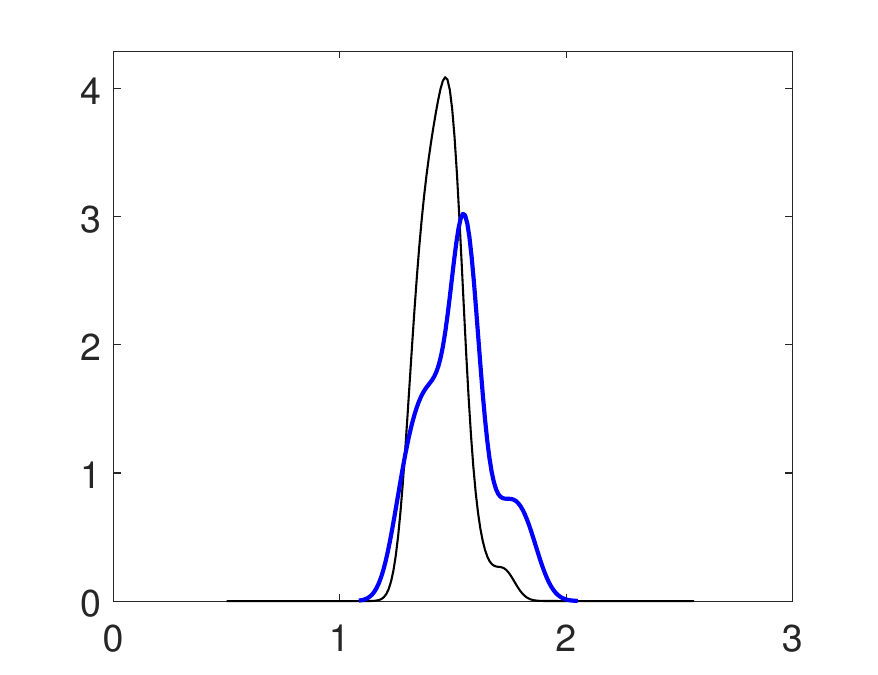}
        \caption{Graphs of the  probability density functions for $k=41$.}
        \label{figure8c}
    \end{subfigure}
    \caption{Case~1: Plots of the probability density functions $y\mapsto p_{YY_k\vert\XX}^\SNNpp(y \vert\xx_\testp^{24};\bftheta^\optp)$
                   (thin black line) and  $y\mapsto p_k^\trainingpp(y \vert\xx_\testp^{24})$ (thick blue line)}
    \label{figure8}
\end{figure}
\paragraph{(v) Convergence analysis with respect to the reference configuration}
The reference-configuration parameters $n_h$, $n_\psim$, and $N$ are defined as
$n_h^\rc=32$, $n_\psim^\rc = 100$, and $N^\rc = 200$. We carry out a convergence analysis with respect to this reference configuration by considering $n_h\in\{4,8,16,32,64,128\}$, $n_\psim\in\{100,140,200,300\}$, and $N\in\{130,200,250,300\}$.
These quantities are the principal controls of latent-field resolution, Monte Carlo resolution, and neural-architecture size, respectively.
{\color{black}
In particular, the dependence on $n_\psim$ is a convergence study of the complete implemented training pipeline, not a post-processing variation: for each value of $n_\psim$, the full trial step is followed by the PGD--Adam refinement. This directly assesses the Monte Carlo resolution entering the Gaussian-KDE likelihood calculations used for training.
}
All other SNN parameters are set to the values used in the reference-configuration analysis (see paragraphs~(iii) and (iv) above), and their status is summarized in {\color{black} Section~\ref{Section6.parameterclassification}}.
The parameter $\alpha_0$, which controls the alignment term in SPINN, is recomputed at every point of the convergence analysis.
The complete trial step is performed, followed by the PGD--Adam step, according to the methodology presented in paragraphs~(iii) and (iv).
{\color{black}
The trial procedure should not be interpreted as a single arbitrary neural-network initialization. For the reference calculation, it evaluates $18 \times 18 = 324$ candidates in $(\zeta_s,c_{\targetp})$. Each candidate is associated with projected coefficient vectors
$\bfbeta^{\,(1)}_{n_h}$ and $\bfbeta^{\,(2)}_{n_h}$ and with the corresponding generated architecture, and every objective evaluation uses $n_\psim$ independent latent-field and random-weight realizations. The best trial candidate provides the initialization of the subsequent local refinement.
Independent repetitions of the complete trial-plus-PGD calculation using new outer architecture germs are not included in the present study. Such repetitions would quantify the realization-to-realization variability of the finite-sample architecture construction and of the sample-average objective. The convergence analysis with respect to $n_\psim$ controls the principal inner Monte Carlo resolution but is not a substitute for a complete outer-germ replication study. This additional robustness assessment is left for future large-scale investigations.
}
Fig.~\ref{figure9} summarizes the convergence analysis by plotting the functions $n_h\mapsto\CRPS^\scaledpp_\testp(n_h,n_\psim^\rc,N^\rc)$,
$n_\psim\mapsto \NLL_\SNNp^\opt(n_h^\rc,n_\psim,N^\rc)$, and $N\mapsto\CRPS^\scaledpp_\testp(n_h^\rc,n_\psim^\rc,N)$.
Tables~\ref{table3} and \ref{table4} report the evaluation criteria for the PGD--Adam optimal solution as a function of
$N\in\{130,200,250,300\}$, for $n_h = n_h^\rc = 32$ and $n_\psim = n_\psim^\rc = 100$. The best solution with respect to $\CRPS^\scaledpp_\testp$ is obtained for $N = 300$, for which $\CRPS^\scaledpp_\testp=1.0180$ (see Section~\ref{Section7}).
\begin{figure}[H]
    \centering
    \begin{subfigure}[b]{0.30\textwidth}
        \centering
        \includegraphics[width=\textwidth]{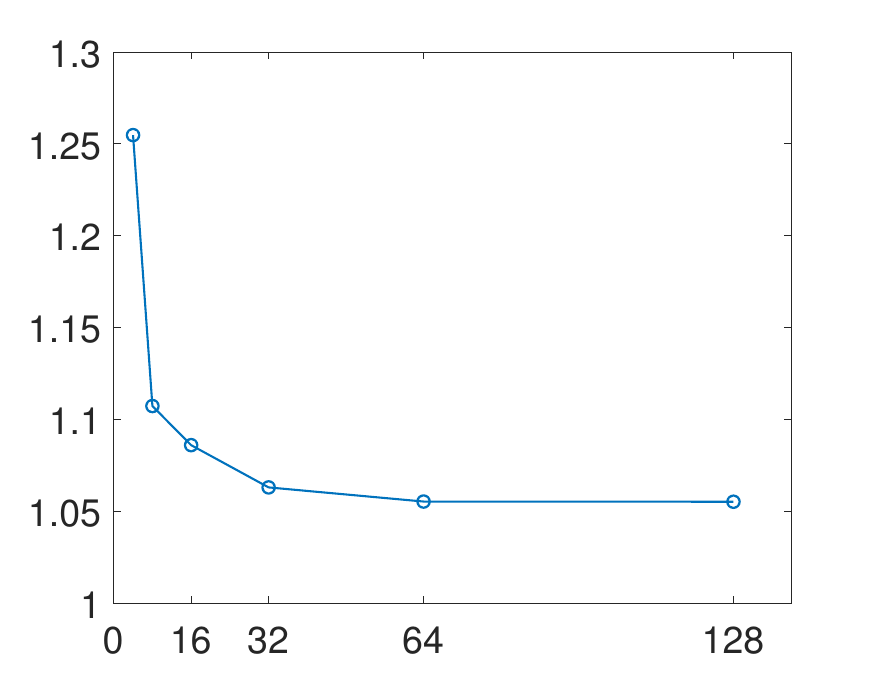}
        \caption{Graph of $n_h\mapsto\CRPS^\scaledpp_\testp(n_h,n^\rc_\psim,N^\rc)$.}
        \label{figure9a}
    \end{subfigure}
    \begin{subfigure}[b]{0.30\textwidth}
        \centering
        \includegraphics[width=\textwidth]{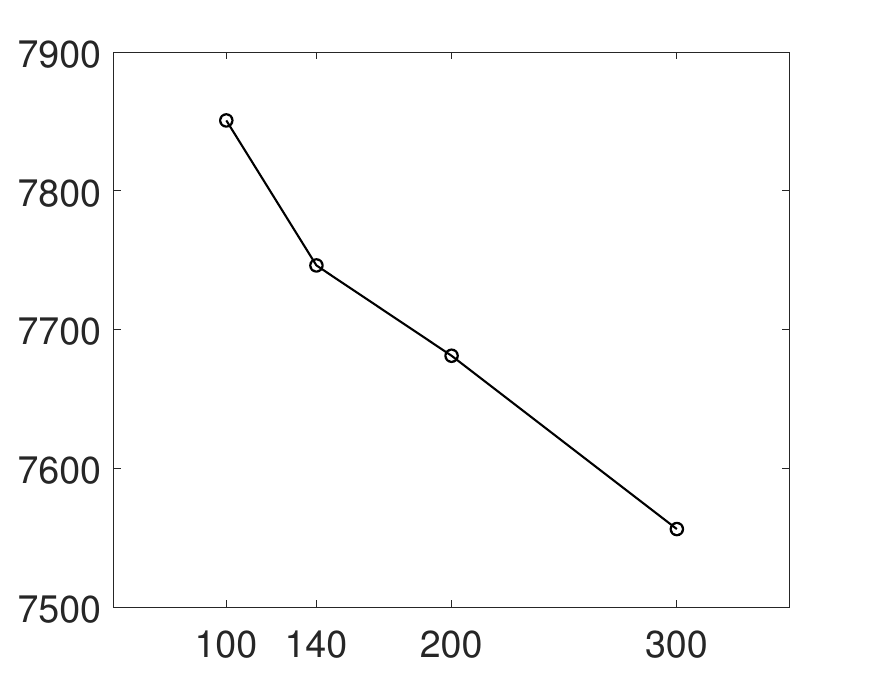}
        \caption{Graph of $n_\psim\mapsto \NLL_\SNNp^\opt(n_h^\rc,n_\psim,N^\rc)$.}
        \label{figureçb}
    \end{subfigure}
    \begin{subfigure}[b]{0.30\textwidth}
        \centering
        \includegraphics[width=\textwidth]{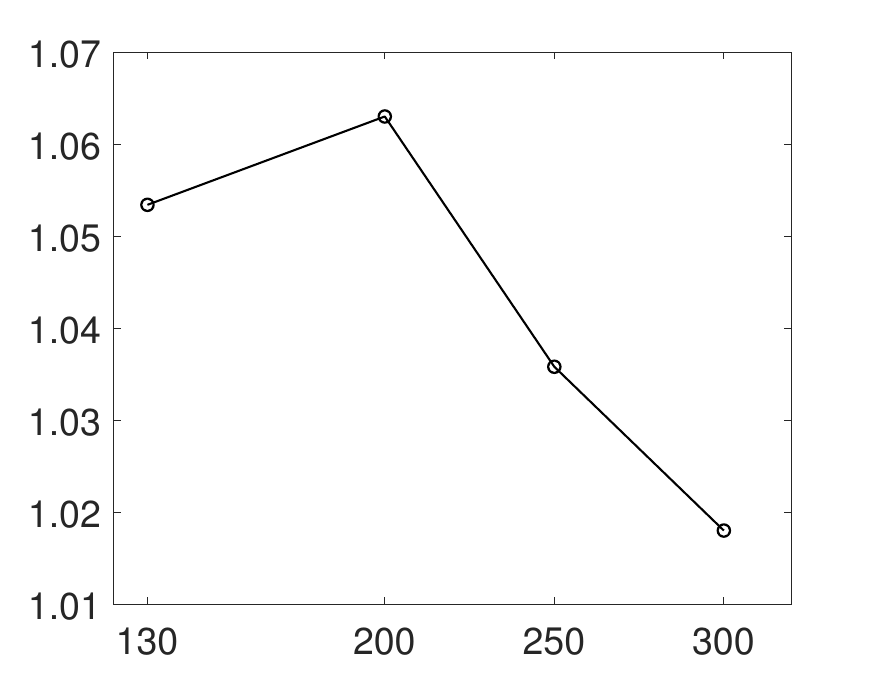}
        \caption{Graph of $N\mapsto\CRPS^\scaledpp_\testp(n_h^\rc,n_\psim^\rc,N)$.}
        \label{figure9c}
    \end{subfigure}
    \caption{Case~1: Convergence analysis with respect to $n_h$, $n_\psim$, and $N$, with $n_h^\rc=32$, $n_\psim^\rc=100$, and $N^\rc=200$. Plots of the following functions:(a) $n_h\mapsto\CRPS^\scaledpp_\testp(n_h,n_\psim^\rc,N^\rc)$,
             (b) $n_\psim\mapsto \NLL_\SNNp^\opt(n_h^\rc,n_\psim,N^\rc)$, and
             (c) $N\mapsto\CRPS^\scaledpp_\testp(n_h^\rc,n_\psim^\rc,N)$. }
    \label{figure9}
\end{figure}
\begin{table}[h]
  \caption{Case~1:Neuron-architecture parameters and $\alpha_0$ values for the alignment term, as a function of $N\in\{130, 200, 250, 300\}$,  for $n_h= n_h^\rc=32$ and $n_\psim = n_\psim^\rc=100$.}
  \label{table3}
  \begin{center}
  \small
  \setlength{\tabcolsep}{6pt}        
  \renewcommand{\arraystretch}{1.0}  
  \begin{tabular}{|c|c|c|c|c|}
    \hline
      Eqs. & \eqref{eq4.12}&\eqref{eq4.24}&\eqref{eq4.33}&\eqref{eq6.7}\\
    \hline
      $ N$ & $ n_\pint$ & $ n_\pinc$ & $ n_b$ &  $\alpha_0$ \\
    \hline
      130 & 48  & 3426     & 32  & 1.0717 \\
      200 & 118 & 8067     & 80  & 1.0458 \\
      250 & 168 & 12625    & 114 & 1.0718 \\
      300 & 218 & 13156    & 148 & 1.0688 \\
    \hline
  \end{tabular}
  \end{center}
\end{table}
\begin{table}[h]
  \caption{Case~1: Evaluation-criteria values for the PGD-Adam optimal solution as a function of $N\in\{130, 200, 250, 300\}$,  for $n_h= n_h^\rc=32$ and $n_\psim = n_\psim^\rc=100$.}
  \label{table4}
  \begin{center}
  \resizebox{\textwidth}{!}{%
  \begin{tabular}{|c|c|c|c|c|c|c|c|c|c|}
    \hline
      Eqs. & \eqref{eq3.2}&\eqref{eq3.1}&\eqref{eq3.3}&\eqref{eq3.5}&\eqref{eq3.5}&\eqref{eq3.6}&\eqref{eq3.9}&\eqref{eq3.9}&\eqref{eq3.10}\\
    \hline
      $N$  & $\scriptstyle\NLL_\SNNp^\opt$ & $\scriptstyle\NLL_\SNNp^\test$  &
           $\scriptstyle\criter_o$ & $\scriptstyle\criter_{\bfalpha}$ &  $\scriptstyle\criter_{\bfbeta}$ & $\scriptstyle\criter_{\IOCp}$ & $\scriptstyle\CRPS^\SNNpp_\testp$ & $\scriptstyle\CRPS^\trainingpp_\testp$ & $\scriptstyle\CRPS^\scaledpp_\testp$\\
    \hline
      130  & 7765.0 & 1481.9 & 0.04796 & 0.05168  & 0.04783 & 0.2214 & 0.042364 & 0.040216 & 1.0534 \\
      200  & 7852.6 & 1519.8 & 0.03338 & 0.05479  & 0.05274 & 0.2273 & 0.042751 & 0.040216 & 1.0630 \\
      250  & 8002.6 & 1495.8 & 0.07003 & 0.05281  & 0.04728 & 0.2290 & 0.041656 & 0.040216 & 1.0358 \\
      300  & 7931.6 & 1492.2 & 0.06308 & 0.05292  & 0.04879 & 0.2242 & 0.040939 & 0.040216 & \textbf{1.0180} \\
    \hline
  \end{tabular}
  } 
\end{center}
\end{table}
\noindent Table~\ref{table1} indicates that the reference configuration achieves a well-balanced outcome across the intrinsic
diagnostics: the generalization index remains small ($\criter_o=0.03338$ for PGD), the confidence-interval
discrepancies are comparable on the lower and upper bounds ($\criter_{\bfalpha}=0.05479$ and $\criter_{\bfbeta}=0.05274$),
and the interval-overlap score is in the ``good'' range ($\criter_{\IOCp}=0.2273$). Consistently, the marginal
probability-distribution skill relative to the training-conditional benchmark is close to one
($\CRPS^\scaledpp_\testp=1.0630$), and the gap between the TRIAL and PGD solutions remains
modest, supporting the interpretation that the selected hyperparameters deliver a stable and reproducible
fit in the SBVP-consistent regime. At a more granular level, Table~\ref{table2} confirms this picture on representative
outputs and test inputs: the SNN confidence intervals closely track the training-based intervals, with only
moderate center shifts and comparable widths across the reported components. The PDF comparisons in
Figs.~\ref{figure7} and \ref{figure8} provide a complementary pointwise check of marginal agreement at these locations;
a broader interpretation of the CRPS levels and of the conditional probability density functions is deferred to Sections~\ref{Section7.3} and \ref{Section7.4}.
\subsection{Analysis of a configuration based on \textbf{Case 2}}
\label{Section6.7}
The Case~2 configuration is defined in Section~\ref{Section6.2}, for which the training and test datasets are based on SBVP$\,^\MSP$.
All parameters of the SNN model are those used for Case~1 and are defined in Section~\ref{Section6.6}-(i). We then have $n_h=32, N = 200, and n_\psim = 100$.
For this analysis, we follows the same plan (with simplification to reduce the paper length) as the one used for the Case~1 results presented in Section~\ref{Section6.6}.
\paragraph{(i) Trial-based global optimization problem}.
The Case~1 methodology  is reused.
For the grid, we fix $h_{1,\ptrial}^\optp = 0.66$ and  $h_{2,\ptrial}^\optp= 0.72$, and we consider $\zeta_s \in [0.03, 0.05]$
and $c_\targetp \in [0.5,0.7]$. The grid contains $36\times 18 = 648$ points $\bftheta^j$.
The value of $\alpha_0$ (see Eqs.~\eqref{eq6.6} and \eqref{eq6.7}) is $\alpha_0 = 0.35428$.
{\color{black}
This value is the operational scale-matching coefficient used in the normalized training objective; it should be read as the relative numerical scaling prescribed by Eq.~\eqref{eq2.19}, not as an invariant measure of distributional discrepancy.
}
Fig.~\ref{figure10} shows the graphs of
$j\mapsto \curJ^\SNNpp(\bftheta_\ptrial^j)$ (see Eq.~\eqref{eq2.8}),
$j\mapsto \widehat{\rm{CE}}(\bftheta_\ptrial^j) = (1/n_d)\sum_{i= 1}^{n_d} \widehat{\rm{CE}}_i((\bftheta_\ptrial^j))$ (see Eq.~\eqref{eq2.15}),
and $j\mapsto \curJ^\SPINNpp(\bftheta_\ptrial^j\, ; \alpha_0)$ (see Eq.~\eqref{eq2.12}) over the grid.
\begin{figure}[H]
    \centering
    \begin{subfigure}[b]{0.30\textwidth}
        \centering
        \includegraphics[width=\textwidth]{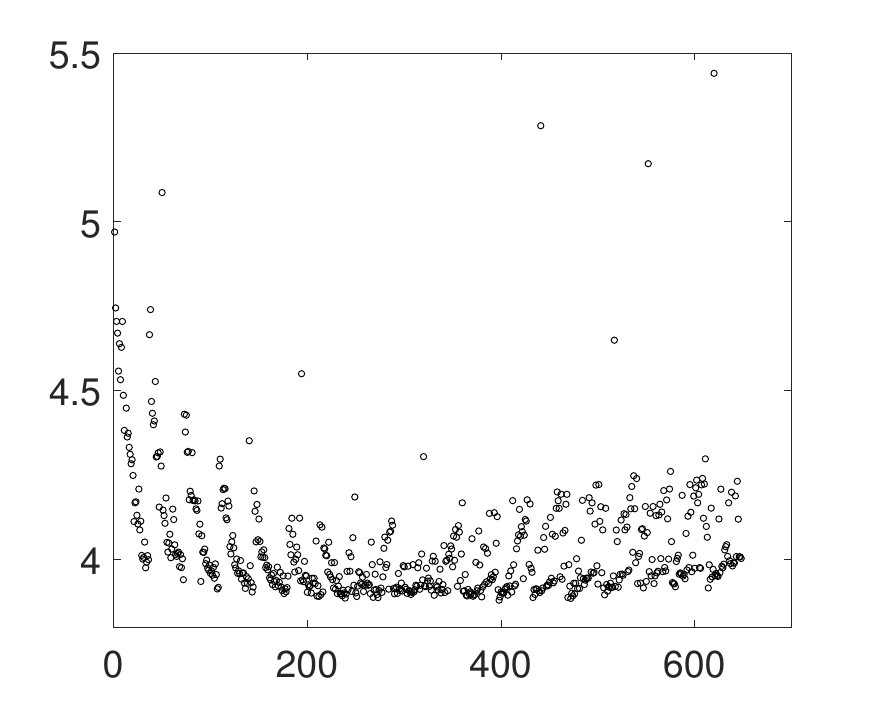}
        \caption{Graph of $j\mapsto \curJ^\SNNpp(\bftheta_\pptrial^j)$.}
        \label{figure10a}
    \end{subfigure}
    \begin{subfigure}[b]{0.30\textwidth}
        \centering
        \includegraphics[width=\textwidth]{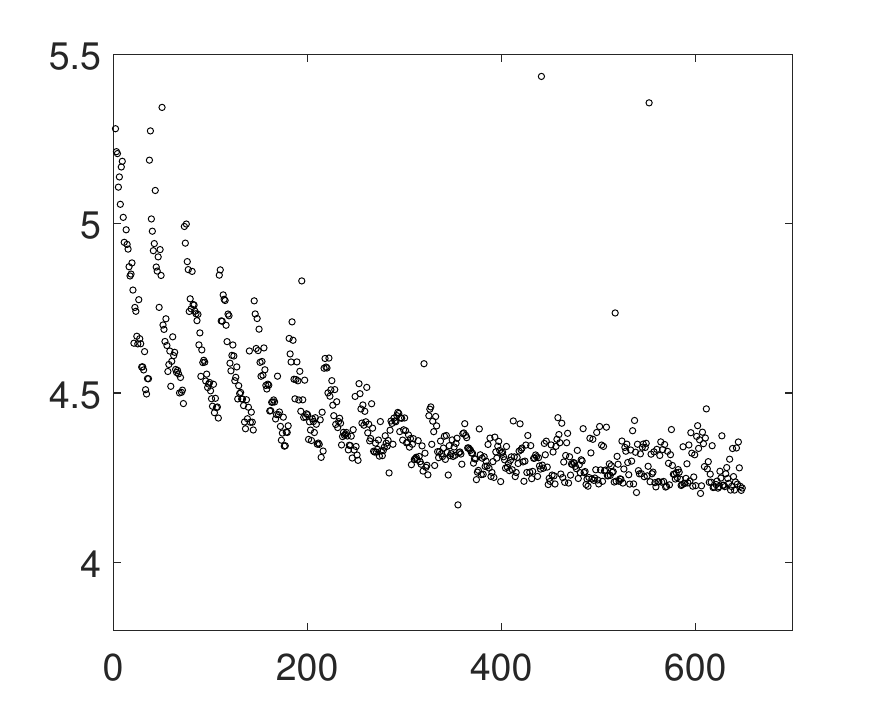}
        \caption{Graph of $j\mapsto \widehat{\rm{CE}}(\bftheta_\pptrial^j)$.}
        \label{figure10b}
    \end{subfigure}
    \begin{subfigure}[b]{0.30\textwidth}
        \centering
        \includegraphics[width=\textwidth]{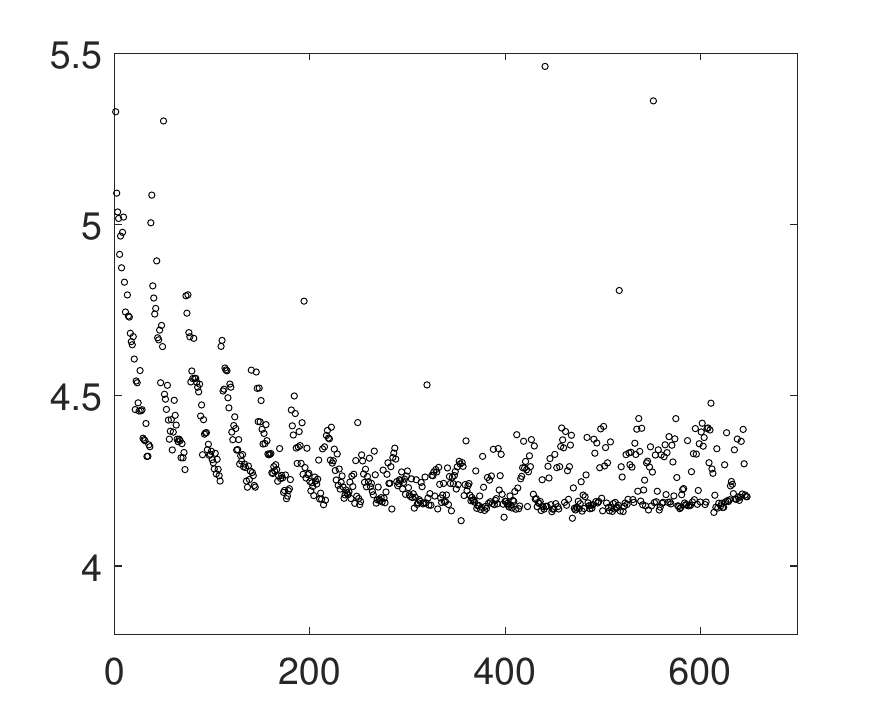}
        \caption{Graph $j\mapsto \curJ^\SPINNpp(\bftheta_\pptrial^j\, ; \alpha_0)$.}
        \label{figure10c}
    \end{subfigure}
    \caption{Case~2: $\log_{10}$-scale plots of the graphs of functions $\curJ^\SNNpp$, $\widehat{\rm{CE}}$, and
    $\curJ^\SPINNpp(\cdot, ; \alpha_0)$ on the grid points $j\in\{1,\ldots, 648\}$.}
    \label{figure10}
\end{figure}
\noindent Solving Eq.~\eqref{eq6.8} yields $\zeta_s^\optp = 0.04714$ and $c_\targetp^\optp = 0.60588$.  We have $J_{\min ,\ptrial}  = {\curJ}^{\SPINNpp}(\bftheta_\ptrial^\optp\, ;\alpha_0) = 1.3567\times 10^{4}$.
\paragraph{(ii) Local optimization using the projected gradient descent with Adam updates (PGD-Adam)}.
The parameters for the PGD-Adam local optimizer are the same as those used in Section~\ref{Section6.6}-(iv), except for the PGD-Adam step size, which is set to $0.1$.
Fig.~\ref{figure11a} shows $\log_{10}(\curJ^\SPINNpp(n))$ as a function of the PGD-Adam iteration number $n$, where $\curJ^\SPINNpp(n)$ denotes the value of $\curJ^\SPINNpp$ at iteration $n$.
Figs.~\ref{figure11b} and \ref{figure11c} show the graphs $k\mapsto \beta_{k,n_h}^{\, (1),\opt}$ and $k\mapsto \beta_{k,n_h}^{\, (2),\opt}$ for $k\in\{1,\ldots, 32\}$.
\begin{figure}[H]
    \centering
    \begin{subfigure}[b]{0.30\textwidth}
        \centering
        \includegraphics[width=\textwidth]{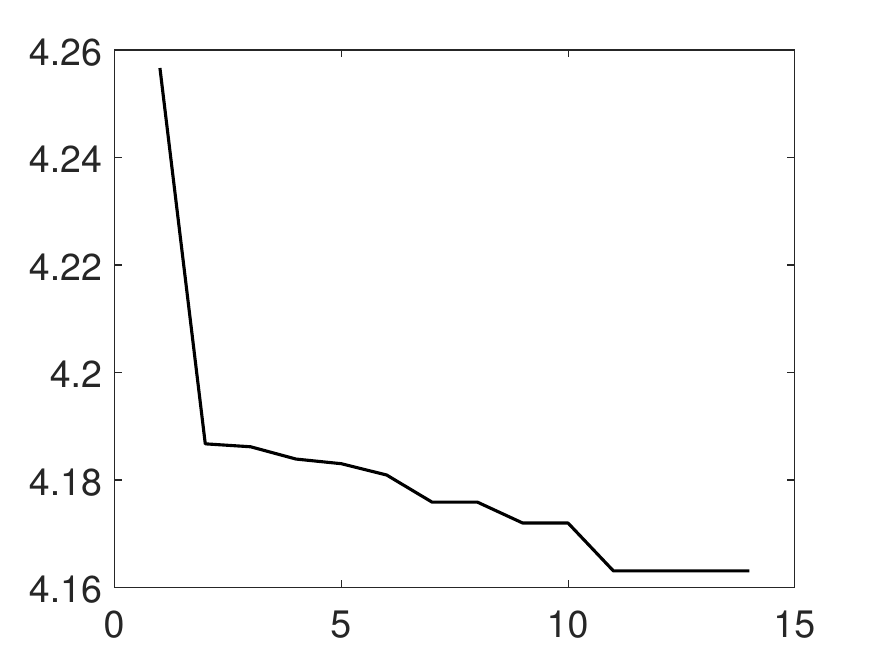}
        \caption{Graph of $n\mapsto \log_{10}(\curJ^\SPINNpp(n))$.}
        \label{figure11a}
    \end{subfigure}
    \begin{subfigure}[b]{0.30\textwidth}
        \centering
        \includegraphics[width=\textwidth]{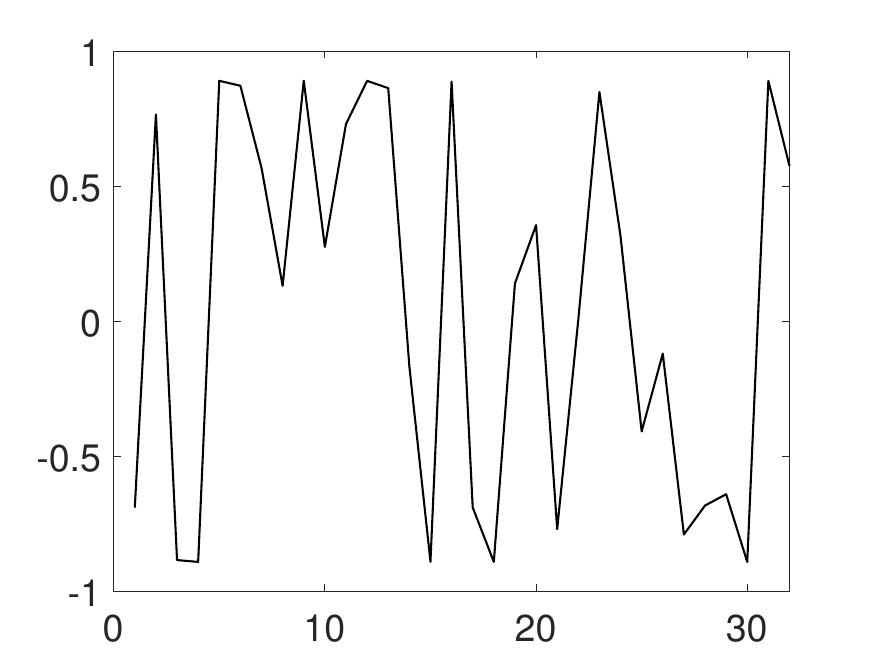}
        \caption{Graph of $k\mapsto \bfbeta_{k,n_h}^{\, (1),\optp}$.}
        \label{figure11b}
    \end{subfigure}
    \begin{subfigure}[b]{0.30\textwidth}
        \centering
        \includegraphics[width=\textwidth]{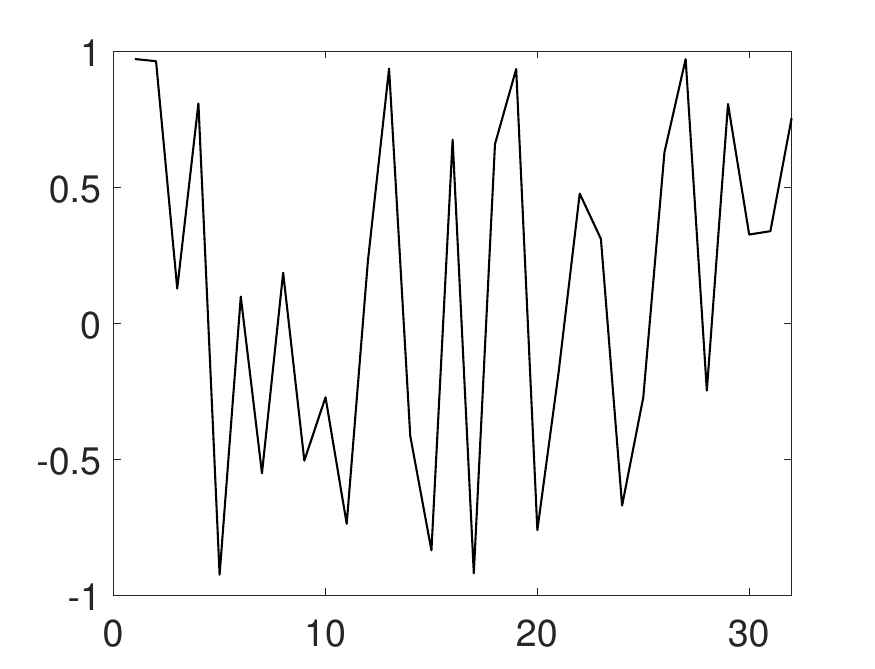}
        \caption{Graph $k\mapsto \bfbeta_{k,n_h}^{\, (2),\optp}$.}
        \label{figure11c}
    \end{subfigure}
    \caption{Case~2: Plots of the functions $n\mapsto \log_{10}(\curJ^\SPINNpp(n))$,
    $k\mapsto \bfbeta_{k,n_h}^{\, (1),\optp}$, and
    $k\mapsto \bfbeta_{k,n_h}^{\, (2),\optp}$ on $\{ 1,\ldots, 32 \}$.}
    \label{figure11}
\end{figure}
\noindent Table~\ref{table5} presents the  results obtained with the local refinement and those obtained with the trial-based  solution for $\alpha_0 = 0.35428$.
The observed overfitting ratio is $Cr_o = 0.00615$, which is well below the $0.10$ threshold and therefore provides strong evidence against overspecialization: the generalization gap between the training and test datasets is negligible. The scaled score
$\CRPS^\scaledpp_\testp = \CRPS^\SNNpp_\testp / \CRPS^\trainingpp_\testp =  1.3478$ corresponds to a relative increase of about $35\%$ in CRPS with respect to the training-conditional nonparametric reference, that is, a nontrivial loss in accuracy of estimated probability distributions in this mismatch regime. Overall, these values support the interpretation that the current setting generalizes well but incurs a substantial CRPS penalty, consistent with a moderate SBVP mismatch at $\alpha_0=0.35428$.
\begin{table}[h]
  \caption{Case~2: Evaluation-criteria values of the reference configuration ($N=200$, $n_h=32$, and $n_\psim=100$).}
  \label{table5}
  \begin{center}
  \resizebox{\textwidth}{!}{%
  \begin{tabular}{|c|c|c|c|c|c|c|c|c|c|}
    \hline
      Eqs. & \eqref{eq3.2}&\eqref{eq3.1}&\eqref{eq3.3}&\eqref{eq3.5}&\eqref{eq3.5}&\eqref{eq3.6}&\eqref{eq3.9}&\eqref{eq3.9}&\eqref{eq3.10}\\
    \hline
           & $\scriptstyle\NLL_\SNNp^\opt$ & $\scriptstyle\NLL_\SNNp^\test$  &
           $\scriptstyle\criter_o$ & $\scriptstyle\criter_{\bfalpha}$ &  $\scriptstyle\criter_{\bfbeta}$ & $\scriptstyle\criter_{\IOCp}$ & $\scriptstyle\CRPS^\SNNpp_\testp$ & $\scriptstyle\CRPS^\trainingpp_\testp$ & $\scriptstyle\CRPS^\scaledpp_\testp$\\
    \hline
      $\TRIAL$  & 8332.1 & 2189.9 & 0.24217 & 0.08323  & 0.07075 & 0.2703 & 0.05872 & 0.04139 & 1.4185 \\
      $\PGD$    & 8554.8 & 1700.5 & 0.00615 & 0.08557  & 0.07109 & 0.2657 & 0.05579 & 0.04149 & 1.3478 \\
    \hline
  \end{tabular}
  } 
\end{center}
\end{table}
\noindent As in Table~\ref{table2} devoted to Case~1,
we examine the same three selected output components
(indices $k \in\{14,35,41\}$) and the same test-dataset input realizations (indices $i\in\{6,24\}$) for Case~2.
Table~\ref{table6} reports the confidence interval (see Eq.~\eqref{eq3.4})
$\CI^\SNNpp_k(\xx^i_\testp) = [\alpha^\SNNpp_k(\xx^i_\testp) \, , \beta^\SNNpp_k(\xx^i_\testp)]$ estimated by the optimal SNN
together with $\CI^\trainingpp_k(\xx^i_\testp) = [\alpha^\trainingpp_k(\xx^i_\testp) \, , \beta^\trainingpp_k(\xx^i_\testp)]$ estimated from the training and test datasets.
\begin{table}[h]
  \caption{Case 2: For $k = 14$, $35$, and $41$, conditional confidence intervals $\CI^\trainingpp_k(\xx^i_\testp)$ and $\CI^\SNNpp_k(\xx^i_\testp)$ for  inputs $i\in\{6,24\}$ from the test dataset, with $N=200$, $n_h=32$, and $n_\psim=100$.}
  \label{table6}
  \begin{center}
  \begin{tabular}{|c||c|c||c|c|}
    \hline
        $\scriptstyle k$ & $\scriptstyle \CI^\trainingpp_k(\xx^6_\testp)$  & $\scriptstyle \CI^\SNNpp_k(\xx^6_\testp)$
         & $\scriptstyle \CI^\trainingpp_k(\xx^{24}_\testp)$  & $\scriptstyle \CI^\SNNpp_k(\xx^{24}_\testp)$ \\
    \hline
     14  & $[-0.7260,-0.4726]$   & $[-0.7333,-0.4249]$ & $[-0.6375,-0.4233]$   & $[-0.7347,-0.4264]$\\
     35  & $[ 0.8761, 1.2625]$   & $[ 0.7741, 1.2307]$ & $[ 0.8054, 1.0856]$   & $[ 0.7620, 1.2102]$\\
     41  & $[ 1.1512, 1.6467]$   & $[ 1.0590, 1.5940]$ & $[ 1.0288, 1.4066]$   & $[ 0.9874, 1.5006]$\\
    \hline
  \end{tabular}
\end{center}
\end{table}
\noindent For $k \in\{14,35,41\}$ and $i\in\{6,24\}$, Figs.~\ref{figure7} and \ref{figure8}  show the probability density functions
$y\mapsto p_{YY_k\vert\XX}^\SNNpp(y \vert\xx_\testp^i;\bftheta^\optp)$ (estimated with the optimal SNN),
which are compared with the densities $y\mapsto p_k^\trainingpp(y \vert\xx_\testp^i)$ (estimated from the training dataset).
\begin{figure}[H]
    \centering
    \begin{subfigure}[b]{0.30\textwidth}
        \centering
        \includegraphics[width=\textwidth]{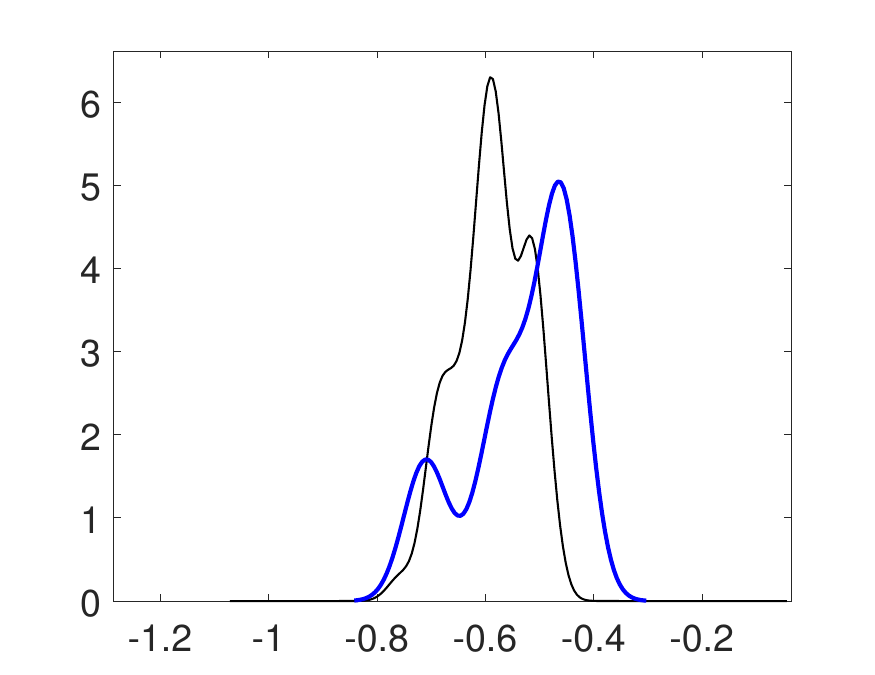}
        \caption{Graphs of the probability density functions for $k=14$.}
        \label{figure12a}
    \end{subfigure}
    \begin{subfigure}[b]{0.30\textwidth}
        \centering
        \includegraphics[width=\textwidth]{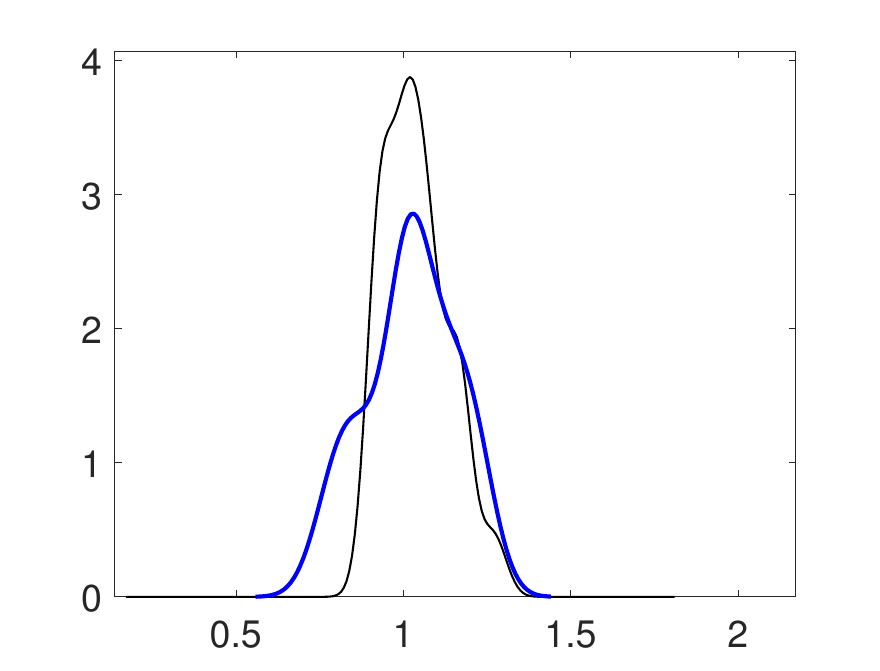}
        \caption{Graphs of the probability density functions for $k=35$.}
        \label{figure12b}
    \end{subfigure}
    \begin{subfigure}[b]{0.30\textwidth}
        \centering
        \includegraphics[width=\textwidth]{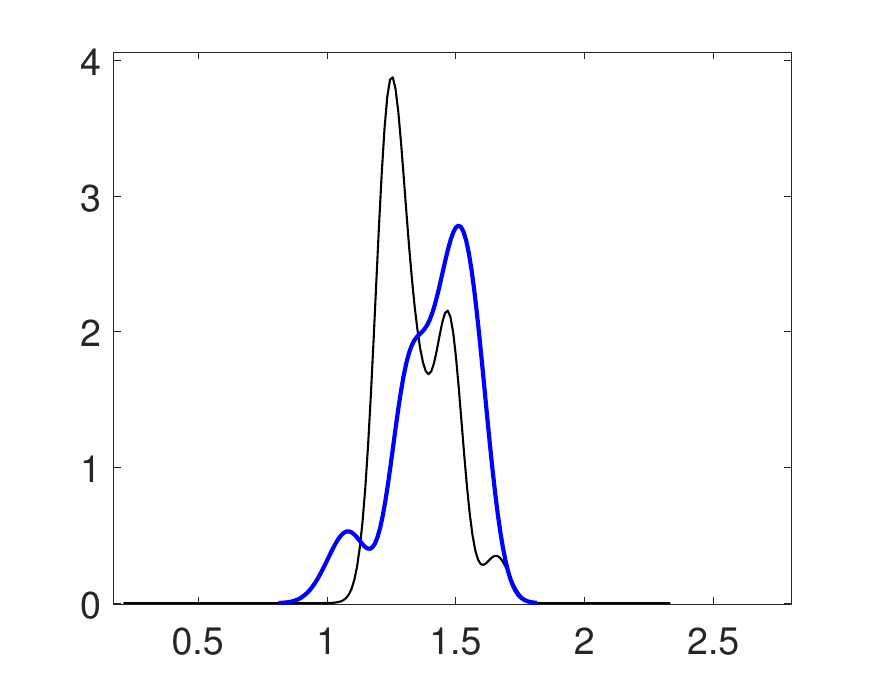}
        \caption{Graphs of the probability density functions for $k=41$.}
        \label{figure12c}
    \end{subfigure}
    \caption{Case 2: Plots of the probability density functions $y\mapsto p_{YY_k\vert\XX}^\SNNpp(y \vert\xx_\testp^6;\bftheta^\optp)$
                   (thin black line) and  $y\mapsto p_k^\trainingpp(y \vert\xx_\testp^6)$ (thick blue line), which should not be expected to closely align.}
    \label{figure12}
\end{figure}
\begin{figure}[H]
    \centering
    \begin{subfigure}[b]{0.30\textwidth}
        \centering
        \includegraphics[width=\textwidth]{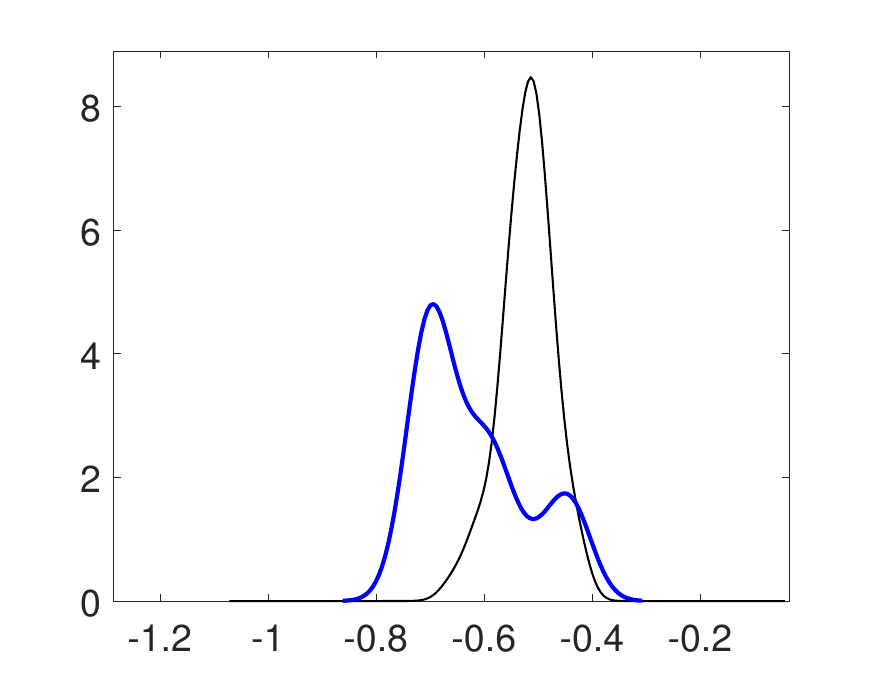}
        \caption{Graphs of the probability density functions for $k=14$.}
        \label{figure13a}
    \end{subfigure}
    \begin{subfigure}[b]{0.30\textwidth}
        \centering
        \includegraphics[width=\textwidth]{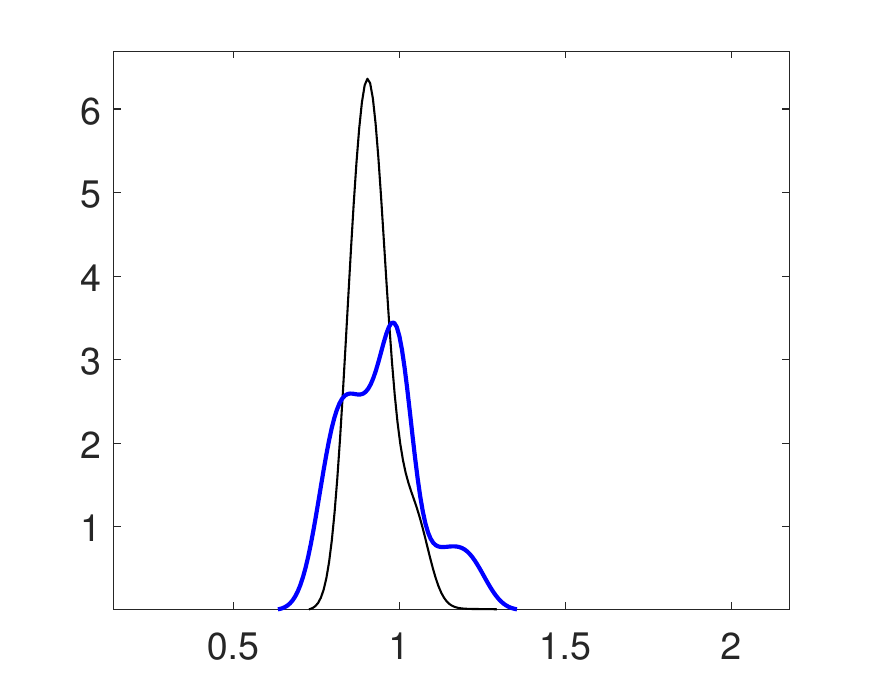}
        \caption{Graphs of the  probability density functions for $k=35$.}
        \label{figure13b}
    \end{subfigure}
    \begin{subfigure}[b]{0.30\textwidth}
        \centering
        \includegraphics[width=\textwidth]{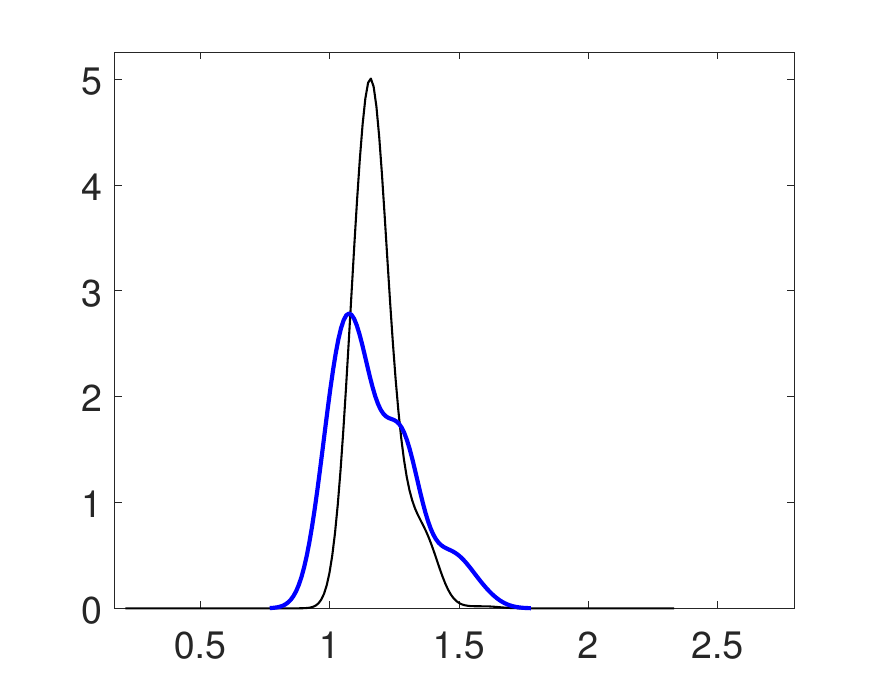}
        \caption{Graphs of the  probability density functions for $k=41$.}
        \label{figure13c}
    \end{subfigure}
    \caption{Case 2: Plots of the probability density functions $y\mapsto p_{YY_k\vert\XX}^\SNNpp(y \vert\xx_\testp^{24};\bftheta^\optp)$
                   (thin black line) and  $y\mapsto p_k^\trainingpp(y \vert\xx_\testp^{24})$ (thick blue line), which should not be expected to closely align.}
    \label{figure13}
\end{figure}
\noindent Since $\alpha_0 = 0.35428$ is not close to $1$ (in contrast to Case~1), the "train" and the "SNN" quantities  in Tables~\ref{table5} and \ref{table6} and  the probability density functions in Figs.~\ref{figure12} and  \ref{figure13} should not be compared with the expectation of close agreement.
\paragraph{(iii) Sensitivity analysis of $\alpha_0$ with respect to the number of neurons}
For this mismatch case, we focus on the architectural sensitivity to the number of neurons $N$.
To isolate this effect, we fix $n_h=32$ and $n_\psim =100$ and, for each $N$, we compute $\alpha_0(N)$ using the same estimation protocol as the one used in (i)-(ii). The parameter $\alpha_0$, which controls the alignment term in SPINN, is computed at each point in the convergence analysis. Consequently, the reported dependence $N \mapsto \alpha_0(N)$ should be interpreted as a relative sensitivity analysis at a fixed Monte Carlo budget, rather than as an attempt to eliminate Monte Carlo error in the absolute value of $\alpha_0$.
We therefore restrict the convergence analysis to $N \in \{130, 200, 250, 300\}$. Note that for $N=130$, the percentile threshold $\tau_\pprc$ (see Eq.~\eqref{eq4.15}) was reduced from $60\%$ to  $50\%$, increasing $n_\pinc$ from $3414$ to $4267$, in order to maintain sensitivity to the inputs given that $n_\pint= 48$ is really small.
Tables~\ref{table7} and \ref{table8} report the evaluation metrics for the PGD-Adam optimal solution as a function of $N$.
{\color{black}
It can be seen that $\alpha_0$ belongs to $[0.32\, , 0.44]$ over the tested neuron counts. Within the fixed normalized representation, these values keep the data-fidelity and alignment contributions at comparable numerical scales; no coordinate-invariant discrepancy interpretation is assigned to this range.
}
\begin{table}[h]
  \caption{Case~2: Neuron-architecture parameters and $\alpha_0$ values for the alignment term, as a function of $N\in\{130, 200, 250, 300\}$,  for $n_h =32$ and $n_\psim =100$.}
  \label{table7}
  \begin{center}
  \small
  \setlength{\tabcolsep}{6pt}        
  \renewcommand{\arraystretch}{1.0}  
  \begin{tabular}{|c|c|c|c|c|}
    \hline
      Eqs. & \eqref{eq4.12}&\eqref{eq4.24}&\eqref{eq4.33}&\eqref{eq6.7}\\
    \hline
      $ N$ & $ n_\pint$ & $ n_\pinc$ & $ n_b$ &  $\alpha_0$ \\
    \hline
      130 & 48  & 4267     & 32  & 0.3247 \\
      200 & 118 & 8053     & 80  & 0.3543 \\
      250 & 168 & 12607    & 114 & 0.4372 \\
      300 & 218 & 18120    & 148 & 0.3621 \\
    \hline
  \end{tabular}
  \end{center}
\end{table}
\begin{table}[h]
  \caption{Case~2: Evaluation-criteria values for the PGD-Adam optimal solution as a function of $N\in\{130, 200, 250, 300\}$,  for $n_h= 32$ and $n_\psim =100$.}
  \label{table8}
  \begin{center}
  \resizebox{\textwidth}{!}{%
  \begin{tabular}{|c|c|c|c|c|c|c|c|c|c|}
    \hline
      Eqs. & \eqref{eq3.2}&\eqref{eq3.1}&\eqref{eq3.3}&\eqref{eq3.5}&\eqref{eq3.5}&\eqref{eq3.6}&\eqref{eq3.9}&\eqref{eq3.9}&\eqref{eq3.10}\\
    \hline
      $N$  & $\scriptstyle\NLL_\SNNp^\opt$ & $\scriptstyle\NLL_\SNNp^\test$  &
           $\scriptstyle\criter_o$ & $\scriptstyle\criter_{\bfalpha}$ &  $\scriptstyle\criter_{\bfbeta}$ & $\scriptstyle\criter_{\IOCp}$ & $\scriptstyle\CRPS^\SNNpp_\testp$ & $\scriptstyle\CRPS^\trainingpp_\testp$ & $\scriptstyle\CRPS^\scaledpp_\testp$\\
    \hline
      130  & 8155.2 & 1653.7 & 0.01372 & 0.08581  & 0.07546 & 0.2705 & 0.05332 & 0.04139 & 1.2881 \\
      200  & 8554.8 & 1700.0 & 0.00615 & 0.08557  & 0.07109 & 0.2657 & 0.05579 & 0.04139 & 1.3478 \\
      250  & 7923.0 & 1570.2 & 0.00918 & 0.08086  & 0.06821 & 0.2593 & 0.04967 & 0.04139 & 1.2000 \\
      300  & 8984.9 & 1752.8 & 0.02524 & 0.09526  & 0.07900 & 0.2805 & 0.05970 & 0.04139 & 1.4423 \\
    \hline

  \end{tabular}
  } 
\end{center}
\end{table}
\paragraph{(iv) Additional comments on the results obtained for the Case~2}
{\color{black}
Tables~\ref{table7} and \ref{table8} show that the scale-matching factor $\alpha_0$ remains $O(1)$ over the tested neuron counts, with values in $[0.32,\,0.44]$. In the adopted normalized coordinates, the SBVP-alignment and data-fidelity contributions therefore remain numerically commensurate, and the prescribed choice $\alpha_{\rm{top}}=\alpha_0$ defines the mixture compromise analyzed in Proposition~\ref{Proposition1}.
}
Focusing on the completed runs, the generalization diagnostic stays well below the usual $0.10$ threshold
(\textit{e.g.}, $\criter_o \ll 0.10$ for $N\in\{130,\,200,\,250,\,300\}$), indicating that increasing architectural capacity does not trigger
a drift toward over-specialized stochastic configurations.
In addition, the calibration diagnostics $(\criter_{\bfalpha},\criter_{\bfbeta},\criter_{\IOCp})$ exhibit limited variations across $N$,
which is consistent with stable uncertainty-quantification behavior under misspecification.
In terms of the joint and marginal proper scores, the quantities $\NLL_\SNNp^\test$ and $\CRPS^\scaledpp_\testp$ can be interpreted jointly as follows: over the tested range, increasing $N$ does not produce monotone decreases of these scores; an intermediate capacity (here $N=250$) yields the smallest combined NLL--CRPS discrepancy, whereas increasing to $N=300$ can increase both the NLL and the CRPS gap despite stable generalization diagnostics.
\section{Discussion}
\label{Section7}
{\color{black}
\subsection{Positioning, target problem class, and scope of practical claims}
\label{Section7.1}
SPINN is positioned as a probabilistic learning framework for SBVPs with random operators. It targets problems in which deterministic or controlled inputs produce random, potentially high-dimensional and strongly non-Gaussian observables; only one realization of the output may be available for each observational input; and an SBVP simulator is available without requiring differentiation, solver modification, state-field reconstruction, or PDE-residual evaluation. The formulation also explicitly covers model-misspecified settings in which the probability distribution induced by the nominal SBVP does not coincide with the data-generating conditional probability distribution.
The construction has two distinct roles. First, the non-feedforward SNN defines an intrinsically stochastic, distribution-valued hypothesis class whose random topology, connectivity, and weights are controlled by a low-dimensional generative hyperparameter. Second, SPINN structurally couples this hypothesis class to the SBVP random operator through the shared Gaussian germ and the cross-entropy alignment term. In the SBVP-consistent regime, the alignment mainly acts as a finite-sample regularization and variance-reduction mechanism. In the SBVP-misspecified regime, it implements an explicit compromise between data fidelity and the conditional probability distribution induced by the nominal SBVP.
The complexity of the latent-field, point-process, random-connectivity, fixed-point, Monte Carlo, and density-estimation construction is therefore constitutive of the proposed stochastic-operator representation. It is not introduced solely to improve the score of a conventional deterministic regression model. In the reference configuration, the probability law of an SNN with $N=200$ neurons is governed by $n_{\bftheta}=3+2n_h=67$ optimized generative hyperparameters for $n_h=32$, while the $n_b=80$ bias coefficients are obtained through the least-squares construction of Section~\ref{Section4.6}, rather than through independent optimization of all network weights.
The numerical study establishes feasibility in a regime involving an $80$-dimensional non-Gaussian observable, $n_d=200$ training pairs, and one observed realization per input. The reported proper scores and calibration diagnostics show that the resulting conditional probability distributions remain close to the data-based benchmark in the SBVP-consistent case and remain controlled under deliberate SBVP misspecification. These results do not constitute a claim of universal superiority in predictive accuracy, wall-clock efficiency, scalability, or simulator-sample economy relative to all simpler probabilistic surrogates.
{\color{black}
Accordingly, the numerical example is a validation of the new stochastic-operator construction within its declared target class. Establishing empirical dominance over unrelated probabilistic surrogate families is a different comparative objective and is not among the claims of the present paper.
}
The data economy considered here concerns the number of physical observations, not the number of SBVP evaluations used in the alignment term. Similarly, the present network-size studies demonstrate feasibility over the investigated range but do not constitute a general scalability result. Simpler conditional density estimators can be preferable when only point predictions, low-order moments, or an unconstrained predictive probability distribution are required. SPINN is intended for the more specific setting in which architecture-level stochasticity and structural probabilistic coupling to the random physical operator are themselves part of the modeling objective. Accordingly, performance is assessed in observation space using strictly proper scoring rules (test NLL and CRPS), calibration discrepancies, and the generalization index $\criter_o$ (Section~\ref{Section3}).
}
{\color{black}
\subsection{Computational cost, online prediction, and scope of cross-paradigm comparisons}
\label{Section7.2}
The SPINN computational workflow has distinct offline-training and online-prediction stages. The SBVP ensembles
$\{\bfy^\ell(\xx^i),\ell=1,\ldots,n_\psim\}$ used in the cross-entropy term do not depend on $\bftheta$ and can be generated once for the $n_d$ training inputs and then reused throughout hyperparameter selection and optimization. The repeated cost during optimization is therefore associated with the SNN construction and the evaluation of its predictive conditional probability distributions.
Let $n_e$ denote the number of active edges in the sparse random neural graph, let $\hat n = N - n_\pin$, and let $n_{\rm fp}$ denote the average number of fixed-point iterations. For one value of $\bftheta$, random-architecture construction includes the reduced spectral representation of the latent field, evaluation of the Poisson intensity on the $M$ candidate points, selection of $N$ neuron locations, construction of the geodesic connectivity and sparsity mask, and generation of $n_\psim$ sparse random weight matrices. After the mesh-dependent geometric quantities have been precomputed, the non-spectral parts of this construction scale at most quadratically with $N$, together with a contribution of order $\curO (n_\psim n_e)$ for generating the weight ensemble.
For $n_x$ input points, the iterative sparse-matrix contribution to the generation of the SNN output ensemble scales as
$\curO(n_x n_\psim n_{\rm fp} n_e)$, in addition to the sparse linearized initialization used by the fixed-point solver. Direct Gaussian-KDE evaluation of the data NLL scales as $\curO (n_x n_\psim n_\pout)$, whereas the Monte Carlo cross-entropy estimator, which evaluates an SNN KDE with $n_\psim$ centers at $n_\psim$ SBVP-generated pseudo-observations, scales as
$\curO(n_x n_\psim^2 n_\pout)$. The latter term is the dominant density-estimation contribution in the present implementation.

For the reference configuration, $n_d = 200$, $n_\psim = 100$, $n_\pout = 80$, and $n_e = 8067$. One evaluation of the complete training objective therefore involves $n_d n_\psim = 20\,000$ nonlinear SNN solves. The direct cross-entropy KDE contains
$ n_d n_\psim^2 n_\pout = 200 \times 100^2 \times 80 = 1.6\times 10^8$ scalar kernel-component contributions. The final trial grid contains $n_\pgrid = 324$ points. In the PGD--Adam refinement, the optimized vector $(\bfbeta_{n_h}^{(1)},\bfbeta_{n_h}^{(2)})$ has dimension $2 n_h = 64$. Consequently, the central finite-difference gradient requires $2 (2 n_h) = 128$ objective evaluations per gradient, before the additional objective evaluations that can be introduced by the backtracking safeguard. This decomposition identifies the direct KDE implementation and the finite-difference gradient as the principal computational bottlenecks. The current implementation prioritizes methodological validation over large-scale computational optimization; accelerated KDE evaluation and differentiated or adjoint gradients are natural routes toward larger-scale implementations.
Component-wise wall-clock timings were not recorded during the calculations in a form that separates architecture generation, fixed-point solution, KDE evaluation, and optimization overhead. Retrospective hardware-specific timings would consequently not provide a reliable or reproducible cost decomposition. For this reason, the present paper reports operation counts and algorithmic scaling rather than claiming implementation-independent wall-clock performance.
After $\bftheta^\optp$ has been learned, neither SBVP simulation nor hyperparameter reoptimization is required for a new input. The random architecture and the ensemble of sparse random weight matrices can be constructed and stored offline. For one new input, distribution-valued prediction then requires $n_\psim$ fixed-point solutions; this corresponds to $100$ nonlinear SNN solves in the reference configuration. KDE is required only when an explicit value of the predictive probability density function is requested. Samples, empirical moments, quantiles, confidence intervals, and empirical proper scores can be computed directly from the output ensemble.

No numerical speedup relative to another surrogate is claimed. A speedup depends on the requested probabilistic output, the prescribed accuracy, the number of generated samples, the numerical implementation, and the hardware. In particular, comparing a distribution-valued prediction based on $n_\psim$ stochastic realizations with a single deterministic forward evaluation would not define a homogeneous computational metric.
The absence of cross-paradigm benchmarks is also deliberate. A fixed-topology stochastic network, a polynomial-chaos expansion, a conditional normalizing flow, and a probabilistic neural operator define different hypothesis classes, stochastic coordinates, capacity controls, and training objectives. Their use of the SBVP information is also different from the shared-germ structural coupling introduced here. A controlled comparison would require method-specific choices and tuning of topology, expansion order, latent dimension, regularization, conditioning variables, and model size, together with repeated realizations and initializations. Such an investigation is valuable but constitutes a separate benchmarking study.
{\color{black}
A single additional ``simple'' probabilistic surrogate would therefore not provide a neutral ablation of the proposed shared-germ SBVP--SNN coupling, because both the hypothesis class and the use of physical information would change simultaneously. The nonparametric conditional estimator already constructed from the training data and used in $\CRPS^\scaledpp_\testp$ provides the data-based probabilistic reference for the reported predictive-distribution scores without changing the scope of the study.
}
The present numerical claims are therefore restricted to feasibility, calibration, generalization, and proper-score performance of the proposed SBVP--SNN coupling. The conditional nonparametric estimator constructed from the same training data provides the internal probabilistic reference used in $\CRPS^\scaledpp_\testp$. This benchmark evaluates the quality of the predicted conditional probability distributions without reproducing the proposed architecture-level coupling. SPINN is not claimed to be universally more accurate or less expensive than simpler probabilistic surrogates; its contribution is to provide an additional stochastic-neural modeling route in which the random physical operator and the random architecture are structurally coupled through the same aleatory germ.
}
\subsection{Comments on CRPS levels in the Case~1 and Case~2 regimes}
\label{Section7.3}
The CRPS-based test scores, defined in Eq.~\eqref{eq3.9}, were introduced in Section~\ref{Section3} and interpreted in Section~\ref{Section3.5}. Here we provide additional comments on the levels reported in Sections~\ref{Section6.6}-\ref{Section6.7} for Case~1 (SBVP-consistent datasets, meaning that the training and test datasets almost surely satisfy the SBVP up to numerical discretization) and Case~2 (SBVP-misspecified datasets, meaning that the training and test datasets do not almost surely satisfy the SBVP).

These scores provide a strictly proper summary of probabilistic performance of the conditional probability prediction, since the CRPS is a strictly proper scoring rule that jointly rewards calibration and sharpness of the univariate predictive conditional probability distributions \cite{Matheson1976,Hersbach2000,Gneiting2007}. In the numerical configuration of Section~\ref{Section6}, with $n_d = 200$ training pairs, $n_\testp = 40$ test pairs, $N=200$, $n_\pin = 2$ inputs, $n_\pout = 80$ outputs, $n_h = 32$, and $n_\psim = 100$ Monte Carlo realizations per test input, the global test score is
$\CRPS^\scaledpp_\testp = {\CRPS^\SNNpp_\testp}/{\CRPS^\trainingpp_\testp} = 1.0630$ (i.e., $6.3\%$) for Case~1 (decreasing to $1.018$ for $N=300$, i.e., $1.8\%$), and $\CRPS^\scaledpp_\testp = 1.3478$ for Case~2.
{\color{black}
The latter corresponds to an increase of approximately $35\%$ in CRPS relative to the training-conditional nonparametric benchmark, indicating a loss of marginal accuracy of the estimated probability distributions in this mismatch regime. The Case~2 misspecification simultaneously modifies the supports of the two input variables and the correlation lengths of $U_c$. Consequently, this degradation quantifies the effect of the combined SBVP misspecification and cannot be attributed solely to the correlation-length perturbation.
}
{\color{black}
The value $\alpha_0 = 0.3543$ is the scale-matching coefficient used in the normalized training objective and is not interpreted here as a coordinate-invariant discrepancy measure.
}
\begin{enumerate}
\item First, the random operator encoded by the SNN is strongly constrained at the architectural level. The total number of neurons is fixed to $N = 200$, with $n_\pint = 118$ hidden units for an $80$-dimensional output, and the random connectivity matrix is sparsified so that only about $8067$ inter-node connections remain active for Case~1 and $8053$ for Case~2, corresponding to an overall sparsification rate of about $60\,\%$ and a sparsification of about $28\,\%$ for edges incident to the input neurons. The entire probability distribution of this random architecture is parameterized by a low-dimensional hyperparameter $\bftheta\in \curC_{\bftheta} \subset \RR^{n_{\bftheta}}$ with $n_{\bftheta} = 3 + 2 \, n_h  = 67$ for the present choice $n_h = 32$ (note that $n_b = 80$ bias coefficients enter the least-squares solve).
    This is markedly different from a conventional feedforward network with $\curO(10^4)$ independently trained weights, and naturally induces an approximation error between the SNN predictive conditional probability distribution and the empirical conditional probability distribution of the SBVP outputs.
\item Second, even when $\bftheta$ is optimized for a given $\alpha_{\rm{top}}$ as described in Section~\ref{Section6.5}, the information available to identify the conditional probability distributions remains intrinsically limited by finite-sample effects. For each of the $n_d = 200$ training inputs, only a single realization of the random SBVP operator is observed, and the two-dimensional input space is sampled at finitely many locations. The benchmark $\CRPS^\trainingpp_\testp$ is therefore itself a finite-sample quantity, obtained through nonparametric estimation of a probability density function based on one observation per input. For the SBVP-consistent regime (Case~1), it would therefore be unrealistic to expect that a structurally constrained parametric random operator with $147$ degrees of freedom can, on a test dataset of size $n_\testp = 40$, systematically match or outperform the CRPS level of a purely data-driven estimator that directly exploits the full training sample without architectural constraints.
\item Third, the SNN predictive conditional probability distribution is evaluated by Monte Carlo using $n_\psim = 100$ independent realizations of the random operator at each test input. The CRPS functional in Eq.~\eqref{eq3.9} is computed by replacing the predictive conditional probability distribution with its empirical distribution based on these $n_\psim$ samples. Even if the SNN predictive conditional probability distribution coincided with the data-based conditional probability distribution, the empirical CRPS computed from $n_\psim=100$ samples would still exhibit a sampling fluctuation of order $n_\psim^{-1/2}$. This unavoidable Monte Carlo variability compounds the structural approximation error induced by the finite size and sparsity of the network.
\end{enumerate}
For Case~1, beyond the architectural and Monte Carlo effects discussed above, it is important to recall that the data-based benchmark $\CRPS^\trainingpp_\testp$ is itself a finite-sample quantity obtained from a nonparametric estimate of the conditional distribution of the SBVP outputs given the inputs. In practice, the conditional samples used to approximate this empirical probability distribution are generated by the multivariate $k$-nearest neighbor conditional probability density function estimator of Mack and Rosenblatt~\cite{Mack1979}, as described in Section~\ref{Section3}. Although this estimator is consistent under suitable regularity assumptions in the asymptotic regime $n_d \to \infty$, it is here deployed in a demanding high-dimensional configuration with an $80$-dimensional output space and only $n_d = 200$ training pairs.
In this setting, the $k$-NN procedure necessarily balances bias and variance through the choice of $k$ and reconstructs conditional distributions by borrowing information from neighboring inputs in the two-dimensional input space, under an implicit smoothness hypothesis. Consequently, $\CRPS^\trainingpp_\testp$ should not be interpreted as an oracle CRPS associated with the true conditional probability distribution of the (possibly misspecified) SBVP model, but rather as the performance of a specific nonparametric estimator applied to a single finite training sample with one SBVP realization per input. The reported ratio $\CRPS^\scaledpp_\testp$ must therefore be understood as the discrepancy between a random operator with a $147$-dimensional parameterization, consisting of $67$ generative hyperparameters and $80$ bias coefficients constructed by least squares, and a flexible nonparametric benchmark in a finite-sample, high-dimensional regime, rather than as evidence of an intrinsic limitation of the SNN framework.

Taken together, the three effects described above, (1) a strongly constrained random architecture with $N = 200$ neurons and a $147$-dimensional parameterization, consisting of $67$ generative hyperparameters and $80$ bias coefficients constructed by least squares, (2) a finite training sample with a single SBVP realization per input, and (3) a Monte Carlo approximation of the predictive conditional probability distributions based on $n_\psim = 100$ samples—provide a coherent explanation for the observed level of $\CRPS^\scaledpp_\testp$ in the finite-parameter SBVP-consistent regime. The resulting scores remain small and are of the same order as those reported in recent work on probabilistic neural operators for parametric PDEs and related uncertainty-quantification problems, where strictly proper scoring rules are used to assess predictive conditional probability distributions over function-valued outputs \cite{Berrisch2024,Mouli2024,Pacchiardi2024}.
In this sense, for Case~1, the SNN-based random operators achieve a CRPS level that is fully consistent with the combined effects of architectural sparsity, limited sample size, and finite Monte Carlo resolution imposed in the present numerical study.
\paragraph{Remark on CRPS versus joint likelihood}
For the SBVP-consistent regime, the CRPS-based scores in Eqs.~\eqref{eq3.7}-\eqref{eq3.9} are used here as intrinsic evaluation criteria for the probability predictions of the SNN, not as training losses. This distinction is important in the present setting. By construction, the quantities
$\CRPS^\SNNpp_k(\xx)$ and $\CRPS^\trainingpp_k(\xx)$ entering Eq.~\eqref{eq3.9} act componentwise on the univariate marginals, and the global test scores $\CRPS^\SNNpp_\testp$ and $\CRPS^\trainingpp_\testp$ are obtained by averaging over output indices $k = 1,\dots,n_\pout$ and over test inputs.
While this makes CRPS a marginal diagnostic in its definition, this does not mean that marginal agreement
is disconnected from the joint predictive conditional probability distribution in the present SPINN/SNN construction. Indeed, the SNN joint conditional probability density function
$p^\SNNpp_{\YY\vert\XX}(\cdot \,\vert \,\xx\, ; \bftheta)$ on $\RR^{n_\ppout}$ is generated by a low-dimensional
admissible hyperparameter $\bftheta$ that simultaneously governs (i) the latent-field-driven random weight matrix and (ii) the random architecture (including input/output selections) through the same underlying latent mechanism (Sections~\ref{Section4.2} to \ref{Section4.5}).
Consequently, within this constrained model class, the full conditional dependence structure across components is
not freely adjustable independently of the marginals; it is a deterministic functional of $\bftheta$.

Let us now comment on the marginal-to-joint identifiability property (within the SPINN/SNN family).
For a fixed input $\xx \in \RR^{n_\ppin}$, let $\curL(\YY_k(\xx\, ;\bftheta)\, \vert \, \XX =\xx)$
be the conditional probability distribution of the $\RR$-valued random variable $\YY_k(\xx\, ;\bftheta)$  given the input $\XX=\xx$, and let us assume that
the mapping
$$\bftheta  \longmapsto \{ \, \mathcal{L}\,\big( \YY_k(\xx;\bftheta)\, \vert \,\XX = \xx \big ) \,:\, k=1,\ldots,n_\pout \}$$
is injective on the admissible set $\curC_{\bftheta}$. Under this property, if all componentwise
conditional marginals of the trained SNN coincide with the corresponding target conditional marginals at input $\xx$,
then the corresponding conditional joint probability distributions on $\RR^{n_\ppout}$ necessarily coincide as well.
Therefore, in this constrained setting, componentwise CRPS provides a meaningful intrinsic probe of the learned
conditional probability distribution, even though it is computed marginally.

In contrast, the SPINN training objective $\curJ^\SPINNpp(\bftheta\, ; \alpha_{\rm{top}})$ in Eqs.~\eqref{eq2.12}–\eqref{eq2.13} is formulated directly in terms of the joint conditional probability density function
$p^\SNNpp_{\YY\vert\XX}(\cdot \,\vert \,\xx\, ; \bftheta)$ on $\RR^{n_\ppout}$ and of the SBVP-induced joint conditional probability density function
$p^\SBVPpp_{\bfY \vert \XX}(\cdot\, \vert \,\xx)$ on $\RR^{n_\ppout}$, so that both the negative log-likelihood and the SBVP$\to$SNN alignment term act on the full probability distributions represented by these probability density functions.
\subsection{Comments on the conditional probability density functions for selected outputs}
\label{Section7.4}
To complement the discussion on the global intrinsic diagnostics introduced in Sections~\ref{Section3.5} and \ref{Section7.3}, we now examine the conditional probability density functions of selected output components. For a few representative components and test inputs, the conditional probability density functions are shown in Figs.~\ref{figure7}-\ref{figure8} for Case~1, and Figs.~\ref{figure12}-\ref{figure13} for Case~2.
In Case~2 (the SBVP-mismatched regime), the SPINN optimum targets a compromise
between data fidelity and SBVP alignment; therefore, the SNN conditional probability density functions should not be expected to match
the training-conditional benchmark estimated from the data. We therefore focus the discussion on Case~1, where
the training-conditional benchmark targets (up to finite-sample estimation error) the same conditional probability distribution as the
SBVP.
For a fixed test input $\xx_\testp$ and an output index $k \in \{1,\dots,n_\pout\}$, we estimate the one-dimensional conditional probability density functions
\begin{equation*}
y \mapsto p^\SBVPpp_{Y_k \vert \XX}(y\, \vert \,\xx_\testp)
\quad , \quad
y \mapsto p^\SNNpp_{\YY_k \vert \XX}(y\, \vert \,\xx_\testp \, ;\bftheta^\optp) \, ,
\end{equation*}
using the training dataset (via the conditional probability density function estimator described in Section~\ref{Section3}) and the $n_\psim$
realizations of the trained SNN random output $\YY(\xx_\testp \, ;\bftheta^\optp)$, respectively (see
Section~\ref{Section2.1}, paragraphs~(iv) and~(v)).
During training, the bandwidths entering the likelihood and alignment terms are fixed by Eq.~\eqref{eq4.45} and kept constant. In the numerical post-processing reported here, the conditional probability density functions are recomputed specifically for visualization as follows.

For the training-conditional reference (SBVP-based reference), the conditional probability density function
$p^\SBVPpp_{Y_k \vert \XX}(\cdot\, \vert \,\xx_\testp)$
is obtained using a nonparametric conditional probability density function estimator for the scalar quantity of interest and the vector-valued input with Silverman-type bandwidths. For the SNN-based surrogate, the conditional probability density function
$p^\SNNpp_{\YY_k \vert \XX}(\cdot \, \vert \,\xx_\testp \, ;\bftheta^\optp)$
is computed using GKDE from the $n_\psim$ realizations $\{\yy^\ell_k(\xx_\testp \, ;\bftheta^\optp)\}_\ell$ and using the Silverman rule for the bandwidth.

For the inspected pairs $(k,\xx_\testp)$ extracted from the test dataset, the two estimated probability density functions are similar when plotted using the above procedure, that is, when both conditional probability density functions are computed with Gaussian kernels and Silverman-type bandwidths. The SNN reproduces the non-Gaussian shapes induced by the SBVP random operator with satisfactory fidelity. This approximate visual agreement is consistent with the test-set scores reported in Section~\ref{Section6.6}-(iv) and discussed in Section~\ref{Section7.3}, which already indicate that the SNN surrogate calibrated within the SPINN framework produces predictive conditional probability distributions at the aggregate level.

A different behavior is observed for certain output components. For these components, and for some test inputs $\xx_\testp$, the conditional probability density function estimated from the SNN realizations appears more rugged than its SBVP counterpart, even though the same Silverman-type bandwidth rule is used for both estimators. From a purely graphical point of view, it can be observed that a moderate increase in the bandwidth used for the SNN-based KDE would be sufficient to improve the comparison. Empirically, this phenomenon appears more frequently for components whose conditional standard deviation is smaller than for pairs for which the comparison is reasonably good.

This behavior can also be interpreted qualitatively in light of the SPINN construction. For any fixed input $\xx$, both $\bfY(\xx)$ and $\YY(\xx \, ;\bftheta^\optp)$ are generated by transporting the same Gaussian random vector $\bfH$ through two distinct stochastic mappings: the random operator of the SBVP and the random neural architecture of the SNN. These mappings are highly nonlinear and produce non-Gaussian conditional probability distributions, but they do not induce exactly the same fine-scale structure in probability space. When the physical variability of a given observable $Y_k(\xx)$ is comparatively small, the corresponding conditional probability density function is supported on a narrow region of $\RR$; in such a regime, modest differences in the way the random neural architecture transports the Gaussian random vector to $\YY_k(\xx \, ;\bftheta^\optp)$ become more visible.
\subsection{Discussion: a generalized-posterior extension for $\bftheta$ without per-query optimization}
\label{Section7.5}
In the present paper, supervised calibration is performed by point estimation: a single admissible optimizer
$\bftheta^\optp \in \curC_{\bftheta}$ is computed by minimizing $\curJ^{\SNNpp}$ in Eq.~\eqref{eq2.7}--\eqref{eq2.8}, or
$\curJ^{\SPINNpp}$ in Eqs.~\eqref{eq2.12}--\eqref{eq2.15} (which reduces to $\curJ^{\SNNpp}$ when $\alpha_{\rm top}=0$).
As a discussion-only extension (not a contribution of the present work), one may instead construct a probability measure
$\pi(d\bftheta \,\vert\, \curD_{\trainingp})$ on $\curC_{\bftheta}$ from the training dataset $\curD_{\trainingp}$ defined in Eq.~\eqref{eq2.5}.
Such a formulation can be relevant in finite-sample regimes, as it enables posterior averaging of conditional probability distributions without reoptimizing $\bftheta$ for each new input. All probability density functions and scores in this subsection are evaluated in the scaled domain defined in Section~\ref{Section2.1}-(iii).
This extension is not used in the theoretical developments or in the numerical experiments of the present paper.
\paragraph{(i) Setting and objective}
For each $\bftheta\in\curC_{\bftheta}$, the stochastic neural network induces a random output
$\YY(\xx\, ;\bftheta)$ and a conditional probability distribution (Markov kernel)
\begin{equation} \label{eq7.1}
K_{\bftheta}(d\yy \, \vert \, \xx) \equiv p^\SNNpp_{\YY\vert\XX}(\yy \,\vert \,\xx\, ; \bftheta)\, d\yy\, ,
\end{equation}
where $\yy\mapsto p^\SNNpp_{\YY\vert\XX}(\yy \,\vert \,\xx\, ; \bftheta)$  denotes the conditional probability density function on
$\RR^{n_\ppout}$ defined in Eq.~\eqref{eq2.8}.
In practice, for fixed $(\xx,\bftheta)$, we can generate $n_\psim$ realizations
$\{ \yy^\ell(\xx\, ;\bftheta) ,\ell=1,\ldots , n_\psim\}$ of $\YY(\xx\, ;\bftheta)$ by forward simulation
(see Section~\ref{Section2.1}-(iv) and Section~\ref{Section4}).
The objective is to construct the probability distribution $\pi(d\bftheta \,\vert\, \curD_{\trainingp})$ using only forward simulation, and to use it for posterior averaging of conditional probability distributions without per-query optimization.
Once $\pi$ is available, predictions can be performed by mixture-of-kernels inference:
\begin{equation}\label{eq7.2}
K_{\pi}(d\yy \, \vert \, \xx) = \int_{\curC_{\bftheta}} K_{\bftheta}(d\yy \, \vert \, \xx) \, \pi(d\bftheta \, \vert \, \curD_\trainingp ) \, .
\end{equation}
\paragraph{(ii) Likelihood-based posterior (standard construction)}
For each $\bftheta\in\curC_{\bftheta}$ and each training input $\xx^i$, the conditional probability density function
$\yy \mapsto p^\SNNpp_{\YY\vert\XX}(\yy \,\vert \,\xx^i; \bftheta)$ associated with the Markov kernel
$K_{\bftheta}(d\yy \, \vert \, \xx^i)$ is approximated using the Monte Carlo GKDE formula in Eq.~\eqref{eq4.45}.
The corresponding  Bayesian posterior is then defined by
\begin{equation}\label{eq7.3}
\pi(d\bftheta \, \vert \, \curD_\trainingp) = \frac{1}{Z(\curD_\trainingp)}\, \pi_0(d\bftheta)\,
\prod_{i=1}^{n_d} p^\SNNpp_{\YY\vert\XX}(\yy^i \,\vert \,\xx^i \, ; \bftheta) \, ,
\end{equation}
where $\pi_0(d\bftheta)$ is a prior supported on $\curC_{\bftheta}$ and
$Z(\curD_\trainingp)$ is the normalizing constant.
Because $p^\SNNpp_{\YY\vert\XX}$ is typically accessible only through simulation-based probability density function estimation (here via Monte Carlo GKDE), each likelihood evaluation entails an inner simulation loop and a bandwidth-dependent reconstruction. In non-Gaussian regimes, this may lead to a rugged and Monte Carlo–noisy log-likelihood, so generic MCMC sampling can be computationally demanding. Standard mitigation strategies include sequential Monte Carlo, tempering, and pseudo-marginal methods for noisy likelihoods; see, e.g., \cite{Gordon1993,DelMoral2006,Andrieu2009,Jarvenpaa2024}.\\

\noindent\textit{SPINN-consistent variant including the alignment term}.
To remain consistent with the SPINN training objective $\curJ^\SPINNpp$ (see Eq.~\eqref{eq2.12}),
one may instead define the following generalized posterior,
\begin{equation}\label{eq7.4}
\pi^\SPINNpp(d\bftheta \, \vert \, \curD_\trainingp)\, \propto \,
\exp \bigl(- \alpha_{\rm{top}} \sum_{i=1}^{n_d} \rm{CE}_i(\bftheta)\bigr )\,\,
\pi_0(d\bftheta)\, \prod_{i=1}^{n_d} p^\SNNpp_{\YY\vert\XX}(\yy^i \,\vert \,\xx^i \, ; \bftheta) \, ,
\end{equation}
where $\rm{CE}_i(\bftheta)$ is the cross-entropy alignment term defined in Eq.~\eqref{eq2.13} (and approximated as in Eq.~\eqref{eq2.15}).\\

\noindent Nevertheless, in high-dimensional settings with small datasets, posterior sampling can be performed using the non-Gaussian probabilistic learning on manifolds (PLoM) methodology of \cite{Soize2020b}, which is designed to address small-data Bayesian inference with non-Gaussian priors and likelihoods.
\paragraph{(iii) Gibbs posterior / generalized Bayes (score-based, simulation-only)}
Let $S(\bftheta\, ; \xx,\yy)$ be a proper scoring rule that can be evaluated from forward quantities
(possibly through Monte Carlo) \cite{Gneiting2007}. Define the empirical risk
\begin{equation}\label{eq7.5}
\widehat{R}_{n_d}(\bftheta) = \frac{1}{n_d}\sum_{i=1}^{n_d} S(\bftheta; \xx^i,\yy^i) \, .
\end{equation}
The Gibbs posterior (generalized Bayes) is then defined by \cite{Catoni2007,Bissiri2016},
\begin{equation}\label{eq7.6}
\pi_{\tau}(d\bftheta \, \vert \, \curD_\trainingp)\, \propto \,
\exp \bigl(-\tau\, n_d\, \widehat{R}_{n_d}(\bftheta)\bigr )\,\pi_0(d\bftheta) \, ,
\end{equation}
where $\tau > 0$ controls concentration.
In practice, $\tau$ is a user-chosen inverse-temperature that sets the concentration of $\pi_\tau$ around low-risk
values of $\widehat{R}_{n_d}$. A robust strategy is to tune $\tau$ by proper-score calibration: for a grid of candidate values
$\{ \tau_g \}$, construct an approximation of $\pi_{\tau_g}(d\bftheta \, \vert \, \curD_\trainingp)$ (e.g., using (iv) below with
fixed particle budget $J$), evaluate the resulting mixture predictor $K_{\pi_{\tau_g}}(\,\cdot\,|\,\xx)$ in Eq.~\eqref{eq7.2} on a
held-out subset (or via $K$-fold cross-validation) using a proper score, and select the minimizer.
Alternatively, within (iv) below,  one may use an adaptive tempering schedule for $\tau$ (increasing $\tau$ in
increments chosen to keep the effective sample size above a prescribed fraction of $J$), and take the final value
as the operational $\tau$.\\

\noindent\textit{Option A (data fidelity term only)}: negative log-likelihood via an estimated conditional probability density function
(the logarithmic score is proper) \cite{Gneiting2007}.
For each point $(\xx^i,\yy^i)$ in the training dataset,
$p^\SNNpp_{\YY\vert\XX}(\yy^i \,\vert \,\xx^i; \bftheta)$  is estimated by Eq.~\eqref{eq4.45} using
the realizations $\{ \yy^\ell(\xx^i\, ;\bftheta) ,\ell=1,\ldots , n_\psim\}$ of $\YY(\xx^i\, ;\bftheta)$.
We define
\begin{equation}\label{eq7.7}
S_{\!\SNNpp}(\bftheta\, ;\xx^i,\yy^i) = -\log p^\SNNpp_{\YY\vert\XX}(\yy^i \,\vert \,\xx^i; \bftheta) \, .
\end{equation}
This corresponds to the logarithmic score (proper) evaluated through a simulation-based estimate
of $p^\SNNpp_{\YY\vert\XX}(\cdot\mid\xx^i;\bftheta)$.\\

\noindent\textit{Option B (SPINN-consistent loss)}: include the alignment term in the score.
For each point $(\xx^i,\yy^i)$, we have
\begin{equation}\label{eq7.8}
S_{\!\SPINNpp}(\bftheta\, ;\xx^i,\yy^i)
= -\log p^\SNNpp_{\YY\vert\XX}(\yy^i \,\vert \,\xx^i; \bftheta) + \alpha_{\rm{top}}\, \rm{CE}_i(\bftheta)\, ,
\end{equation}
where $\rm{CE}_i(\bftheta)$ is evaluated as in Eq.~\eqref{eq2.15}.
This yields a generalized Bayes posterior probability distribution whose concentration is governed jointly by $\tau$ and $\alpha_{\rm{top}}$.
Option~B is the only choice among A and C (see below), which explicitly retains the SPINN alignment contribution
through $\rm{CE}_i(\bftheta)$ at the score level.\\

\noindent\textit{Option C (sample-only proper score)}: the energy score (no logarithms of probability density functions) \cite{Gneiting2007}.
A Monte Carlo approximation is
\begin{equation}\label{eq7.9}
\widehat{S}_{\!\ESpp}(\bftheta\, ; \xx^i,\yy^i) =
\frac{1}{n_\psim} \sum_{\ell=1}^{n_\psim} \, \Vert\yy^\ell(\xx^i\, ;\bftheta) - \yy^i \Vert
- \frac{1}{n_\psim(n_\psim -1)} \sum_{1\,\le\,\ell \, < \, \ell'\, \le \, n_\psim}\,
\Vert \yy^\ell(\xx^i\, ;\bftheta)-\yy^{\ell'}(\xx^i\, ;\bftheta) \Vert \, .
\end{equation}
For $n_\ppout=1$ and with $\Vert\cdot\Vert$ replaced by $\vert \cdot \vert$, Eq.~\eqref{eq7.9} reduces to the standard
Monte Carlo identity used for the CRPS approximation in Eq.~\eqref{eq3.8}.
This score is often numerically robust.
\paragraph{(iv) Particle approximation without per-query optimization}
Here, $S(\bftheta;\xx^i,\yy^i)$ is taken as $S_{\!\SNNpp}$ (Option~A), $S_{\!\SPINNpp}$ (Option~B),
or $\widehat{S}_{\!\ESpp}$ (Option~C), consistently with the choice made above.
A discrete particle representation can be used to construct an explicit approximation of
$\pi_{\tau}(d\bftheta \, \vert \, \curD_\trainingp)$ without solving an optimization problem; this corresponds to
sequential importance sampling / sequential Monte Carlo ideas \cite{Doucet2001,DelMoral2006}.
Draw $J$ realizations $\bftheta^j\sim \pi_0(d\bftheta)$ and initialize $w_0^j=1/J$.
For each data point $(\xx^i,\yy^i)$ in the training dataset, update
\begin{equation}\label{eq7.10}
w_i^j \, \propto\,  w_{i-1}^j \,
\exp \bigl( - \tau\, S(\bftheta^j \, ; \xx^i , \yy^i )\bigr) \quad , \quad  j \in\{ 1 ,\ldots ,J \}\, ,
\end{equation}
followed by normalization $\sum_{j=1}^J w_i^j =1$ (implemented in log-scale for numerical stability).
The resulting approximation reads
\begin{equation}\label{eq7.11}
\pi_{\tau}(d\bftheta \, \vert \, \curD_\trainingp) \, \approx \,
\sum_{j=1}^J \, w_{n_d}^j\,\delta_{\bftheta^j}(d\bftheta) \, ,
\end{equation}
where $\delta_{\bftheta^j}(d\bftheta)$ is the Dirac measure at point $\bftheta^j$.
A particle approximation can be implemented using standard sequential Monte Carlo strategies. Weight degeneracy may be monitored via the effective sample size, with resampling and resample–move rejuvenation steps when necessary. Classical Metropolis–Hastings moves targeting the intermediate tempered probability distributions can be employed, including pseudo-marginal variants when the score is evaluated through Monte Carlo. Detailed algorithmic implementations follow established methodologies in sequential Monte Carlo and noisy-likelihood inference \cite{Doucet2001,DelMoral2006,Gilks2001,Andrieu2009}.

When $S$ is evaluated through Monte Carlo (Options A-C), the above move leaves invariant the Gibbs posterior
associated with the implemented (Monte Carlo) score; pseudo-marginal variants can be used when an unbiased
estimator of the relevant unnormalized target is available \cite{Andrieu2009}.
Predictions follow from Eq.~\eqref{eq7.2} by drawing an index $j^\star$ with probabilities $\{w_{n_d}^j\}_{j=1}^J$,
setting $\bftheta=\bftheta^{j^\star}$, and then sampling $\YY \sim K_{\bftheta}(\cdot \, \vert \, \xx)$.
\section{Conclusions}
\label{Section8}

{\color{black}
This work has introduced a stochastic physics-informed neural network (SPINN) for stochastic boundary-value problems (SBVPs) with random operators. The formulation couples the SBVP random operator and a manifold-based stochastic neural architecture through the shared Gaussian germ $\bfH$. The latent Gaussian field and the inhomogeneous Poisson point process then generate random neuron locations, geodesically local sparse connectivity, and a sparse random weight matrix. Physics is thus encoded at the level of the stochastic architecture and its induced conditional distributions, rather than through a residual penalty applied to a deterministic network.

Calibration is performed directly in observation space and requires only evaluations of the SBVP solution mapping, without state-field reconstruction, solver modification, or differentiation. The resulting non-feedforward SNN produces distribution-valued outputs for deterministic inputs, while a low-dimensional hyperparameter controls the probability law of its architecture and weights. This provides a compact representation mechanism for non-Gaussian random operators without optimizing a large collection of deterministic weights.
}
The supervised objective combines data fidelity, through the negative log-likelihood, with a cross-entropy alignment term relative to the SBVP-induced conditional probability distribution. In the SBVP-consistent regime (Case~1), this alignment mainly acts as a finite-sample variance-reduction and regularization mechanism. For the reference configuration ($n_d=200$, $n_\testp=40$, $n_h=32$, $n_\psim=100$), the scaled CRPS is $\CRPS^\scaledpp_\testp=1.0630$ for $N=200$ and decreases to $\CRPS^\scaledpp_\testp=1.018$ for $N=300$, showing that the SNN conditional distributions remain close to the data-driven conditional benchmark despite the low-dimensional hyperparameterization and the absence of weight-level optimization.

In the SBVP-misspecified regime (Case~2), the same formulation implements an explicit compromise between data fidelity and SBVP-induced probabilistic structure. {\color{black}The operational scale-matching value is $\alpha_0 = 0.35428$ for the reference configuration; for a nonzero alignment weight, Proposition~\ref{Proposition1} shows that the population target is the mixture $q_\lambda$, whereas
$\alpha_{\rm top} = 0$ would define the distinct data-only target $p^{\Obsp}_{\bfY\vert\XX}$.}
The scaled CRPS is $\CRPS^\scaledpp_\testp = 1.3478$ for $N = 200$. Across the tested network sizes, the proper scores are not monotone with respect to $N$; an intermediate capacity yields the smallest joint likelihood--CRPS discrepancy, while the generalization and calibration diagnostics remain stable. These results support the use of the cross-entropy alignment term as a controlled probabilistic regularization mechanism under model--data mismatch.

The numerical illustration is deliberately posed on a nontrivial geometry: a stochastic elliptic problem on a periodic cylindrical surface with a positive non-Gaussian random coefficient field and a rigorously defined weak formulation. It shows that the proposed framework can combine well-posed stochastic mechanics, manifold-based random neural architectures, and proper-score evaluation of conditional probability distributions within a single non-intrusive computational methodology.

{\color{black}
Finally, Section~\ref{Section7.5} outlines a generalized-posterior extension for the hyperparameter $\bftheta$. Although not used in the main numerical experiments, this extension provides a route to posterior averaging of conditional probability distributions without per-query reoptimization and is included as a perspective for future developments.
}
\section*{Data Availability}
No external datasets were used in this work. The numerical data analyzed in the paper were generated from the SBVP and misspecified SBVP models described in the paper. They can be regenerated from the model definitions and parameter values reported in the manuscript.
\section*{CRediT authorship contribution statement}
Christian Soize: Conceptualization, Methodology, Software, Computation, Writing.
\section*{Declaration of competing interest}
The author declares that they have no known competing financial interests or personal relationships that could have appeared to influence the work reported in this paper.


\end{document}